\theoremstyle{plain}
\newtheorem{thm}{Theorem}
\newtheorem{defi}{Definition}
\newtheorem{coro}{Corollary}
\newtheorem{rem}{Remark}
\newtheorem{ex}{Example}
    \newtheorem{ass}{Assumption}
\theoremstyle{plain}
\newcommand{\e}{\mathbb{E}}
\newcommand{\p}{\mathbb{P}}
\newcommand{\mcal}{\mathcal{M}}
\newcommand{\vcal}{\mathcal{V}}
\newcommand{\ocal}{\mathcal{O}}
\newcommand{\bR}{\mathbb{R}}
\newcommand{\mbone}{{\mathbb{I}}}
\newcommand{\ecal}{\mathcal{E}}
\newcommand{\norm}[1]{\Vert{#1}\Vert}
\newcommand{\E}{\mathbb{E}} % Expectation
\newcommand{\lambdaMin}{\lambda_{\min}} % Min eigenvalue
\newcommand{\opnorm}[1]{\left\| #1 \right\|_{\mathrm{op}}} % Operator norm
\NewDocumentCommand{\xtriangle}{O{.65}}{%
    \tikz[baseline = -0.4ex,scale=#1]{%
        \foreach \x in{90,-30,210}%
        {%
            \fill[](\x:0.2)circle(1.5pt);%
        }%
        \draw[thin] (-30:0.2) -- (90:0.2) -- (210:0.2) -- cycle;%
    }%
}%
\NewDocumentCommand{\xline}{O{.5}}{%
    \tikz[baseline = -0.8ex,scale=#1]{%
        \foreach \x in{-90,30}%
        {%
            \fill[](\x:0.2)circle(1.5pt);%
        }%
        \draw[thin] (-90:0.2) -- (30:0.2);%
    }%
}%
\NewDocumentCommand{\xsquare}{O{.5}}{%
    \tikz[baseline=-0.5ex,scale=#1]{%
        \foreach \x in {45, 135, -135, -45}%
        {%
            \fill[] (\x:0.2828) circle (1.5pt); % 0.2828 = sqrt(2)/2 * 0.2, for a square with diagonal of 0.4 units
        }%
        \draw[thin] (45:0.2828) -- (135:0.2828) -- (-135:0.2828) -- (-45:0.2828) -- cycle;%
    }%
}%
\NewDocumentCommand{\xthreestar}{O{.6}}{%
    \tikz[baseline=-0.8ex,scale=#1]{%
        % Calculate the length of the arms based on the desired total height of the shape (0.4 units)
        \pgfmathsetmacro{\armLength}{0.4 / sqrt(3)}
        % Define the arm angles
        \pgfmathsetmacro{\angleA}{30}
        \pgfmathsetmacro{\angleB}{-90}
        \pgfmathsetmacro{\angleC}{150}
        % Draw the "Y" shaped Threestar motif
        \draw[thin] (0,-.03) -- (\angleA:\armLength)   (0,-.03) -- (\angleB:\armLength)   (0,-.03) -- (\angleC:\armLength);
        % Draw the points at the ends of each arm
        \fill[] (\angleA:\armLength) circle (1.5pt);
        \fill[] (\angleB:\armLength) circle (1.5pt);
        \fill[] (\angleC:\armLength) circle (1.5pt);
        % Draw the central point
        \fill[] (0,-.03) circle (1.5pt);
    }%
}
\NewDocumentCommand{\xtwoline}{O{.6}}{%
    \tikz[baseline = -0.8ex,scale=#1]{%
        \foreach \x in{30,-90,150}%
        {%
            \fill[](\x:0.2)circle(1.5pt);%
        }%
        \draw[thin] (150:0.2) -- (-90:0.2) -- (30:0.2);%
    }%
}%
	\def\E{\mathbb{E}}
	\def\P{{\mathbb{P}}}
	\def\bZ{\mathbf{Z}}
	\def\P{\mathbb{P}}
	\def\fm{\mathfrak{m}}
 \newcommand{\be}{\begin{eqnarray}}
\newcommand{\ee}{\end{eqnarray}}
\newcommand{\ba}{\begin{eqnarray*}}
	\newcommand{\ea}{\end{eqnarray*}}
\newcommand{\bei}{\begin{itemize}}
	\newcommand{\beiftnt}{\begin{itemize}\footnotesize}
		\newcommand{\eei}{\end{itemize}}
\newtheorem{theorem}{Theorem}[section]
\newtheorem{lemma}{Lemma}[section]
\newtheorem{corollary}{Corollary}[section]
\def\text#1{\mbox{\rm #1}}
\renewcommand{\mid}{\,\vert\,}
\begin{document}

\begin{center}

% \textbf{\large  GRAIL: Graph Release with Assured Individual-Level Differential Privacy}\\
\textbf{\large  GRAND: Graph Release with Assured Node Differential Privacy}\\
% Graph Release with Assured Node Differential Privacy (GRAND)
\medskip

Suqing Liu\\
University of Chicago\\
% \footnote{The work is done while Liu is an undergraduate student at Tsinghua University.}\\
\medskip

Xuan Bi and Tianxi Li\\
% Carlson School of Management and School of Statistics\\
University of Minnesota, Twin Cities

\end{center}

\begin{abstract}
Differential privacy is a well-established framework for safeguarding sensitive information in data. While extensively applied across various domains, its application to network data --- particularly at the node level --- remains underexplored. Existing methods for node-level privacy either focus exclusively on query-based approaches, which restrict output to pre-specified network statistics, or fail to preserve key structural properties of the network. In this work, we propose GRAND (Graph Release with Assured Node Differential privacy), which is, to the best of our knowledge, the first network release mechanism that releases networks while ensuring node-level differential privacy and preserving structural properties. Under a broad class of latent space models, we show that the released network asymptotically follows the same distribution as the original network. The effectiveness of the approach is evaluated through extensive experiments on both synthetic and real-world datasets.

\end{abstract}

\section{Introduction}

Rapid technological advances have rendered complex networks pervasive --- ranging from the Internet to international trade --- yet these systems inherently present critical privacy challenges, exemplified by the need to protect identities in organ donation \citep{hizo2010attitudes,marcus2023anonymity} and HIV transmission studies \citep{little2014using, abadie2021privacy}. Preemptively safeguarding such networks is uniquely difficult compared to conventional data sharing, as intricate structural dependencies, noise, and vast scales render simple mathematical models insufficient \citep{karwa2017sharing,hehir2022consistent} while imposing rigorous computational and communication efficiency requirements on privacy-preserving algorithms \citep{guo2023privacy}.

Differential privacy \citep[DP;][]{dwork2006differential} has established itself as a standard for strict privacy protection, particularly for vulnerable outliers with extreme values. Mechanisms such as the Laplace \citep{dwork2006calibrating,dwork2014algorithmic} and exponential \citep{mcsherry2007mechanism} methods are widely employed in learning tasks \citep{lei2018differentially,soto2022shape,lin2023differentially,xue2024optimal,ma2025locally}. Furthermore, extensive research into DP variations—including relaxed \citep{abadi2016deep,cai2019cost}, local \citep{rohde2018geometrizing}, random \citep{hall2012random}, Kullback-Leibler \citep{wang2016average}, and Gaussian \citep{dong2022gaussian} frameworks—has facilitated broad application across academia and industry \citep{kaissis2020secure,hie2018realizing,han2020breaking,santos2020differential,kenthapadi2018pripearl}.% ,nayak2020new

% Furthermore, 
Differential privacy in complex network data analysis has also been carefully explored \citep[e.g.,][]{karwa2016inference,rohde2018geometrizing,chang2024edge}. 
However, as network data typically show structures distinct from that of Euclidean data, the preservation of differential privacy for network data entails unique challenges. First, differential privacy preservation may come at the cost of altering a network's underlying utilities. This includes, but not limited to, crucial network properties, such as degree distributions, centrality and potentially community structures. Many existing methods may inadvertently alter the underlying distribution of edges and degrees. Meanwhile, edge-level modifications, such as edge addition or deletion, may also change node-level properties. In order to preserve a certain network property, therefore, many existing methods focus on statistic release rather than network release -- that is, the release of certain summary statistics (e.g., degree distribution) with privacy guarantee, rather than an entire differentially private network that has all properties preserved. 

Second, the adoption of differential privacy may appear at different levels. On the one hand, many existing works consider edge differential privacy \citep{nissim2007smooth}, which protects privacy of individual edges. The adoption of edge differential privacy is natural and shows good theoretical properties \citep{mulle2015privacy,karwa2017sharing,fan2020asymptotic,hehir2022consistent,chang2024edge}. On the other hand, the investigation and development of node differential privacy \citep{hay2009accurate} is relatively rare. Node differential privacy guarantees that a released network should remain robust with the change of any node. However, the alteration of a node usually entails the alteration of all associated edges. The preservation of network properties under node differential privacy is therefore considered substantially more challenging \citep{kasiviswanathan2013analyzing,hehir2022consistent}. In particular, the release of entire networks that follow node differential privacy has rarely been investigated in previous literature except for a few NP-hard theoretical algorithms \citep{borgs2015private,borgs2018revealing,chen2024private}.
% For example, the decomposition of an adjacency matrix of a modified network may lead to non-ignorable changes to certain nodes' latent factor positions, which may create unavoidable alterations to certain subsequent or downstream analysis.

In this work, we propose a novel mechanism, \textbf{GRAND} (\textbf{G}raph \textbf{R}elease with \textbf{A}ssured \textbf{N}ode \textbf{D}ifferential privacy), to release networks that guarantees \emph{node differential privacy}. Moreover, the proposed method is shown to preserve statistical network properties under the general latent space models, which include many widely used statistical network models as special cases. To the best of our knowledge, our method is the first practical approach that can \textbf{(1) release a network (rather than a summary statistic), (2) ensure node differential privacy of the released network, and (3) is computationally feasible with utility preservation guarantees}.

The paper proceeds as follows: Section~\ref{sec:contribution} reviews existing literature and outlines our contributions, followed by formal definitions in Section~\ref{sec:definition}. The GRAND mechanism and its theoretical properties are detailed in Sections~\ref{sec:method} and \ref{sec:theory}, respectively. We evaluate performance via simulations in Section~\ref{sec:sim} and real-world applications in Section~\ref{sec:data}, concluding with discussions in Section~\ref{sec:discussion}.

\section{Network Differential Privacy and Our Contributions} 
\label{sec:contribution}
% The subsection title is temporary. 
% This section intends to introduce the 3 key novelties (or visions) of our paper.

\noindent\textbf{\emph{Node differential privacy vs. edge differential privacy.}}
Broadly speaking, existing works on network differential privacy can be categorized into two branches \citep{abawajy2016privacy,li2023private}. 
One branch is edge differential privacy \citep[EDP;][]{nissim2007smooth}. Intuitively, an EDP mechanism ensures that one cannot infer the pairwise relation between any two nodes in the network. Many methods have been proposed and discussed regarding the protection of EDP, which protect the privacy of individual edges. 
This can be achieved through mechanisms such as randomized responses \citep{mulle2015privacy,hehir2022consistent}, method of moments \citep{chang2024edge}, or noise injection \citep{karwa2016inference,fan2020asymptotic}.
The other branch is node differential privacy \citep[NDP;][]{hay2009accurate}. An NDP mechanism (to be formally defined later) ensures  that one cannot infer an individual node in the network.

Notice that, a node's information includes its pairwise relations to other nodes, and hence node differential privacy is more strict than edge differential privacy. As a result, node-level privacy protection is considerably more challenging than edge-level privacy. Applying edge-level DP methods to node-level settings usually leads to degenerate results \citep{kasiviswanathan2013analyzing,hehir2022consistent,chang2024edge}. 

However, we advocate that preserving node differential privacy can be critical in many scenarios, because nodes can represent individual users, customers, and patients, whose identities are sensitive and vulnerable to de-anonymization. Encoding node differential privacy, while preserving the underlying network's properties is the key motivation of this paper. 

%In order to achieve this, we add Laplacian noise to latent vectors created by some existing network models after certain nonlinear transformations. This guarantees that 
% the distribution of the latent vectors remain invariant after imposing node differential privacy, which subsequently, preserves network properties of the processed network.
%the proposed method is able to preserve the underlying distribution of the node-level latent vectors and nearly all essential network properties as a result. In other words, the proposed method to some extent breaks the trade-off between privacy protection and utility preservation. This can be attributed to the distribution-invariant mechanism applied to the latent vectors from network decomposition.

\noindent\textbf{\emph{Network release vs. statistic release.}}
The adoption of differential privacy in network analysis, either at the node level or at the edge level, involves introducing random noise, which may lead to unintended consequences. First, existing differentially private mechanisms that are designed for Euclidean data may inadvertently change the underlying structure or properties of networks. For example, adding or deleting edges may lead to the alteration of the degree distribution, transitivity, and the number of components. In general, releasing a differentially private network while preserving all original properties can be very challenging. 

One viable approach is statistic release. That is, rather than releasing a network, one releases a particular network summary statistic that satisfies differential privacy, while preserving the accuracy of this particular statistic. This line of works include, but not limited to, the release of triangle counts \citep{liu2022collecting}, node degrees \citep{sivasubramaniam2020differentially,yan2021directed,wang2022two}, and network centralities \citep{laeuchli2022analysis} with the edge differential privacy guarantee, and the release of triangle counts \citep{blocki2013differentially}, the number of connected components \citep{kalemaj2023node,jain2024time}, edge density \citep{ullman2019efficiently}, and degree distributions \citep{day2016publishing,raskhodnikova2016lipschitz, macwan2018node} with the node differential privacy guarantee.

Nevertheless, statistic release has many critical drawbacks. For example, one has to design different methods for different statistics, and each such task can be sophisticated. Moreover, for many commonly used network statistics, such as centralities and closeness, no release mechanisms are currently available.  More importantly, statistic release completely eliminates the possibility of analyzing multiple network statistics simultaneously. This is because when each statistic is processed separately, the dependence between statistics would be lost. And it has been widely known that, given the complexity of network data, one usually needs inference of multiple network statistics jointly for informative analysis \citep{qi2024multivariate}.

With the aforementioned limitations, it is then evident that directly \emph{releasing a network} would be substantially more flexible for subsequent use. Ideally, if the released network is almost the same as the original network, known as an \emph{informative release} \citep{wasserman2010statistical}, we can use the privatized network in an arbitrary way in downstream tasks. In particular,  when the released network has most network properties maintained the same, it essentially subsumes most statistic release. Along this path, many have explored the novel ways of generating synthetic networks under edge differential privacy such as \cite{karwa2017sharing,qin2017generating,hehir2022consistent,guo2023privacy,chang2024edge}. For node differential privacy, the only known methods in the literature
\citep{borgs2015private,borgs2018revealing,chen2024private} are NP-hard, thus are only of theoretical interest. 

Moving beyond the prioritization of network release strategies, we also consider the use of \textit{data-perturbation} mechanisms (as opposed to synthetic network generation). 
Data-perturbation mechanisms inject noise into the original network.
% , unlike approaches that generate synthetic networks. 
A key feature is that it retains a bijective (one-to-one) correspondence between the anonymized nodes and the original individuals. This avoids the construction of completely artificial nodes as used in synthetic networks, giving substantial advantages with respect to data expandability, decentralizability and robustness (details in Section~\ref{sec:discussion}).  For this reason, we explicitly prioritize the data-perturbation paradigm.
% based on several of its advantages that are critical in practice.

\noindent\textbf{\emph{Our contributions.}} The preceding discussion provides the motivation for a \emph{data-perturbation} mechanism that \emph{releases networks} (rather than statistics) with \emph{node-level differential privacy} and guarantees of \emph{preserving network structures}. This is expected to be a crucial tool in practice, while such a tool is not yet available in the literature. 
In this paper, we introduce a computationally feasible method that achieves all the above objectives simultaneously. To the best of our knowledge, we are among the first to achieve this goal in the literature.

% \section{Latent Node Privacy}
\section{Node Differential Privacy and Latent Space Models} \label{sec:definition}

% This section introduces notations and the background of node differential privacy.
% , and the proposed framework, where the proposed framework provides a strong definition for node-level differential privacy.

% of Network Differential Privacy}

\noindent\textbf{\emph{Notations.}} For any positive integer $K$, we denote the set $\{1, \ldots, K\}$ by $[K]$. Consider a network of $N$ individuals (nodes), where the edges represent interactions or relationships between nodes, such as friendship, collaboration, or transactions \citep{newman2018networka}. Denote the node set by $[N]$. We assume the network is unweighted and undirected. The network can be uniquely represented by an $N\times N$ adjacency matrix $A$ where $A_{ij}=1$ if individuals $i$ and $j$ are connected, and $A_{ij}=0$ otherwise. Denote the node and edge sets of network $A$ by $\vcal(A)$ and $\ecal(A)$, respectively.
%\footnote{The proposed work can be applied to weighted or directed networks as well, as long as a general latent space model defined in Section \ref{secsec:framework} is applicable to the corresponding adjacency matrices. Details will be provided in Section \ref{secsec:TNR}.} 
For any vector $z\in\bR^d$, we use $\norm{z}$ to denote its Euclidean norm.

\subsection{Node Differential Privacy} 

Suppose that the network information is sensitive (e.g., a cancer patient network) and many nodes in $A$ may not be willing to release their identity or even their presence in the network. Private information may still be vulnerable to leakage if one simply releases the network to the public, even with de-anonymization \citep{narayanan2008robust}. In many cases, one may infer a node's identity according to certain connectivity patterns of a de-anonymized network. Alternatively, a node's identity may be revealed due to adversarial attacks. 

We adopt differential privacy \citep{dwork2006differential} as the standard for privacy protection. One commonly used definition of is $\varepsilon$-differential privacy, where a parameter $\varepsilon$, referred to as the \emph{privacy budget}, is prespecified, such that a small $\varepsilon$ indicates a tight budget, associated with a stringent privacy protection protocol. Technically, $\varepsilon$-differential privacy requires that the alteration of any entry in the original dataset only leads to a small change of the output data's distribution, quantified by the ratio of distributions before and after the alteration being bounded by $e^{\varepsilon}$.

On network data, however, the definition of one data entry may be construed differently (e.g., as a node or as an edge). There have been multiple ways to define differential privacy on network data, and a detailed review can be found in \cite{jiang2021applications}. Specifically, we focus on node differential privacy (NDP), formally defined as below.

\begin{defi}[Node Differential Privacy] \label{def:existing_NDP}
Let $\varepsilon>0$ be the privacy budget. A network data releasing mechanism $\mcal$ satisfies \textbf{node $\varepsilon$-differential privacy} if for any measurable set $\Psi$ of the network sample space, we have
\begin{equation}\label{eq:DF}
\p(\mcal(A)\in \Psi) \le e^{\varepsilon} \p(\mcal(A')\in \Psi)
\end{equation}
for any two networks $A$ and $A'$ with adjacency node sets $\vcal(A)$ and $\vcal(A')$: $A$ and $A'$ are only different in one row and the corresponding column.
\end{defi}

Analogously, the edge level differential privacy is defined by requiring \eqref{eq:DF} for any two networks $A$ and $A'$ with only one difference in $\ecal(A)$ and $\ecal(A')$. The above definitions have been also discussed in several previous frameworks \citep{hay2009accurate,kasiviswanathan2013analyzing,karwa2016inference,imola2021locally,hehir2022consistent,guo2023privacy, chang2024edge}. Intuitively, one should not be able to infer an individual's identity from the released network data under the definition of node level differential privacy. In contrast, edge level privacy refers to protection against inferring the existence of an edge. It is not difficult to see that node-level DP provides strictly stronger privacy guarantees than edge-level DP \citep{kasiviswanathan2013analyzing}. Technically, for an individual $i \in \vcal(A)$, NDP protects $i$'s relationship with all other individuals $j$'s in the network, $j \in \vcal(A)$, $j \neq i$. In other words, the output $\mcal(A)$ should remain ``similar" when an entire row and column of $A$ has been changed (to an arbitrary degree). Our goal is to design \emph{a network-releasing mechanism $\mcal$} satisfying the NDP definition above. Meanwhile, the released network $\mcal(A)$ would provably preserve network properties of the original $A$ under the privacy budget.

\subsection{General Latent Space Models} \label{secsec:framework}

We now proceed to introduce a class of statistical models for network data, referred to as the \emph{latent space model}. We first introduce the design of our GRAND mechanism based on this model. And later we will demonstrate that our NDP guarantee still holds without the model. The idea of latent space has been widely used in random networks, and was first formally introduced by \cite{hoff2002latent}. 

\begin{defi}[General Latent Space Model]\label{defi:generic}
We say that $A$ is a network generated from the general latent space model if there exists a distribution $F$ (unknown) on $\mathbb{R}^d$ and a known symmetric generative function $W: \mathbb{R}^d\times \mathbb{R}^d\to[0,1]$ such that $A$ can be generated as below:
$$
Z_1,\ldots,Z_N \stackrel{\mathrm{i.i.d.}}{\sim} F, \quad\quad A_{ij}\stackrel{\mathrm{ind.}}{\sim}\text{Bernoulli}(W(Z_i,Z_j)), \quad i>j.
$$
\end{defi}
\noindent Here $Z_1,...,Z_N$ are latent vectors corresponding to nodes $1,\ldots,N$, respectively, and $d$ denotes the dimension of the latent space. The above model encapsulates a series of popular models as special cases. We illustrate a few examples here.

\begin{ex}[Inner Product Latent Space Model \citep{hoff2002latent}]\label{example:hoff} We say that $A$ is a network generated from an inner product latent space model if there exist distributions $F_X$ on $\mathbb{R}^d$ and $F_{\alpha}$ on $\mathbb{R}$, such that $A$ is generated as follows:
$$
(X_1, \alpha_1),\ldots,(X_N,\alpha_N) \stackrel{\mathrm{i.i.d.}}{\sim} F_X\times F_{\alpha}, \quad\quad A_{ij}\stackrel{\mathrm{ind.}}{\sim}\text{Bernoulli}( W((X_i,\alpha_i),(X_j,\alpha_j))), \quad i>j,
$$ 
\begin{equation}\label{eq:LSM}
W((X_i,\alpha_i),(X_j,\alpha_j)) = \sigma(X_i^\top X_j+\alpha_i+\alpha_j)
\end{equation}
and $\sigma$ is the sigmoid function. Taking $Z_i = (X_i, \alpha_i)$, this model is a special case of the general latent space model.
\end{ex}

\begin{ex}[Random Dot Product Graph \citep{young2007random}]\label{example:RDPG} We say $A$ is a network generated from a generalized random dot product graph (RDPG) if there exists a distribution $F$ on $\mathbb{R}^d$, such that $A$ follows the generative procedure below:
$$
Z_1,\ldots,Z_N \stackrel{\mathrm{i.i.d.}}{\sim} F, \quad\quad A_{ij}\stackrel{\mathrm{ind.}}{\sim}\text{Bernoulli}( Z_i^\top Z_j ), \quad i>j,
$$
where the support of $F$ guarantees that the generalized inner product is a probability. 
\end{ex}
The RDPG model can be further modified following \cite{rubin2022statistical} resulting in the so-called generalized RDPG (gRDPG). Many other widely used random network models in literature are special cases of the above models. For example, the stochastic block model (SBM) \citep{holland1983stochastic} and its variants \citep{karrer2011stochastic,airoldi2008mixed,sengupta2018block,li2018hierarchical,jin2019optimal,li2023community}, as well as various $\beta$-models \citep{chatterjee2011random,chen2021analysis} can be seen as special cases of the inner product latent space model. 

% \begin{rem} \label{rem:smep}
% Depending on the specific model within the latent space model family, several standard model estimation procedures are available to estimate $Z_i$'s with a given $A$. These include the gradient descent for the inner product latent space model \citep{ma2020universal} and the adjacency spectral embedding for the RDPG family \citep{athreya2017statistical,rubin2022statistical}.
% % In this step, we can also include model selection as part of the operation if needed \citep{li2016network}. For simplicity of discussion, we will not consider such a model selection step, and assume that the model to be used is known. 
% \end{rem}

\begin{rem}
In most of the latent space models, the latent vectors are identifiable only up to some operator $\mathcal{O}^d$, depending on the specific format of $W$. For example, under the inner product latent space model or the RDPG, $\mathcal{O}^d$ can be any $d\times d$ orthogonal transformation. Such operators are usually not crucial when using these models, and would not result in difficulties in our analysis. For notational simplicity, in this paper, when we discuss the recovery of latent vectors or their distributions, the recovery can be up to such an unidentifiable operator.
\end{rem}

% \begin{rem}
% Another class of random network models, the graphon model \citep{bickel2009nonparametric} can also be written in the form of the latent space model in Definition~\ref{defi:generic}. However, there is one key difference: The latent space model typically assumes the link function $W$ to be known while the distribution of the latent vectors, $F$, can be unknown. In contrast, a graphon model has one dimensional latent variables $Z_i$'s with known $F$ (the uniform distribution), while the link function (graphon) $W$ is unknown. In this paper, we focus on the scenario of the latent space model with known $W$ but unknown $F$. We leave the study of node DP under the graphon model for future work.
% \end{rem}

\begin{rem}
In the literature, some models introduce an additional parameter, which formulates the density of the network. However, the definition of this parameter differs by models; for instance, it appears as a scaling parameter in the RDPG and gRDPG frameworks \citep{athreya2021estimation, rubin2022statistical} and as the intercept term in the inner product latent space model \citep{li2023statistical}. To maintain a uniform format of our framework and focus only on the privacy aspect of networks, we will not introduce such a parameter. Our theory can be readily extended to incorporate such a parameter and we will briefly mention the corresponding minor changes in Section~\ref{sec:theory}.
\end{rem}

\section{The Proposed Method} \label{sec:method}

\subsection{The Prototype Node DP Mechanism: An Oracle Scenario} \label{secsec:DIP}

We introduce the principle of our design regarding how it may preserve network properties. The prototypical idea is to (i) acquire a latent vector for each node, (ii) privatize the latent vectors while preserving their distribution, and (iii) generate a private network through using the private latent vectors. The private network is expected to have the same network properties as the original one, since share the same latent vector distribution.

Under the latent space model in Definition~\ref{defi:generic}, recall that the latent vectors satisfy $Z_i = (Z_{i1},\ldots,Z_{id})^\top \in \mathbb{R}^d$, $i \in [N]$ and are an i.i.d. sample from $F$. To introduce our prototype design, we assume that we know two pieces of \emph{oracle} information.
\begin{itemize}
    \item One is the joint cumulative distribution function (CDF) $F$ of the true latent vectors. From $F$, we can also derive all conditional distributions. Let $F^{l \mid 1:(l-1)}(\cdot\mid Z_{i1},\ldots,Z_{i,l-1})$ be the conditional CDF of $Z_{il}$ given $(Z_{i1},\ldots,Z_{i,l-1})$. When the context is clear, we write $F^{l\mid 1:(l-1)}$ as $F^l$ for simplicity. In particular, let the marginal CDF of the first coordinate be $F^1$.
    \item The other is the true latent vectors $Z_i$, $i\in [N]$. Furthermore, let $\bZ_l = (Z_{1l},\ldots,Z_{Nl})^\top$ be the $l$-th latent vector of all nodes for $l\in [d]$. For each $l$, we can treat $\bZ_l$ as a sequence of univariate observations.
\end{itemize}

We next introduce a distribution-invariant privacy mechanism (DIP) proposed in \cite{bi2023distribution} to perturb the univariate data to achieve differential privacy, which will then be applied to all coordinates by conditioning. Specifically, applying $F^1$ to $\bZ_1$, we note that $F^1(Z_{i1})$ follows $\text{Uniform}(0,1)$.
We then perturb $F^1(Z_{i1})$ by adding an independent noise $e_{i1} \stackrel{\mathrm{i.i.d.}}{\sim} 
\text{Laplace}(0,1/\varepsilon)$ for $i\in [N]$. Following the standard Laplace mechanism, we know that $F^1(Z_{i1})+e_{i1}$ satisfies $\varepsilon$-differential privacy \citep{dwork2006differential}. Let $G$ be the CDF of $F^1(Z_{i1})+e_{i1}$ whose expression can be exactly obtained via convolution. Applying $G$ to $F^1(Z_{i1})+e_{i1}$ also results in a uniform random variable. The privatization mechanism $\fm^1$ for $\bZ_1$ can be written as
$$\tilde{Z}_{i1} \equiv \fm_1(Z_{i1}) =(F^1)^{-1}\circ G(F^1(Z_{i1})+e_{i1}), \quad i=1,\ldots,N,$$ 
where $\circ$ denotes function composition. We can see that $\tilde{Z}_{i1}\stackrel{\mathrm{i.i.d.}}{\sim} F^1$. Meanwhile, $\tilde{Z}_{i1}$ is differentially private \citep{dwork2006differential}.

% For discrete variables, DIP first continualizes $F(\cdot)$ by subtracting a uniform random variable $U$ from $Y$, and reconstructs a discrete variable by applying a ceiling function $\overline{L}(\cdot)$.
% Specifically, let $Y$'s support be $\{a_{1},\ldots,a_{K}\} \subset \mathbb{R}$, where 
% $a_{1} < \cdots < a_{K}$ can be unequally spaced and $K=\infty$ is permitted. 
% Given $Y=a_{k}$, $k \in \{1,\ldots,K\}$, $U$ independently follows a uniform distribution on $[0,a_k-a_{k-1})$ and
% $a_{0} \equiv a_{1} -1$. 
% Then the cdf $F_{V}$ of $V=Y-U$ is
% \ba 
% F_{V}(v) & = &p_{k}\cdot\frac{v-a_{k-1}}{a_{k}-a_{k-1}} + \sum_{s=1}^{k-1} p_{s}, \mbox{ for } v \in (a_{k-1},a_{k}],
% \ea
% where $p_{k}=P(Y=a_{k})$; $k=1,\ldots,K$.
% Then, $F_{V}(v)$ is Lipschitz continuous in $v$ and is invertible, and
% $P(Y \le a_{k})=P(V \le a_{k})$; $k \ge 0$. 
% That is,  
% \ba
% \be \label{ptbd}
% $\tilde{Y} =\overline{L} \circ F_{V}^{-1}\circ G\big(F_{V}(V)+e\big),$
% \ea
% where $V=Y-U$ is a continuous random variable, and $F_V(\cdot)$ is the CDF of $V$.

For $l \ge 2$, we apply the same strategy to privatize each latent vector sequentially, using probability chain rule \citep{schum2001evidential}. Let $F^l(\cdot) := F^l(\cdot\mid Z_{i1},\ldots,Z_{i,l-1})$ and $\tilde{F}^l(\cdot) := F^l(\cdot\mid \tilde{Z}_{i1},\ldots,\tilde{Z}_{i,l-1})$.
% On the ground of \eqref{ptbd}, we discuss the privacy protection for the multivariate case
% $\bY=$ $(Y_{1},\ldots,Y_{q})^T$, where differential privacy is guaranteed as long as the dimension $q$ is finite. It proceeds by applying \eqref{ptbd} sequentially
\be \label{eq:mv_DIP}
\tilde{Z}_{il} \equiv \fm_{l}(Z_{il}\mid\tilde{Z}_{i1},\ldots,\tilde{Z}_{i,l-1}) = (\tilde{F}^l)^{-1}\circ G(F^l(Z_{il})+e_{il}), 
\ee
% that is, it first privatizes $Y_{1}$ to obtain its privatized version $\tilde{Y}_{1}$, 
% then privatizes $Y_{2}$ given $\tilde{Y}_{1}$, and so forth, 
where $\fm_{l}(\cdot)$ denotes the privatization
process for $\bZ_l$, $l = 2,\ldots,d$. Notice that, since $Z_i \stackrel{\mathrm{i.i.d.}}{\sim} F$, we have $F^l(Z_{il}) \stackrel{\mathrm{i.i.d.}}{\sim} \text{Uniform}(0,1)$. On the other hand, the use of $(\tilde{F}^l)^{-1}$ in \eqref{eq:mv_DIP} is to guarantee differential privacy of the preceding variables and preserve that $\tilde{Z}_i \stackrel{\mathrm{i.i.d.}}{\sim} F$.
For notational simplicity, we write
% \ba
\be \label{eq:dip_general}
\tilde{Z}_i = \fm(Z_i;\mathbf{e}_i,F),
% \ea
\ee
where $\tilde{Z}_i = (\tilde{Z}_{i1},\ldots,\tilde{Z}_{id})^\top$, $\mathbf{e}_i = (e_{i1}, \ldots, e_{id})^\top$, and $\fm(\cdot)$ denotes the sequential application of $\fm_1,\ldots,\fm_d$. Here $\tilde{Z}_i$ is the differentially private perturbation of $Z_i$. 

Subsequently, we can follow Definition~\ref{defi:generic} and use the same generative function $W$ to generate the private adjacency matrix $\tilde{A}$. Then given the fact that $\tilde{Z}_i$ follows $F$ and that $W$ is known, $\tilde{A}$ should follow the same distribution as $A$. This ensures that network properties of $\tilde{A}$, especially those that can be presented as summary statistics, remains the same. Meanwhile, as $\tilde{Z}_i$ satisfies $\varepsilon$-differential privacy, $\tilde{A}$ follows NDP.
% Note that, a categorical variable with $m$ categories can be considered as a special multivariate case with $d=m-1$.

The above procedure describes the high-level idea of the proposed method. 
However, in practice, we do not know either $F$ or $\{{Z}_i\}$, thus these have to be estimated from data. The crux of this lies in the privacy requirement. This privacy requirement becomes nontrivial for network data, which requires special designs to be introduced next.

\subsection{The Proposed Network Release with Node-wise Estimation} \label{secsec:TNR}

Our goal is to estimate the latent distribution CDF $F$ and the latent vectors $Z_i$'s for the prototype procedure. It seems natural to consider the following procedure: we estimate the latent vectors by a standard network estimation method and then estimate the CDF from the estimated latent vectors, and these will be used as the ``plug-in" objects in the prototype procedure. Nevertheless, such a straightforward approach will not meet the DP requirement for the following three reasons.
\begin{enumerate}
    \item \emph{Privacy spill-over effect on networks}: In contrast to standard multivariate data where instances are assumed to be i.i.d. and many processes can be applied to each instance independently, estimating network models involves the information of pairwise relations. That means, the estimation of one node $i$ relies on the information of its relations with all other nodes $i'\ne i$. Thus, when a standard estimation method is used, the resulting estimator of $Z_i$ will also contain (private) information of other nodes, for which the DP mechanisms may fail. In other words, the alteration of any node $i$ will lead to the alteration of not only $Z_i$, but also other $Z_{i'}$'s, such that the released output may not satisfy the probability bound \eqref{eq:DF} in Definition \ref{def:existing_NDP}. This is, in our view, the major challenge in handling node-level DP in network data. 
    \item In our prototype procedure (e.g., see \eqref{eq:mv_DIP}), the outermost layer of operation depends on the inverse CDF. This is the final layer of operation without additional privacy protection. If the estimated CDF is based on the nodes to be released, the differential privacy of these nodes may not be guaranteed..
    \item In practice, fitting any network models almost always requires additional tuning or model selection. Rigorously speaking, once such procedure involves the to-be-released data, DP cannot be guaranteed.
\end{enumerate}

To rectify these challenges, we assume that we have a network $A$ between $N=n+m$ nodes available. We would like to hold out the network of $m$ nodes while privatizing and releasing only the network between the remaining $n$ nodes. By doing this, we will be able to use the hold-out network as the reference to obtain estimates about $F$ and $Z_i$'s and also for tuning if needed while ensuring the node DP for the released network between the $n$ nodes. Figure \ref{fig:obs_adjmat} gives a high-level flowchart demonstrating the proposed method, whose details will be introduced soon. Our data splitting practice is also seen in the graph neural network literature \citep[e.g.,][]{kipf2016semi}, where a sub-network is used for model training, and another sub-network is used for testing. 
% Thus we call our procedure a transductive privatization procedure. 
In statistics, the splitting strategy is also used by \cite{chen2014network}.

Consider a network of size $N=n+m$. Without loss of generality, we assume the first $n$ nodes are designated for release. The adjacency matrix $A \in \{0,1\}^{(n+m) \times (n+m)}$ can be partitioned into blocks $A^{kr}$ for $k,r \in \{1,2\}$, where $A^{11}$ is an $n \times n$ matrix, $A^{12}$ is an $n \times m$ matrix, $A^{21}$ is an $m \times n$ matrix, and $A^{22}$ is an $m \times m$ matrix, as shown in Figure \ref{fig:obs_adjmat}. The proposed scheme will then have the entire adjacency matrix $A$ as the input, and a private version of $A^{11}$, namely $\tilde{A}^{11}$, as the output.

Assuming the network is generated from a general latent space model, we apply one of the standard model estimation procedures based on the specific assumed model to $A^{22}$. For example, if an inner product latent space model is assumed, one can apply the likelihood-based gradient descent algorithm of \cite{ma2020universal,li2023statistical}. If the RDPG is assumed, then the adjacency spectral embedding (ASE) of \cite{sussman2014consistent} can be used. This step would result in an estimate of the latent positions for the hold-out nodes, denoted by $\{\hat{Z}_i\}_{i=n+1}^{n+m}$. From this step, we can estimate the corresponding CDF $F$ for the latent vectors $\{\hat{Z}_i\}_{i=n+1}^{n+m}$. This can be achieved through following standard practices such as empirical CDFs, kernel smoothing, splines, or $K$-nearest neighbors \citep{wasserman2006all}. Denote this estimated $F$ by $\hat{F}$.

\begin{figure}[H]
  \centering
\includegraphics[width=\textwidth]{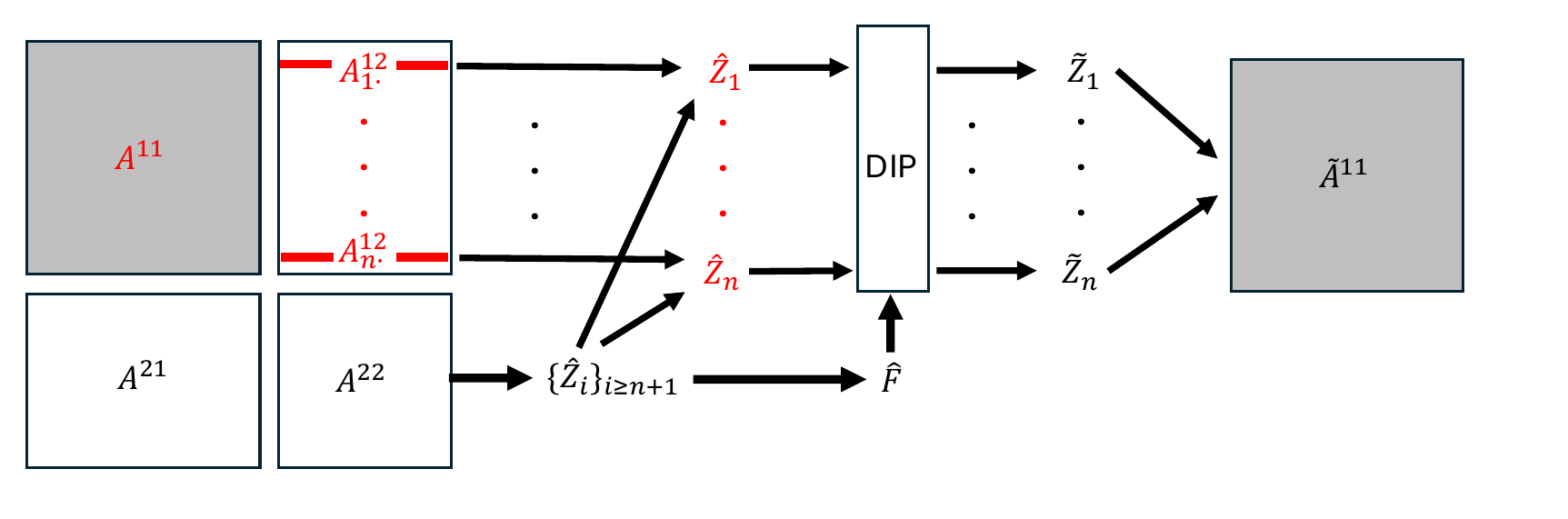}
\caption{\label{fig:obs_adjmat}The proposed privacy-preserving scheme illustrated: An $N \times N$ adjacency matrix $A$ is used as the input, and a private version of a $n \times n$ subnetwork $A^{11}$, namely $\tilde{A}^{11}$, is generated as the output. All privacy vulnerable quantities are colored in \textcolor{red}{red}. Each of the latent vectors of nodes $i \le n$ is separately estimated by a node-wise estimation with the help of hold-out estimates $\{\hat{Z}_i\}_{i> n}$. }
\end{figure}

Next, we introduce a \textbf{node-wise estimation procedure} to obtain an estimate for the latent vectors of the to-be-released nodes. This is a crucial step to ensure valid node DP for our method. Specifically, for each node $i \in [n]$, we use the information of the $i$th row of $A^{12}$, denoted by $A^{12}_{i*}$, following the latent space model. The estimation can be done by solving the following optimization problem involving the negative loglikelihood:
\be \label{eq:LF}
\hat{Z}_i = \mathop{\arg\min}_{Z_i\in \bR^d} \sum_{j=n+1}^{n+m} \{-A_{ij} \log W(Z_i,\hat{Z}_j) - (1-A_{ij}) \log (1-W(Z_i,\hat{Z}_j))\}, \quad 1\le i\le n.
\ee
Problem \eqref{eq:LF} is simply a pseudo-likelihood estimation procedure, by focusing only on one node at a time, while fixing the estimated latent vectors of the hold-out nodes. Note that the above estimation procedure ensures that, after fixing the estimation on the hold-out set, the estimation of $\hat{Z}_i$'s is separable for each $i\in [n]$. That is, the estimation of each $\hat{Z}_i$ in \eqref{eq:LF} uses information of its own row, thus \emph{it avoids the privacy spill-over effect}. 

For most commonly used $W$, this problem is easy to solve using standard algorithms. In many particular cases, the problem can be significantly simplified. For example, under the inner product latent space model, \eqref{eq:LF} becomes a simple logistic regression problem:
\be \label{eq:logistic}
\mathop{\arg\min}_{(X_i, \alpha_i)\in\bR^d} \sum_{j=n+1}^{n+m} \{-A_{ij} \log \sigma(X_i^\top\hat{X}_j + \alpha_i + \hat{\alpha}_j) - (1-A_{ij}) \log (1-\sigma(X_i^\top\hat{X}_j + \alpha_i + \hat{\alpha}_j))\}.
\ee

Other reasonable objectives can be used as alternatives to the negative log-likelihood in \eqref{eq:LF}, and these alternatives may further simplify the algorithm. For example, under the RDPG, it is easier to use the sum of squared errors as an M-estimation criterion rather than the likelihood. Thus the estimation becomes a node-wise OLS linear regression $\mathop{\arg\min}_{Z_i\in\bR^d}\sum_{j=n+1}^{n+m} (A_{ij} - Z_i^\top\hat{Z}_j)^2$.

With the estimated $\hat{F}$ and $\{\hat{Z}_i\}_{i\in [n]}$, we can apply the prototypical procedure in Section~\ref{secsec:DIP}. In particular, $\hat{F}$ does not utilize any information from the to-be-released nodes, thus can be used to replace $F$. Using \eqref{eq:dip_general}, for each $i \in [n]$, we have
\be \label{eq:TNR}
\tilde{Z}_i = \fm(\hat{Z}_i;e_i,\hat{F}).
\ee

Subsequently, we generate the private network $\tilde{A}^{11}$ based on $\{\tilde{Z}_i\}_{i\in [n]}$ following the Bernoulli. 
Algorithm \ref{alg:TNR} provides a summary of the proposed method.

% \textcolor{red}{
Note that if one has node-level attributes $U_i$ in addition to the network, we can simply let $\hat{Z}_{i,\rm new} = (\hat{Z}_i,U_i)$ and equation \eqref{eq:TNR} can directly be applicable to $\hat{Z}_{i,\rm new}$.
% } \textcolor{blue}{Let \(\hat{Z}_i = (\hat{X}_i, U_i)\) to be consistent with the notation in latent space model?}
Consequently, our method and the theoretical results can be trivially extended to handle networks with node attributes. We briefly outline this extension in Appendix \ref{sec:attributes}. 

\begin{algorithm}[h]
%\captionsetup{font=footnotesize} 
\caption{\label{alg:TNR}The GRAND mechanism by node-wise estimation}
\begin{algorithmic}[1]\label{algo:main}
{%\footnotesize
\item[\textbf{Input:}] A network $A$ of size $N=n+m$ in which the first $n$ nodes are to be privatized, and the latent dimension $d$. 
\STATE Partition $A$ as in Figure~\ref{fig:obs_adjmat}.
\STATE Specify a general latent space model and the corresponding generative function $W$.
\STATE Estimate the hold-out latent vectors $\{\hat{Z}_i\}_{i=n+1}^{n+m}$ based on $A^{22}$ using a standard network model estimation procedure. 
\STATE Estimate the CDF $\hat{F}$ based on $\{\hat{Z}_i\}_{i=n+1}^{n+m}$.
\STATE Estimate the latent vectors $\{\hat{Z}_i\}_{i=1}^{n}$ by the node-wise estimation procedure \eqref{eq:LF} using $\{\hat{Z}_i\}_{i=n+1}^{n+m}$ and $A^{12}$.
% as in Theorem~\ref{prop:RDPG-consistency}.
\STATE Apply procedure \eqref{eq:TNR} to privatize latent vectors of the to-be-released nodes, producing $\{\tilde{Z}_i\}_{i=1}^{n}$.
\STATE Generate the private adjacency matrix $\tilde{A}^{11}$ from $\{\tilde{Z}_i\}_{i=1}^{n}$ using $W$.
\item[\textbf{Output:}] A privatized $\tilde{A}^{11}$ of size $n$.
}\end{algorithmic}
\end{algorithm}

\begin{rem}[Privacy of the Holdout Set]
Similar to the requirements discussed in \cite{bi2023distribution}, the holdout set should not be accessed, altered, queried, or released, and should be deleted immediately once Algorithm \ref{alg:TNR} is implemented. More details can be found in \cite{bi2023distribution}.

We acknowledge that the holdout set does not receive the same formal node DP guarantee as the released set. However, we argue that this concession is necessary to resolve a fundamental conflict inherent to network data: the \emph{privacy spill-over effect}. Intuitively, preserving network structures (e.g, by model fitting) requires exploiting pairwise relations. Conversely, node differential privacy mandates that every single node should be separable in privatization operations. These two objectives are intrinsically contradictory. 
Consequently, previous attempts in literature to achieve node DP on the entire graph have resulted in either a prohibitive loss of utility (network structures) or computational intractability.

Our method reconciles this tension by using the holdout set as a structural anchor. The proposed node-wise estimation (Equation \eqref{eq:LF}) decouples the released nodes in model fitting, thereby asymptotically breaking the trade-off between privacy and utility for the released network. We feel that the introduction of such a holdout set is a reasonable and necessary price to pay for the first computationally feasible mechanism that delivers both assured Node Differential Privacy and structural consistency. In Section \ref{sec:discussion}, we discuss possible future directions to further improve the privacy protection for the holdout set.
\end{rem}

%%%

\begin{rem}[Tradeoff for the Private Network Size]
It can be seen that $A^{11}$ is not used in estimation. This is to avoid the privacy spill-over issue. Therefore, from a statistical estimation perspective, the estimation is essentially based on a size-$(m+1)$ network rather than a size-$N$ network. which will be explicitly verified in our theory and numerical results. The statistical efficiency reduction from size $N$ to size $m$ is the price we pay for the node-level privacy. Meanwhile, the computational bottleneck of the algorithm is on the standard model estimation procedure in Step 3 based on $m$. The node-wise estimation is typically trivial and can be completely in parallel. Intuitively, we hope to maximize the released size $n$, subject to $m = N-n$ being large enough for model estimation. In practice, this choice may also depend on other factors, such as the target size of the released networks and computational resources. In our numerical studies, we use $n=0.5N$ for simplicity.
\end{rem}

\begin{rem}
It is not difficult to extend the procedure for directed networks. In the directed case, the latent space model would associate with each node a ``sending" latent vector and a ``receiving" latent vector. We will have to estimate both latent vectors for each node, using both $A^{12}$ and $A^{21}$. The rest operations would remain similar. 
\end{rem}

% \newpage

\section{Theoretical Guarantees}\label{sec:theory}

We now study the theoretical guarantees for the released network from our method. The very basic result is that our method gives differential privacy at the node level. 

\begin{thm}\label{thm:NDP}
    Suppose Algorithm~\ref{algo:main} is used to process the input network under the privacy budget $\varepsilon$. The resulting $\tilde{A}^{11}$  is node $\varepsilon$-differentially private.
\end{thm}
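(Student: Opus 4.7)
The plan is to trace how a change in a single node propagates through Algorithm~\ref{algo:main}, and then invoke two standard tools: the differential privacy guarantee of the DIP subroutine and the post-processing invariance of DP. Fix a node $i \in [n]$ and consider two input networks $A$ and $A'$ of size $N=n+m$ that agree everywhere except along the $i$th row and $i$th column. The goal is to show that for any measurable set $\Psi$ in the space of $n \times n$ adjacency matrices, $\p\bigl(\tilde{A}^{11}(A) \in \Psi\bigr) \le e^{\varepsilon}\,\p\bigl(\tilde{A}^{11}(A') \in \Psi\bigr)$.

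The key structural observation---which the algorithm is precisely designed to enforce---is that altering the $i$th row/column of $A$ changes \emph{only} the single estimate $\hat{Z}_i$, and leaves every other quantity in the pipeline untouched. First, Step~3 fits the hold-out latent vectors $\{\hat{Z}_j\}_{j>n}$ from $A^{22}$ alone, which does not involve node $i$, so $\hat{Z}_j$ for $j > n$ is identical under $A$ and $A'$. Consequently, in Step~4, the CDF estimate $\hat{F}$ is identical under $A$ and $A'$ as well. Most importantly, Step~5 is \emph{separable} across $j \in [n]$: the objective in \eqref{eq:LF} for $\hat{Z}_j$ involves only the $j$th row of $A^{12}$ and the frozen hold-out estimates. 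Hence $\hat{Z}_j$ for $j \in [n] \setminus \{i\}$ is also identical under $A$ and $A'$, and the only quantity that can differ between the two runs is $\hat{Z}_i$ itself. This is the ``no privacy spillover'' property that the node-wise design was introduced to guarantee.

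Next, I would invoke the DIP privatization property of \cite{bi2023distribution} applied to the single vector $\hat{Z}_i$. By construction of the mechanism in \eqref{eq:mv_DIP}--\eqref{eq:dip_general}, the map $\hat{Z}_i \mapsto \fm(\hat{Z}_i; e_i, \hat{F})$ is $\varepsilon$-differentially private in its first argument, because at each coordinate the inner perturbation $F^l(\cdot) + e_{il}$ has sensitivity at most $1$ and $e_{il}$ is Laplace$(0,1/\varepsilon)$, while the outer transformations $(\tilde F^l)^{-1}\circ G$ are (by conditioning on previous privatized coordinates and on $\hat F$) data-independent post-processing maps. Since $\hat{F}$ and the noise vector $e_i$ do not depend on node $i$'s connections, marginalizing over them preserves the $\varepsilon$-DP guarantee. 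Thus $\tilde{Z}_i$ is $\varepsilon$-node-DP, and $\{\tilde Z_j\}_{j\neq i}$ are unchanged.

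Finally, Step~7 generates $\tilde{A}^{11}$ from $\{\tilde{Z}_j\}_{j \in [n]}$ via the generative function $W$ and independent Bernoulli randomness, which is a classical post-processing step independent of $A$ given $\{\tilde Z_j\}$. By the post-processing invariance of differential privacy, $\tilde{A}^{11}$ inherits the $\varepsilon$-DP guarantee from $\tilde{Z}_i$. The main obstacle is really just verifying separability of the node-wise estimation step, since this is what distinguishes the mechanism from naive ``estimate-then-privatize'' pipelines that would leak information through every $\hat{Z}_j$; once separability is in hand, the remainder reduces to composing an off-the-shelf DP primitive with post-processing.
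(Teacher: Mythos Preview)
Your proposal is correct and follows essentially the same approach as the paper's own proof: establish separability of the node-wise estimation so that only $\hat{Z}_i$ changes, invoke the DIP result of \cite{bi2023distribution} (their Theorem~2) to obtain $\varepsilon$-DP for $\tilde{Z}_i$, and then appeal to post-processing with the known $W$. Your write-up is in fact more detailed than the paper's (which is a brief three-sentence sketch), but the logical structure is identical.
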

Note that, even though we use the latent space model to motivate our method, the node DP guarantee \textbf{does not rely on} the latent space model. That means, even if the network data is not generated by latent space models, our method still protects privacy.

However, the latent space model assumption is needed to guarantee the preservation of network properties.  
As implemented in Algorithm \ref{alg:TNR}, the estimation procedure for all $\tilde{Z}_i$'s involves the common $A^{22}$. That is, $\tilde{Z}_i$'s may have marginal dependence and can no longer be assumed to be i.i.d.
This is in contrast to the true $Z_i$'s. Therefore, in the rest of this section, \textbf{none of the theoretical results rely on the $\tilde{Z}_i$'s being i.i.d.}, which is one of the major theoretical contribution of this work.

As the focus of this section,  we will show that the privatized $\tilde{Z}_i$'s are guaranteed to asymptotically maintain the original distribution of $Z_i$'s. Such consistency indicates that the distributional properties of the original network are asymptotically preserved. In particular, as an indication of latent space consistency, we will show that the released private network $\tilde{A}$ can preserve motif counts of the original network for any given motif.

\subsection{Preservation of Network Properties} \label{sec:distn_consistency}

As discussed, to study the preservation of network properties, we will assume that the true network data is generated from a latent space model with regularity conditions.

\begin{ass}[Latent Space Model]\label{ass:LSM}
    Suppose the network $A$ is generated from the general latent space model (Definition~\ref{defi:generic}) with a fixed dimension $d$.
\end{ass}

\begin{ass}\label{ass:continuous}
The true latent space distribution satisfies the following properties.
    \begin{itemize}
    \item The joint CDF \( F \) is a continuous distribution on a compact support \( S \). The joint density \( f(x_1,\dots,x_d) \) of \( F \) is continuous and bounded by a constant \( C_{\mathrm{up}} > 0 \).
        \item If $d>1$, for each \( l=2, \dots, d \): The conditional CDF \( F_{l\mid 1:(l-1)}(x\mid u) \) is Lipschitz continuous in $(x,u)$ and strictly increasing in $x$. The marginal density \( f_{1:{(l-1)}}(u) \) is continuous, bounded and lower bounded by a positive constant.
\end{itemize}
\end{ass}

%\begin{rem}
%    We believe the assumption for continuity is an artifact of our proof and the method can handle purely discrete distributions of latent positions, such as the SBM, as illustrated in our empirical studies. Under the current setup, to approximate SBM, we can assume the latent distributions to be a Gaussian mixture model with very small variances.
%\end{rem}

%\enlargethispage{\baselineskip}
\begin{ass}[Kernel Smoothing for CDF Estimation]\label{ass:smoothedDIP}
    For each $1\le l\le d$, the estimation of the conditional CDF $F^{l\mid 1:(l-1)}$
    is constructed from $\{\hat{Z}_i\}_{i> n}$ by a kernel estimation using a bounded, Lipschitz continuous kernel \(K:\bR^{l-1}\to[0, \infty)\) (with a constant $L_{K}$) such that 
\(\int_{\bR^{l-1}} K(u)\,\mathrm du = 1\) and bandwidth $h = (\log m)^{-c}$ for a constant $c>0$,
%\footnote{This choice does not necessarily give the optimal rate in estimation. The optimal choice is hinged on the rate of \ref{ass:embedding-concentration} in a more complicated way, and the detailed study is not be the focus of our current work. We choose this rate for simple interpretations.} 
defined as
\begin{equation}\label{eq:kernel-estimator}
  \hat{F}^{l \mid 1:(l-1)}(x \mid x_1,\dots,x_{l-1})
=  \frac{
    \sum_{i=1}^m \mathbf{1}_{\{\hat{Z}_{n+i,l} \le x\}}
      K\left(\tfrac{x_1 - \hat{Z}_{n+i,1}}{h}, \dots, \tfrac{x_{l-1} - \hat{Z}_{n+i,l-1}}{h}\right)
  }{
    \sum_{i=1}^m
      K\left(\tfrac{x_1 - \hat{Z}_{n+i,1}}{h}, \dots, \tfrac{x_{l-1} - \hat{Z}_{n+i,l-1}}{h}\right)
  }.    
\end{equation}
\end{ass}

\begin{ass}[Latent Embedding Error]\label{ass:embedding-concentration}
   Suppose $\hat{Z}_i$, $i \in [N]$ are the estimated latent vectors in Algorithm~\ref{algo:main}. There exists a constant $c>0$, such that (subject to the unidentifiable operator $\mathcal{O}^d$)
   \[\max_{1\le i\le N}\norm{\hat{Z}_{i}-Z_i} \le \delta_{m} =  o((\log m)^{-c})\]
    with probability approaching 1 as $m\to \infty$. 
\end{ass}

Recall that we use $\{\tilde{Z}_i\}_{i\in [n]}$  to generate the released network $\tilde{A}^{11}$. Given the latent vectors, the generating mechanism is always independent Bernoulli following Definition~\ref{defi:generic}. Therefore, to preserve network properties, the primary guarantee we need is that the resulting distribution of the $\tilde{Z}_i$'s should be roughly the same as the true distribution in a proper sense. 
\begin{thm}[Individual Latent Distribution Consistency]\label{thm:main-marginal}
    Suppose assumptions~\ref{ass:LSM}--\ref{ass:embedding-concentration} hold. Let $\tilde{F}^{(i)}$ be the joint CDF of the random vector $\tilde{Z}_i$. For any $1\le i \le n$, as $m\to\infty$,
    $$\sup_{x\in \bR^d}\big|\tilde{F}^{(i)}(x) - F(x)\big| \xrightarrow{\mathbb P} 0.$$
\end{thm}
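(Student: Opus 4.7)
The plan is to couple $\tilde{Z}_i$ with an \emph{oracle} version $\tilde{Z}_i^{\ast}=\fm(Z_i;e_i,F)$ built from the \emph{true} latent vector $Z_i$ and the \emph{true} CDF $F$; by the probability-integral transform argument already used in Section~\ref{secsec:DIP}, $\tilde{Z}_i^{\ast}$ has exact joint CDF $F$. It therefore suffices to establish $\|\tilde{Z}_i-\tilde{Z}_i^{\ast}\|\to 0$ in probability (conditionally on the hold-out data), and then transfer this to uniform CDF convergence via the continuity of $F$ by a Polya-type monotone convergence argument.

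The first ingredient I would establish is uniform consistency of the kernel-smoothed conditional CDF estimator in Assumption~\ref{ass:smoothedDIP}. Because Assumption~\ref{ass:LSM} guarantees that $Z_{n+1},\dots,Z_{n+m}$ are i.i.d.\ from $F$, standard bias/variance bounds for Nadaraya--Watson-type conditional CDF estimators give uniform consistency on the compact support $S$: the bias is $O(h)$ by the bounded density and Lipschitz conditional-CDF conditions in Assumption~\ref{ass:continuous}, and the variance is $O(1/(mh^{d-1}))$, both of which vanish since $h=(\log m)^{-c}$ satisfies $mh^{d-1}\to\infty$ at a polynomial-over-log rate. To pass from the oracle data $Z_{n+j}$ to the available $\hat Z_{n+j}$, I use the Lipschitz constant $L_K$ together with Assumption~\ref{ass:embedding-concentration}: each kernel weight is perturbed by at most $L_K\delta_m/h=o(1)$, so uniform consistency of $\hat F^{\,l\mid 1:(l-1)}$ persists. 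Strict monotonicity and Lipschitz continuity of $F^{\,l\mid 1:(l-1)}$ together with the lower-bounded marginal density then transfer the uniform consistency to the inverses $(\hat F^{\,l\mid 1:(l-1)})^{-1}$ on any compact sub-region of the support bounded away from the boundary.

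Next, I would induct on the coordinate $l\in[d]$ to show $|\tilde{Z}_{il}-\tilde{Z}_{il}^{\ast}|\to 0$ in probability. For $l=1$, the map $x\mapsto(F^1)^{-1}\circ G(F^1(x)+e_{i1})$ is Lipschitz on compacts, and $\tilde{Z}_{i1}$ differs from $\tilde{Z}_{i1}^{\ast}$ only through (i) $\hat F^1-F^1$, (ii) $\hat Z_{i1}-Z_{i1}$, and (iii) $(\hat F^1)^{-1}-(F^1)^{-1}$, each of which is $o_p(1)$ uniformly on the relevant compact; restricting to the event on which $F^1(Z_{i1})+e_{i1}$ lies in a compact interior of $(0,1)$ (controlled by Laplace tails) closes the base case. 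For $l\ge 2$, the inductive hypothesis gives $\tilde{Z}_{i,1:(l-1)}\approx\tilde{Z}_{i,1:(l-1)}^{\ast}$, and Lipschitz continuity of the conditional CDF in its conditioning argument (Assumption~\ref{ass:continuous}) lets me replace the oracle conditioning point by the privatized one with vanishing error; the same Lipschitz-plus-inverse composition argument then yields closeness at coordinate $l$.

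To finish, joint convergence in probability of $\tilde{Z}_i$ to $\tilde{Z}_i^{\ast}$, combined with $\tilde{Z}_i^{\ast}\sim F$ and continuity of $F$, yields pointwise convergence of $\tilde F^{(i)}$ to $F$, and the monotonicity/continuity Polya-type argument upgrades this to uniform convergence. The main obstacle will be the \emph{compounding} of errors across the $d$ sequential coordinates: the inverse step $(\hat{\tilde F}^{\,l})^{-1}$ is evaluated at an argument $G(\hat F^{\,l}(\hat Z_{il})+e_{il})$ whose conditioning coordinates have themselves been privatized and are only approximately correct, so I must carefully use the lower-bounded marginal density from Assumption~\ref{ass:continuous} to guarantee that each inverse is Lipschitz with a constant that does not blow up near the boundary of $S$, and verify that the $d$-fold composition absorbs the cumulative $o_p(1)$ error without hidden dependence on $n$ or $d$.
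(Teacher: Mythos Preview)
Your proposal is essentially correct and follows the same route as the paper: couple $\tilde{Z}_i$ to the fully oracle variable $T_i=\fm(Z_i;e_i,F)\sim F$, establish uniform consistency of the kernel conditional CDF estimators and their inverses under the perturbation $\delta_m$, and then argue coordinate-by-coordinate via the chain-rule structure of the DIP map, finishing with Polya/Slutsky. Two minor technical differences are worth noting. First, the paper inserts an additional intermediate variable $\bar{Z}_i=\hat F_m^{-1}\!\big(G(\hat F_m(Z_i)+e_i)\big)$ (true latent position, \emph{estimated} CDF) between $\tilde{Z}_i$ and $T_i$, so that the two error sources---replacing $\hat Z_i$ by $Z_i$, and replacing $\hat F$ by $F$---are handled in separate lemmas; you lump them together, which is fine but slightly messier. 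Second, the paper does not claim the inverse $(\hat F^{\,l})^{-1}$ is Lipschitz (Assumption~\ref{ass:continuous} gives no lower bound on the conditional density of coordinate $l$), and instead proves \emph{uniform equicontinuity in probability} of $\{\hat F_m^{-1}\}$ on closed intervals $[a,b]\subset(0,1)$, then controls the small probability that $G(\cdot)$ lands outside $[a,b]$; your ``Lipschitz on compacts'' phrasing should be replaced by this equicontinuity device, which is exactly the boundary issue you flagged as the main obstacle.
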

% Note that
%Uniform consistency is achievable if we pose stronger assumptions on the regularity of $F$, such as bounded support with lower-bounded density, or a stronger requirement of the ratio $n/m$. However, such stronger assumptions on $F$ may render restrictions on the model in applications, thus we will not pursue that direction.  
% Meanwhile, different from the true $Z_i$'s, $\tilde{Z}_i$'s may have marginal dependence (due to the estimation procedure involving the common $A^{22}$). 
On the one hand, we can say that the privatized $\tilde{Z}_i$ has the asymptotically correct distribution individually. On the other hand, due to the fact that $\{\tilde{Z}_i\}_{i\in [n]}$ cannot be treated as an i.i.d. sample from $F$ collectively, the latent distribution consistency is for each individual node $i$, rather than a uniform convergence that holds simultaneously for all $i \in [n]$. 
% Therefore, the above result may seem a bit weak in the guarantee, because the properties of the network depends on the collective behaviors of the latent vectors, which is not characterized by Theorem~\ref{thm:main-marginal}.
Fortunately, although the uniform convergence is not achievable, 
% our method indeed gives the correct distribution for ``most" of the individual nodes, thus, 
the privatized latent vectors still give a similar collective behavior for the resulting network. Specifically, we now show that the empirical CDF of the privatized latent vectors is asymptotically consistent.

\begin{thm}[Latent CDF Consistency]\label{thm:main-CDF}
    Suppose assumptions~\ref{ass:LSM}--\ref{ass:embedding-concentration} hold. Let $\tilde{F}_n$ be the empirical CDF of the privatized latent vectors based on $\tilde{Z}_{i}\in \bR^d$, $i\in [n]$ where
\[
\tilde{F}_n(x) = \frac{1}{n}\sum_{i=1}^n \mathbf{1}_{\{\tilde{Z}_{i1} \leq x_1, \dots, \tilde{Z}_{id} \leq x_d\}}.
\]
 Then $\tilde{F}_n$ converges uniformly to $F$ in probability. That is, as $n, m \to \infty$,
\[
\sup_{x \in \bR^d} \big|\tilde{F}_n(x) - F(x)\big| \xrightarrow{\mathbb P} 0.
\]
\end{thm}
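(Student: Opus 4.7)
The plan is to exploit a conditional i.i.d.\ structure on the privatized latent vectors, apply a multivariate Glivenko–Cantelli theorem conditional on the hold-out, and then bridge to $F$ via Theorem~\ref{thm:main-marginal} and the triangle inequality.

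\textbf{Step 1 (Conditional i.i.d.\ structure).} Let $\mathcal{G}_m = \sigma(A^{22}, \{Z_j\}_{j > n})$ denote the $\sigma$-algebra generated by the hold-out adjacency block together with the hold-out true latent vectors. Because $\{Z_i\}_{i \in [n]}$ are independent of $\mathcal{G}_m$ by construction, and the entries of $A^{12}$ are conditionally independent Bernoullis given all latent vectors, the triples $(Z_i, A^{12}_{i*}, e_i)$ for $i \in [n]$ are conditionally i.i.d.\ given $\mathcal{G}_m$. Since $\{\hat{Z}_j\}_{j > n}$ and $\hat{F}$ are $\mathcal{G}_m$-measurable (being functions of $A^{22}$), the node-wise estimator $\hat{Z}_i$ from \eqref{eq:LF} and the privatized vector $\tilde{Z}_i = \fm(\hat{Z}_i; e_i, \hat{F})$ are also conditionally i.i.d.\ across $i \in [n]$ given $\mathcal{G}_m$. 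Denote their common conditional CDF by $\tilde{F}^{(m)}$; under this conditioning, $\tilde{F}^{(m)}$ coincides with the object $\tilde{F}^{(i)}$ appearing in Theorem~\ref{thm:main-marginal} for every $i \in [n]$.

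\textbf{Step 2 (Conditional multivariate Glivenko–Cantelli).} Given $\mathcal{G}_m$, $\tilde{F}_n$ is the empirical CDF of an i.i.d.\ size-$n$ sample from $\tilde{F}^{(m)}$ on $\bR^d$. The class of lower orthants $\{(-\infty, x_1]\times\cdots\times(-\infty, x_d]: x \in \bR^d\}$ has VC dimension $d$, so a standard VC-type concentration inequality yields universal constants $c_1, c_2 > 0$ such that
\[
\P\!\left(\sup_{x \in \bR^d} |\tilde{F}_n(x) - \tilde{F}^{(m)}(x)| > t \,\Big|\, \mathcal{G}_m\right) \le c_1 (n+1)^d e^{-c_2 n t^2}, \qquad t > 0.
\]
The right-hand side does not depend on $\mathcal{G}_m$, so taking expectations and applying dominated convergence yields $\sup_x |\tilde{F}_n(x) - \tilde{F}^{(m)}(x)| \to 0$ in unconditional probability as $n \to \infty$, uniformly over the realization of $\mathcal{G}_m$.

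\textbf{Step 3 (Triangle inequality).} By Theorem~\ref{thm:main-marginal}, $\sup_x |\tilde{F}^{(m)}(x) - F(x)| \to 0$ in probability as $m \to \infty$. Combining with Step 2,
\[
\sup_x |\tilde{F}_n(x) - F(x)| \le \sup_x |\tilde{F}_n(x) - \tilde{F}^{(m)}(x)| + \sup_x |\tilde{F}^{(m)}(x) - F(x)|,
\]
and both summands vanish in probability as $n, m \to \infty$, proving the claim.

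\textbf{Main obstacle.} The central subtlety is Step 1: identifying a $\sigma$-algebra whose conditioning renders the privatized latent vectors i.i.d.\ while still being compatible with Theorem~\ref{thm:main-marginal}. This relies crucially on the node-wise estimation design of \eqref{eq:LF}, which by construction avoids the privacy spill-over effect and makes each $\hat{Z}_i$ depend only on row $A^{12}_{i*}$ and the hold-out quantities; were $A^{11}$ used anywhere in the estimation, this conditional i.i.d.\ structure would break. A secondary technical point is that the VC bound in Step 2 must be uniform in $\mathcal{G}_m$, so that the conditional convergence lifts cleanly to unconditional convergence under the joint asymptotics $n, m \to \infty$ without requiring any coordination between the two indices.
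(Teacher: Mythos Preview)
Your conditional-i.i.d.\ reduction is genuinely different from the paper's route and, if it goes through, considerably cleaner. The paper explicitly declines to exploit any i.i.d.\ structure on the $\tilde Z_i$'s (the authors emphasize that ``none of the theoretical results rely on the $\tilde Z_i$'s being i.i.d.'') and instead works by a direct, laborious comparison: they introduce intermediate sequences $T_i\sim F$ and $\bar Z_i$, bound $\sup_x|\bar F_n-F_n|$ coordinate by coordinate by splitting indices into those with $|\bar Z_i-T_i|<\varepsilon$ (controlled via the bounded density) and those with $|\bar Z_i-T_i|\ge\varepsilon$ (counted using the events from the proof of Theorem~\ref{thm:main-marginal}), then repeat for $|\tilde F_n-\bar F_n|$, and finally iterate from $d=1$ to general $d$ via the conditional CDFs. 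Your observation that conditioning on $\mathcal G_m=\sigma(A^{22},\{Z_j\}_{j>n})$ makes the $\tilde Z_i$ i.i.d.\ is correct and would shortcut all of this.

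The gap is in Step~3. What you need is $\sup_x|\tilde F^{(m)}(x)-F(x)|\to 0$ in probability, where $\tilde F^{(m)}(x)=P(\tilde Z_i\le x\mid\mathcal G_m)$ is the \emph{random conditional} CDF. Theorem~\ref{thm:main-marginal}, as actually proved in the paper, establishes unconditional weak convergence $\tilde Z_i\xrightarrow{d}F$, i.e.\ convergence of the deterministic marginal CDF $x\mapsto P(\tilde Z_i\le x)=E[\tilde F^{(m)}(x)]$. Your assertion that ``$\tilde F^{(m)}$ coincides with the object $\tilde F^{(i)}$ appearing in Theorem~\ref{thm:main-marginal}'' is not justified by the paper's statement (``let $\tilde F^{(i)}$ be the joint CDF of the random vector $\tilde Z_i$'' reads most naturally as the unconditional CDF; the ``in probability'' qualifier there is loose). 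And convergence of the mean $E[\tilde F^{(m)}(x)]\to F(x)$ does not imply $\tilde F^{(m)}(x)\to F(x)$ in probability. To close this you would need to go back into the proof of Theorem~\ref{thm:main-marginal} and argue that the high-probability events driving that proof (uniform closeness of $\hat F_m$ to $F$, of $\hat F_m^{-1}$ to $F^{-1}$, and the equicontinuity events) are themselves $\mathcal G_m$-measurable, so that on those events the conditional law of $\tilde Z_i$ given $\mathcal G_m$ is uniformly close to $F$. That upgrade is plausible, but it is additional work beyond citing Theorem~\ref{thm:main-marginal} as a black box.
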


Assumption~\ref{ass:embedding-concentration} for Theorems~\ref{thm:main-marginal} and \ref{thm:main-CDF} requires that the latent vectors can be accurately recovered for all nodes $i\in [N]$. For nodes in the hold-out set, this is not difficult by using the standard network estimation methods. For nodes $i\in [n]$, such a requirement needs additional study of the node-wise estimation procedure. Next, we show that under both the inner product latent space model and the RDPG, the proposed estimation satisfies \ref{ass:embedding-concentration}.

\begin{thm}[Error Bound for Node-wise Estimation]\label{thm:model-consistency}
Suppose $n = o(e^m)$. For a constant $c>0$, consider the following models.
\begin{enumerate}
    \item The network is generated from the inner product latent space model  and the latent vector set $\{\hat{Z}_i\}_{i\in [n]}$ is estimated by \eqref{eq:logistic}.
    \item The network is generated from the RDPG model and  the latent vector set $\{\hat{Z}_i\}_{i\in [n]}$ is estimated by the OLS linear regression.
\end{enumerate}
If the true latent distribution $F$ has a bounded domain and a positive definite second-order moment: $\Sigma = \e[ZZ^\top]\succeq \mu I_d$ for some constant $\mu>0$, while the hold-out estimates (up to an unidentifiable transformation) satisfy
$$\max_{n+1\le i\le n+m}\norm{\hat{Z}_i-Z_i} = \delta_m = o(1)$$
with probability approaching 1, 
the estimates $\{\hat{Z}_{i}\}_{i\in [N]}$ from Algorithm~\ref{algo:main} satisfy
\begin{equation}\label{eq:thm4}
\max_{i\in [n]}\norm{\hat{Z}_i-Z_i} \le C\max\left\{\delta_m, \frac{\sqrt{\log{m}}+\sqrt{\log n}}{\sqrt{m}}\right\}
\end{equation}
for a constant $C>0$ with probability approaching 1.
\end{thm}

With the above results, under both models, we know the assumption~\ref{ass:embedding-concentration} can be satisfied using commonly used estimation approaches.
\begin{coro}\label{coro:main-specific-models}
    Suppose Assumptions~\ref{ass:LSM}--\ref{ass:smoothedDIP} hold. In either of the following cases:
    \begin{enumerate}
        \item Under the inner product latent space model, suppose that the estimator of \cite{li2023statistical} is used to estimate the hold-out latent vectors  in Step 3 of Algorithm~\ref{algo:main}, and that the model satisfies the regularity conditions in \cite{li2023statistical}.
        \item Under the RDPG model, the method of \cite{rubin2022statistical} is used to estimate the hold-out latent vectors in Step 3 of Algorithm~\ref{algo:main}.
    \end{enumerate}
    The conclusions of Theorems~\ref{thm:main-marginal} and \ref{thm:main-CDF} hold. 
\end{coro}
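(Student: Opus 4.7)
The plan is to reduce the corollary to verifying Assumption~\ref{ass:embedding-concentration}, since Assumptions~\ref{ass:LSM}--\ref{ass:smoothedDIP} are explicitly assumed as hypotheses. Once Assumption~\ref{ass:embedding-concentration} is established for all $N=n+m$ nodes, Theorems~\ref{thm:main-marginal} and \ref{thm:main-CDF} apply verbatim, giving the desired individual and empirical latent CDF consistency. The argument therefore splits into two parts: controlling the estimation error on the hold-out nodes, and then transferring this control to the to-be-released nodes through Theorem~\ref{thm:model-consistency}.

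First I would invoke the consistency results of the referenced hold-out estimators. For case (1), the estimator of \cite{li2023statistical} under the regularity conditions of that paper yields a uniform bound of the form $\max_{n+1\le i\le n+m}\|\hat{Z}_i - Z_i\| = O_P(\sqrt{\log m/m})$ (up to the unidentifiable operator $\mathcal{O}^d$). For case (2), the adjacency spectral embedding of \cite{rubin2022statistical} yields the analogous uniform ($2\to\infty$-type) bound under the RDPG regularity conditions. In both cases the rate is a polynomial decay in $m$, so certainly $\delta_m = o(1)$, and in fact $\delta_m = o((\log m)^{-c})$ for every constant $c>0$. This verifies the hypothesis on the hold-out estimates needed to apply Theorem~\ref{thm:model-consistency}.

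Next, I would apply Theorem~\ref{thm:model-consistency} to transfer this control to the to-be-released block. Since $n = o(e^m)$ is a mild growth condition already embedded in that theorem, and since the latent distribution is compactly supported with positive definite $\Sigma$ (as guaranteed by Assumption~\ref{ass:continuous} combined with the model-specific regularity in the two cases), the theorem delivers
\[
\max_{i\in [n]} \|\hat{Z}_i - Z_i\| \le C\max\!\Bigl(\delta_m,\; \sqrt{(\log m + \log n)/m}\Bigr)
\]
with probability approaching $1$. Combining with the hold-out bound gives a unified rate $\max_{1\le i\le N}\|\hat{Z}_i-Z_i\| = O_P\bigl(\sqrt{(\log m + \log n)/m}\bigr)$, which is $o((\log m)^{-c})$ for any $c>0$ under $n = o(e^m)$. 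Hence Assumption~\ref{ass:embedding-concentration} holds and the conclusions of Theorems~\ref{thm:main-marginal} and \ref{thm:main-CDF} follow.

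The main obstacle is not conceptual but bookkeeping: one must carefully cite the uniform (entry-wise, not just Frobenius) error bounds from \cite{li2023statistical} and \cite{rubin2022statistical}, and one must align the unidentifiability operator $\mathcal{O}^d$ across the hold-out estimation, the node-wise estimation \eqref{eq:logistic} or \eqref{eq:ols}, and the CDF estimator \eqref{eq:kernel-estimator}, since all downstream quantities are invariant under the same $\mathcal{O}^d$. A secondary technical point is checking that the compactness and second-moment conditions required by Theorem~\ref{thm:model-consistency} are implied by (or can be absorbed into) the regularity hypotheses of the two reference papers; this is routine but worth a brief remark in the proof.
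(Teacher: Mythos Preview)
Your proposal is correct and follows essentially the same route as the paper: reduce to verifying Assumption~\ref{ass:embedding-concentration} by citing the uniform hold-out error bounds from \cite{li2023statistical} and \cite{rubin2022statistical}, then invoke Theorem~\ref{thm:model-consistency} to propagate the bound to the node-wise estimates $\{\hat{Z}_i\}_{i\in[n]}$. The paper's proof is in fact terser than yours---it records the hold-out rates ($O_P(m^{-(1/2-c')})$ and $O_P((\log m)^{c'}/\sqrt{m})$ respectively), dismisses the orthogonal-transformation issue in one sentence, and concludes; your additional remarks on aligning $\mathcal{O}^d$ and checking the second-moment condition of Theorem~\ref{thm:model-consistency} are fair bookkeeping points but not obstacles.
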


\begin{rem}
    As mentioned before, we can adapt the main theoretical results to accommodate additional network sparsity parameters, but using different parameterizations for different models. For example, for the RDPG model, we can instead assume the connection probability to be $\rho_{n+m}Z_i^\top Z_j$ for some $\rho_{n+m}\to 0$ as $n, m \to \infty$, as used in \cite{athreya2017statistical}. The other assumptions on about $Z_i$'s can remain the same. Theorems~\ref{thm:main-marginal} and \ref{thm:main-CDF} still remain valid, as long as $m\rho_{m+n}\ge (\log{m})^{c'}$ for a sufficiently large constant $c'>0$ without significantly changing the standard density requirement \citep{athreya2017statistical}. The $\sqrt{m}$ term in \eqref{eq:thm4} will be changed to $\sqrt{m\rho_{n+m}}$.
%    $$ \max_{i \in [n]} \norm{\hat{Z}_i - Z_i} \le C \max\left\{ \delta_m, \frac{\sqrt{\log m} + \sqrt{\log n}}{\sqrt{m \rho_{n+m}}} \right\}. $$
\end{rem}

%Subsequently, the established consistency of the latent distribution for the privatized latent vectors indicate correct recovery of many network properties.

%Note that the theoretical results in this subsection also hold if node attributes are also available for each node $i$. That is, the joint distribution of $\hat{Z}_i$ and node attributes is roughly the same as the true joint distribution. For such result to hold, one has to generalize the corresponding latent space model to accommodate the node attributes, which is provided in Section \ref{sec:attributes} in the Supplementary Materials.

\subsection{Preservation of Network Moments}

In this subsection, we show that, as a result of the latent distribution consistency, the privatized $\tilde{A}$ maintains the moments/motif counts of the original network.

Network moments are commonly used to measure various aspects of network structures for inference tasks \citep{milo2002network,bickel2011method,bhattacharyya2015subsampling,maugis2020testing,levin2019bootstrapping,zhang2022edgeworth,qi2024multivariate}, as they allow comparison across networks of different sizes and node sets. More broadly, many other useful network statistics, though not directly expressed as network moments, can be written as functions of network moments of multiple motifs, such as the clustering coefficient. Therefore, the guarantee of network moment recovery can indicate the valid of many downstream analyses under NDP.

%Recall that $\vcal(A)$ and $\ecal(A)$ are the node set and edge set of network $A$, respectively. 
Following \cite{qi2024multivariate}, a graph $R$ is a subgraph of $A$, written as $R \subset  A$, if $\vcal(R) \subset \vcal(A) $ and  $\ecal(R) \subset \ecal(A)$.   Two graphs $R$ and $A$ are isomorphic, denoted by $R \cong A$, when there exists a bijective function $\phi$: $\vcal(R) \to \vcal(A)$ such that $(v_i,v_j)\in \ecal(R)$ if and only if edge $[\phi(v_i), \phi(v_j)] \in \ecal(A)$.   A motif refers to a (usually simple) graph, such as an edge ($\xline$), a 2-star/V-shape ($\xtwoline$), a triangle ($\xtriangle$), or a 3-star ($\xthreestar$),  which forms the building blocks of larger graphs. Here we denote a motif by $R$, with $|V(R)| = r$ representing the number of nodes. We focus exclusively on connected motifs. For a network $A$ and a motif $R$, the motif density of $R$ in $A$ is defined as the normalized number of subgraphs of $A$ that are isomorphic to $R$:
\begin{equation}
\label{eq:MotifCountp}
    X_R(A)=\big|\{S: S \subset A, S \cong  R\}\big|/\binom{n}{r} = \sum_{1\le i_1 < \cdots < i_r \le n}\mbone(A_{[i_1, \ldots, i_r]} \cong  R)/\binom{n}{r},
\end{equation}
in which \(A_{[i_1, \cdots, i_r]}\) is the subgraph of $A$ induced by nodes \(\{i_1, \cdots, i_r\}\). We call $X_R(A)$ the network moment of $A$ with respect to motif $R$. For example, when $R$ is an edge, then $X_R(A)$ is the edge density of $A$. Network moments are summary statistics of the whole network structures. Intuitively, under the latent space model, they are determined by the collective behavior of the latent vectors. And therefore, preservation of latent vector distributions should also indicate the preservation of network moments.

\begin{thm}[Consistency of Network Moments]\label{thm:main-moments}
    Let $R$ be a fixed motif on $r$ nodes that does not depend on $n$. Suppose $X_R(A^{11})$ is the motif density of $R$ in network $A^{11}$, as defined in \eqref{eq:MotifCountp}. Similarly, let $X_R(\tilde{A}^{11})$ be motif density of $R$ in the privatized network $\tilde{A}^{11}$ from Algorithm~\ref{algo:main}. When Assumptions~\ref{ass:LSM}--\ref{ass:embedding-concentration} hold, we have, as $n, m\to \infty$,
    \[X_R(A^{11})-X_R(\tilde{A}^{11}) \xrightarrow{\mathbb P} 0.\]
\end{thm}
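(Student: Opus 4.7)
The plan is to pass to conditional expectations given the latent vectors, which peels off the Bernoulli randomness and reduces the problem to comparing two U-type sums: one over the i.i.d.\ sample $\{Z_i\}$ and one over the privatized $\{\tilde Z_i\}$. For the fixed motif $R$ on $r$ vertices, define the symmetric $[0,1]$-valued kernel
$$h_R(z_1,\ldots,z_r) = \e\!\left[\mbone(B_{[1,\ldots,r]}\cong R)\,\middle|\,z_1,\ldots,z_r\right],$$
where $B_{[1,\ldots,r]}$ is the random graph on $r$ nodes with independent $\text{Bernoulli}(W(z_i,z_j))$ edges. Under Assumption~\ref{ass:continuous} together with the continuity of $W$ inherent to the standard latent space models, $h_R$ is bounded and continuous on the compact support $S^r$. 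The tower property then gives
$$\e\!\left[X_R(A^{11})\,\middle|\,Z_1,\ldots,Z_n\right] = \binom{n}{r}^{-1}\!\sum_{1\le i_1<\cdots<i_r\le n} h_R(Z_{i_1},\ldots,Z_{i_r}),$$
and the analogous identity with $(\tilde A^{11},\tilde Z_i)$ in place of $(A^{11},Z_i)$.

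I would next bound the fluctuation of each motif density around its conditional mean. Given the latent vectors, edges are independent Bernoulli, so two motif indicators are conditionally independent unless their node sets share at least two vertices; the number of such $r$-subset pairs is $O(n^{2r-2})$, which combined with the normalization $\binom{n}{r}^{-2}=O(n^{-2r})$ yields the Chebyshev-type bound
$$\var\!\left(X_R(A^{11})\,\middle|\,Z_1,\ldots,Z_n\right) = O(n^{-2}),$$
and an identical bound for $\var(X_R(\tilde A^{11})\mid \tilde Z_1,\ldots,\tilde Z_n)$. So both motif densities coincide with their respective conditional expectations up to $o_P(1)$, and it suffices to show that the two conditional expectations share a common probabilistic limit. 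On the original side, the conditional expectation is a classical order-$r$ U-statistic in the i.i.d.\ sample $Z_1,\ldots,Z_n$, so Hoeffding's law of large numbers gives convergence in probability to $\mu_R:=\int h_R\,dF^{\otimes r}$.

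For the privatized side, I would rewrite the same U-sum as an integral against the empirical CDF $\tilde F_n$: up to the $O(1/n)$ contribution of tuples with repeated indices,
$$\binom{n}{r}^{-1}\!\sum_{1\le i_1<\cdots<i_r\le n} h_R(\tilde Z_{i_1},\ldots,\tilde Z_{i_r}) = \int h_R\,d\tilde F_n^{\otimes r} + O(1/n).$$
Theorem~\ref{thm:main-CDF} yields $\sup_x|\tilde F_n(x)-F(x)|\xrightarrow{P}0$, so via a standard subsequence / almost-sure representation argument $\tilde F_n^{\otimes r}$ converges weakly to $F^{\otimes r}$ in probability; applying the Portmanteau theorem to the bounded continuous function $h_R$ on $S^r$ then gives $\int h_R\,d\tilde F_n^{\otimes r}\xrightarrow{P}\mu_R$. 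Combining the fluctuation bounds, the true-side LLN, and this privatized-side limit proves the theorem. The central obstacle is the non-i.i.d.\ behavior of $\{\tilde Z_i\}_{i\le n}$, which share the hold-out estimates $\hat F$ and $\{\hat Z_i\}_{i>n}$ and therefore preclude any direct U-statistic LLN on the privatized side; the entire point of routing through $\tilde F_n$ is to exploit Theorem~\ref{thm:main-CDF}, whose proof was engineered precisely so as not to require independence among the $\tilde Z_i$'s. A secondary technical point is promoting ``in probability'' uniform CDF convergence to a Portmanteau conclusion, which is handled by a standard subsequence argument together with the uniform boundedness of $h_R$.
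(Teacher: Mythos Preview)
Your proposal is correct and follows essentially the same architecture as the paper's proof: decompose $X_R(A^{11})-X_R(\tilde A^{11})$ via the conditional expectations $U_n,\hat U_n$ (which are U-sums of the bounded kernel $h_R$), control the Bernoulli fluctuations $X_R-U_n$ and $\hat U_n-X_R(\tilde A^{11})$ separately, and then compare $U_n$ and $\hat U_n$ by passing to the product measures $F_n^{\otimes r},\tilde F_n^{\otimes r}$ and invoking the empirical-CDF consistency of Theorem~\ref{thm:main-CDF}. The only tactical differences are that the paper controls the fluctuation terms by McDiarmid (edge-flip bounded differences) rather than your Chebyshev/variance bound, and the paper packages both the original and privatized U-sums symmetrically through a single lemma on $\int h\,d\hat F_n^{\otimes r}\to\int h\,dF^{\otimes r}$ rather than splitting off the original side via Hoeffding's U-statistic LLN; these are interchangeable and do not constitute a different route.
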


It is easy to see that the claim can be extended to any function of network motif counts such as the clustering coefficient. Theorem~\ref{thm:main-moments} provides crucial convenience in applications. For example, one can compare multiple unmatchable networks by comparing the distribution of their moments \cite{qi2024multivariate}, which can be done privately using GRAND-privatized networks.

\section{Simulation Experiments}\label{sec:sim}

In this section, we introduce simulation experiments to evaluate the performance of the proposed method. In the experiments, we fix the node DP budget, and assess the extent to which the privatized network can preserve the structural properties of the original network. We consider the two commonly used network models, inner product latent space model and RDPG, as described in Section \ref{secsec:framework}.  For the inner product latent space model, our data generating method is a generalization of the mechanism of \cite{li2023statistical} for more heterogeneity.  We sample latent vectors $X_i$'s from a mixture of truncated Gaussian distributions and generate $\alpha_i$'s from a uniform distribution. We then rescale all $\alpha_i$'s together so the resulting network from the model would have the desired density according to our previous configuration. For the RDPG model, we sample latent vectors $Z_1,\ldots,Z_N \stackrel{\mathrm{i.i.d.}}{\sim} \text{Uniform}(0,1)^d$. The $Z_i$'s are then rescaled to ensure the desired network density. We always randomly sample half of the nodes to be the hold-out data, and privatize the network between the other half of the nodes.

\emph{Experiment configuration.} Under both models, we set certain key parameters as follows.
% we have the following parameters to set in our experiments: For these parameters, 
We vary the released size and hold-out size to be $n=m \in \{2000, 4000\}$, and network density $\rho \in \{0.025, 0.05, 0.1\}$. For example, for a network of size 2000, the expected average degree is 50, 100 and 200, respectively. Also, we set the dimension $d\in \{3,6\}$ and the privacy budget $\varepsilon \in \{1, 2, 5, 10\}$. Typically, $\varepsilon=1$ corresponds to a strong privacy protection in practice, while $\varepsilon=10$ gives much weaker protection.

\emph{Methods in comparison.} In addition to our proposed method, we include two additional benchmark methods for comparison. (a) The first one is naively applying the Laplace mechanism of \cite{dwork2006differential} to $\{\hat{Z}_i\}_{i \in [n]}$ and then generating the network from the resulting latent vectors (``\emph{Laplace}"). Following a similar argument as in Theorem~\ref{thm:NDP}, it also satisfies node DP, but offers no guarantee for structural preservation. The comparison with this benchmark illustrates how important it is to consider the latent distribution consistency in the design of the proposed method in practice. (b) We also consider a non-private method: We directly estimate the network models from the size $m$ network using a standard network estimation, rather than the node-wise estimation method. Specifically, we use the gradient descent model fitting of \cite{ma2020universal, li2023statistical} for the inner product model and the adjacency spectral embedding \citep{sussman2014consistent} for the RDPG. After that, we generate a new size-$n$ network from the estimated model without introducing any privatization step as the result (in this case, we have $m=n$). We call this the ``\emph{Hat}" network. The ``Hat" method corresponds to the scenario where we do not impose DP at all and only use a standard approach to estimate the network model and generate new data. Comparing the ``Hat" network with the original network gives us a non-private benchmark.
% a sense of the level of difference we expect to see from a synthetic network without privacy. 
As discussed in Section~\ref{sec:method}, our model estimation before the privatization is essentially based on $m+1\approx m$ nodes. Therefore, the ``Hat" estimate can be seen as a comparable one with our method when $\varepsilon\to \infty$. The comparison between our method and ``Hat" can show the price of structural preservation we pay for the privacy.

\emph{Performance evaluation metrics.}
We measure the structural preservation by examining the distributions of five node-level statistics across the $n$ nodes in each released network, and compare these distributions with those of the original network. Specifically, for each local statistic, denote its value at node $i$ by $T_i(A^{11})$ on network $A^{11}$, and its value at node $i$ on the released network $\tilde{A}^{11}$ as $T_i(\tilde{A}^{11})$. We want to compare the distribution of $\{T_i(\tilde{A}^{11})\}_{1\le i \le n}$ with that of $\{T_i(A^{11})\}_{1\le i \le n}$. We focus on the following five local statistics as representative of network properties. \textbf{1) Node degree}: $T_i(A) = \sum_j A_{ij}$. \textbf{2) V-shape count}: $T_i(A) = \binom{\sum_j A_{ij}}{2}$. It measures how many V-shape motifs involve node $i$ as the center. To be consistent with the motif count definition, this also includes triangles. \textbf{3) Triangle count}: $T_i(A) = \frac{1}{2}\sum_{j,k}A_{ij}A_{jk}A_{ki}$. It measures how many triangles involve node $i$. \textbf{4) Eigen centrality}: $T_i(A) = v(A)_i$ where $v(A)$ is the eigenvector corresponding to the largest eigenvalue of $A$. It is a popular measure of how central node $i$ is in the network based on the spectral structure. \textbf{5) Harmonic centrality}:  $T_i(A) = 1/\sum_{j\ne i}d_A(i,j)$ where $d_{A}(i,j)$ is the geodesic distance between node $i$ and node $j$ in the network. With these definitions, for each pair of $A^{11}$ and $\tilde{A}^{11}$, we can visualize the resulting distributions of the five statistics and compare them. 
However, to aggregate the results in a meaningful way, we also have to introduce the metrics to measure the difference between a pair of distributions.  In our study, we use the Wasserstein distance metric. For the highly skewed distributions (such as V-shape, triangle counts),  directly calculating the distances becomes numerically unstable, so we applied a logarithm transformation to the data and evaluate the recovery of the log-transformed distributions. 

We independently repeat each experiment 100 times and take the average distances as the final results. Due to the space limit, we present the results only for $\rho = 0.05, n=4000$ in this section. Additional results for other configurations of $\rho=0.025, 0.1, n=2000$ are included in Appendix~\ref{app:additional-sim}. The observed patterns are similar to this section. Tables  \ref{tab:LSM-RDPG-4000-d3-rho05-noSE} and \ref{tab:LSM-RDPG-4000-d6-rho05-noSE} give the distribution preservation errors under both models and all privacy budget levels when $d=3$ and $6$. The results are summarized as follows. 
% This structure is the most common and robust best practice.

\begin{figure}[h]
\centering
\captionsetup{type=table}%
\caption{%
  Comparison of node-level statistic distribution preservation for $\rho=0.05$ with $n=4000$ and $d=3$ under the inner product latent space model (LSM) and random dot product graph (RDPG) model. Values are the average Wasserstein distance over 100 replications. The average distances are more than ten times larger than their corresponding standard errors; therefore, standard errors are omitted for brevity.
}
\label{tab:LSM-RDPG-4000-d3-rho05-noSE}
\begin{singlespace}{%
  \small
  \begin{tabular}{cc|ccc|ccc}
    \toprule
    \multirow{2}{*}{\raisebox{-0.5ex}{Metric}} & \multirow{2}{*}{\raisebox{-0.5ex}{$\varepsilon$}} & \multicolumn{3}{c|}{LSM} & \multicolumn{3}{c}{RDPG} \\
    \cmidrule(lr){3-5} \cmidrule(lr){6-8}
    & & Hat & GRAND & Laplace & Hat & GRAND & Laplace \\
    \midrule
    %% Node Degree
    \multirow{4}{*}{\makecell{Node\\Degree}} & 1 & 0.010 & 0.030 & 2.340 & 0.009 & 0.019 & 2.053 \\
    & 2 & 0.010 & 0.030 & 2.276 & 0.009 & 0.021 & 1.430 \\
    & 5 & 0.010 & 0.023 & 1.949 & 0.009 & 0.018 & 0.535 \\
    & 10 & 0.010 & 0.018 & 1.327 & 0.009 & 0.016 & 0.487 \\
    \midrule
    %% V-shape
    \multirow{4}{*}{\makecell{V-shape\\Count}} & 1 & 0.020 & 0.061 & 4.696 & 0.018 & 0.038 & 4.121 \\
    & 2 & 0.020 & 0.060 & 4.568 & 0.018 & 0.042 & 2.872 \\
    & 5 & 0.020 & 0.047 & 3.913 & 0.018 & 0.037 & 1.080 \\
    & 10 & 0.020 & 0.036 & 2.666 & 0.018 & 0.031 & 1.005 \\
    \midrule
    %% Triangle
    \multirow{4}{*}{\makecell{Triangle\\Count}} & 1 & 0.013 & 0.081 & 6.768 & 0.019 & 0.047 & 6.532 \\
    & 2 & 0.013 & 0.079 & 6.633 & 0.019 & 0.050 & 5.117 \\
    & 5 & 0.013 & 0.066 & 5.933 & 0.019 & 0.042 & 2.091 \\
    & 10 & 0.013 & 0.054 & 4.491 & 0.019 & 0.035 & 1.385 \\
    \midrule
    %% Eigen Centrality
    \multirow{4}{*}{\makecell{Eigen\\Centrality}} & 1 & 0.007 & 0.028 & 0.257 & 0.020 & 0.023 & 0.125 \\
    & 2 & 0.007 & 0.029 & 2.080 & 0.020 & 0.023 & 0.178 \\
    & 5 & 0.007 & 0.028 & 0.115 & 0.020 & 0.022 & 0.299 \\
    & 10 & 0.007 & 0.029 & 0.041 & 0.020 & 0.023 & 0.305 \\
    \midrule
    %% Harmonic Centrality
    \multirow{4}{*}{\makecell{Harmonic\\Centrality}} & 1 & 2.046 & 4.445 & 884.790 & 1.162 & 2.385 & 659.590 \\
    & 2 & 2.046 & 4.301 & 836.624 & 1.162 & 2.558 & 345.161 \\
    & 5 & 2.046 & 3.298 & 621.660 & 1.162 & 2.300 & 133.620 \\
    & 10 & 2.046 & 2.374 & 324.391 & 1.162 & 1.975 & 124.544 \\
    \bottomrule
  \end{tabular}%
}\end{singlespace}
\captionsetup{type=figure}%
\end{figure}
\begin{itemize}
\item When focusing on the proposed GRAND method, it can be seen that as $\varepsilon$ increases, the overall performance becomes better, following the potential privacy-utility tradeoff.

\begin{figure}[h]
\centering
\captionsetup{type=table}%
\caption{%
  Comparison of node-level statistic distribution preservation for $\rho=0.05$ with $n=4000$ and $d=6$ under the inner product latent space model (LSM) and random dot product graph (RDPG) model. Values are the average Wasserstein distance over 100 replications.  The average distances are more than ten times larger than their corresponding standard errors; therefore, standard errors are omitted for brevity.
}
\label{tab:LSM-RDPG-4000-d6-rho05-noSE}
\begin{singlespace}{%
  \small
  \begin{tabular}{cc|ccc|ccc}
    \toprule
    \multirow{2}{*}{\raisebox{-0.5ex}{Metric}} & \multirow{2}{*}{\raisebox{-0.5ex}{$\varepsilon$}} & \multicolumn{3}{c|}{LSM} & \multicolumn{3}{c}{RDPG} \\
    \cmidrule(lr){3-5} \cmidrule(lr){6-8}
    & & Hat & GRAND & Laplace & Hat & GRAND & Laplace \\
    \midrule
    %% Node Degree
    \multirow{4}{*}{\makecell{Node\\Degree}} & 1 & 0.012 & 0.034 & 2.348 & 0.010 & 0.016 & 2.259 \\
    & 2 & 0.012 & 0.030 & 2.334 & 0.010 & 0.015 & 2.038 \\
    & 5 & 0.012 & 0.026 & 2.238 & 0.010 & 0.015 & 1.074 \\
    & 10 & 0.012 & 0.022 & 1.957 & 0.010 & 0.015 & 0.562 \\
    \midrule
    %% V-shape
    \multirow{4}{*}{\makecell{V-shape\\Count}} & 1 & 0.024 & 0.069 & 4.712 & 0.020 & 0.032 & 4.532 \\
    & 2 & 0.024 & 0.061 & 4.682 & 0.020 & 0.031 & 4.090 \\
    & 5 & 0.024 & 0.054 & 4.491 & 0.020 & 0.031 & 2.158 \\
    & 10 & 0.024 & 0.045 & 3.929 & 0.020 & 0.030 & 1.134 \\
    \midrule
    %% Triangle
    \multirow{4}{*}{\makecell{Triangle\\Count}} & 1 & 0.015 & 0.073 & 6.765 & 0.023 & 0.039 & 6.993 \\
    & 2 & 0.015 & 0.068 & 6.734 & 0.023 & 0.038 & 6.501 \\
    & 5 & 0.015 & 0.064 & 6.531 & 0.023 & 0.036 & 4.226 \\
    & 10 & 0.015 & 0.059 & 5.921 & 0.023 & 0.034 & 2.036 \\
    \midrule
    %% Eigen Centrality
    \multirow{4}{*}{\makecell{Eigen\\Centrality}} & 1 & 0.009 & 0.028 & 0.409 & 0.022 & 0.024 & 0.234 \\
    & 2 & 0.009 & 0.032 & 0.308 & 0.022 & 0.025 & 0.117 \\
    & 5 & 0.009 & 0.031 & 0.177 & 0.022 & 0.028 & 0.255 \\
    & 10 & 0.009 & 0.035 & 0.080 & 0.022 & 0.023 & 0.355 \\
    \midrule
    %% Harmonic Centrality
    \multirow{4}{*}{\makecell{Harmonic\\Centrality}} & 1 & 1.953 & 4.435 & 898.208 & 1.042 & 1.670 & 834.276 \\
    & 2 & 1.953 & 3.904 & 885.700 & 1.042 & 1.617 & 666.549 \\
    & 5 & 1.953 & 3.362 & 808.732 & 1.042 & 1.596 & 239.014 \\
    & 10 & 1.953 & 2.785 & 618.093 & 1.042 & 1.528 & 120.293 \\
    \bottomrule
  \end{tabular}%
}\end{singlespace}
\captionsetup{type=figure}%
\end{figure}
\item Comparing our method with the Hat method, we can see that it indeed yields worse preservation. This is also expected since it reflects the cost of achieving privacy. Noticeably, for weak privacy protection, such as $\varepsilon=10$, the performance of our method becomes comparable to the non-private Hat method. Recall that the Hat method is based on an estimation of a size-$m$ network. This verifies our theory that the node-wise estimation basically gives the accuracy as a standard estimation of size-$m$ network. 
%It is also noted that for several metrics, our method gives even a slightly better preservation than the Hat method for $\varepsilon=10$. This is indeed because in our method, we also use the information that the latent vectors are i.i.d following $F$, in processing the network. However, for the Hat method using the estimation of \cite{ma2020universal}, this distributional information is not used. In other words, when $\varepsilon$ becomes large, the advantage of using the distributional information outweighs the small loss due to the privacy protection.

\item Comparing our method with the Laplace mechanism, where both methods satisfy the same strictness of node DP, it is evident that our method achieves much better preservation of network properties. The Laplace mechanism blindly introduces the noises but fails to preserve the network structures.
\item The patterns are consistent across configurations (see Appendix~\ref{app:additional-sim}), demonstrating the effectiveness of GRAND across a wide range of scenarios.
%\item The high-level patterns are similar for the inner product latent space model and the RDPG model, indicating that the proposed method remains effective for both of them.
\end{itemize}

\section{Examples of real-world data sets}\label{sec:data}

We now demonstrate how well the released network preserves the network properties, fixing $\varepsilon$, on real-world networks. In addition to the Hat method and the Laplace mechanism, we also include the network statistics of the original network (True) in comparisons.

\emph{Caltech Facebook network.} The first example data set is the Facebook social network between students in California Institute of Technology (Caltech), collected by \cite{traud2012social}. Each node in the network is a student and the edges are the Facebook connections between students. We process the data set following \cite{wang2016discussion} and \cite{li2018hierarchical} using the 2-core algorithm. The resulting data set contains 734 students with an average degree of 45.29. Similar to our previous experiments, we randomly select half of the students to hold out. The network of the other half of the students  ($n=367$) is then privatized and released. The inner product latent space model is used for model fitting, and the latent dimension $d=6$ is selected by the edge cross-validation method of \cite{li2016network} on the hold-out network. We enforce a strong privacy requirement at the level of $\varepsilon=1$. The distributions of the local statistics in the true and released network are calculated. 

\begin{figure}[h]
\vspace{-8mm}
\centering
\subfigure{\includegraphics[width=0.27\textwidth]{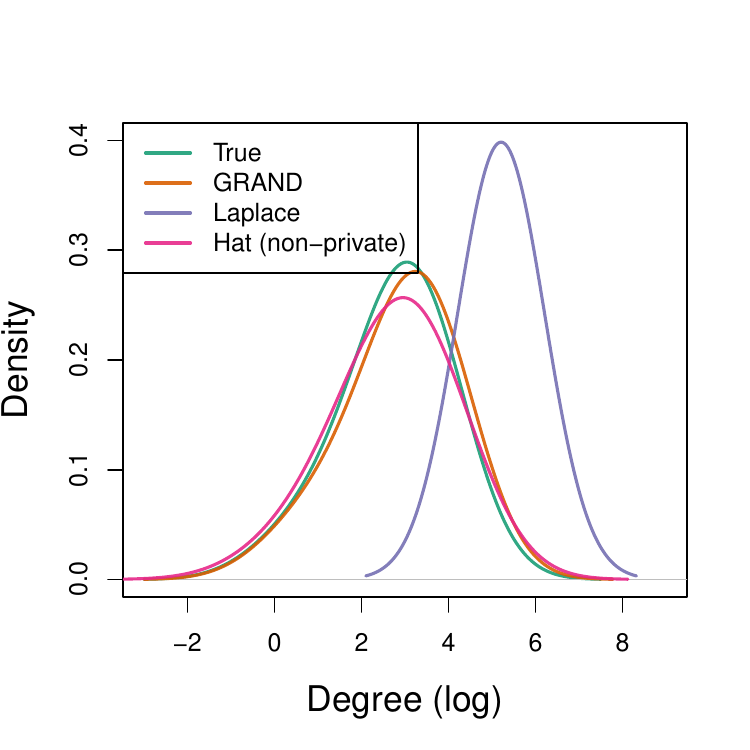}}
\subfigure{\includegraphics[width=0.27\textwidth]{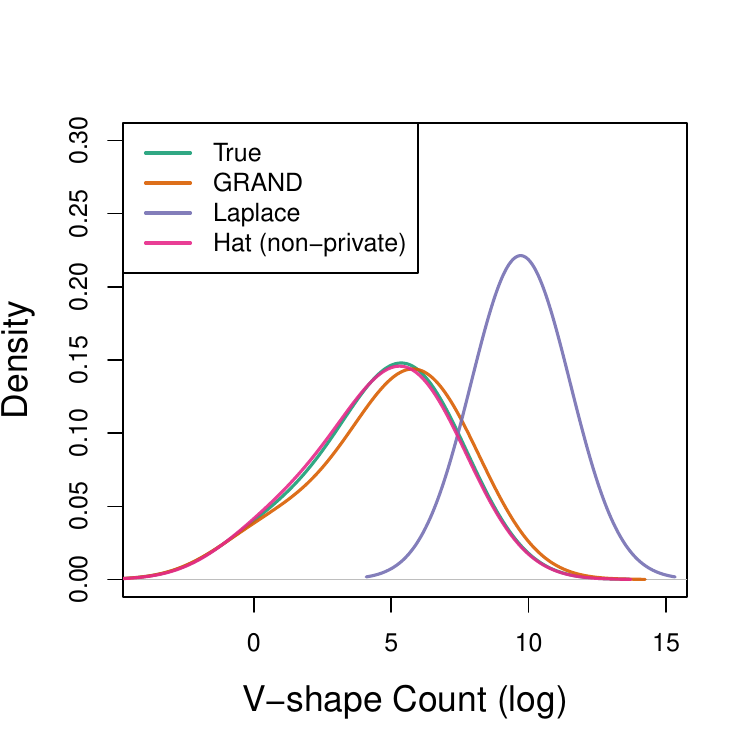}}
\subfigure{\includegraphics[width=0.27\textwidth]{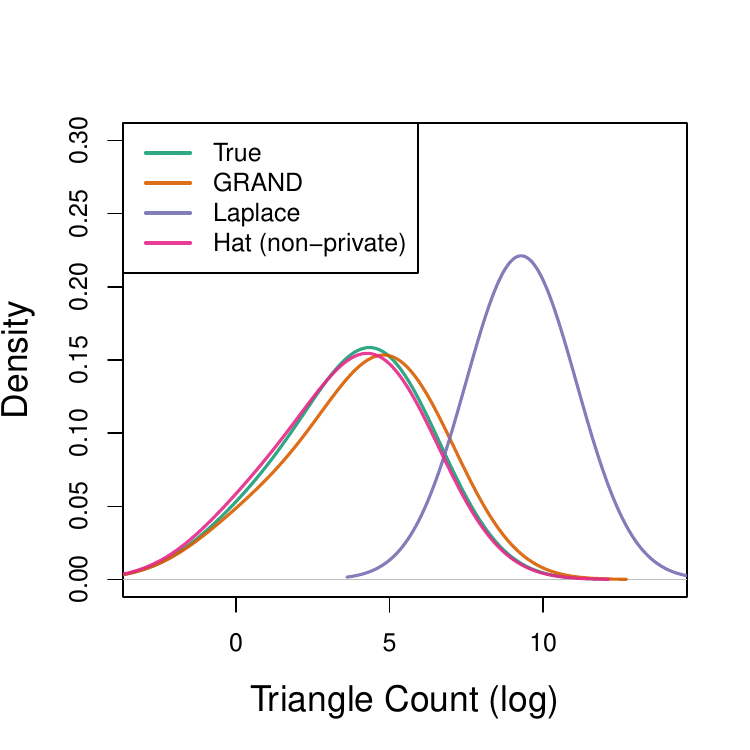}}\vspace{-8mm}
% \medskip
\vspace{-5mm}
\subfigure{\includegraphics[width=0.27\textwidth]{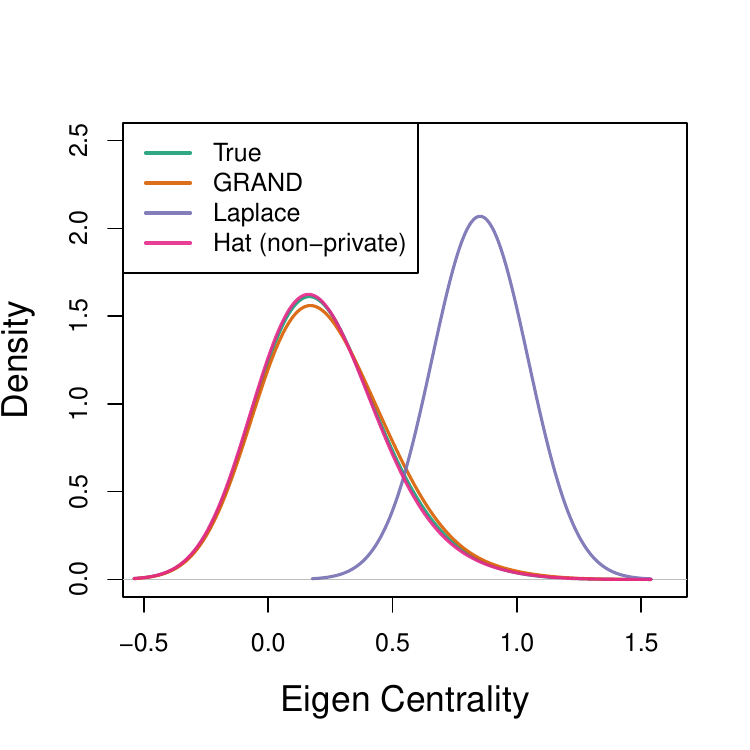}}
\subfigure{\includegraphics[width=0.27\textwidth]{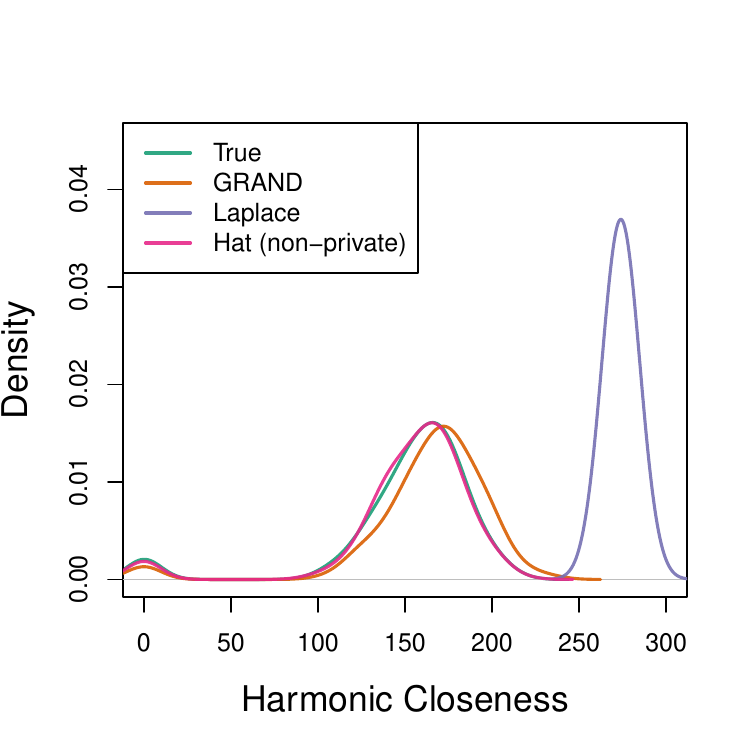}}

\caption{The distributions of five local statistics of the privatized Caltech social network with privacy budget $\varepsilon=1$.}
\label{fig:CaltechExample}
\vspace{-0.5cm}
\end{figure}

Figure~\ref{fig:CaltechExample} displays the resulting distributions of the five local statistics in four networks as introduced in Section~\ref{sec:sim}: the original network (True), the network released GRAND, the network generated using the naive Laplace mechanism (Laplace) and the non-private network from standard model estimation (Hat). It can be seen that the privatized network from GRAND matches the true network well in all of the five metrics. It also substantially outperforms the naive Laplace method, which completely misses the pattern. Meanwhile, GRAND maintains a slightly deviated but similar performance compared to the non-private Hat network. In particular, the harmonic closeness of the original network exhibits a bi-modal pattern, with one tiny lower mode (on the left end of the figure panel). This subtle pattern is also well captured by GRAND. 

\begin{figure}[h]
\vspace{-8mm}
\centering
\subfigure{\includegraphics[width=0.27\textwidth]{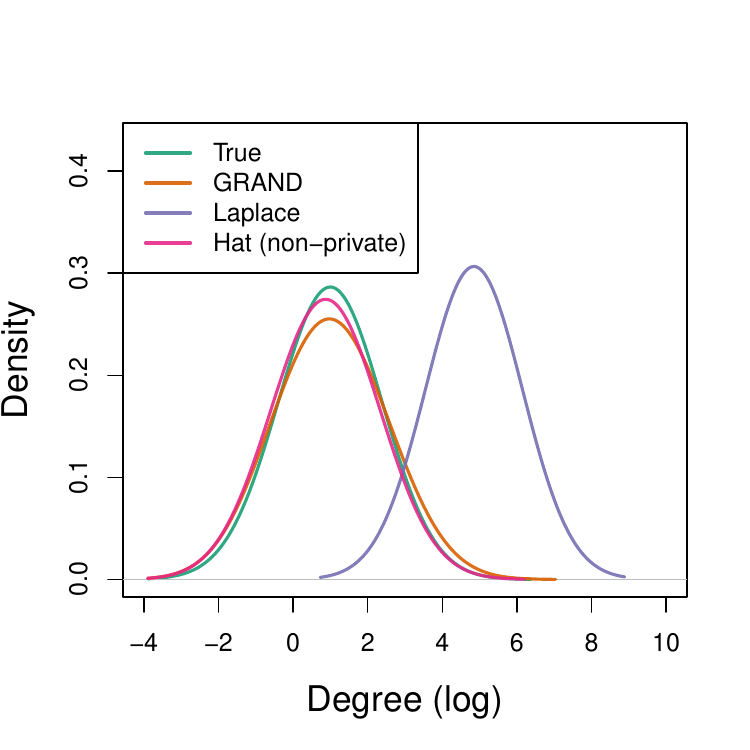}}
\subfigure{\includegraphics[width=0.27\textwidth]{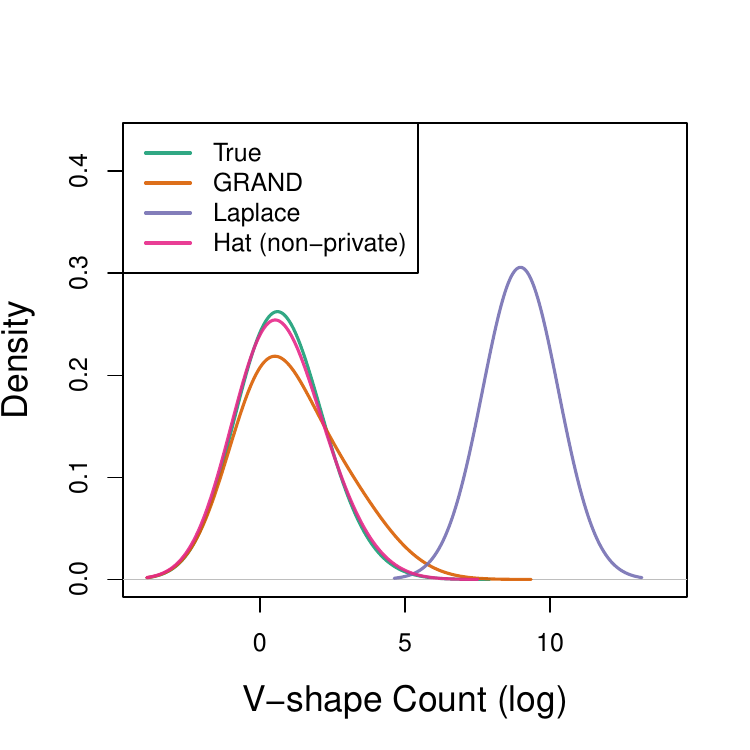}}
\subfigure{\includegraphics[width=0.27\textwidth]{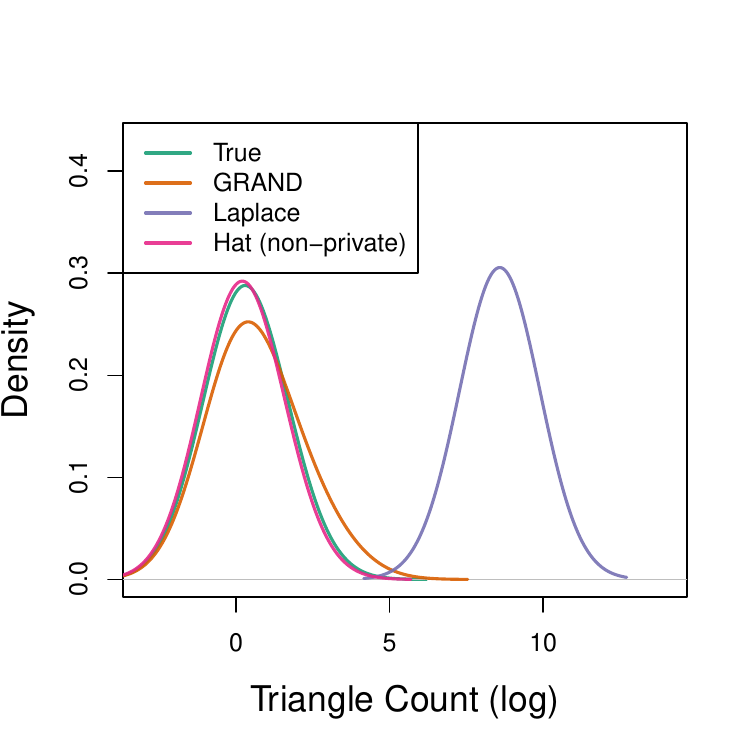}}\vspace{-8mm}
% \\
% \medskip
\vspace{-5mm}
\subfigure{\includegraphics[width=0.27\textwidth]{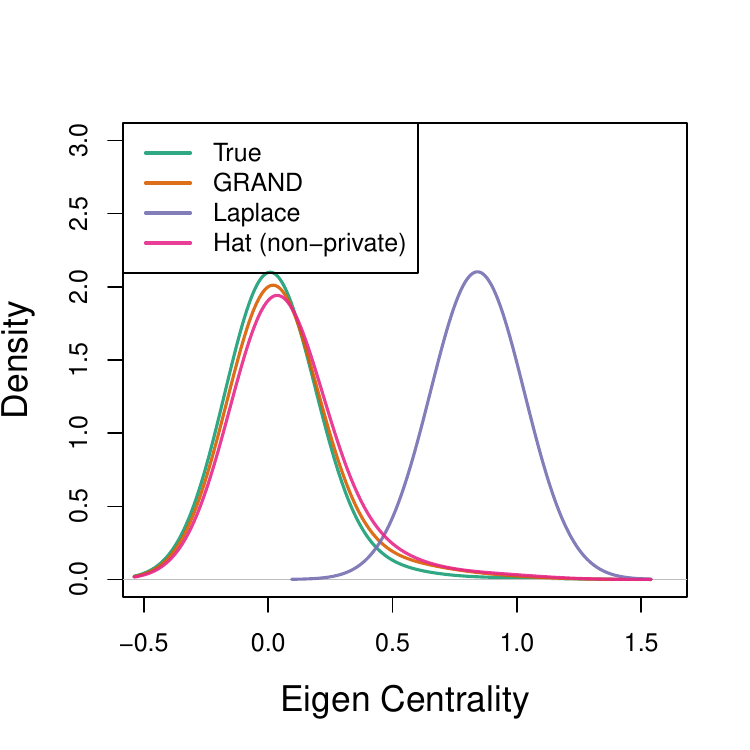}}
\subfigure{\includegraphics[width=0.27\textwidth]{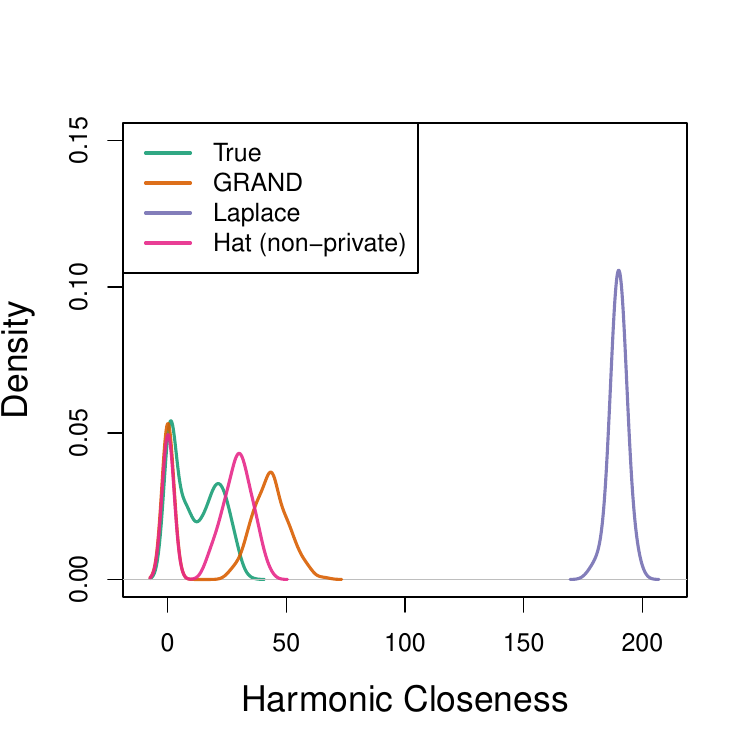}}

\caption{The distributions of five local properties of the privatized statisticians' coauthorship network with privacy budget $\varepsilon=1$.}
\label{fig:StatNetExample}
\vspace{-0.5cm}
\end{figure}

\emph{Statisticians' collaboration network.} The second example is about the collaboration network between statisticians based on publication data originally collected by \cite{ji2016coauthorship} and processed in \cite{li2020high}. Each node in this network is a statistician and an edge indicates that the two statisticians coauthored at least one paper during the data collection period. We use the same procedure to process the data set as before, and the resulting network has 509 nodes with an average degree of 4.24. This network is much sparser than the Caltech network, indicating a more difficult model fitting. We still hold out half of the nodes and privatize the network structure of the other half using privacy budget $\varepsilon=1$. The inner product latent space model is used and $d=4$ is selected by cross-validation on the hold-out data.

The distributions of the five local statistics in the resulting networks are shown in Figure~\ref{fig:StatNetExample}. The released network from GRAND matches the true network reasonably well, and maintains a performance that is very close to that of the Hat network, with a small deviation due to the incorporation of privacy guarantees. Similar to the  Caltech example, the true network also exhibits a bi-modal pattern for the harmonic centrality, but the two modes in this case are much closer to each other, which significantly increases the difficulty in preserving them under the introduced perturbations for privacy. The privatized network from GRAND, though does not perfectly recover the magnitudes, still captures the bi-modal pattern.

\emph{Additional evaluations on social network data.} We also evaluate GRAND on 107 social networks from \cite{ghasemian2019evaluating} and a privatization task for attributed network, demonstrating similarly competitive performance. These can be found in Appendix~\ref{appendix:500Network} and \ref{sec:attributes}.

\section{Discussion}
\label{sec:discussion}

This paper defines node-level differential privacy and introduces a novel privatization mechanism, named GRAND, to achieve node-level differential privacy. To conclude our paper, we want to add more discussions on a few aspects of our work.

%Next, shorten this whole Section, and see if it should fit in the Discussion 
\paragraph{Advantages of Perturbation-Based Privacy Protection} The GRAND framework functions primarily as a \textit{data-perturbation} mechanism. The released network retains a node-level bijective correspondence between the original network and the perturbed one. While our framework could theoretically support synthetic generation—by sampling new latent vectors from the estimated distribution $\hat{F}$ to construct a network of arbitrary size—such an approach yields artificial nodes that lack correspondence to real-world entities. The proposed data-perturbation paradigm exhibit several advantages that are critical in practice.

\emph{Extensibility and decentralizability.} Real-world data dissemination may involve multiple waves. For instance, the National Longitudinal Study of Adolescent to Adult Health (Add Health) \citep{harris2013add} comprises six waves of data collection spanning over 30 years, with new attributes and topological updates continuously appended to the same cohort regularly. A synthetic mechanism is fundamentally incompatible with this longitudinal pattern as nodes generated in previous waves may not match those generated later.
In contrast, by maintaining stable node identities, GRAND ensures the private network remains compatible with future data releases and external attribute merging.
Another key advantage is that a perturbation mechanism allows for data combination from multiple parties, where each party can
separately apply perturbation to their respective datasets. The combined data still follow the DP requirement (with the sum of the privacy budget used by each party) \citep{dwork2014algorithmic}.

\emph{Robustness to structural over-protection.} 
While node-level DP shields individual topological connections, a utility-maximizing mechanism should preserve meso-scopic features—such as community structure—that characterize groups rather than identifying individuals. A fundamental limitation of synthetic graph mechanisms is the decoupling of structure from identity: the synthetic nodes lose their correspondence to intrinsic ground-truth attributes, rendering original labels (like community membership) meaningless. In contrast, perturbation mechanisms like GRAND operate on the original node set, maintaining the structural alignment required to recover these group-level properties. In Figure~\ref{fig:SBM} of Appendix~\ref{app:additional-sim}, we demonstrate this advantage using the Stochastic Block Model \citep{holland1983stochastic}, a canonical model community structures. We applied spectral clustering to the GRAND-perturbed networks to recover community labels and compared them against the ground truth. GRAND successfully facilitates high recovery accuracy for reasonable $\varepsilon$ regimes. This confirms that GRAND is more robust to the structural over-protection, preserving meso-scopic signals under the privacy constraint, which is inevitable in synthetic approaches.

% \begin{figure}[H]
%   \centering
% \includegraphics[width=0.7\textwidth]{Paper drafts/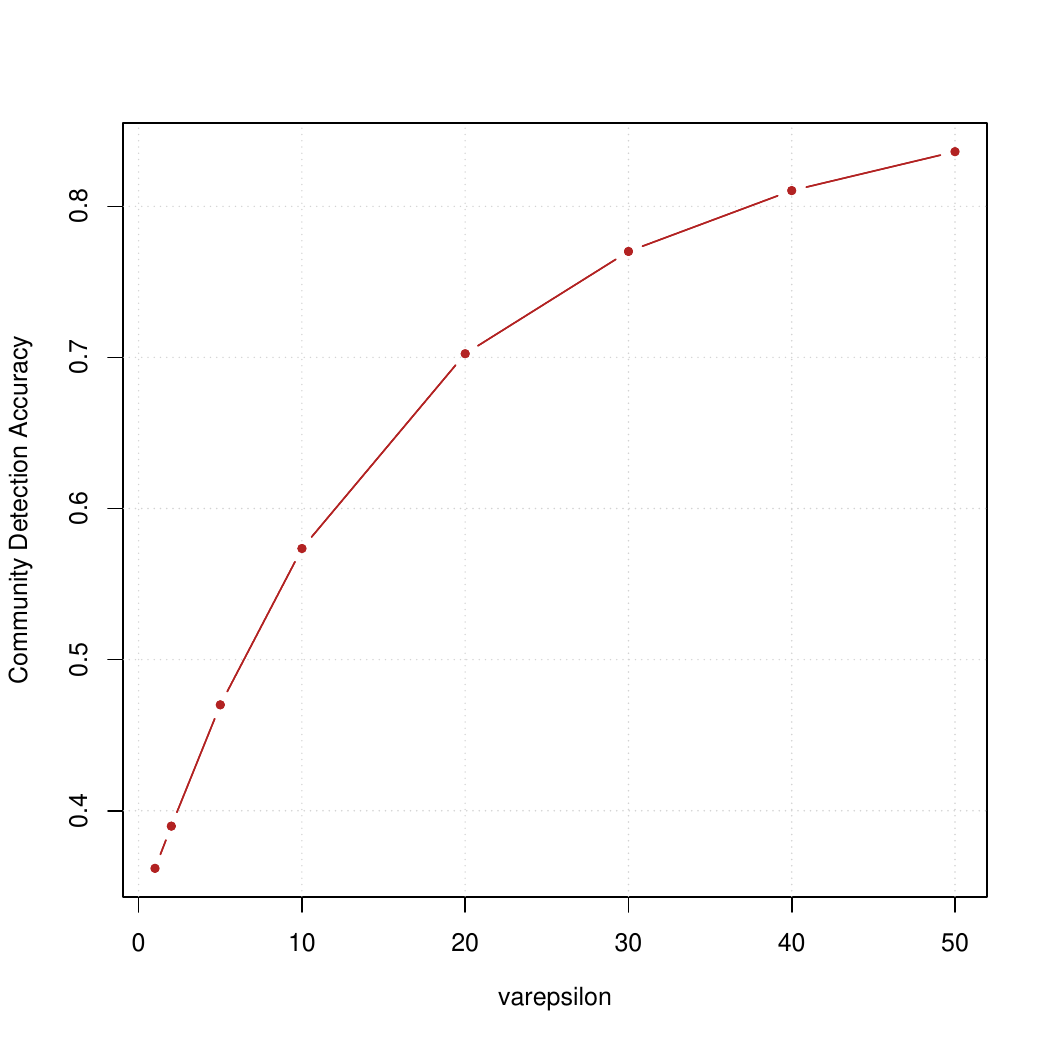}
% \caption{\label{fig:SBM}The recovery accuracy of the true community labels from the GRAND-privatized networks. The network is generated from SBM with 3 communities, with size 4000 and average degree 200. }
% \end{figure}

\paragraph{Future Directions} While laying a foundational cornerstone for solving the node DP problem, this framework also opens numerous avenues for future research. One promising direction is to generalize the proposed method to encode more systematic privacy protection for the hold-out network. As previously discussed, using such hold-out data currently appears unavoidable to achieve the desired privacy guarantees within the current scope of generality. One viable solution is to adopt a $(\varepsilon,\delta)$-differentially private holdout set with theoretical underpinnings and a transfer learning model to apply the learned CDF to the release set.
Another potentially useful future direction is to extend the current framework to accommodate more general network models such as the grahpon model \citep{bickel2009nonparametric}.

\bigskip

\bibliography{main}{}
\bibliographystyle{abbrvnat}

\newpage

\begin{appendix}

% \section{Proof of Proposition~\ref{eq:DP-equivalence}}
%   \begin{proof}
% Consider two networks $A$ and $A'$ for which $\vcal(A)$ and  $\vcal(A')$ diff only by one node. Without loss of generality, suppose $\vcal(A) = \{1, 2, \cdots, n\}$ and $\vcal(A') = \{2, \cdots, n, n+1\}$. Therefore, under the generic latent space model, we can write them as
% $$A = A(Z_1, \cdots, Z_n),\qquad A' = A(Z_2, \cdots, Z_n, Z_{n+1})$$
% with $Z_1, \cdots, Z_{n+1} \stackrel{i.i.d.}{\sim} F$. Then by the LNDP definition, we know 
% $$\p(\mcal\big(A \big)\in \Psi) \le e^{\varepsilon} P(\mcal\big(A'\big)\in \Psi)$$
% for any measurable $\Psi$.
% \end{proof}

\section*{Appendix}

The appendix includes proofs of the theoretical results and additional numerical experiments for the paper.

\section{Proof of Theorem~\ref{thm:NDP}}
\begin{proof}
  
  Due to the node-wise estimation procedure, note that the change of any node $i \in [n]$ does not change the estimate of $\hat{Z}_j$, $j\ne i$, $j\in [n]$. That means, the estimates $\hat{Z}_i$'s are separable. By applying Theorem 2 of \cite{bi2023distribution}, $\tilde{Z}_i$ acquired via Equation \eqref{eq:TNR} is differentially private. Since the generative function $W$ is assumed to be known. We have each node of $\tilde{A}^{11}$ also being differentially private.
\end{proof}

\section{Proof of Theorem~\ref{thm:main-marginal} (Marginal Convergence)}

The DIP procedure, given $\hat{Z}_i$, is to apply univariate DIP transformation by columns. In each step, we use the approximated (conditional) CDF, $\hat{F}_m^{j\mid 1:(j-1)}$ of the $j$-th variable, conditioning on the previous $j-1$ variables, for privatization: for each $i$, we generate $\mathbf e_i$ from $\text{Laplace}(0,1/\varepsilon)^d$ independently, and then compute 
$$\tilde{Z}_{ij} = (\hat{F}_m^{j\mid 1:(j-1)})^{-1}(G(\hat{F}_m^{j\mid 1:(j-1)}(\hat{Z}_{ij})+e_{ij})).$$

To prove the consistency for the latent distribution, we need the following key steps: 1) Prove the uniform convergence of the estimated $\hat{F}$ and its inverse based on $\{\hat{Z}_{i}\}_{i>n}$; 2) Prove the convergence of marginal distribution of each $\tilde{Z}_i$. The potential challenges are from the fact that $\{\hat{Z}_{i}\}_{i>n}$ are not precisely $\{Z_{i}\}_{i>n}$ and they are not independent. Moreover, each $\hat{Z}_i$, $i\le n$ is also dependent on $\{\hat{Z}_{i}\}_{i>n}$. Therefore, the analysis of the convergence of these quantities have to take these dependence into consideration.

\subsection{Crucial tools}

The following lemma provides a crucial tool in our main proof.

\begin{lemma}[Uniform Convergence of Empirical CDF with Perturbed Data]\label{lemma:ecdf_convergence}
Let \( Q = \{ Q_{i} \}_{i=1}^m \) be i.i.d. random variables from a distribution \( F \) on \( \bR \), 
and let \( F_m(x) \) be the empirical CDF computed from \( Q \). Let \( \hat{Q} = \{ \hat{Q}_{i} \}_{i=1}^m \)
be a perturbed version of \(Q\), which can be potentially dependent. Let 
\( \hat{F}_m(x) \) denote the empirical CDF based on the perturbed data \( \hat{Q} \).

Assume the following regularity conditions:
\begin{itemize}
    \item The distribution \( F \) is a continuous distribution with density $f$, where $f$ is bounded by a constant \( C_{\mathrm{up}} > 0 \),
  i.e., \( f(y) \leq C_{\mathrm{up}} \).
  \item The perturbed data satisfies \( \sup_{1\le i\le m}\big|\hat{Q}_{i} - Q_{i}\big| \leq \delta_m \) for some $\delta_m$ that is associated with $m$.
\end{itemize}
Then, the empirical CDF \( \hat{F}_m(x) \), satisfies the following
uniform convergence bound:
\[
\sup_{x \in \bR} \big|\hat{F}_m(x) - F(x)\big| \leq \ocal_\P\left( \delta_m \right) + \ocal_\P\biggl( \sqrt{\frac{1}{m}} \biggr).
\]
\end{lemma}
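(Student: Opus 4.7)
The plan is to use a triangle inequality decomposition through the unperturbed empirical CDF $F_m$, so that the standard Glivenko--Cantelli/DKW bound handles the randomness and the perturbation bound $\delta_m$ is absorbed by a bracketing argument using the bounded density.

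First, I would write
\[
\sup_x |\hat F_m(x) - F(x)| \le \sup_x |\hat F_m(x) - F_m(x)| + \sup_x |F_m(x) - F(x)|.
\]
The second summand is controlled by the DKW inequality (or Glivenko--Cantelli): since $Q_1,\dots,Q_m$ are i.i.d.\ from $F$, $\sup_x |F_m(x) - F(x)| = O_P(m^{-1/2})$. Note this step uses only the unperturbed data, so no independence assumption on $\hat Q$ is needed here.

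Next, for the first summand, I would exploit the uniform perturbation bound to bracket $\hat Q_i$ by $Q_i \pm \delta_m$: the event $\{\hat Q_i \le x\}$ is contained in $\{Q_i \le x+\delta_m\}$ and contains $\{Q_i \le x-\delta_m\}$. Averaging over $i$ yields the sandwich
\[
F_m(x-\delta_m) \le \hat F_m(x) \le F_m(x+\delta_m),
\]
so that
\[
|\hat F_m(x) - F_m(x)| \le \max\bigl\{F_m(x+\delta_m) - F_m(x),\; F_m(x) - F_m(x-\delta_m)\bigr\}.
\]
Each of these increments is bounded by $[F(x+\delta_m) - F(x-\delta_m)] + 2\sup_y |F_m(y)-F(y)|$, and by the bounded-density assumption $F(x+\delta_m) - F(x-\delta_m) \le 2 C_{\mathrm{up}} \delta_m$. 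Taking the supremum in $x$ gives
\[
\sup_x |\hat F_m(x) - F_m(x)| \le 2 C_{\mathrm{up}} \delta_m + 2\sup_y |F_m(y) - F(y)| = O(\delta_m) + O_P(m^{-1/2}),
\]
and combining with the first step finishes the bound.

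I do not foresee a genuine obstacle here: the only subtlety is that the $\hat Q_i$ are allowed to be dependent, which is precisely why the proof must route through $F_m$ rather than trying to apply DKW directly to $\hat F_m$. The sandwich argument sidesteps this dependence because the perturbation bound $\delta_m$ is an almost-sure (or high-probability) uniform bound rather than a probabilistic one, so it produces deterministic envelopes $F_m(x \pm \delta_m)$ built from the i.i.d.\ sample. The only care needed is to state that the $\delta_m$-bound is assumed to hold with probability tending to one (which is the setting under which the lemma will be applied in Assumption~\ref{ass:embedding-concentration}), so that the final bound is in probability.
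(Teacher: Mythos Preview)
Your proof is correct and follows essentially the same strategy as the paper: both decompose through $F_m$ via the triangle inequality, handle the sampling error with Glivenko--Cantelli/DKW, and bound the perturbation error by the empirical mass in a $\delta_m$-neighborhood, which is then controlled by $2C_{\mathrm{up}}\delta_m$ plus another $O_P(m^{-1/2})$ term. The only cosmetic difference is that the paper phrases the last step as a VC-class empirical-process bound on $\frac{1}{m}\sum_i \mathbf{1}_{\{Q_i \in H(x)\}}$, whereas your sandwich $F_m(x-\delta_m)\le \hat F_m(x)\le F_m(x+\delta_m)$ reuses DKW directly; these are the same argument in slightly different clothing.
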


\begin{proof}
We decompose the total error as follows:
\[
\sup_{x \in \bR} \big|\hat{F}_m(x) - F(x)\big| \leq \underbrace{\sup_{x \in \bR} \big|\hat{F}_m(x) - F_m(x)\big|}_{\textsf{Perturbation Error}} + \underbrace{\sup_{x \in \bR} \big|F_m(x) - F(x)\big|}_{\textsf{Sampling Error}}.
\]

From Glivenko-Cantelli theorem, we have:
\[
\sup_{x \in \bR} \big|F_m(x) - F(x)\big| = \ocal_\P\biggl( \sqrt{\frac{1}{m}} \biggr).
\]

For any \( x \in \bR \), the empirical CDFs are defined as:
\[
F_m(x) = \frac{1}{m} \sum_{i=1}^m \mathbf{1}_{\{ Q_i \leq x \}}, \quad \hat{F}_m(x) = \frac{1}{m} \sum_{i=1}^m \mathbf{1}_{\{ \hat{Q}_i \leq x \}}.
\]
The difference between \( \hat{F}_m(x) \) and \( F_m(x) \) arises when \( \mathbf{1}_{\{ \hat{Q}_i \leq x \}} \neq \mathbf{1}_{\{ Q_i \leq x \}} \). Define
\[
D_i(x) = \big| \mathbf{1}_{\{ \hat{Q}_i \leq x \}} - \mathbf{1}_{\{ Q_i \leq x \}} \big|.
\]
We have
\[
\big|\hat{F}_m(x) - F_m(x)\big| \le \frac{1}{m} \sum_{i=1}^m D_i(x).
\]

Since \( \big| \hat{Q}_i - Q_i \big| \leq \delta_m \), \( D_i(x) \) can be non-zero only if \( Q_i \in H(x) \), where \( H(x) \) is the interval:
\[
H(x) = \left\{ y \in \bR: | y - x |\leq \delta_m \right\}.
\]

Therefore,
\[
D_i(x) \leq \mathbf{1}_{H(x)}(Q_i).
\]

We have
\[
\E[ D_i(x) ] \leq \P( Q_i \in H(x) ) \leq C_{\mathrm{up}} \cdot (2\delta_m),
\]
which is uniform in $x$, leading to
\[
\sup_{x \in \bR} \E[ D_i(x) ] \leq C_{\mathrm{up}} \cdot (2\delta_m).
\]

For a universal control, we call the results from empirical process theory for VC classes. That is, we have:

\[
\E\left[ \sup_{x \in \bR} \left| \frac{1}{m} \sum_{i=1}^m D_i(x) - \E[D_i(x)] \right| \right] \leq K \sqrt{\frac{1}{m}}.
\]
Thus,
\[
\sup_{x \in \bR} \big|\hat{F}_m(x) - F_m(x)\big| \leq \sup_{x\in\bR} \E[D_i(x)] + \ocal_\P\biggl( \sqrt{\frac{1}{m}} \biggr) = \ocal( \delta_m ) + \ocal_\P\biggl( \sqrt{\frac{1}{m}} \biggr). 
\]

Therefore,
\[
\sup_{x \in \bR} \big|\hat{F}_m(x) - F(x)\big| \leq \ocal_\P( \delta_m ) + \ocal_\P\biggl( \sqrt{\frac{1}{m}} \biggr).
\]
\end{proof}

\begin{corollary}\label{coro:convergence}
Under the conditions of Lemma~\ref{lemma:ecdf_convergence}, if $\delta_m \to 0$ as $m\to \infty$, then the approximate \(\hat{F}_m\) based on $\hat{Q}_1, \cdots, \hat{Q}_m$ in our problem satisfies that \(\hat{F}_m\) converges to \(F\) uniformly in probability:
  \[
  \sup_{x \in \bR} \big| \hat{F}_m(x) - F(x) \big| \xrightarrow{\P} 0 \quad \text{as } m \to \infty.
  \]

\end{corollary}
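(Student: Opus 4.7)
The plan is to apply Lemma~\ref{lemma:ecdf_convergence} directly, since that lemma already delivers the quantitative bound
$$\sup_{x \in \bR} |\hat{F}_m(x) - F(x)| \leq O_P(\delta_m) + O_P\!\left(\sqrt{\tfrac{1}{m}}\right).$$
All the hypotheses of the lemma (continuity of $F$, the uniform density bound $f \leq C_{\mathrm{up}}$, and the perturbation bound $\sup_i |\hat{Q}_i - Q_i| \leq \delta_m$) are part of the standing assumptions of the corollary, so there is nothing extra to verify for the bound itself. The corollary's only additional input is the deterministic decay $\delta_m \to 0$.

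Next, I would convert the $O_P$ bound into convergence in probability using the standard fact that if $X_m = O_P(a_m)$ with $a_m \to 0$ deterministically, then $X_m \xrightarrow{P} 0$. Applied to the first term, $\delta_m \to 0$ gives $O_P(\delta_m) \xrightarrow{P} 0$; applied to the second, $1/\sqrt{m} \to 0$ gives $O_P(1/\sqrt{m}) \xrightarrow{P} 0$. Since a sum of two sequences each tending to zero in probability also tends to zero in probability, we obtain
$$\sup_{x \in \bR} |\hat{F}_m(x) - F(x)| \xrightarrow{P} 0,$$
which is the stated conclusion.

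There is essentially no obstacle here: the corollary is a qualitative rephrasing of the quantitative bound already proved in Lemma~\ref{lemma:ecdf_convergence}, and the hypothesis $\delta_m \to 0$ is precisely the minimal ingredient needed to make both error terms in that bound vanish asymptotically. The main work has been done in the lemma; the corollary just packages the consequence in the form most useful for the downstream arguments establishing Theorems~\ref{thm:main-marginal} and~\ref{thm:main-CDF}.
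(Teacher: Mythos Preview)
Your proposal is correct and matches the paper's treatment: the corollary is stated without proof in the paper precisely because it is an immediate consequence of the quantitative bound in Lemma~\ref{lemma:ecdf_convergence}, and your argument (both $O_P(\delta_m)$ and $O_P(1/\sqrt{m})$ vanish since $\delta_m\to 0$ and $m\to\infty$) is exactly the intended deduction.
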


The next tool we need is about a conditional distribution. In particular, suppose a $d$-dimensional random variable $X = (X_1,\ldots,X_d)^\top \sim F$, we aim to estimate the conditional CDF
\[
 F^{d \mid 1:(d-1)}(x \mid x_1,\dots,x_{d-1})
 =
 \P(X_d \le x \mid X_1 = x_1,\dots,X_{d-1}=x_{d-1}).
\]
This is because we have to estimate the conditional CDF when applying the DIP procedure by following the probability chain rule. In the following result, we will use a kernel estimator. Let \(K:\bR^{d-1}\to[0, \infty)\) be a bounded, continuous kernel with 
\(\int_{\bR^{d-1}} K(u)\,\mathrm du = 1\), 
and let \(h>0\) be a bandwidth parameter. Define the \emph{kernel-based} conditional CDF estimator:
\[
 \hat{F}_{m}^{d \mid 1:(d-1)}(x \mid x_1,\dots,x_{d-1})
 :=
 \frac{
  \sum_{i=1}^m \mathbf{1}_{\{\hat{Q}_{i,d} \le x\}}
   K\left(\tfrac{x_1 - \hat{Q}_{i,1}}{h}, \dots, \tfrac{x_{d-1} - \hat{Q}_{i,d-1}}{h}\right)
 }{
  \sum_{i=1}^m
   K\left(\tfrac{x_1 - \hat{Q}_{i,1}}{h}, \dots, \tfrac{x_{d-1} - \hat{Q}_{i,d-1}}{h}\right)
 }.
\]
We assume that the denominator is nonzero, or use a small regularizing constant if needed. Then the following result can be seen as a generalization of Lemma~\ref{lemma:ecdf_convergence}.

\begin{lemma}[Uniform Convergence of Kernel-based Conditional Empirical CDF with Perturbed Data]\label{lem:conditional-cdf-convergence}
Let \( Q = \{ Q_i \}_{i=1}^m \) be i.i.d. $d$-dimensional random vectors from a distribution \( F \) on \( \bR^d \), where \( Q_i = (Q_{i,1}, \dots, Q_{i,d}) \).
Let \( \hat{Q} = \{ \hat{Q}_i \}_{i=1}^m \) be a perturbed version of \(Q\), which can be potentially dependent.
Let \( K:\bR^{d-1}\to[0, \infty) \) be a bounded (by a constant $K_{\max}$) and Lipschitz continuous kernel with \( \int_{\bR^{d-1}} K(u)\,\mathrm du = 1 \).
Let \( h>0 \) be a bandwidth parameter. Define the \emph{kernel-based} conditional CDF estimator:
\[
\hat{F}_{m}^{d \mid 1:(d-1)}(x \mid u)
 :=
\frac{
\sum_{i=1}^m \mathbf{1}_{\{\hat{Q}_{i,d} \le x\}}
 K\left(\tfrac{u - \hat{Q}_{i,1:(d-1)}}{h}\right)
 }{
 \sum_{i=1}^m
 K\left(\tfrac{u - \hat{Q}_{i,1:(d-1)}}{h}\right)
},
\]
where \( u = (x_1,\dots,x_{d-1}) \) and \( \hat{Q}_{i,1:(d-1)} = (\hat{Q}_{i,1},\dots,\hat{Q}_{i,d-1}) \). We assume that the denominator is nonzero for any \(u\) in the domain of interest, or use a small regularizing constant if needed.

Assume the following regularity conditions:
\begin{itemize}
\item The joint density \( f(x_1,\dots,x_d) \) of \( F \) is continuous and bounded by a constant \( C_{\mathrm{up}} > 0 \).
\item The marginal density of the conditioning variables, \( f_{1:(d-1)}(u) \), is continuous and bounded. Furthermore, there exists a compact set \( \mathcal{U} \subset \bR^{d-1} \) such that \( \inf_{u \in \mathcal{U}} f_{1:(d-1)}(u) > 0 \).
\item The conditional CDF \( F^{d \mid 1:(d-1)}(x \mid u) \) is Lipschitz continuous with respect to \( u = (x_1,\dots,x_{d-1}) \).
\item The perturbed data satisfies \( \sup_{1\le i\le m}\|\hat{Q}_i - Q_i\| \leq \delta_m \) where $\norm{\cdot}$ is the Euclidean norm.
\end{itemize}
If \( \delta_m = o((\log m)^{-c} )\) for some constant $c>0$ and we choose $h = (\log m)^{-c}$, we have 

\[
 \sup_{u \in \mathcal{U},\, x \in \bR}
\big|\hat{F}_{m}^{d \mid 1:(d-1)}(x \mid u)
-
 F^{d \mid 1:(d-1)}(x \mid u)
\big|
=
o_\p(1).
\]
Hence \(\hat{F}_{m}^{d \mid 1:(d-1)}\) converges \emph{uniformly} to \(F^{d \mid 1:(d-1)}\) in probability over \( \mathcal{U} \times \bR \).
\end{lemma}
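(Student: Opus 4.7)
The plan is to introduce an ``oracle'' kernel conditional CDF estimator $\tilde F_m^{\,d\mid 1:(d-1)}(x\mid u)$, defined exactly like $\widehat F_m^{\,d\mid 1:(d-1)}(x\mid u)$ but with the clean samples $Q_i$ in place of the perturbed $\hat Q_i$, and decompose the target error as
\[
\widehat F_m^{\,d\mid 1:(d-1)}(x\mid u) - F^{\,d\mid 1:(d-1)}(x\mid u)
= \underbrace{\widehat F_m^{\,d\mid 1:(d-1)}(x\mid u) - \tilde F_m^{\,d\mid 1:(d-1)}(x\mid u)}_{\text{perturbation error}}
+ \underbrace{\tilde F_m^{\,d\mid 1:(d-1)}(x\mid u) - F^{\,d\mid 1:(d-1)}(x\mid u)}_{\text{kernel estimation error}}.
\]
Both pieces will be shown to be $o_p(1)$ uniformly over $(x,u)\in\bR\times\mathcal U$, and then combined via the triangle inequality.

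For the kernel estimation error, I would write $\tilde F_m^{\,d\mid 1:(d-1)}(x\mid u)=\tilde g_m(x,u)/\tilde f_m(u)$, where $\tilde f_m(u)=(mh^{d-1})^{-1}\sum_i K((u-Q_{i,1:(d-1)})/h)$ is a Parzen--Rosenblatt estimator of the marginal density $f_{1:(d-1)}(u)$, and $\tilde g_m(x,u)=(mh^{d-1})^{-1}\sum_i \mathbf{1}\{Q_{i,d}\le x\}K((u-Q_{i,1:(d-1)})/h)$ estimates $F^{\,d\mid 1:(d-1)}(x\mid u)\,f_{1:(d-1)}(u)$. Classical uniform consistency results for Nadaraya--Watson type estimators, obtained via bracketing/VC arguments using boundedness and Lipschitz continuity of $K$, the VC class of half-lines $\{\mathbf{1}\{y\le x\}:x\in\bR\}$, and the smoothness assumptions on $f_{1:(d-1)}$ and on $F^{\,d\mid 1:(d-1)}$ from Assumption~\ref{ass:continuous}, yield uniform rates of order $O_p(h+\sqrt{\log m/(mh^{d-1})})$ for both $\tilde f_m$ and $\tilde g_m$. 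Since $h=(\log m)^{-c}$ gives $mh^{d-1}/\log m = m/(\log m)^{(d-1)c+1}\to\infty$, both of these vanish. Combined with the uniform lower bound $\inf_{u\in\mathcal U}f_{1:(d-1)}(u)>0$, which ensures that $\tilde f_m$ is bounded away from zero on $\mathcal U$ with probability approaching one, this gives $\sup_{x,u}|\tilde F_m^{\,d\mid 1:(d-1)}(x\mid u) - F^{\,d\mid 1:(d-1)}(x\mid u)| = o_p(1)$.

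For the perturbation error, let $\hat f_m,\hat g_m$ denote the perturbed analogues of $\tilde f_m,\tilde g_m$. By Lipschitz continuity of $K$,
\[
|\hat f_m(u)-\tilde f_m(u)| \;\le\; \frac{L_K\,\delta_m}{h^d}
\qquad\text{uniformly in } u,
\]
and the same Lipschitz bound controls the kernel factor inside $\hat g_m-\tilde g_m$. The remaining contribution to $\hat g_m-\tilde g_m$ comes from the indicator difference $\mathbf{1}\{\hat Q_{i,d}\le x\}-\mathbf{1}\{Q_{i,d}\le x\}$, which is nonzero only when $Q_{i,d}\in[x-\delta_m,x+\delta_m]$, an event with probability at most $2C_{\mathrm{up}}\delta_m$ by Assumption~\ref{ass:continuous}; a uniform VC-type concentration over the half-line class $x\in\bR$, in the spirit of Lemma~\ref{lemma:ecdf_convergence}, controls this indicator-swap contribution by $O_p(\delta_m)+O_p(\sqrt{\log m/m})$ after kernel weighting. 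Using the denominator lower bound established in the previous step and expanding the ratio as $\hat g_m/\hat f_m-\tilde g_m/\tilde f_m = (\hat g_m-\tilde g_m)/\tilde f_m - (\tilde g_m/\tilde f_m)(\hat f_m-\tilde f_m)/\tilde f_m + o_p(1)$ yields the required uniform convergence provided the rates align.

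The main obstacle, and the place where the bandwidth-perturbation trade-off must be balanced carefully, is the combined uniform control of the ratio: one must simultaneously enforce a uniform lower bound on the denominator on $\mathcal U$ with high probability, ensure that the kernel bias/variance rate $h+\sqrt{\log m/(mh^{d-1})}$ vanishes, and keep the Lipschitz perturbation factor $\delta_m/h^d$ negligible. The logarithmic choice $h=(\log m)^{-c}$ paired with $\delta_m=o((\log m)^{-c})$ is engineered to make each of these terms vanish as $m\to\infty$, although as noted in the footnote of Assumption~\ref{ass:smoothedDIP} a sharper bandwidth could be selected if the rate on $\delta_m$ from Assumption~\ref{ass:embedding-concentration} were further specified.
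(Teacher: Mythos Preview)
Your two-term decomposition via the oracle estimator $\tilde F_m^{\,d\mid 1:(d-1)}$ and your treatment of the kernel-estimation error both match the paper's proof.

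The gap is in the perturbation term. Your Lipschitz bound $|\hat f_m(u)-\tilde f_m(u)|\le L_K\,\delta_m/h^{d}$ is arithmetically correct, but it is \emph{not} guaranteed to vanish under the stated hypotheses: with $h=(\log m)^{-c}$ and only $\delta_m=o((\log m)^{-c})$ assumed, one has $\delta_m/h^{d}=\delta_m(\log m)^{cd}$, and the condition $\delta_m=o(h)$ says nothing about this quantity once $d\ge 2$. Your closing assertion that the logarithmic choices make ``each of these terms vanish'' therefore fails for the density-perturbation piece, and the ratio expansion $\hat g_m/\hat f_m-\tilde g_m/\tilde f_m$ cannot be closed with this bound alone.

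The paper works instead with the unnormalized sums $A(x,u),B(u),C(x,u),D(u)$ and bounds the single fraction $(AD-CB)/(BD)$, landing on the perturbation rate $\delta_m/h$, which does vanish. The idea you are missing is localization: summing the per-term Lipschitz estimate $L_K\,\delta_m/h$ over all $m$ indices is wasteful, because at each fixed $u\in\mathcal U$ only those $i$ with $Q_{i,1:(d-1)}$ within $O(h)$ of $u$ contribute non-negligibly to the kernel sums, and there are $\Theta_P(mh^{d-1})$ such indices. Exploiting this gives $|B-D|=O_P\bigl(mh^{d-1}\cdot L_K\,\delta_m/h\bigr)$, and since $D=\Theta_P(mh^{d-1})$ on $\mathcal U$, the relative perturbation is $|B-D|/D=O_P(\delta_m/h)=o_P(1)$. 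The indicator contribution to $A-C$ can be localized the same way. This sharper accounting is what you need to close Term~(I) for $d\ge 2$.
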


\begin{proof}
Let $d' = d-1$. Define the oracle estimator $\tilde{F}_{m}^{d \mid 1:d'}(x \mid u)$ to be the same kernel estimator but constructed using the unperturbed data \(\{Q_i\}\). By the triangle inequality, the error decomposes as:
\begin{align*}
 \sup_{u\in\mathcal U,\,x\in\bR} \big|\hat{F}_{m}^{d \mid 1:d'}(x \mid u) - F^{d \mid 1:d'}(x \mid u)\big|
 &\le
 \underbrace{\sup_{u\in\mathcal U,\,x\in\bR} \big|
  \hat{F}_{m}^{d \mid 1:d'}(x \mid u) - \tilde{F}_{m}^{d \mid 1:d'}(x \mid u)
 \big|}_{(\textsf{I})}\\
 &\quad +
 \underbrace{\sup_{u\in\mathcal U,\,x\in\bR} \big|
  \tilde{F}_{m}^{d \mid 1:d'}(x \mid u) - F^{d \mid 1:d'}(x \mid u)
 \big|}_{(\textsf{II})}.
\end{align*}

\paragraph*{Term (II): Standard Nonparametric Convergence.}
Term (II) represents the uniform error of a standard kernel-based conditional CDF estimator. Under the assumptions that the marginal density $f_{1:d'}$ is bounded away from zero on $\mathcal{U}$ and the conditional CDF is Lipschitz, standard results on ukernel estimators (e.g., from \cite{wasserman2006all} for rates of kernel regression) provide the following rate:
\[
 \sup_{u \in \mathcal{U},\, x \in \bR}
 \big|
  \tilde{F}_{m}^{d \mid 1:d'}(x \mid u)
  - F^{d \mid 1:d'}(x \mid u)
 \big|
 =
 \ocal_\P\left(
  h + \sqrt{\frac{\log m}{m h^{d'}}}
 \right).
\]
Here, $h$ corresponds to the smoothing bias and the second term corresponds to the uniform stochastic fluctuation (including the logarithmic factor for uniformity).

\paragraph*{Term (I): Perturbation Analysis.}
We analyze the effect of the data perturbation. For fixed $(x,u)$, write the estimators as ratios:
\[
 \hat{F}_{m}^{d \mid 1:d'}(x \mid u) = \frac{A(x,u)}{B(u)},
 \quad
 \tilde{F}_{m}^{d \mid 1:d'}(x \mid u) = \frac{C(x,u)}{D(u)},
\]
where
\[
 A(x,u)
 =
 \sum_{i=1}^m \mathbf{1}_{\{\hat{Q}_{i,d} \le x\}}
  K\biggl(\frac{u - \hat{Q}_{i,1:d'}}{h}\biggr),
 \quad
 B(u)
 =
 \sum_{i=1}^m
  K\biggl(\frac{u - \hat{Q}_{i,1:d'}}{h}\biggr),
\]
\[
 C(x,u)
 =
 \sum_{i=1}^m \mathbf{1}_{\{Q_{i,d} \le x\bigr\}}
  K\biggl(\frac{u - Q_{i,1:d'}}{h}\biggr),
 \quad
 D(u)
 =
 \sum_{i=1}^m
  K\biggl(\frac{u - Q_{i,1:d'}}{h}\biggr).
\]

The difference is bounded by:
\[
 \left| \frac{A}{B} - \frac{C}{D} \right|
 = \left| \frac{(A-C)D - C(B-D)}{B D} \right|
 \le \frac{|A-C|}{|B|} + \frac{|C|}{|B|}\frac{|B-D|}{|D|}.
\]
We proceed by establishing the stochastic orders of the components uniformly over $\mathcal{U} \times \bR$.

\begin{itemize}
 \item \textbf{Controlling $B$ and $D$:}
 The sum $D(u)$ relates to the kernel density estimator $\hat{f}(u)$ via $D(u) = m h^{d'} \hat{f}(u)$. Since $\hat{f}$ converges uniformly to $f_{1:d'}$ and $\inf_{\mathcal{U}} f_{1:d'} = c_f > 0$, we have uniformly in probability:
 \[
  D(u) \ge \frac{c_f}{2} m h^{d'}\implies D(u) = \Omega_\P(m h^{d'}).
 \]
 Since $\delta_m \to 0$, we will show $|B-D|$ is of lower order, implying $B(u) = \Theta_\P(m h^{d'})$ as well. Note also that $|C(x,u)| \le D(u)$, so the ratio $|C/D| \le 1$.

 \item \textbf{Controlling $|B-D|$:}
 Using the Lipschitz property of $K$ (with constant $L_K$) and the bound $\|\hat{Q}_i - Q_i\| \le \delta_m$:
 \[
  |B(u) - D(u)| \le \sum_{i=1}^m \left| K\biggl(\frac{u - \hat{Q}_{i,1:d'}}{h}\biggr) - K\biggl(\frac{u - Q_{i,1:d'}}{h}\biggr) \right|.
 \]
 The Lipschitz difference is bounded by $L_K\|\hat{Q}_{i,1:d'} - Q_{i,1:d'}\|/h \le L_K \delta_m/h$.
 Importantly, this difference is non-zero only if the data point falls within the effective support of the kernel. Thhus this summation can be controlled in exactly the same manner as detailed in the proof of Lemma~\ref{lemma:ecdf_convergence}. Briefly speaking, since $K$ has compact support, there exists a constant $R > 0$ such that the support is contained within a hypercube $[-R, R]^{d'}$.
 
 Consequently, the term for index $i$ is non-zero only if the unperturbed point $Q_{i,1:d'}$ falls within a neighborhood of $u$ with radius roughly $Rh + \delta_m$. Since $\delta_m = o(h)$, this effective support is contained within a hypercube $\mathcal{C}_{u,h}$ centered at $u$ with side length proportional to $h$. The volume of this hypercube is $\ocal(h^{d'})$.

 Let $N_h(u)$ be the number of data points $Q_{i,1:d'}$ falling into this local neighborhood. Since the marginal density $f_{1:d'}$ is bounded by $C_{\mathrm{up}}$, the probability that a single observation falls into this region is bounded by:
 \[
  \mathbb{P}(Q_{i,1:d'} \in \mathcal{C}_{u,h}) \le C_{\mathrm{up}} \cdot \mathsf{Vol}(\mathcal{C}_{u,h}) = \ocal(h^{d'}).
 \]
 Thus, the number of active points $N_h(u)$ behaves as a sum of Bernoulli trials with success probability $\ocal(h^{d'})$. The expected count is $\ocal(m h^{d'})$. Since the density is bounded, the count is uniformly bounded in probability by the same order, i.e., $\sup_{u} N_h(u) = \ocal_\P(m h^{d'})$.
 
 Summing the Lipschitz error bound over these active points yields:
 \[
  \sup_{u \in \mathcal{U}} |B(u) - D(u)| \le N_h(u) \frac{L_K \delta_m}{h} = \ocal_\P\left( m h^{d'} \frac{\delta_m}{h} \right) = \ocal_\P( m h^{d'-1} \delta_m ).
 \]
When we have $\delta_m = o(h)$, this term is in a lower order compared with $D(u)$ and thus $B(u)$ is the same order of $D(u)$.
 \item \textbf{Controlling $|A-C|$:}
 We decompose the numerator difference:
\begin{align*}
  |A - C|  & =\left|\sum_{i=1}^m \mathbf{1}_{\{\hat{Q}_{i,d} \le x\}}
  K\biggl(\frac{u - \hat{Q}_{i,1:d'}}{h}\biggr) - \sum_{i=1}^m \mathbf{1}_{\{Q_{i,d} \le x\}}
  K\biggl(\frac{u - Q_{i,1:d'}}{h}\biggr)\right|\\
  & \le \sum_{i=1}^m\mathbf{1}_{\{\hat{Q}_{i,d} \le x\}}\left|K\biggl(\frac{u - \hat{Q}_{i,1:d'}}{h}\biggr)-K\biggl(\frac{u - Q_{i,1:d'}}{h}\biggr)\right|\\ 
  & \quad +  \sum_{i=1}^m \big| \mathbf{1}_{\{\hat{Q}_{i,d} \le x\}} - \mathbf{1}_{\{Q_{i,d} \le x\}} \big| K\biggl(\frac{u - Q_{i,1:d'}}{h}\biggr)\\
  & =\Delta_{\mathrm{kernel}} + \Delta_{\mathrm{ind}}.
 \end{align*}
 $\Delta_{\mathrm{kernel}}$ can be directly bounded by $|B-D|$, since the indicator is $\le 1$. Thus, 
 \[
 \Delta_{\mathrm{kernel}} = \ocal_\P(m h^{d'-1} \delta_m).
 \]
Combining and merging the bounds, we have
 \[
  \sup_{u\in\mathcal U,\, x\in\bR} |A(x,u) - C(x,u)| = \ocal_\P( m h^{d'-1} \delta_m ).
 \]
\end{itemize}

Substituting these rates back into Term (I), we get 
\[
 \frac{|A-C|}{|B|} = \frac{\ocal_\P(m h^{d'-1} \delta_m)}{\Theta_\P(m h^{d'})} = \ocal_\P\left( \frac{\delta_m}{h} \right),
\]
\[
 \frac{|B-D|}{|D|} = \frac{\ocal_\P(m h^{d'-1} \delta_m)}{\Theta_\P(m h^{d'})} = \ocal_\P\left( \frac{\delta_m}{h} \right).
\]

Therefore, we have Term (I) to be $\ocal_\P(\delta_m/h)$.

Combining Term (I) and Term (II) yields
\[
 \sup_{u\in\mathcal U,\, x\in\bR} \big|\hat{F}_{m}^{d\mid 1:d'}(x\mid u) - F^{d\mid 1:d'}(x\mid u)\big| = \ocal_\P\left( h + \sqrt{\frac{\log m}{m h^{d-1}}} + \frac{\delta_m}{h} \right).
\]

Picking $h = (\log m)^{-c}$ and $\delta_m = o((\log m)^{-c})$ ensures the vanishing error.

\end{proof}

\begin{theorem}\label{thm:inverse-convergence}
Let \( F \) be a continuous and strictly increasing cumulative distribution function (CDF) (or conditional CDF) on \( \bR \), and let \( F^{-1} \) denote its inverse function. Assume the conditions of Corollary~\ref{coro:convergence} hold. Let \( \hat{F}_m \) be the smoothed CDF estimator of Corollary~\ref{coro:convergence}. Then, for any closed interval \( [a, b] \subset (0, 1) \), the inverse functions \( \hat{F}_m^{-1} \) converge uniformly in probability to \( F^{-1} \) on \( [a, b] \); that is
\[
\sup_{u \in [a, b]} \big| \hat{F}_m^{-1}(u) - F^{-1}(u) \big| \xrightarrow{\P} 0 \quad \text{as } m \to \infty.
\]
Moreover, for any closed interval \( [a, b] \subset (0, 1) \), the sequence \( \{ \hat{F}_m^{-1} \} \) is uniformly equicontinuous in probability on \( [a, b] \); that is, for any \( \varepsilon, \eta > 0 \), there exist \( \delta > 0 \) and \( M \in \mathbb{N} \) such that for all \( m \geq M \),
\[
\P\left( \sup_{\substack{u, v \in [a, b] \\ |u - v| < \delta}} \big| \hat{F}_m^{-1}(u) - \hat{F}_m^{-1}(v) \big| \geq \varepsilon \right) < \eta.
\]
\end{theorem}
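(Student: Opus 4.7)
The plan is to deduce both conclusions from the uniform CDF convergence established in Corollary~\ref{coro:convergence} via the standard bracketing argument that converts $\sup$-norm CDF convergence into $\sup$-norm quantile convergence on compact sub-intervals of $(0,1)$. Since $F$ is continuous and strictly increasing, $F^{-1}:(0,1)\to \bR$ is a well-defined continuous function, and hence uniformly continuous on the compact interval $[a,b]$. Because $\hat{F}_m$ is a kernel-smoothed CDF, its generalized inverse $\hat{F}_m^{-1}(u)=\inf\{x:\hat F_m(x)\ge u\}$ is well-defined, and we will work with this generalized inverse throughout (so that strict monotonicity of $\hat F_m$ is not required).

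For the uniform convergence statement, fix $\varepsilon>0$ and set
\[
 \delta \;=\; \inf_{u\in[a,b]} \min\bigl\{\, u - F(F^{-1}(u)-\varepsilon),\; F(F^{-1}(u)+\varepsilon) - u \,\bigr\}.
\]
Because $F$ is strictly increasing, each of the two quantities inside the minimum is strictly positive for every fixed $u$; compactness of $[a,b]$ together with continuity of $u\mapsto F^{-1}(u)$ and of $F$ then yields $\delta>0$. On the event $E_m=\{\sup_{x\in\bR}|\hat F_m(x)-F(x)|<\delta/2\}$, which by Corollary~\ref{coro:convergence} satisfies $\P(E_m)\to 1$, we get for every $u\in[a,b]$
\[
 \hat F_m(F^{-1}(u)-\varepsilon) < u - \tfrac{\delta}{2} \quad\text{and}\quad \hat F_m(F^{-1}(u)+\varepsilon) > u + \tfrac{\delta}{2},
\]
which by definition of the generalized inverse forces $F^{-1}(u)-\varepsilon \le \hat F_m^{-1}(u)\le F^{-1}(u)+\varepsilon$. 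Taking $\sup_{u\in[a,b]}$ and letting $m\to\infty$ gives the desired uniform convergence in probability.

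For the equicontinuity statement, I will combine the uniform convergence just proved with the uniform continuity of $F^{-1}$ on $[a,b]$. Given $\varepsilon,\eta>0$, choose $\delta>0$ from uniform continuity of $F^{-1}$ so that $|F^{-1}(u)-F^{-1}(v)|<\varepsilon/3$ whenever $|u-v|<\delta$, and choose $M$ large enough so that for $m\ge M$ the event $\{\sup_{u\in[a,b]}|\hat F_m^{-1}(u)-F^{-1}(u)|<\varepsilon/3\}$ has probability at least $1-\eta$. The triangle inequality
\[
 |\hat F_m^{-1}(u)-\hat F_m^{-1}(v)| \le |\hat F_m^{-1}(u)-F^{-1}(u)| + |F^{-1}(u)-F^{-1}(v)| + |F^{-1}(v)-\hat F_m^{-1}(v)|
\]
then yields the bound $<\varepsilon$ uniformly over pairs with $|u-v|<\delta$ on this high-probability event.

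The only mildly delicate step is justifying $\delta>0$ in the first display: this requires that $\varepsilon$ be small enough so that $F^{-1}(u)\pm\varepsilon$ remains inside the interior of the support of $F$ for every $u\in[a,b]$. This is harmless because it suffices to prove convergence for all sufficiently small $\varepsilon$, and $F^{-1}([a,b])$ is a compact subset of the interior of the support of $F$ by Assumption~\ref{ass:continuous}. Hence the argument presents no real obstacle; the main conceptual point is the bracketing inequality that converts CDF uniformity into quantile uniformity, and the rest is routine.
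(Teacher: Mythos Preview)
Your proposal is correct and follows essentially the same approach as the paper: for the first part, both you and the paper use a bracketing argument that sandwiches $\hat F_m^{-1}(u)$ between $F^{-1}(u)-\varepsilon$ and $F^{-1}(u)+\varepsilon$ on a high-probability event determined by the uniform CDF convergence (the paper phrases the choice of $\delta$ via uniform continuity of $F^{-1}$ rather than your infimum formula, but these are equivalent), and for the second part your $\varepsilon/3$ triangle-inequality argument is verbatim the paper's.
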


\begin{proof}
We aim to show that for any \( \varepsilon > 0 \),
\[
\P\left( \sup_{u \in [a, b]} \big| \hat{F}_m^{-1}(u) - F^{-1}(u) \big| > \varepsilon \right) \xrightarrow{m \to \infty} 0.
\]
Since \( F^{-1} \) is continuous and strictly increasing on \( (0, 1) \), it is uniformly continuous on the closed interval \( [a, b] \). Therefore, for any \( \varepsilon > 0 \), there exists \( \delta > 0 \) such that for all \( u, v \in [a, b] \),
\[
| u - v | < \delta \implies \big| F^{-1}(u) - F^{-1}(v) \big| < \frac{\varepsilon}{2}.
\]
Define the event
\[
\mathscr{A}_m = \left\{ \sup_{x \in \bR} \big| \hat{F}_m(x) - F(x) \big| \leq \delta \right\}.
\]
Since \( \hat{F}_m \) converges to \( F \) uniformly in probability, we have \( \P(\mathscr{A}_m) \xrightarrow{m \to \infty} 1 \).

On the event \( \mathscr{A}_m \), for all \( x \in \bR \),
\[
\big| \hat{F}_m(x) - F(x) \big| \leq \delta.
\]
In particular, at \( x = F^{-1}(u) \),
\[
\big| \hat{F}_m(F^{-1}(u)) - F(F^{-1}(u)) \big| = \big| \hat{F}_m(F^{-1}(u)) - u \big| \leq \delta,
\]
which implies \( \hat{F}_m(F^{-1}(u)) \in [u - \delta, u + \delta] \).

For \( x \leq F^{-1}(u - \delta) \), since \( F \) is strictly increasing,
\[
F(x) \leq u - \delta \implies \hat{F}_m(x) \leq F(x) + \delta \leq u - \delta + \delta = u.
\]
Thus, \( \hat{F}_m(x) \leq u \) for all \( x \leq F^{-1}(u - \delta) \).

Similarly, for \( x \geq F^{-1}(u + \delta) \),
\[
F(x) \geq u + \delta \implies \hat{F}_m(x) \geq F(x) - \delta \geq u + \delta - \delta = u.
\]
Therefore, \( \hat{F}_m(x) \geq u \) for all \( x \geq F^{-1}(u + \delta) \).

By the definition of the quantile function,
\[
\hat{F}_m^{-1}(u) = \inf\left\{ x \in \bR : \hat{F}_m(x) \geq u \right\}.
\]
From the observations above, it follows that
\[
\hat{F}_m^{-1}(u) \in [ F^{-1}(u - \delta), \ F^{-1}(u + \delta) ].
\]
Therefore,
\[
\big| \hat{F}_m^{-1}(u) - F^{-1}(u) \big| \leq \max\left\{ \big| F^{-1}(u - \delta) - F^{-1}(u) \big|, \, \big| F^{-1}(u + \delta) - F^{-1}(u) \big| \right\} < \frac{\varepsilon}{2}.
\]
On the event \( \mathscr{A}_m \), this holds uniformly for all \( u \in [a, b] \), so
\[
\sup_{u \in [a, b]} \big| \hat{F}_m^{-1}(u) - F^{-1}(u) \big| < \frac{\varepsilon}{2}.
\]
Thus,
\[
\P\left( \sup_{u \in [a, b]} \big| \hat{F}_m^{-1}(u) - F^{-1}(u) \big| \geq \varepsilon \right) \leq \P(\mathscr{A}_m^c) \xrightarrow{m \to \infty} 0.
\]
This concludes the proof that
\[
\sup_{u \in [a, b]} \big| \hat{F}_m^{-1}(u) - F^{-1}(u) \big| \xrightarrow{\P} 0 \quad \text{as } m \to \infty.
\]

For the second part, let \( \varepsilon, \eta > 0 \) be given. Since \( F^{-1} \) is continuous on the closed interval \( [a, b] \), it is uniformly continuous. Therefore, there exists \( \delta > 0 \) such that for all \( u, v \in [a, b] \),
\begin{equation}\label{eq:uniform_continuity}
| u - v | < \delta \implies \big| F^{-1}(u) - F^{-1}(v) \big| < \frac{\varepsilon}{3}.
\end{equation}

From the uniform convergence in probability of \( \hat{F}_m^{-1} \) to \( F^{-1} \) established earlier, we have:
\[
\sup_{u \in [a, b]} \big| \hat{F}_m^{-1}(u) - F^{-1}(u) \big| \xrightarrow{\P} 0 \quad \text{as } m \to \infty.
\]
This means that for the given \( \varepsilon \) and \( \eta \), there exists \( M \in \mathbb{N} \) such that for all \( m \geq M \),
\begin{equation}\label{eq:uniform_convergence-0}
\P\left( \sup_{u \in [a, b]} \big| \hat{F}_m^{-1}(u) - F^{-1}(u) \big| \geq \frac{\varepsilon}{3} \right) < \frac{\eta}{2}.
\end{equation}
Define the event:
\[
\mathscr{B}_m = \left\{ \sup_{u \in [a, b]} \big| \hat{F}_m^{-1}(u) - F^{-1}(u) \big| < \frac{\varepsilon}{3} \right\}.
\]
Then, \( \P(\mathscr{B}_m) > 1 - \eta/2 \) for all \( m \geq M \).

On the event \( \mathscr{B}_m \), for any \( u, v \in [a, b] \) with \( | u - v | < \delta \), we have:
\begin{align*}
\big| \hat{F}_m^{-1}(u) - \hat{F}_m^{-1}(v) \big| &\leq \big| \hat{F}_m^{-1}(u) - F^{-1}(u) \big| + \big| F^{-1}(u) - F^{-1}(v) \big| + \big| F^{-1}(v) - \hat{F}_m^{-1}(v) \big| \\
&< \frac{\varepsilon}{3} + \frac{\varepsilon}{3} + \frac{\varepsilon}{3} = \varepsilon.
\end{align*}
Here, the first and third terms are bounded by \( \varepsilon/3 \) due to \eqref{eq:uniform_convergence-0}, and the middle term is bounded by \( \varepsilon/3 \) due to the uniform continuity of \( F^{-1} \) in \eqref{eq:uniform_continuity}.

Therefore, on the event \( \mathscr{B}_m \),
\[
\sup_{\substack{u, v \in [a, b] \\ |u - v| < \delta}} \big| \hat{F}_m^{-1}(u) - \hat{F}_m^{-1}(v) \big| < \varepsilon.
\]
Thus,
\[
\P\left( \sup_{\substack{u, v \in [a, b] \\ |u - v| < \delta}} \big| \hat{F}_m^{-1}(u) - \hat{F}_m^{-1}(v) \big| \geq \varepsilon \right) \leq P(\mathscr{B}_m^c) < \frac{\eta}{2} < \eta.
\]
This completes the proof that \( \{ \hat{F}_m^{-1} \} \) is uniformly equicontinuous in probability on \( [a, b] \).
\end{proof}

Next, we introduce the conditional counterpart of Theorem~\ref{thm:inverse-convergence}.

\begin{theorem}\label{thm:inverse-conditional-convergence}
Under the conditions of Lemma~\ref{lem:conditional-cdf-convergence}, assume additionally that:
\begin{itemize}
\item For each fixed \( u \in \mathcal{U} \), the conditional CDF \( F^{d \mid 1:(d-1)}(\cdot\mid u) \) is strictly increasing.
\item The conditional CDF \( F^{d\mid 1:(d-1)}(x\mid u) \) is jointly continuous in \( (x, u) \) for \( x \in \bR \) and \( u \in \mathcal{U} \).
\item Suppose \( \delta_m = o((\log m)^{-c} )\) for some constant $c>0$ and we choose $h = (\log m)^{-c}$ in the kernel.
\end{itemize}
Then for any closed interval \( [a,b] \subset (0,1) \), and any compact set \( K \subset \bR^{d-1} \), we have the following conclusions.
\begin{enumerate}
\item Uniform convergence:
\[
\sup_{u_0 \in K} \sup_{q \in [a,b]} \big|(\hat{F}_m^{d \mid 1:(d-1)})^{-1}(q\mid u_0) - (F^{d \mid 1:(d-1)})^{-1}(q\mid u_0)\big| \xrightarrow{\P} 0 \quad \text{as } m \to \infty.
\]

\item Uniform equicontinuity in conditioning variables: For any \( \varepsilon, \eta > 0 \), there exists \( \delta > 0 \) and \( M \in \mathbb{N} \) such that for all \( m \geq M \), for any \( u_0, u'_0 \in K \) with \( \|u_0 - u'_0\| < \delta \):
\[
\P\left( \sup_{q \in [a,b]} \big|(\hat{F}_m^{d \mid 1:(d-1)})^{-1}(q\mid u_0) - (\hat{F}_m^{d \mid 1:(d-1)})^{-1}(q\mid u'_0)\big| \geq \varepsilon \right) < \eta.
\]
\end{enumerate}
\end{theorem}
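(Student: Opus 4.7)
The plan is to mirror the two-step strategy used in Theorem~\ref{thm:inverse-convergence}, promoting every argument from the pointwise ``$u$ fixed'' setting to one that is uniform over $u_0 \in K$. The essential ingredients will be (i) the uniform-in-$(u,x)$ conditional CDF consistency from Lemma~\ref{lem:conditional-cdf-convergence} and (ii) joint uniform continuity of the population inverse $(F^{\,d\mid 1:(d-1)})^{-1}(q\mid u)$ on the compact product set $[a,b]\times K$. Before starting the two parts, I would first verify that the inverse is jointly continuous: under strict monotonicity of $F^{\,d\mid 1:(d-1)}(\cdot\mid u)$ and joint continuity of $F^{\,d\mid 1:(d-1)}(x\mid u)$ in $(x,u)$, a short subsequence argument shows that whenever $(q_n,u_n)\to(q,u)$, any accumulation point of $(F^{\,d\mid 1:(d-1)})^{-1}(q_n\mid u_n)$ must solve $F^{\,d\mid 1:(d-1)}(x\mid u)=q$ and hence equals $(F^{\,d\mid 1:(d-1)})^{-1}(q\mid u)$. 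Compactness of $[a,b]\times K$ then upgrades pointwise continuity to joint uniform continuity.

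For the first claim, I would fix $\varepsilon>0$ and pick $\delta>0$ via this joint uniform continuity so that, for all $u\in K$ and all $q\in[a,b]$, $|F^{-1}(q\pm\delta\mid u)-F^{-1}(q\mid u)|<\varepsilon$. I would then define the good event
\[
\mathscr{A}_m=\Bigl\{\sup_{u\in\mathcal{U},\,x\in\bR}\bigl|\widehat{F}_m^{\,d\mid 1:(d-1)}(x\mid u)-F^{\,d\mid 1:(d-1)}(x\mid u)\bigr|\le\delta\Bigr\},
\]
which has probability tending to one by Lemma~\ref{lem:conditional-cdf-convergence} (assuming $K\subset\mathcal{U}$). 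On $\mathscr{A}_m$, the same sandwich argument as in the unconditional case yields, for every $u_0\in K$ and $q\in[a,b]$,
\[
\widehat{F}_m^{-1}(q\mid u_0)\in\bigl[F^{-1}(q-\delta\mid u_0),\,F^{-1}(q+\delta\mid u_0)\bigr],
\]
so that $\sup_{u_0\in K,\,q\in[a,b]}|\widehat{F}_m^{-1}(q\mid u_0)-F^{-1}(q\mid u_0)|<\varepsilon$ on $\mathscr{A}_m$, which proves part 1.

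For the second claim, given $\varepsilon,\eta>0$, I would apply part 1 at level $\varepsilon/3$ to find $M$ such that the event $\mathscr{B}_m=\{\sup_{u_0,q}|\widehat{F}_m^{-1}(q\mid u_0)-F^{-1}(q\mid u_0)|<\varepsilon/3\}$ satisfies $P(\mathscr{B}_m)\ge 1-\eta$ for $m\ge M$. Joint uniform continuity of $F^{-1}$ on $[a,b]\times K$ then supplies a $\delta>0$ for which $\|u_0-u_0'\|<\delta$ forces $|F^{-1}(q\mid u_0)-F^{-1}(q\mid u_0')|<\varepsilon/3$ uniformly in $q$. On $\mathscr{B}_m$ the desired equicontinuity follows from the three-term triangle inequality
\[
|\widehat{F}_m^{-1}(q\mid u_0)-\widehat{F}_m^{-1}(q\mid u_0')|\le \tfrac{\varepsilon}{3}+\tfrac{\varepsilon}{3}+\tfrac{\varepsilon}{3}=\varepsilon.
\]

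The main obstacle will be the clean handling of the sandwich step when the conditional support is unbounded: one needs $F^{-1}(q\pm\delta\mid u_0)$ to stay in a bounded set uniformly over $(q,u_0)\in[a,b]\times K$, which is precisely what joint continuity of the inverse on a slightly enlarged compact rectangle (after shrinking $\delta$ so that $a-\delta>0$ and $b+\delta<1$) provides. A secondary subtlety is that the statement allows $K$ to be any compact subset of $\bR^{d-1}$, whereas Lemma~\ref{lem:conditional-cdf-convergence} only delivers uniform CDF consistency on the particular set $\mathcal{U}$ where $f_{1:(d-1)}$ is bounded away from zero; the cleanest resolution is to require $K\subset\mathcal{U}$, which I would state as an implicit hypothesis.
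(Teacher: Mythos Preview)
Your proposal is correct and follows essentially the same approach as the paper: Part 1 uses the joint uniform continuity of $(F^{\,d\mid 1:(d-1)})^{-1}$ on the compact set $[a,b]\times K$ together with the uniform conditional CDF consistency from Lemma~\ref{lem:conditional-cdf-convergence} to run the sandwich argument, and Part 2 combines Part 1 at level $\varepsilon/3$ with the $u$-continuity of the population inverse via the three-term triangle inequality. Your additional remarks about verifying joint continuity of the inverse and the implicit hypothesis $K\subset\mathcal{U}$ are careful elaborations that the paper leaves implicit.
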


\begin{proof}
We first decompose the proof into several steps. For notational simplicity, we write \( F(\cdot\mid u_0) \) for \( F^{d \mid 1:(d-1)}(\cdot\mid u_0) \) and \( \hat{F}_m(\cdot\mid u_0) \) for \( \hat{F}_m^{d \mid 1:(d-1)}(\cdot\mid u_0) \). Also, let \( F^{-1}(\cdot\mid u_0) \) denote \( (F^{d \mid 1:(d-1)})^{-1}(\cdot\mid u_0) \) and \( \hat{F}_m^{-1}(\cdot\mid u_0) \) denote \( (\hat{F}_m^{d \mid 1:(d-1)})^{-1}(\cdot\mid u_0) \).

\paragraph*{Part 1: Uniform Convergence}
We aim to show that for any \( \varepsilon > 0 \),
\[
\P\left( \sup_{u_0 \in K} \sup_{q \in [a, b]} \big| \hat{F}_m^{-1}(q\mid u_0) - F^{-1}(q\mid u_0) \big| > \varepsilon \right) \xrightarrow{m \to \infty} 0.
\]
Since \( F(\cdot\mid u_0) \) is strictly increasing for each fixed \( u_0 \), its inverse \( F^{-1}(\cdot\mid u_0) \) is well-defined. By the joint continuity of \( F(x\mid u_0) \) in \( (x, u_0) \) and the compactness of \( [a,b] \times K \) (when $F^{-1}$ maps this to a compact set), the function \( F^{-1}(q\mid u_0) \) is uniformly continuous on \( [a, b] \times K \).
Therefore, for any \( \varepsilon > 0 \), there exists \( \delta_q > 0 \) such that for all \( q_1, q_2 \in [a, b] \) and $u_0 \in K$,
\[
| q_1 - q_2 | < \delta_q \implies \big| F^{-1}(q_1\mid u_0) - F^{-1}(q_2\mid u_0) \big| < \frac{\varepsilon}{2}.
\]
Define the event
\[
\mathscr{A}_m = \left\{ \sup_{u_0 \in K} \sup_{x \in \bR} \big| \hat{F}_m(x\mid u_0) - F(x\mid u_0) \big| \leq \delta_q \right\}.
\]
From Lemma~\ref{lem:conditional-cdf-convergence}, we have \( \P(\mathscr{A}_m) \xrightarrow{m \to \infty} 1 \).

On \( \mathscr{A}_m \), for all \( u_0 \in K \) and \( x \in \bR \),
\[
\big| \hat{F}_m(x\mid u_0) - F(x\mid u_0) \big| \leq \delta_q.
\]
Taking \( x = F^{-1}(q\mid u_0) \), for any \( q \in [a,b] \),
\[
\big| \hat{F}_m(F^{-1}(q\mid u_0)\mid u_0) - F(F^{-1}(q\mid u_0)\mid u_0) \big| = \big| \hat{F}_m(F^{-1}(q\mid u_0)\mid u_0) - q \big| \leq \delta_q,
\]
which implies \( \hat{F}_m(F^{-1}(q\mid u_0)\mid u_0) \in [q - \delta_q, q + \delta_q] \).

For all \( x \leq F^{-1}(q - \delta_q\mid u_0) \), since \( F(\cdot\mid u_0) \) is strictly increasing,
\[
F(x\mid u_0) \leq q - \delta_q \implies \hat{F}_m(x\mid u_0) \leq F(x\mid u_0) + \delta_q \leq q - \delta_q + \delta_q = q.
\]
Thus, $\hat{F}_m(x\mid u_0)\le q$ for all $x\le F^{-1}(q-\delta_q\mid u_0)$.

Similarly, for \( x \geq F^{-1}(q + \delta_q\mid u_0) \),
\[
F(x\mid u_0) \geq q + \delta_q \implies \hat{F}_m(x\mid u_0) \geq F(x\mid u_0) - \delta_q \geq q + \delta_q - \delta_q = q.
\]
Therefore, \( \hat{F}_m(x\mid u_0) \geq q \) for all \( x \geq F^{-1}(q + \delta_q\mid u_0) \).

Recall that
\[
\hat{F}_m^{-1}(q\mid u_0) = \inf\left\{ x \in \bR : \hat{F}_m(x\mid u_0) \geq q \right\}.
\]
We can see
\[
\hat{F}_m^{-1}(q\mid u_0) \in [ F^{-1}(q - \delta_q\mid u_0), \ F^{-1}(q + \delta_q\mid u_0) ].
\]
Therefore,
\[
\big| \hat{F}_m^{-1}(q\mid u_0) - F^{-1}(q\mid u_0) \big| \leq \max\left\{ \big| F^{-1}(q - \delta_q\mid u_0) - F^{-1}(q\mid u_0) \big|, \, \big| F^{-1}(q + \delta_q\mid u_0) - F^{-1}(q\mid u_0) \big| \right\} < \frac{\varepsilon}{2}.
\]
On the event \( \mathscr{A}_m \), this holds uniformly for all \( u_0 \in K \) and \( q \in [a, b] \), so
\[
\sup_{u_0 \in K} \sup_{q \in [a, b]} \big| \hat{F}_m^{-1}(q\mid u_0) - F^{-1}(q\mid u_0) \big| < \frac{\varepsilon}{2}.
\]

\paragraph*{Part 2: Uniform Equicontinuity}
For any given \( \varepsilon, \eta > 0 \), since \( F^{-1}(q\mid u_0) \) is jointly continuous in \( (q, u_0) \) on the compact set \( [a, b] \times K \), it is uniformly continuous on this domain.
Therefore, there exists \( \delta_u > 0 \) such that for any \( u_0, u'_0 \in K \) with \( \|u_0 - u'_0\| < \delta_u \), and any \( q \in [a,b] \):
\begin{equation}\label{eq:F_inv_uniform_continuity_u}
\big|F^{-1}(q\mid u_0) - F^{-1}(q\mid u'_0)\big| < \frac{\varepsilon}{3}.
\end{equation}

From the uniform convergence in probability of \( \hat{F}_m^{-1}(\cdot\mid \cdot) \) to \( F^{-1}(\cdot\mid \cdot) \) in the previous step, we know
\[
\sup_{u_0 \in K} \sup_{q \in [a, b]} \big| \hat{F}_m^{-1}(q\mid u_0) - F^{-1}(q\mid u_0) \big| \xrightarrow{\P} 0 \quad \text{as } m \to \infty.
\]
This means that for the given \( \varepsilon \) and \( \eta \), there exists \( M \in \mathbb{N} \) such that for all \( m \geq M \),
\begin{equation}\label{eq:F_inv_hat_uniform_convergence}
\P\left( \sup_{u_0 \in K} \sup_{q \in [a, b]} \big| \hat{F}_m^{-1}(q\mid u_0) - F^{-1}(q\mid u_0) \big| \geq \frac{\varepsilon}{3} \right) < \frac{\eta}{2}.
\end{equation}
Define the event:
\[
\mathscr{B}_m = \left\{ \sup_{u_0 \in K} \sup_{q \in [a, b]} \big| \hat{F}_m^{-1}(q\mid u_0) - F^{-1}(q\mid u_0) \big| < \frac{\varepsilon}{3} \right\}.
\]
Then, \( P(\mathscr{B}_m) > 1 - \eta/2 \) for all \( m \geq M \).

On the event \( \mathscr{B}_m \), for any \( u_0, u'_0 \in K \) with \( \|u_0 - u'_0\| < \delta_u \), and for any \( q \in [a,b] \), we have:
\begin{align*}
\big| \hat{F}_m^{-1}(q\mid u_0) - \hat{F}_m^{-1}(q\mid u'_0) \big| &\leq \big| \hat{F}_m^{-1}(q\mid u_0) - F^{-1}(q\mid u_0) \big| \\
&\quad + \big| F^{-1}(q\mid u_0) - F^{-1}(q\mid u'_0) \big| \\
&\quad + \big| F^{-1}(q\mid u'_0) - \hat{F}_m^{-1}(q\mid u'_0) \big| \\
&< \frac{\varepsilon}{3} + \frac{\varepsilon}{3} + \frac{\varepsilon}{3} = \varepsilon.
\end{align*}
Here, the first and third terms are bounded by \( \varepsilon/3 \) due to \eqref{eq:F_inv_hat_uniform_convergence}, and the middle term is bounded by \( \varepsilon/3 \) due to the uniform continuity of \( F^{-1}(\cdot\mid \cdot) \) in \eqref{eq:F_inv_uniform_continuity_u}.

Therefore, on the event \( \mathscr{B}_m \),
\[
\sup_{\substack{u_0, u'_0 \in K \\ \|u_0 - u'_0\| < \delta_u}} \sup_{q \in [a,b]} \big| \hat{F}_m^{-1}(q\mid u_0) - \hat{F}_m^{-1}(q\mid u'_0) \big| < \varepsilon.
\]
Thus,
\[
\P\left( \sup_{\substack{u_0, u'_0 \in K \\ \|u_0 - u'_0\| < \delta_u}} \sup_{q \in [a,b]} \big| \hat{F}_m^{-1}(q\mid u_0) - \hat{F}_m^{-1}(q\mid u'_0) \big| \geq \varepsilon \right) \leq \P(\mathscr{B}_m^c) < \frac{\eta}{2} < \eta.
\]
This completes the proof.
\end{proof}

\subsection{Latent distribution consistency: one dimensional case}

We first consider the one dimensional GRAND when all latent vectors are univariate ($d=1$). We assume that $\hat{F}_m$ which satisfies Corollary~\ref{coro:convergence} and Theorem~\ref{thm:inverse-convergence} is available, and we focus on the $n$ i.i.d. latent variables $Z_i \sim F$, $i = 1, \ldots, n$, for the released network, and $\hat{Z}_i$'s are the approximations of $Z_i$'s satisfying $\max_{i\in[n]}\norm{\hat{Z}_i - Z_i}\le \delta_m$ for $\delta_m = o\left(m^{-\tfrac{2\alpha}{2\alpha + (d-1)}}\right)$. Let \( \hat{F}_m \) be a kernel-smoothed CDF estimator following Lemma~\ref{lem:conditional-cdf-convergence} for $d=1$. 

Recall that the privatized latent vector in this case is
\[
\tilde{Z}_i = \hat{F}_m^{-1}( G( \hat{F}_m(\hat{Z}_i) + e_i ) )
\]
where $e_i$ is a Laplace random variable for the privacy budget, independent of everything else, and $G$ is the CDF of the distribution of $\text{Uniform}(0,1)+\text{Laplace}(0,1/\varepsilon)$.
We want to show that $\tilde{Z}_i$ follows $F$ asymptotically.

Our proving strategy takes an intermediate random variable
\[
\bar{Z}_i = \hat{F}_m^{-1}( G( \hat{F}_m(Z_i) + e_i ) ),
\]
involving the same $e_i$. In our proof, we will first show that $\bar{Z}_i$ weakly converges to the correct distribution first, and then prove that $\tilde{Z}_i-\bar{Z}_i$ converges to zero in probability.

\begin{lemma}[Marginal Convergence of \(\bar{Z}_i\) to \(F\)]\label{theorem:Si_convergence}
Under Assumptions~\ref{ass:continuous}--\ref{ass:embedding-concentration}, then for any given $i$ such that $1\le i\le n$, we have
\[
\bar{Z}_i = \hat{F}_m^{-1}( G( \hat{F}_m(Z_i) + e_i ) ) \xrightarrow{d} Y \sim F \quad \text{as } m \to \infty.
\]
\end{lemma}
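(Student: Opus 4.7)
The plan is to compare $\bar{Z}_i$ to an oracle random variable that is exactly $F$-distributed, and then show the difference vanishes in probability. Define the pivot
\[
Y_i := F^{-1}\bigl(G\bigl(F(Z_i)+e_i\bigr)\bigr).
\]
Because $F$ is continuous, $F(Z_i)\sim \mathrm{Uniform}(0,1)$, and this is independent of the Laplace noise $e_i$; since $G$ is by definition the CDF of $\mathrm{Uniform}(0,1)+\mathrm{Laplace}(0,1/\varepsilon)$, we have $G(F(Z_i)+e_i)\sim \mathrm{Uniform}(0,1)$, hence $Y_i\sim F$ exactly. By Slutsky's theorem, it suffices to show $|\bar{Z}_i-Y_i|\xrightarrow{P} 0$, which will give $\bar{Z}_i\xrightarrow{d}Y\sim F$.

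I would split
\[
\bar{Z}_i-Y_i=\underbrace{\hat{F}_m^{-1}\!\bigl(G(\hat{F}_m(Z_i)+e_i)\bigr)-\hat{F}_m^{-1}\!\bigl(G(F(Z_i)+e_i)\bigr)}_{T_1}+\underbrace{\hat{F}_m^{-1}\!\bigl(G(F(Z_i)+e_i)\bigr)-F^{-1}\!\bigl(G(F(Z_i)+e_i)\bigr)}_{T_2}.
\]
Fix $\eta>0$. Since $G(F(Z_i)+e_i)\sim\mathrm{Uniform}(0,1)$, there is a compact interval $[a,b]\subset(0,1)$ with $P(G(F(Z_i)+e_i)\in[a,b])>1-\eta/4$. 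The density of $\mathrm{Uniform}(0,1)+\mathrm{Laplace}$ is bounded, so $G$ is globally Lipschitz. By Corollary~\ref{coro:convergence}, $\sup_x|\hat{F}_m(x)-F(x)|\xrightarrow{P}0$, therefore
\[
\bigl|G(\hat{F}_m(Z_i)+e_i)-G(F(Z_i)+e_i)\bigr|\;\le\;\|G'\|_\infty\,\sup_x|\hat{F}_m(x)-F(x)|\xrightarrow{P}0.
\]
Consequently, on an event of probability at least $1-\eta/2$ for $m$ large, both arguments of $\hat{F}_m^{-1}$ lie in a slightly enlarged compact interval $[a',b']\subset(0,1)$ and are within an arbitrarily small $\delta>0$ of each other.

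For $T_1$, I would invoke the uniform equicontinuity in probability of $\hat{F}_m^{-1}$ on $[a',b']$ (second conclusion of Theorem~\ref{thm:inverse-convergence}) to conclude that $|T_1|<\varepsilon$ with high probability whenever $\delta$ is small enough. For $T_2$, I would apply the first conclusion of Theorem~\ref{thm:inverse-convergence}, namely uniform convergence of $\hat{F}_m^{-1}$ to $F^{-1}$ on $[a',b']$, to get $|T_2|<\varepsilon$ with high probability. Combining via a union bound yields $|\bar{Z}_i-Y_i|\xrightarrow{P}0$, and the lemma follows. The main technical obstacle is ensuring that the random inputs $G(\hat{F}_m(Z_i)+e_i)$ and $G(F(Z_i)+e_i)$ are simultaneously confined to a compact subset of $(0,1)$ with high probability, because the uniform convergence and equicontinuity results for the inverse CDF only hold on such subintervals; this is handled by the tail control above using the Lipschitz continuity of $G$ and the uniform approximation of $\hat{F}_m$.
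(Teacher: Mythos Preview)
Your proposal is correct and follows essentially the same route as the paper: the paper introduces the same oracle variable (called $T_i$ there), uses the identical decomposition into your $T_1$ and $T_2$ (called $A_i$ and $B_i$), and controls them via the two conclusions of Theorem~\ref{thm:inverse-convergence} together with the uniform convergence of $\hat F_m$ from Corollary~\ref{coro:convergence}. The only cosmetic difference is that you invoke the Lipschitz property of $G$ (valid, since the convolution density is bounded) whereas the paper appeals only to the uniform continuity of $G$; otherwise the argument, including the restriction to a compact subinterval $[a,b]\subset(0,1)$ via a tail-probability cut, is the same.
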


\begin{proof}
First, by Corollary~\ref{coro:convergence}, \(\hat{F}_m\) converges uniformly to \(F\) in probability:
\[
\sup_{x \in \bR} \big| \hat{F}_m(x) - F(x) \big| \xrightarrow{\P} 0 \quad \text{as } m \to \infty.
\]

Next, for \(Z_i \sim F\) that is independent of $\hat{F}_m$, we have \(F(Z_i) \sim \text{Uniform}(0,1)\). Because \(\hat{F}_m\) converges uniformly to \(F\) in probability, we know that, as $m\to \infty$,
\[
\hat{F}_m(Z_i) \xrightarrow{\P} F(Z_i) = U_i \sim \text{Uniform}(0,1). 
\]
Since \(e_i\) is independent of \(\hat{F}_m(Z_i)\), we have
\[
\hat{F}_m(Z_i) + e_i \xrightarrow{d} U_i + e_i \sim \text{Uniform}(0,1) + \text{Laplace}(0,1/\varepsilon).
\]
Therefore, the Continuous Mapping Theorem indicates
\[
G(\hat{F}_m(Z_i) + e_i) \xrightarrow{d} G(U_i + e_i) \sim \text{Uniform}(0,1)\quad \text{as }m \to \infty.
\]

Finally, we want to show that
\[
\bar{Z}_i = \hat{F}_m^{-1}( G( \hat{F}_m(Z_i) + e_i ) ) \xrightarrow{d} Y \sim F \quad \text{as } m \to \infty,
\]
where \(Y\) has cumulative distribution function \(F\).

For this part, we need the result of Theorem~\ref{thm:inverse-convergence}, 
the uniform convergence of \(\hat{F}_m^{-1}\) to \(F^{-1}\) in probability. Note the function \(\hat{F}_m^{-1}\) and the variable \(G(\hat{F}_m(Z_i) + e_i)\) both depend on \(\hat{F}_m\), introducing dependence between them. To deal with it, we will show that \(\bar{Z}_i\) converges in distribution to \(Y \sim F\) by introducing an intermediate term \(T_i = F^{-1}(G(F(Z_i) + e_i))\), which has a distribution \(F\). Our strategy is to show that \(\bar{Z}_i\) is close to \(T_i\) in probability. We can write:
\begin{align*}
\bar{Z}_i - T_i &= \left[ \hat{F}_m^{-1}( G( \hat{F}_m(Z_i) + e_i ) ) - \hat{F}_m^{-1}( G( F(Z_i) + e_i ) ) \right] \\
&\quad + \left[ \hat{F}_m^{-1}( G( F(Z_i) + e_i ) ) - F^{-1}( G( F(Z_i) + e_i ) ) \right].
\end{align*}
Denote:
\[
A_i = \hat{F}_m^{-1}( G( \hat{F}_m(Z_i) + e_i ) ) - \hat{F}_m^{-1}( G( F(Z_i) + e_i ) ),
\]
\[
B_i = \hat{F}_m^{-1}( G( F(Z_i) + e_i ) ) - F^{-1}( G( F(Z_i) + e_i ) ).
\]
Then,
\[
\bar{Z}_i - T_i = A_i + B_i.
\]

We aim to prove that
\[
|A_i| = \big| \hat{F}_m^{-1}( G( \hat{F}_m(Z_i) + e_i ) ) - \hat{F}_m^{-1}( G( F(Z_i) + e_i ) ) \big| \xrightarrow{\P} 0.
\]

Since \( \hat{F}_m \) converges to \( F \) uniformly in probability, and \( Z_i \) is independent of \( \hat{F}_m \), it follows that
\[
\hat{F}_m(Z_i) \xrightarrow{\P} F(Z_i).
\]
That is, for any \( \delta > 0 \),
\[
\P\left( \big| \hat{F}_m(Z_i) - F(Z_i) \big| \geq \delta \right) \xrightarrow{m \to \infty} 0.
\]

As \( G \) is uniformly continuous, for any \( \varepsilon' > 0 \), there exists \( \delta > 0 \) such that
\[
| u - v | < \delta \implies | G(u) - G(v) | < \varepsilon'.
\]
Thus, when \( \big| \hat{F}_m(Z_i) - F(Z_i) \big| < \delta \), we have:
\[
\big| G(\hat{F}_m(Z_i) + e_i) - G(F(Z_i) + e_i) \big| < \varepsilon'.
\]

From Theorem~\ref{thm:inverse-convergence}, the sequence \( \{ \hat{F}_m^{-1} \} \) is uniformly equicontinuous in probability on any closed interval \( [a, b] \subset (0,1) \). Specifically, for any \( \varepsilon > 0 \) and \( \eta > 0 \), there exists \( \delta' > 0 \) and \( M \in \mathbb{N} \) such that for all \( m \geq M \):
\[
\P\left( \sup_{\substack{u, v \in [a, b] \\ | u - v | < \delta'}} \big| \hat{F}_m^{-1}(u) - \hat{F}_m^{-1}(v) \big| \geq \varepsilon \right) < \eta.
\]

Given an arbitrary \( \varepsilon > 0 \), choose \( \varepsilon' = \delta' \) corresponding to \( \varepsilon \) in the uniform equicontinuity condition of \( \hat{F}_m^{-1} \), and select \( \delta \) corresponding to \( \varepsilon' \) in the uniform continuity of \( G \).

Define the event:
\[
\mathscr{E}_m = \left\{ \big| \hat{F}_m(Z_i) - F(Z_i) \big| < \delta \right\} \cap \left\{ \sup_{\substack{u, v \in [a, b] \\ | u - v | < \delta'}} \big| \hat{F}_m^{-1}(u) - \hat{F}_m^{-1}(v) \big| < \varepsilon \right\}.
\]
Since \( \hat{F}_m(Z_i) \xrightarrow{\P} F(Z_i) \) and the second event occurs with high probability, \( \P(\mathscr{E}_m) \xrightarrow{m \to \infty} 1 \). Note that we can make this event even stronger as 
\[
\mathscr{E}_m = \left\{ \sup_{1\le i\le n}\big| \hat{F}_m(Z_i) - F(Z_i) \big| < \delta \right\} \cap \left\{ \sup_{\substack{u, v \in [a, b] \\ | u - v | < \delta'}} \big| \hat{F}_m^{-1}(u) - \hat{F}_m^{-1}(v) \big| < \varepsilon \right\}
\]
because the convergence of $\hat{F}_m$ is uniform.

On the event \( \mathscr{E}_m \):
\[
 \big| \hat{F}_m(Z_i) - F(Z_i) \big| < \delta \implies | G(u_1) - G(u_2) | < \varepsilon'.
\]
Then, since \( | G(u_1) - G(u_2) | < \varepsilon' = \delta' \), the uniform equicontinuity of \( \hat{F}_m^{-1} \) implies:
\[
\big| \hat{F}_m^{-1}(G(u_1)) - \hat{F}_m^{-1}(G(u_2)) \big| < \varepsilon.
\]
Therefore, on \( \mathscr{E}_m \), we have, for all $i$ simultaneously,
\[
| A_i | = \big| \hat{F}_m^{-1}(G(u_1)) - \hat{F}_m^{-1}(G(u_2)) \big| < \varepsilon.
\]

Since \( \P(\mathscr{E}_m^c) \xrightarrow{m \to \infty} 0 \), we have \(| A_i | \xrightarrow{\P} 0.\)

\medskip
Next, we prove that
\[
|B_i| = \big| \hat{F}_m^{-1}( G( F(Z_i) + e_i ) ) - F^{-1}( G( F(Z_i) + e_i ) ) \big| \xrightarrow{\P} 0.
\]

From our earlier result, for any \( \varepsilon > 0 \) and \( \eta' > 0 \), there exists \( M \in \mathbb{N} \) such that for all \( m \geq M \):
\begin{equation}\label{eq:uniform_convergence}
\P\left( \sup_{u \in [a, b]} \big| \hat{F}_m^{-1}(u) - F^{-1}(u) \big| \geq \varepsilon \right) < \eta'.
\end{equation}

We know that \( U_i = G\left( F(Z_i) + e_i \right) \in [0,1] \) and it is independent of $\hat{F}_m$.

Select a closed interval \( [a, b] \subset (0,1) \) such that
\begin{equation}\label{eq:Ui_in_ab}
\P( U_i \in [a, b] ) > 1 - \eta'.
\end{equation}

On the event \( \mathscr{E}_m' = \left\{ U_i \in [a, b] \right\} \cap \left\{ \sup_{u \in [a, b]} \big| \hat{F}_m^{-1}(u) - F^{-1}(u) \big| < \varepsilon \right\} \), we have
\[
|B_i| = \left| \hat{F}_m^{-1}(U_i) - F^{-1}(U_i) \right| < \varepsilon.
\]
And therefore, 
\[
\P( |B_i| \geq \varepsilon ) \leq \P(\mathscr{E}_m^{'c})\leq \P( U_i \notin [a, b] ) + \P\left( \sup_{u \in [a, b]} \big| \hat{F}_m^{-1}(u) - F^{-1}(u) \big| \geq \varepsilon \right).
\]

Using \eqref{eq:uniform_convergence} and \eqref{eq:Ui_in_ab}, we obtain
\[
\P( |B_i| \geq \varepsilon ) < \eta' + \eta' = 2\eta'.
\]
Since \( \eta' > 0 \) is arbitrary, we can make \( \P( |B_i| \geq \varepsilon ) \) as small as desired by choosing \( \eta' \) appropriately and ensuring \( m \geq M \). Therefore,
\[
|B_i| \xrightarrow{\P} 0.
\]

Combining the bounds on \(A_i\) and \(B_i\), we have:
\[
\bar{Z}_i - T_i \xrightarrow{\P} 0 \quad \text{as } m \to \infty.
\]

Lastly, it is easy to see that 
\[
T_i = F^{-1}( G( F(Z_i) + e_i ) ) = F^{-1}(U_i) \sim F,
\]
where \(U_i \sim \text{Uniform}(0,1)\). Since \(\bar{Z}_i - T_i \xrightarrow{\P} 0\) and \(T_i \xrightarrow{d} Y \sim F\), by Slutsky's theorem,
\[
\bar{Z}_i = T_i + (\bar{Z}_i - T_i) \xrightarrow{d} Y \sim F.
\]
\end{proof}

Next, recall again that
\[
\tilde{Z}_i = \hat{F}_m^{-1}( G( \hat{F}_m(\hat{Z}_i) + e_i ) ).
\]
Our goal is to show that $\tilde{Z}_i \to F$ in distribution.

\begin{lemma}[Weak Convergence of \(\tilde{Z}_i\) to \(F\)]\label{theorem:main-1d}
Suppose the conditions of Lemma~\ref{theorem:Si_convergence} hold. In addition, assume that the perturbation size \( \delta_m \) satisfies \( \delta_m \to 0 \) as \( m \to \infty \), and for the current set of observations \( \{Z_i\}_{i=1}^n \) and their perturbed versions \( \{\hat{Z}_i\}_{i=1}^n \), we have \( \sup_{1\le i\le n}|\hat{Z}_i - Z_i| \le \delta_m \) with probability tending to 1 as \( m \to \infty \).
Then for any given \( i \in [n] \), \( \tilde{Z}_i \) weakly converges to \( F \) as \( m \to \infty \).
\end{lemma}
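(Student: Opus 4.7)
The plan is to couple $\tilde{Z}_i$ with the intermediate variable $\bar{Z}_i$ from Lemma~\ref{theorem:Si_convergence} and to apply Slutsky's theorem. Since Lemma~\ref{theorem:Si_convergence} already gives $\bar{Z}_i \xrightarrow{d} Y \sim F$, it suffices to show
\[
\tilde{Z}_i - \bar{Z}_i \;=\; \hat{F}_m^{-1}\!\bigl(G(\hat{F}_m(\hat{Z}_i)+e_i)\bigr) - \hat{F}_m^{-1}\!\bigl(G(\hat{F}_m(Z_i)+e_i)\bigr) \;\xrightarrow{P}\; 0.
\]
Once this is established, Slutsky's theorem immediately delivers $\tilde{Z}_i = \bar{Z}_i + (\tilde{Z}_i - \bar{Z}_i) \xrightarrow{d} Y \sim F$.

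To control $\tilde{Z}_i - \bar{Z}_i$, I would propagate smallness through the three compositional layers: the CDF $\hat{F}_m$, the fixed continuous function $G$, and the inverse $\hat{F}_m^{-1}$. First, by the triangle inequality,
\[
\bigl|\hat{F}_m(\hat{Z}_i) - \hat{F}_m(Z_i)\bigr| \;\le\; 2\sup_{x\in\bR}|\hat{F}_m(x)-F(x)| \;+\; |F(\hat{Z}_i)-F(Z_i)|.
\]
The first term is $o_P(1)$ by Corollary~\ref{coro:convergence}, and the second is bounded by $C_{\mathrm{up}}\,\delta_m = o(1)$ using Assumption~\ref{ass:continuous} (boundedness of the density $f$) together with $|\hat{Z}_i - Z_i| \le \delta_m$. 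Thus $\hat{F}_m(\hat{Z}_i) - \hat{F}_m(Z_i) \xrightarrow{P} 0$. Since $G$, the CDF of $U(0,1)+\mathrm{Laplace}(1/\varepsilon)$, is globally Lipschitz on $\bR$, uniform continuity yields $G(\hat{F}_m(\hat{Z}_i)+e_i) - G(\hat{F}_m(Z_i)+e_i) \xrightarrow{P} 0$.

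Finally, I would pass this to the outermost layer using the uniform equicontinuity of $\hat{F}_m^{-1}$ in probability provided by Theorem~\ref{thm:inverse-convergence}. For any $\varepsilon' > 0$ and $\eta > 0$, choose a closed interval $[a,b]\subset(0,1)$ with $\mathbb{P}(G(\hat{F}_m(Z_i)+e_i)\in[a,b], \; G(\hat{F}_m(\hat{Z}_i)+e_i)\in[a,b]) \ge 1-\eta/2$ for all $m$ sufficiently large (this is possible because $G$ has range in a compact subinterval of $(0,1)$ on bounded inputs, and both arguments concentrate on the same intermediate uniform). On the high-probability event where (i) the two arguments of $\hat{F}_m^{-1}$ lie in $[a,b]$, (ii) their difference is smaller than the $\delta'$ from Theorem~\ref{thm:inverse-convergence}, and (iii) $\hat{F}_m^{-1}$ satisfies its equicontinuity modulus, we conclude $|\tilde{Z}_i-\bar{Z}_i|<\varepsilon'$.

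The main obstacle is the third step: because both $\hat{F}_m^{-1}$ and its arguments depend on the same random sample $\{\hat{Z}_j\}_{j>n}$, one cannot decouple them, and one must instead rely on the uniform (over $[a,b]$) equicontinuity guarantee of Theorem~\ref{thm:inverse-convergence} rather than on pointwise Lipschitz bounds for $\hat{F}_m^{-1}$ (which need not hold in general for empirical CDFs). Restricting the arguments to a compact subinterval $[a,b]\subset(0,1)$ is essential to avoid the blow-up of $(F^1)^{-1}$ near the endpoints, and this is the one place where Assumption~\ref{ass:continuous} (strict monotonicity and continuity) is crucially used.
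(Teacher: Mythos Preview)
Your proposal is correct and follows essentially the same route as the paper: reduce to $\tilde{Z}_i-\bar{Z}_i\xrightarrow{P}0$ and then apply Slutsky, controlling the innermost CDF increment, passing through the uniformly continuous $G$, and finishing with the uniform equicontinuity of $\hat{F}_m^{-1}$ on a compact $[a,b]\subset(0,1)$ from Theorem~\ref{thm:inverse-convergence}. The one minor difference is in bounding $|\hat{F}_m(\hat{Z}_i)-\hat{F}_m(Z_i)|$: the paper invokes the Lipschitz property of the kernel-smoothed $\hat{F}_m$ (with constant proportional to $K_{\max}/h$, so that $\delta_m/h\to 0$ is needed), whereas your triangle-inequality detour through $F$ uses only uniform convergence of $\hat{F}_m$ to $F$ plus the bounded density of $F$, which is slightly more elementary and avoids the explicit $\delta_m=o(h)$ coupling.
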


\begin{proof}
Our goal is to show that \( \tilde{Z}_i \xrightarrow{d} Y \sim F \). We achieve this by leveraging Lemma~\ref{theorem:Si_convergence}, which states that \( \bar{Z}_i \xrightarrow{d} Y \sim F \). The remaining task is to prove that \( \tilde{Z}_i - \bar{Z}_i \xrightarrow{\P} 0 \). Specifically, we want to show that for any \( \varepsilon > 0 \), \( \P( | \tilde{Z}_i - \bar{Z}_i | \geq \varepsilon ) \xrightarrow{m \to \infty} 0 \). 

Let's define the arguments for \( \hat{F}_m^{-1} \):
\[
u_i = G( \hat{F}_m(\hat{Z}_i) + e_i ), \quad v_i = G( \hat{F}_m(Z_i) + e_i ).
\]
Then, \( | \tilde{Z}_i - \bar{Z}_i | = \big| \hat{F}_m^{-1}(u_i) - \hat{F}_m^{-1}(v_i) \big| \).
Our strategy is to show that \( |u_i - v_i| \) is small in probability, and then use the uniform equicontinuity of \( \hat{F}_m^{-1} \) from Theorem~\ref{thm:inverse-convergence}.

We first analyze \( | u_i - v_i | = \big| G( \hat{F}_m(\hat{Z}_i) + e_i ) - G( \hat{F}_m(Z_i) + e_i ) \big| \).
Since \( G \) is a CDF of a continuous distribution (specifically, $\text{Uniform}(0,1)+\text{Laplace}(0,1/\varepsilon)$), it is uniformly continuous on \( \bR \). For any \( \gamma > 0 \), there exists \( \delta_{G,\gamma} > 0 \) such that for any \( x, y \in \bR \):
\begin{equation}\label{eq:G_uniform_continuity}
| x - y | < \delta_{G,\gamma} \implies | G(x) - G(y) | < \gamma.
\end{equation}

Now, let's bound the difference in the arguments of $G$: \( \big| (\hat{F}_m(\hat{Z}_i) + e_i) - (\hat{F}_m(Z_i) + e_i) \big| = \big| \hat{F}_m(\hat{Z}_i) - \hat{F}_m(Z_i) \big| \).
Since the kernel \( K \) is bounded and Lipschitz continuous, \( \hat{F}_m \) itself is Lipschitz continuous. Let \( L_{\hat{F}_m} \) be its Lipschitz constant (proportional to \( K_{\max}/h \)).
Then, on the event where \( \sup_{1\le i\le n}|\hat{Z}_i - Z_i| \le \delta_m \) (which holds with probability tending to 1 by assumption), we have:
\[
\big| \hat{F}_m(\hat{Z}_i) - \hat{F}_m(Z_i) \big| \leq L_{\hat{F}_m} | \hat{Z}_i - Z_i | \leq L_{\hat{F}_m} \delta_m.
\]
As \( \delta_m \to 0 \), it follows that \( \big| \hat{F}_m(\hat{Z}_i) - \hat{F}_m(Z_i) \big| \xrightarrow{\P} 0 \).
This means that for any \( \delta_{G,\gamma} > 0 \), 
\[ \P\left( \big| \hat{F}_m(\hat{Z}_i) - \hat{F}_m(Z_i) \big| \geq \delta_{G,\gamma} \right) \xrightarrow{m \to \infty} 0. \]
By \eqref{eq:G_uniform_continuity}, this implies \( |u_i - v_i| \xrightarrow{\P} 0 \).

Next, we handle the mapping of \( \hat{F}_m^{-1} \).  From Theorem~\ref{thm:inverse-convergence}, the sequence \( \{ \hat{F}_m^{-1} \} \) is uniformly equicontinuous in probability on any closed interval \( [a, b] \subset (0,1) \). Specifically, for any \( \varepsilon > 0 \) and \( \eta > 0 \), there exist \( \delta' > 0 \) and \( M_1 \in \mathbb{N} \) such that for all \( m \geq M_1 \):
\begin{equation}\label{eq:Fm_inverse_uniform_equicontinuity}
\P\left( \sup_{\substack{q_1, q_2 \in [a, b] \\ | q_1 - q_2 | < \delta'}} \big| \hat{F}_m^{-1}(q_1) - \hat{F}_m^{-1}(q_2) \big| \geq \varepsilon \right) < \frac{\eta}{3}.
\end{equation}

The arguments to \( \hat{F}_m^{-1} \) are \( u_i = G(\hat{F}_m(\hat{Z}_i) + e_i) \) and \( v_i = G(\hat{F}_m(Z_i) + e_i) \). Both \( u_i \) and \( v_i \) are random variables in \( [0,1] \).
We know \( v_i \xrightarrow{d} \text{Uniform}(0,1) \). Therefore, for any \( \eta_0 > 0 \), we can choose a closed interval \( [a,b] \subset (0,1) \) such that \( \P(v_i \in [a,b]) \geq 1 - \eta_0/2 \).
Since \( |u_i - v_i| \xrightarrow{\P} 0 \) (from Step 1), it also follows that \( u_i \xrightarrow{\P} v_i \). Consequently, \( P(u_i \in [a,b]) \geq 1 - \eta_0/2 \) for sufficiently large \( m \).
Thus, for any \( \eta_0 > 0 \), there exists an interval \( [a,b] \) such that for sufficiently large \( m \),
\begin{equation}\label{eq:uivi_in_ab}
\P( u_i \in [a, b] \text{ and } v_i \in [a, b] ) \geq 1 - \eta_0.
\end{equation}

Now, let \( \varepsilon > 0 \) and \( \eta > 0 \) be arbitrary. We conduct the following steps.
\begin{enumerate}
    \item Choose \( \delta' \) and \( M_1 \) from \eqref{eq:Fm_inverse_uniform_equicontinuity} for this \( \varepsilon \) and \( \eta/3 \).
    \item Choose \( \gamma = \delta' \) for the uniform continuity of \( G \) in \eqref{eq:G_uniform_continuity}.
    \item Choose \( \delta_{G,\gamma} \) corresponding to this \( \gamma \).
    \item Since \( \big| \hat{F}_m(\hat{Z}_i) - \hat{F}_m(Z_i) \big| \xrightarrow{\P} 0 \), there exists \( M_2 \in \mathbb{N} \) such that for \( m \geq M_2 \), 
    $$ \P\left( \big| \hat{F}_m(\hat{Z}_i) - \hat{F}_m(Z_i) \big| \geq \delta_{G,\gamma} \right) < \frac{\eta}{3}.$$
    \item Choose \( \eta_0 = \eta/3 \) for \eqref{eq:uivi_in_ab}, which defines the interval \( [a,b] \) and ensures \( \P( u_i \in [a, b] \text{ and } v_i \in [a, b] ) \geq 1 - \eta/3 \) for \( m \geq M_3 \).
\end{enumerate}

Define the event \( \mathscr{E}_{m} \) as the intersection of three high-probability events:
\[
\mathscr{E}_{m} = \left\{ \big| \hat{F}_m(\hat{Z}_i) - \hat{F}_m(Z_i) \big| < \delta_{G,\gamma} \right\} \cap \left\{ u_i \in [a, b] \text{ and } v_i \in [a, b] \right\} \cap \left\{ \sup_{\substack{q_1, q_2 \in [a, b] \\ | q_1 - q_2 | < \delta'}} \big| \hat{F}_m^{-1}(q_1) - \hat{F}_m^{-1}(q_2) \big| < \varepsilon \right\}.
\]
For \( m \geq \max\{M_1, M_2, M_3\} \), we have \( \P(\mathscr{E}_{m}^c) \leq \eta/3 + \eta/3 + \eta/3 = \eta \).
On the event \( \mathscr{E}_{m} \):
\begin{itemize}
    \item We have \( |u_i - v_i| < \gamma = \delta' \).
    \item Both \( u_i \) and \( v_i \) are in \( [a,b] \).
    \item The uniform equicontinuity of \( \hat{F}_m^{-1} \) in \eqref{eq:Fm_inverse_uniform_equicontinuity} applies.
\end{itemize}
Therefore, on \( \mathscr{E}_{m} \), we have:
\[
| \tilde{Z}_i - \bar{Z}_i | = \big| \hat{F}_m^{-1}(u_i) - \hat{F}_m^{-1}(v_i) \big| < \varepsilon.
\]
Since \( \P(\mathscr{E}_{m}^c) \xrightarrow{m \to \infty} 0 \), we conclude that \( | \tilde{Z}_i - \bar{Z}_i | \xrightarrow{\P} 0 \).

This completes the proof.

\end{proof}

\subsection{Latent distribution consistency: multidimensional case}
Having established the asymptotic distribution of $\tilde{Z}_i$ in the case of $d=1$, now we proceed to prove the weak convergence in the multidimensional case. The proof for the multidimensional case is essentially applying the univariate proofs sequentially across variables, following the same DIP procedures using conditional distributions. Therefore, we will explain the details using the two dimensional case. 

\begin{theorem}\label{thm:main-multid}
Let \( Z_i = (Z_{i1}, Z_{i2}) \) be i.i.d. $2$-dimensional random vectors from a distribution \( F \) on \( \bR^2 \).
Assume that the true distribution \( F \) has compact support \( S \subset \bR^2 \). Assume the following conditions hold:
\begin{itemize}
    \item The conditions of Lemma~\ref{lem:conditional-cdf-convergence} apply to both the marginal CDF \( F_1 \) (for \( \hat{F}_{1,m} \)) and the conditional CDF \( F_{2\mid 1}(\cdot\mid z_1) \) (for \( \hat{F}_{2\mid 1,m} \)). This includes properties of relevant densities, kernels (bounded and Lipschitz).
    \item The conditions of Theorem~\ref{thm:inverse-conditional-convergence} apply to the inverse functions \( F_1^{-1} \) and \( (F_{2\mid 1})^{-1} \), and their estimators. This implies \( F_1 \) and \( F_{2\mid 1}(\cdot\mid z_1) \) are strictly increasing, and \( F_{2\mid 1}(x\mid z_1) \) is jointly continuous in \( (x, z_1) \).
    \item The perturbation size \( \delta_m \) satisfies \( \delta_m = o((\log m)^{-c}) \), and \( \sup_{1\le i\le n}\|\hat{Z}_i - Z_i\| \le \delta_m \) with probability tending to 1 as \( m \to \infty \), for some constant \( c > 0 \).
    \item For each \( k=1,2 \), the bandwidth \( h_m^{(k)} \) for \( \hat{F}_{k\mid 1:(k-1),m} \) is chosen as \( h_m^{(k)} = h_m = (\log m)^{-c} \) for the same constant \( c > 0 \). 
\end{itemize}
Then for any given \( i \in [n] \), the privatized latent vector \( \tilde{Z}_i = (\tilde{Z}_{i1}, \tilde{Z}_{i2}) \) weakly converges to \( F \) as \( m \to \infty \); that is, \( \tilde{Z}_i \xrightarrow{d} Y \sim F \).
\end{theorem}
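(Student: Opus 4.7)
The strategy is to extend the univariate argument of Lemma~\ref{theorem:main-1d} by constructing a coupled oracle random vector $T_i=(T_{i1},T_{i2})$ that uses the true CDFs together with the same Laplace noises $(e_{i1},e_{i2})$ employed by GRAND, verifying that $T_i\sim F$ exactly, and then showing $\|\tilde Z_i-T_i\|\xrightarrow{P}0$. Slutsky's theorem then yields the weak convergence of $\tilde Z_i$ to $F$. The general-$d$ case follows by the same coordinate-by-coordinate recursion grounded in Rosenblatt's probability integral transform.

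For the oracle, put $V_1=G(F_1(Z_{i1})+e_{i1})$ and $V_2=G(F^{2|1}(Z_{i2}|Z_{i1})+e_{i2})$. Since $Z_i\sim F$ with continuous marginals, Rosenblatt's transform gives $(F_1(Z_{i1}),F^{2|1}(Z_{i2}|Z_{i1}))\sim U(0,1)^{\otimes 2}$ and is independent of $(e_{i1},e_{i2})$, so $V_1,V_2$ are independent $U(0,1)$. Set $T_{i1}=F_1^{-1}(V_1)$ and $T_{i2}=(F^{2|1}(\cdot|T_{i1}))^{-1}(V_2)$; independence of $V_2$ from $T_{i1}$ yields $T_{i1}\sim F_1$ and $T_{i2}\mid T_{i1}\sim F^{2|1}(\cdot|T_{i1})$, so $(T_{i1},T_{i2})\sim F$ exactly.

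The convergence $\tilde Z_{i1}-T_{i1}\xrightarrow{P}0$ is precisely Lemma~\ref{theorem:main-1d}. For the second coordinate, write $\tilde V_2=G(\hat F^{2|1}_m(\hat Z_{i2}|\hat Z_{i1})+e_{i2})$ and decompose
\[
\tilde Z_{i2}-T_{i2}
=\bigl[(\hat F^{2|1}_m)^{-1}(\tilde V_2\mid\tilde Z_{i1})-(F^{2|1})^{-1}(\tilde V_2\mid\tilde Z_{i1})\bigr]
+\bigl[(F^{2|1})^{-1}(\tilde V_2\mid\tilde Z_{i1})-(F^{2|1})^{-1}(V_2\mid T_{i1})\bigr].
\]
The first bracket is handled by the uniform convergence in Theorem~\ref{thm:inverse-conditional-convergence} on a compact slab $[a,b]\times K\subset(0,1)\times\mathrm{supp}(F_1)$, once we confine $(\tilde V_2,\tilde Z_{i1})$ to this slab with probability approaching one. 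The second bracket is handled by the joint equicontinuity part of Theorem~\ref{thm:inverse-conditional-convergence}, together with $\tilde Z_{i1}-T_{i1}\xrightarrow{P}0$ and $\tilde V_2-V_2\xrightarrow{P}0$; the latter in turn follows from Lemma~\ref{lem:conditional-cdf-convergence}, uniform continuity of $G$, and the embedding bound $\|\hat Z_i-Z_i\|\le\delta_m$.

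The main obstacle is keeping the argument $\tilde Z_{i1}$ appearing inside $(\hat F^{2|1}_m)^{-1}(\cdot\mid\tilde Z_{i1})$ within the compact conditioning set required by Theorem~\ref{thm:inverse-conditional-convergence}. This is resolved by a short tightness argument: $\tilde Z_{i1}$ is close in probability to $T_{i1}\sim F_1$, and $F_1$ has compact support by Assumption~\ref{ass:continuous}, so $\tilde Z_{i1}$ lies in a chosen compact enlargement of $\mathrm{supp}(F_1)$ with probability approaching one. For general $d$, one iterates this construction coordinatewise: at stage $l$, the analogous three-term decomposition applies verbatim, and the inductive hypotheses $\tilde Z_{i,k}-T_{i,k}\xrightarrow{P}0$ for $k<l$ keep every conditioning argument tight. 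The delicate interplay between $h_m$ and $\delta_m$ needed for the uniform kernel estimation steps is already subsumed in the stated rates $h_m=(\log m)^{-c}$ and $\delta_m=o((\log m)^{-c})$, so no additional rate bookkeeping is required.
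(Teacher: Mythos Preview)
Your proposal is correct and follows essentially the same route as the paper: construct a coupled oracle $T_i\sim F$ via Rosenblatt's transform using the same Laplace noises, show $\tilde Z_i-T_i\xrightarrow{P}0$ coordinatewise by invoking the univariate lemmas together with Lemma~\ref{lem:conditional-cdf-convergence} and Theorem~\ref{thm:inverse-conditional-convergence}, and conclude by Slutsky. The only technical differences are organizational: the paper inserts an extra intermediate $\bar Z_{i2}$ (replacing $\hat Z_i$ by $Z_i$ in the inner argument before switching $\hat F$ to $F$) and conditions on a fixed $u_1$, whereas you compress this into a two-term split and carry the random conditioning $\tilde Z_{i1}$ throughout, handling it by the explicit tightness step you describe---this last point is in fact treated more carefully in your write-up than in the paper's sketch.
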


\begin{proof}
Our goal is to show that the privatized latent vector $\tilde{Z}_i = (\tilde{Z}_{i1}, \tilde{Z}_{i2})$ weakly converges to the true latent distribution $F$. That is, $\tilde{Z}_i \xrightarrow{d} Y \sim F$, where $Y = (Y_1, Y_2)$ is a random vector with CDF $F$. We'll achieve this by demonstrating the convergence of each component in sequence and then combining them for joint convergence.

For notational simplicity within this proof, let $F_k(\cdot)$ denote the marginal CDF of $Z_k$, and $F_{k\mid 1:(k-1)}(\cdot\mid u)$ denote the true conditional CDF for the $k$-th dimension given $u = (z_1, \dots, z_{k-1})$. Similarly, $\hat{F}_{k,m}$ and $\hat{F}_{k\mid 1:(k-1),m}(\cdot\mid u)$ are their estimators. Inverse functions are denoted with $-1$. Let $S_k$ represent the compact support of $Z_k$ (derived from the compact support $S$ of $F$), and $S_{1:(k-1)}$ be the compact support for the conditioning variables $(Z_1, \dots, Z_{k-1})$. Because $F$ has compact support $S \subset \bR^d$, all $Z_{ik}$ and $(Z_{i1}, \dots, Z_{i,k-1})$ almost surely lie within their compact projected supports. This is crucial for applying uniform convergence results from previous lemmas, which hold over compact domains.

\paragraph{Step 1 -- convergence of the first coordinate $\tilde{Z}_{i1}$:}

The first component, $\tilde{Z}_{i1}$, is defined as:
$$\tilde{Z}_{i1} = \hat{F}_{1,m}^{-1}( G( \hat{F}_{1,m}(\hat{Z}_{i1}) + e_i^{(1)} ) ).$$
Directly from Lemma~\ref{theorem:main-1d}, we have:
$$\tilde{Z}_{i1} \xrightarrow{d} Y_1 \sim F_1 \quad \text{as } m \to \infty.$$

\paragraph{Step 2 -- convergence of the second coordinate $\tilde{Z}_{i2}$ (conditioning on $\tilde{Z}_{i1}$):}

We now analyze the conditional distribution of $\tilde{Z}_{i2}$ given $\tilde{Z}_{i1}=u_1$, for any \emph{fixed} $u_1 \in S_1$ (the compact support of $Y_1$). The quantity of interest is $\P(\tilde{Z}_{i2} \le x_2 \mid  \tilde{Z}_{i1}=u_1)$.
For this analysis, we treat $u_1$ as a deterministic conditioning variable. The definition of $\tilde{Z}_{i2}$ involves conditioning on $\tilde{Z}_{i1}$, so we consider the variable:
$$\tilde{Z}_{i2}(u_1) = \hat{F}_{2\mid 1,m}^{-1}( G( \hat{F}_{2\mid 1,m}(\hat{Z}_{i2}\mid \hat{Z}_{i1}) + e_i^{(2)} ) \mid  u_1 ).$$
Note that $\hat{Z}_{i1}$ is still a random variable in the argument of $\hat{F}_{2\mid 1,m}$. To properly apply the logic of Lemma~\ref{theorem:main-1d} for fixed conditioning, we will use the user-proposed intermediate variables where the outer conditioning is fixed to $u_1$, but the inner conditioning remains consistent with the true variables $Z_{i1}$.
Let's define two intermediate variables for comparison, where $u_1$ is the fixed outer conditioning value:
$$\bar{Z}_{i2}(u_1) = \hat{F}_{2\mid 1,m}^{-1}( G( \hat{F}_{2\mid 1,m}(Z_{i2}\mid Z_{i1}) + e_i^{(2)} ) \mid  u_1 ),$$
$$T_{i2}(u_1) = F_{2\mid 1}^{-1}( G( F_{2\mid 1}(Z_{i2}\mid Z_{i1}) + e_i^{(2)} ) \mid  u_1 ).$$
Our goal is to show $\tilde{Z}_{i2}(u_1) - \bar{Z}_{i2}(u_1) \xrightarrow{\P} 0$ and $\bar{Z}_{i2}(u_1) - T_{i2}(u_1) \xrightarrow{\P} 0$.

To show $\bar{Z}_{i2}(u_1) - T_{i2}(u_1) \xrightarrow{\P} 0$, we directly apply the logic from Lemma~\ref{theorem:main-1d} to the conditional setting. For a \emph{fixed} $u_1$, the functions $\hat{F}_{2\mid 1,m}(\cdot\mid u_1)$ and $F_{2\mid 1}(\cdot\mid u_1)$ are well-defined.
Let $q_C = G(\hat{F}_{2\mid 1,m}(Z_{i2}\mid Z_{i1}) + e_i^{(2)})$ and $q_D = G(F_{2\mid 1}(Z_{i2}\mid Z_{i1}) + e_i^{(2)})$.
We want to show $\big| \hat{F}_{2\mid 1,m}^{-1}(q_C\mid u_1) - F_{2\mid 1}^{-1}(q_D\mid u_1) \big| \xrightarrow{\P} 0$.
This splits into:
$$\big| \hat{F}_{2\mid 1,m}^{-1}(q_C\mid u_1) - F_{2\mid 1}^{-1}(q_C\mid u_1) \big|  + \big| F_{2\mid 1}^{-1}(q_C\mid u_1) - F_{2\mid 1}^{-1}(q_D\mid u_1) \big|.$$
\begin{itemize}
    \item Term 1: By Theorem~\ref{thm:inverse-conditional-convergence}, $\sup_{u \in S_1, q \in [a,b]} \big|\hat{F}_{2\mid 1,m}^{-1}(q\mid u) - F_{2\mid 1}^{-1}(q\mid u)\big| \xrightarrow{\P} 0$. Since $u_1$ is fixed in $S_1$ and $q_C$ is a random variable in $[0,1]$, this term goes to $0$ in probability, using the same type of proof as in Theorem~\ref{thm:inverse-conditional-convergence}.
    \item Term 2: By uniform continuity of $F_{2\mid 1}^{-1}(\cdot\mid u_1)$ w.r.t. its first argument, this term goes to $0$ in probability if $|q_C - q_D| \xrightarrow{\P} 0$.
        $|q_C - q_D| = \big|G(\hat{F}_{2\mid 1,m}(Z_{i2}\mid Z_{i1}) + e_i^{(2)}) - G(F_{2\mid 1}(Z_{i2}\mid Z_{i1}) + e_i^{(2)})\big|$.
        By uniform continuity of $G$, this goes to $0$ if $\big|\hat{F}_{2\mid 1,m}(Z_{i2}\mid Z_{i1}) - F_{2\mid 1}(Z_{i2}\mid Z_{i1})\big| \xrightarrow{\P} 0$.
        This is true by Lemma~\ref{lem:conditional-cdf-convergence}.
\end{itemize}
Combining the previous two results, we have $\bar{Z}_{i2}(u_1) - T_{i2}(u_1) \xrightarrow{\P} 0$.

\bigskip

On the other hand, the term $\tilde{Z}_{i2}(u_1) - \bar{Z}_{i2}(u_1) \xrightarrow{\P} 0$ involves changes in the inner argument's conditioning variable ($\hat{Z}_{i1}$ to $Z_{i1}$) and the value being fed into $G$ ($\hat{Z}_{i2}$ to $Z_{i2}$).
$$| \tilde{Z}_{i2}(u_1) - \bar{Z}_{i2}(u_1) | = \big| \hat{F}_{2\mid 1,m}^{-1}( G( \hat{F}_{2\mid 1,m}(\hat{Z}_{i2}\mid \hat{Z}_{i1}) + e_i^{(2)} ) \mid  u_1 ) - \hat{F}_{2\mid 1,m}^{-1}( G( \hat{F}_{2\mid 1,m}(Z_{i2}\mid Z_{i1}) + e_i^{(2)} ) \mid  u_1 ) \big|.$$
Let $q_A = G(\hat{F}_{2\mid 1,m}(\hat{Z}_{i2}\mid \hat{Z}_{i1}) + e_i^{(2)})$ and $q_B = G(\hat{F}_{2\mid 1,m}(Z_{i2}\mid Z_{i1}) + e_i^{(2)})$. So we want to control
$$\big| \hat{F}_{2\mid 1,m}^{-1}(q_A\mid u_1) - \hat{F}_{2\mid 1,m}^{-1}(q_B\mid u_1) \big|.$$

By the uniform equicontinuity of $\hat{F}_{2\mid 1,m}^{-1}(\cdot\mid u_1)$ with respect to its first argument from Theorem~\ref{thm:inverse-conditional-convergence}, this term will converge to $0$ in probability if $|q_A - q_B| \xrightarrow{\P} 0$.

To see this, note that
$$|q_A - q_B| = \big| G( \hat{F}_{2\mid 1,m}(\hat{Z}_{i2}\mid \hat{Z}_{i1}) + e_i^{(2)} ) - G( \hat{F}_{2\mid 1,m}(Z_{i2}\mid Z_{i1}) + e_i^{(2)} ) \big|$$
By uniform continuity of $G$, it is sufficient to show $\big|\hat{F}_{2\mid 1,m}(\hat{Z}_{i2}\mid \hat{Z}_{i1}) - \hat{F}_{2\mid 1,m}(Z_{i2}\mid Z_{i1})\big| \xrightarrow{\P} 0$.
By plugging in an intermediate term and calling the triangle inequality, we have
\begin{align*}
\big|\hat{F}_{2\mid 1,m}(\hat{Z}_{i2}\mid \hat{Z}_{i1}) - \hat{F}_{2\mid 1,m}(Z_{i2}\mid Z_{i1})\big| &\leq \big|\hat{F}_{2\mid 1,m}(\hat{Z}_{i2}\mid \hat{Z}_{i1}) - \hat{F}_{2\mid 1,m}(Z_{i2}\mid \hat{Z}_{i1})\big| \\
&\quad + \big|\hat{F}_{2\mid 1,m}(Z_{i2}\mid \hat{Z}_{i1}) - \hat{F}_{2\mid 1,m}(Z_{i2}\mid Z_{i1})\big|.
\end{align*}
\begin{itemize}
    \item Term 1: This term relies on $\hat{F}_{2\mid 1,m}(\cdot\mid u)$ being Lipschitz continuous with respect to its first argument. So, 
    $$\big| \hat{F}_{2\mid 1,m}(\hat{Z}_{i2}\mid \hat{Z}_{i1}) - \hat{F}_{2\mid 1,m}(Z_{i2}\mid \hat{Z}_{i1}) \big| \le L_{\hat{F}_{2\mid 1,m}} |\hat{Z}_{i2} - Z_{i2}| = o_\P(1).$$
    \item Term 2: This term relies on $\hat{F}_{2\mid 1,m}(x\mid u)$ being uniformly equicontinuous with respect to its conditioning argument $u$. So, $\big|\hat{F}_{2\mid 1,m}(Z_{i2}\mid \hat{Z}_{i1}) - \hat{F}_{2\mid 1,m}(Z_{i2}\mid Z_{i1})\big| \xrightarrow{\P} 0$ because $|\hat{Z}_{i1} - Z_{i1}| \le \delta_m \xrightarrow{\P} 0$.
\end{itemize}
These indicate $|q_A - q_B| \xrightarrow{\P} 0$.

As discussed, this proves the convergence of  $\tilde{Z}_{i2}$ (conditioning on $\tilde{Z}_{i1}$).

Having obtained the convergence in both coordinates, by noticing that the joint PDF is the product of the two CDFs, we get the claimed consistency.

\end{proof}

\newpage

\section{Proof of Theorem~\ref{thm:main-CDF} (Emprical CDF Convergence)}
Next, we will show that the empirical CDF of $\tilde{Z}_i$'s uniformly converges to $F$ as the proof of Theorem~\ref{thm:main-CDF}.

Again, we first work on the univariate version, $d=1$. And we start with the intermediate random vector $\bar{Z}_i$ again. Suppose $d=1$. Define the empirical CDFs of $\bar{Z}_i$'s and $\tilde{Z}_i$'s as
\[
\bar{F}_n(x) = \frac{1}{n}\sum_{i=1}^n\mathbf{1}_{\{ \bar{Z}_i \leq x \}},\quad
\tilde{F}_n(x) = \frac{1}{n}\sum_{i=1}^n\mathbf{1}_{\{ \tilde{Z}_i \leq x \}}.
\]

\begin{lemma}[One-dimensional Intermediate Convergence]\label{lem:eCDF-zbar-1d}
 Under the assumptions of Lemma~\ref{theorem:Si_convergence}, we have
\[
\sup_{x\in\bR} \big|\bar{F}_n(x) - F(x)\big| \xrightarrow{\P} 0\quad\text{as }m,n \to \infty.
\]
\end{lemma}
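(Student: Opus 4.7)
The plan is to compare $\bar{F}_n$ with the empirical CDF of an oracle i.i.d.\ sequence. Define $T_i = F^{-1}(G(F(Z_i) + e_i))$ for $1 \le i \le n$. As noted in the proof of Lemma~\ref{theorem:Si_convergence}, $T_i \sim F$, and since the pairs $(Z_i, e_i)$ are i.i.d., so are the $T_i$. Let $F_n^T(x) = \frac{1}{n}\sum_{i=1}^n \mathbf{1}_{\{T_i \le x\}}$ be the oracle empirical CDF. Classical Glivenko--Cantelli gives $\sup_x |F_n^T(x) - F(x)| \to 0$ almost surely, so it suffices to show $\sup_x |\bar{F}_n(x) - F_n^T(x)| \xrightarrow{P} 0$.

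The approach is to reuse the decomposition $\bar{Z}_i - T_i = A_i + B_i$ from the proof of Lemma~\ref{theorem:Si_convergence}. The high-probability event $\mathscr{E}_m$ defined there already furnishes a \emph{uniform} bound $\sup_{1 \le i \le n}|A_i| < \varepsilon$, because it was built from the uniform convergence of $\hat{F}_m$ to $F$ and the equicontinuity of $\hat{F}_m^{-1}$, neither of which depends on the index $i$. The remaining task is to control $\sup_i |B_i|$ where $B_i = \hat{F}_m^{-1}(U_i) - F^{-1}(U_i)$ and $U_i = G(F(Z_i) + e_i) \sim \mathrm{Uniform}(0,1)$.

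The main obstacle is that no fixed interval $[a,b] \subset (0,1)$ can contain all $n$ of the $U_i$ as $n$ grows; order statistics of uniforms creep toward the endpoints. I would handle this with a good/bad split. On the good indices $\{i : U_i \in [a,b]\}$, Theorem~\ref{thm:inverse-convergence} makes $|B_i| < \varepsilon$ uniformly with high probability; on the bad indices, $|B_i|$ is still bounded by the diameter of the compact support $S$ from Assumption~\ref{ass:continuous}. Combining the two contributions,
\[
\sup_x |\bar{F}_n(x) - F_n^T(x)| \;\le\; \sup_x \frac{1}{n}\#\bigl\{i : T_i \in [x - 2\varepsilon,\, x + 2\varepsilon]\bigr\} \;+\; \frac{1}{n}\#\bigl\{i : U_i \notin [a,b]\bigr\}.
\]
The first term is uniformly controlled by $4C_{\mathrm{up}}\varepsilon + O_P(1/\sqrt{n})$ via the bounded density of $F$ and a VC-type empirical process bound over intervals, exactly as in the proof of Lemma~\ref{lemma:ecdf_convergence}; the second term converges in probability to $a + (1-b)$ by the law of large numbers.

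For any target precision $\eta_0 > 0$, I would therefore choose $[a,b]$ so that $a + (1-b) < \eta_0/3$, then pick $\varepsilon$ small enough that $4C_{\mathrm{up}}\varepsilon < \eta_0/3$, and finally send $n, m \to \infty$ to absorb the remaining $O_P$ terms. A triangle inequality with the Glivenko--Cantelli step for $F_n^T$ then yields the claim. The hard part is precisely the boundary control for $U_i$ near $0$ or $1$; once compactness of the latent support is invoked to cap $|B_i|$, the remainder is routine empirical-process bookkeeping.
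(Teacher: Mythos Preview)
Your proposal is correct and follows essentially the same route as the paper's proof: both introduce the oracle i.i.d.\ sequence $T_i = F^{-1}(G(F(Z_i)+e_i))$, invoke Glivenko--Cantelli for $F_n^T$, reuse the $A_i+B_i$ decomposition from Lemma~\ref{theorem:Si_convergence}, split indices according to whether $U_i \in [a,b]$, bound the ``good'' contribution via $\mathbf{1}_{\{T_i \in B(x,\varepsilon)\}}$ and the density cap $C_{\mathrm{up}}$, and control the ``bad'' fraction $\frac{1}{n}\#\{U_i\notin[a,b]\}$ by the law of large numbers (the paper uses Hoeffding). The only cosmetic differences are that the paper couples the interval to $\varepsilon$ by taking $a=\varepsilon/16$, $b=1-\varepsilon/16$, whereas you decouple the two choices, and your detour through the diameter bound on $|B_i|$ for bad indices is unnecessary since the indicator difference is already at most~$1$.
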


\begin{proof}
  The proof will be based on expanding the proof of Lemma~\ref{lemma:ecdf_convergence} and the proof of Lemma~\ref{theorem:Si_convergence}. Use $T_i$ as in the proof of Lemma~\ref{theorem:Si_convergence} and define the empirical CDF of $T_i$'s as 
  \[
F_n(x) = \frac{1}{n}\sum_{i=1}^n\mathbf{1}_{\{ T_i \leq x \}}.
\]

The uniform convergence of $F_n$ to $F$ is already known. Hence we only focus on controlling the difference between $F_n$ and $\bar{F}_n$. 
\begin{align*}
  \big|\bar{F}_n(x)-F_n(x)\big| & = \left|\frac{1}{n}\sum_{i=1}^n\mathbf{1}_{\{ \bar{Z}_i \leq x \}}-\frac{1}{n}\sum_{i=1}^n\mathbf{1}_{\{ T_i \leq x \}}\right|\\
  & \le \frac{1}{n}\sum_{i=1}^n\big|\mathbf{1}_{\{ \bar{Z}_i \leq x \}}-\mathbf{1}_{\{ T_i \leq x \}}\big|.
\end{align*}

Similar to the proof outlined for Lemma~\ref{lemma:ecdf_convergence}, note that $\mathbf{1}_{\{ \bar{Z}_i \leq x \}}$ and $\mathbf{1}_{\{ T_i \leq x \}}$ are different only if $x$ lies between $\bar{Z}_i$ and $T_i$. So for any $\varepsilon>0$, if $|\bar{Z}_i-T_i|<\varepsilon$, that indicates $x-\varepsilon <T_i < x+\varepsilon$, whose probability can be control by $F$ since $T_i \sim F$. Specifically, we have
\begin{align}\label{eq:decomp-temp-0}
  \sup_{x\in\bR}\big|\bar{F}_n(x)-F_n(x)\big| & \le \sup_{x\in\bR}\frac{1}{n}\sum_{i=1}^n\big|\mathbf{1}_{\{ \bar{Z}_i \leq x \}}-\mathbf{1}_{\{ T_i \leq x \}}\big|\notag\\
  & \le \sup_{x\in\bR}\left\{ \frac{1}{n}\sum_{i:|\bar{Z}_i-T_i|<\varepsilon} |\mathbf{1}_{\{ \bar{Z}_i \leq x \}}-\mathbf{1}_{\{ T_i \leq x \}}| +\frac{1}{n}\sum_{i:|\bar{Z}_i-T_i| \ge \varepsilon} |\mathbf{1}_{\{ \bar{Z}_i \leq x \}}-\mathbf{1}_{\{ T_i \leq x \}}|\right\}\notag\\
  & \le \sup_{x\in\bR}\left\{ \frac{1}{n}\sum_{i:|\bar{Z}_i-T_i|<\varepsilon} \mathbf{1}_{\{ T_i \in B(x,\varepsilon) \}} +\frac{1}{n}\#\{i:|\bar{Z}_i-T_i| \ge \varepsilon\}\right\}\notag\\
  & \le \sup_{x\in\bR}\left\{ \frac{1}{n}\sum_{i=1}^n \mathbf{1}_{\{ T_i \in B(x,\varepsilon) \}} +\frac{1}{n}\#\{i:|\bar{Z}_i-T_i| \ge \varepsilon\}\right\}\notag\\
  & \le \ocal_{\P,n}(\varepsilon)+\frac{1}{n}\sum_i\mathbf{1}_{\{|\bar{Z}_i-T_i| \ge \varepsilon\}}.
\end{align}
where $\ocal_{\P,n}$ denotes that quantity order with high probability in $n$.

To control the second term, we go back to the proof of Lemma~\ref{theorem:Si_convergence} again in which we have already defined 
\[
A_i = \hat{F}_m^{-1}( G( \hat{F}_m(Z_i) + e_i ) ) - \hat{F}_m^{-1}( G( F(Z_i) + e_i ) ),
\]
\[
B_i = \hat{F}_m^{-1}( G( F(Z_i) + e_i ) ) - F^{-1}( G( F(Z_i) + e_i ) ).
\]
and thus
\[
\bar{Z}_i - T_i = A_i + B_i.
\]
Recall that in that context, we take an interval $[a, b]$, and here we specify $a, b$ to be the $\varepsilon/16$ and $1-\varepsilon/16$ quantiles of $\text{Uniform}(0,1)$. We have defined 
\[
\mathscr{E}_m = \left\{ \sup_{1\le i\le n}\big| \hat{F}_m(Z_i) - F(Z_i) \big| < \delta \right\} \cap \left\{ \sup_{\substack{u, v \in [a, b] \\ | u - v | < \delta'}} \big| \hat{F}_m^{-1}(u) - \hat{F}_m^{-1}(v) \big| < \frac{\varepsilon}{2} \right\}.
\]
We have seen that $\P(\mathscr{E}_m)\xrightarrow{m\to \infty} 1$ and on $\mathscr{E}_m$, we have $\sup_{i\in[n]}|A_i|< \varepsilon/2$.

Unfortunately, the other event in the proof, $\mathscr{E}_m'$ depends on $i$, so we could not achieve the uniform control on $B_i$'s. Instead, let us define \( \mathscr{E}_{m}' = \left\{ \sup_{u \in [a, b]} \big| \hat{F}_m^{-1}(u) - F^{-1}(u) \big| < \varepsilon/2 \right\} \). 
Again, $\P(\mathscr{E}_m')\xrightarrow{m\to \infty} 1$ as shown in Lemma~\ref{theorem:Si_convergence}.

Therefore, on the event $\mathscr{E}_{m}\cap \mathscr{E}_{m}'$, we have
$$\frac{1}{n}\sum_{i=1}^n\mathbf{1}_{\{|\bar{Z}_i-T_i| \ge \varepsilon\}} \le \frac{1}{n}\sum_{i=1}^n\mathbf{1}_{\{U_i \notin [a,b]\}}.$$
By Hoeffding's inequality, we can further define an event $\mathscr{E}_{m}^{''}$ such that 
$$\frac{1}{n}\sum_{i=1}^n\mathbf{1}_{\{U_i \notin [a,b]\}} \le \frac{\varepsilon}{2}$$
with probability tending to 1 (with respect to $n$). By union probability, we can see that the event of $\mathscr{E}_{m}\cap\mathscr{E}_{m}'\cap \mathscr{E}_{m}^{''}$ happens with probability going to 1. 

Since $\varepsilon$ is arbitrary, because of \eqref{eq:decomp-temp-0}, we have 
$$\sup_{x\in\bR}\big|\bar{F}_n(x)-F_n(x)\big|\xrightarrow{\P} 0.$$

\end{proof}

Then this convergence can be transferred to $\tilde{Z}_i$ with a similar control.
\begin{lemma}[One-dimensional Privatized Convergence]\label{lem:eCDF-ztilde-1d}
 Under the assumptions of Lemma~\ref{lem:eCDF-zbar-1d}, we further have
\[
\sup_{x\in\bR} \big|\tilde{F}_n(x) - F(x)\big| \xrightarrow{\P} 0\quad\text{as }m, n \to \infty.
\]
\end{lemma}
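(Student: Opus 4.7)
The plan is to mirror the structure of Lemma~\ref{lem:eCDF-zbar-1d} but use $\bar{F}_n$ as the intermediate object. By the triangle inequality,
\[
\sup_x|\tilde{F}_n(x)-F(x)| \;\le\; \sup_x|\tilde{F}_n(x)-\bar{F}_n(x)| \;+\; \sup_x|\bar{F}_n(x)-F(x)|,
\]
and the second term is $o_P(1)$ by Lemma~\ref{lem:eCDF-zbar-1d}. It thus suffices to control the first term. For any $\varepsilon>0$, reusing the index-splitting trick from the previous proof,
\[
\sup_x|\tilde{F}_n(x)-\bar{F}_n(x)| \;\le\; \sup_x \frac{1}{n}\sum_{i}\mathbf{1}_{\{\bar{Z}_i\in B(x,\varepsilon)\}} \;+\; \frac{1}{n}\#\{i:|\tilde{Z}_i-\bar{Z}_i|\ge \varepsilon\}.
\]
Since $\bar{F}_n\to F$ uniformly in probability and $F$ is continuous, the first piece is $O_P(\varepsilon)$ uniformly in $x$.

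The main work is to show that the second piece is $o_P(1)$ for every fixed $\varepsilon>0$; unlike in Lemma~\ref{lem:eCDF-zbar-1d}, here we cannot appeal to a quantile-of-uniform argument directly, because we must control the shift caused by replacing $Z_i$ with $\hat{Z}_i$ uniformly across all $n$ indices. Writing $u_i=G(\hat{F}_m(\hat{Z}_i)+e_i)$ and $v_i=G(\hat{F}_m(Z_i)+e_i)$, we have $\tilde{Z}_i-\bar{Z}_i=\hat{F}_m^{-1}(u_i)-\hat{F}_m^{-1}(v_i)$. The kernel estimator $\hat{F}_m$ is Lipschitz in its argument with constant of order $K_{\max}/h$, and $G$ is globally Lipschitz, so on the high-probability event $\{\max_{i\le n}|\hat{Z}_i-Z_i|\le \delta_m\}$,
\[
\max_{i\le n}|u_i-v_i| \;\le\; L_G\cdot L_{\hat{F}_m}\cdot \delta_m \;=\; O(\delta_m/h),
\]
which tends to $0$ under the choices $\delta_m=o((\log m)^{-c})$ and $h=(\log m)^{-c}$. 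So the $u_i$ and $v_i$ are deterministically close, uniformly in $i$.

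The remaining obstacle, which is the heart of the argument, is that $\hat{F}_m^{-1}$ is only uniformly equicontinuous on compact subintervals $[a,b]\subset(0,1)$, so I must confine $u_i,v_i$ to such an interval for as many $i$ as possible. The plan is to choose $[a,b]$ so that $P(G(F(Z_i)+e_i)\in [a,b])\ge 1-\varepsilon/4$; then by Hoeffding's inequality applied to the i.i.d.\ indicators $\mathbf{1}\{G(F(Z_i)+e_i)\notin[a,b]\}$, the empirical fraction is at most $\varepsilon/2$ with probability $\to 1$ in $n$. Using the uniform closeness $|v_i-G(F(Z_i)+e_i)|=o_P(1)$ (from $\sup_x|\hat{F}_m(x)-F(x)|=o_P(1)$ and uniform continuity of $G$) together with the deterministic bound $\max_i|u_i-v_i|=o(1)$, both $u_i$ and $v_i$ lie in a slightly enlarged interval $[a',b']\subset(0,1)$ for all but at most $\varepsilon n$ indices, with high probability. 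On those indices, Theorem~\ref{thm:inverse-convergence} gives $|\hat{F}_m^{-1}(u_i)-\hat{F}_m^{-1}(v_i)|<\varepsilon$ uniformly, hence $|\tilde{Z}_i-\bar{Z}_i|<\varepsilon$. Combining everything,
\[
\frac{1}{n}\#\{i:|\tilde{Z}_i-\bar{Z}_i|\ge \varepsilon\} \;\le\; \varepsilon
\]
with probability tending to $1$, which, together with the $O_P(\varepsilon)$ bound on the first piece and the arbitrariness of $\varepsilon$, yields $\sup_x|\tilde{F}_n(x)-\bar{F}_n(x)|\xrightarrow{P}0$ and completes the proof.
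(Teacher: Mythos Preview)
Your proposal is correct and follows essentially the same approach as the paper: the same triangle-inequality reduction to $\sup_x|\tilde{F}_n-\bar{F}_n|$, the same index-splitting into $\{\bar{Z}_i\in B(x,\varepsilon)\}$ and $\{|\tilde Z_i-\bar Z_i|\ge\varepsilon\}$, the same introduction of $u_i,v_i$ with the Lipschitz bound $\max_i|u_i-v_i|=O(\delta_m/h)=o(1)$, and the same use of Theorem~\ref{thm:inverse-convergence} on a compact $[a,b]\subset(0,1)$ combined with Hoeffding on the i.i.d.\ indicators $\mathbf{1}\{G(F(Z_i)+e_i)\notin[a,b]\}$. The paper phrases the interval argument via an explicit three-set decomposition of $\{i:|\tilde Z_i-\bar Z_i|\ge\varepsilon\}$ whereas you use the equivalent ``enlarged interval $[a',b']$'' phrasing, but the substance is identical.
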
 

\begin{proof}
  With Lemma~\ref{lem:eCDF-zbar-1d}, we only have to show that
  $$\sup_{x\in\bR}\big|\tilde{F}_n(x)-\bar{F}_n(x)\big| \xrightarrow{\P} 0.$$
Calling a similar comparison as before, for any sufficiently small $\varepsilon$,
\begin{align}\label{eq:decomp-temp}
  \sup_{x\in\bR}\big|\bar{F}_n(x)-\tilde{F}_n(x)\big| & \le \sup_{x\in\bR}\frac{1}{n}\sum_{i=1}^n\big|\mathbf{1}_{\{ \bar{Z}_i \leq x \}}-\mathbf{1}_{\{ \tilde{Z}_i \leq x \}}\big|\notag\\
  & \le \sup_{x\in\bR}\left\{ \frac{1}{n}\sum_{i:|\bar{Z}_i-\tilde{Z}_i|<\varepsilon} \big|\mathbf{1}_{\{ \bar{Z}_i \leq x \}}-\mathbf{1}_{\{ \tilde{Z}_i \leq x \}}\big| +\frac{1}{n}\sum_{i:|\bar{Z}_i-\tilde{Z}_i| \ge \varepsilon} \big|\mathbf{1}_{\{ \bar{Z}_i \leq x \}}-\mathbf{1}_{\{ \tilde{Z}_i \leq x \}}\big|\right\}\notag\\
  & \le \sup_{x\in\bR}\left\{ \frac{1}{n}\sum_{i:|\bar{Z}_i-\tilde{Z}_i|<\varepsilon} \mathbf{1}_{\{ \bar{Z}_i \in B(x,\varepsilon) \}} +\frac{1}{n}\sum_{i=1}^n\mathbf{1}_{\{|\bar{Z}_i-\tilde{Z}_i| \ge \varepsilon\}}\right\}\notag\\
& \le \sup_{x\in\bR}\left\{ \frac{1}{n}\sum_{i=1}^n \mathbf{1}_{\{ \bar{Z}_i \in B(x,\varepsilon) \}} +\frac{1}{n}\sum_{i=1}^n\mathbf{1}_{\{|\bar{Z}_i-\tilde{Z}_i| \ge \varepsilon\}}\right\}.
\end{align}

Note that $\tilde{Z}_i$, $i\in [n]$ are independent, because $\hat{F}_m$ is based only on $Z_{i}$, $i> n$. For the first term $\frac{1}{n}\sum_{i=1}^n \mathbf{1}_{\{ \bar{Z}_i \in B(x,\varepsilon) \}} $, note that $\P(Z_i \in B(x, \varepsilon)) =\ocal(\varepsilon)$ in the current assumption and because of the uniform convergence of $\bar{F}_n$ and the independence between $\tilde{Z}_i$'s, we have 
$$\sup_{x\in\bR} \frac{1}{n}\sum_{i=1}^n \mathbf{1}_{\{ \bar{Z}_i \in B(x,\varepsilon) \}} = \ocal_{\P,n}(\varepsilon).$$

To control the second term, we will reuse the quantities in the proof of Lemma~\ref{theorem:main-1d}. Recall that
\[
u_i = G( \hat{F}_m(\hat{Z}_i) + e_i ), \quad v_i = G( \hat{F}_m(Z_i) + e_i ).
\]

For the current $\varepsilon$, pick $a, b$ such that $\P(U\in [a, b]) \ge 1-\varepsilon/4$. We have
\begin{align*}
  &\left\{i: |\bar{Z}_i-\tilde{Z}_i| \ge \varepsilon\right\}\\
  \subset &\left\{i: |\bar{Z}_i-\tilde{Z}_i| \ge \varepsilon, v_i, u_i \in [a, b]\right\}\cup \left\{i: v_i \notin [a, b]\right\} \cup \left\{i: u_i \notin [a, b]\right\}\\
 =& \left\{i: \big|\hat{F}_m^{-1}(u_i)-\hat{F}_m^{-1}(v_i)\big| \ge \varepsilon, v_i, u_i \in [a, b]\right\}\cup \left\{i: v_i \notin [a, b]\right\} \cup \left\{i: u_i \notin [a, b]\right\}.
\end{align*}

Therefore, we have 
$$\frac{1}{n}\sum_{i=1}^n\mathbf{1}_{\{|\bar{Z}_i-T_i| \ge \varepsilon\}} \le \frac{1}{n}\sum_{i=1}^n\mathbf{1}_{\{|\hat{F}_m^{-1}(u_i)-\hat{F}_m^{-1}(v_i)| \ge \varepsilon, v_i, u_i \in [a, b]\}} +\frac{1}{n}\sum_{i=1}^n\mathbf{1}_{\{v_i \notin [a, b]\}}+\frac{1}{n}\sum_{i=1}^n\mathbf{1}_{\{u_i \notin [a, b]\}}.$$

As the proof of Lemma~\ref{theorem:main-1d}, Theorem~\ref{thm:inverse-convergence} indicates that there exists $\delta>0$ and $M>0$, such that for any $m\ge M$, 
$$\sup_{\substack{u,v\in [a, b]\\ |u-v|<\delta}}\big|\hat{F}^{-1}_m(u)-\hat{F}^{-1}_m(v)\big| < \varepsilon$$
with probability larger than $1-\varepsilon/4$. Define the event $\mathscr{E}^{'''}$ to be the intersection of this event and the event $\sup_{i\in[n]}|u_i-v_i|< \delta$. Under the current assumptions of $\hat{Z}_i$'s concentration and Lemma~\ref{lemma:ecdf_convergence}, we can see that 
$\P(\mathscr{E}^{'''}) \xrightarrow{m\to \infty} 1.$

The last two terms in the bound would be trivial, as they do not involve the inverse CDF estimate. Specifically, we already know the uniform convergence of $\hat{F}_m$ to $F$ under the current assumptions by Lemma~\ref{theorem:main-1d}. We know that uniformly, we have
$$u_i - G(F(Z_i)+e_i)\xrightarrow{\P} 0,\quad v_i - G(F(Z_i)+e_i)\xrightarrow{\P} 0.$$
Additionally, since $G(F(Z_i)+e_i) \sim \text{Uniform}(0,1)$, we have
$$\frac{1}{n}\sum_{i=1}^n\mathbf{1}_{\{v_i \notin [a, b]\}} = \ocal_{\P,m,n}(\varepsilon), \quad \frac{1}{n}\sum_{i=1}^n\mathbf{1}_{\{u_i \notin [a, b]\}} = \ocal_{\P,m,n}(\varepsilon).$$

Combining these, we have $\frac{1}{n}\sum_{i=1}^n\mathbf{1}_{\{|\bar{Z}_i-T_i| \ge \varepsilon\}} = \ocal_{\P,m,n}(\varepsilon)$.

Because $\varepsilon$ is arbitrary, we complete the proof.

\end{proof}

\begin{lemma}[Two-dimensional Intermediate Convergence]\label{lem:eCDF-zbar-2d}
Under the conditions of Lemma~\ref{lem:eCDF-zbar-1d} and Lemma~\ref{lem:conditional-cdf-convergence}, 
\[
\sup_{(x_1,x_2)\in\bR^2} \big|\bar{F}_n(x_1,x_2) - F(x_1,x_2)\big| \xrightarrow{\P} 0\quad\text{as }m, n \to \infty,
\]
where 
\[
\bar{F}_n(x_1,x_2) = \frac{1}{n}\sum_{i=1}^n \mathbf{1}_{\{\bar{Z}_{i1} \leq x_1, \bar{Z}_{i2} \leq x_2\}}.
\]
\end{lemma}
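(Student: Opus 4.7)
The plan is to follow the strategy of Lemma~\ref{lem:eCDF-zbar-1d}, promoted to two dimensions via the chain-rule structure used to construct $\bar{Z}_i$. I would introduce the oracle sequence
\[
T_{i1} = F_1^{-1}\bigl(G(F_1(Z_{i1}) + e_i^{(1)})\bigr), \quad T_{i2} = F_{2|1}^{-1}\bigl(G(F_{2|1}(Z_{i2}|Z_{i1}) + e_i^{(2)}) \,\big|\, T_{i1}\bigr),
\]
and verify $T_i = (T_{i1}, T_{i2}) \stackrel{\text{iid}}{\sim} F$ by the usual CDF-inversion plus chain-rule argument (the first marginal is $F_1$ by the 1D oracle computation, and conditionally on $T_{i1}$ the second coordinate has CDF $F_{2|1}(\cdot|T_{i1})$ since $G(F_{2|1}(Z_{i2}|Z_{i1}) + e_i^{(2)}) \sim U(0,1)$ independently of $T_{i1}$). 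The multivariate Glivenko--Cantelli theorem then gives $\sup_x |F_n^T(x) - F(x)| \xrightarrow{P} 0$ for the empirical CDF $F_n^T$ of $\{T_i\}_{i=1}^n$, reducing the problem to showing $\sup_x |\bar{F}_n(x) - F_n^T(x)| \xrightarrow{P} 0$.

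For any $\varepsilon>0$, mimicking the 1D decomposition in \eqref{eq:decomp-temp} gives
\[
\sup_x|\bar{F}_n(x)-F_n^T(x)| \le \sup_x\frac{1}{n}\sum_i \mathbf{1}\{T_i \in B_\varepsilon(x)\} + \frac{1}{n}\sum_i\mathbf{1}\{\|\bar{Z}_i-T_i\|_\infty \ge \varepsilon\},
\]
where $B_\varepsilon(x)$ is the shell of points within $\ell_\infty$-distance $\varepsilon$ of the boundary of $\{y \le x\}$. Since the joint density of $F$ is bounded (Assumption~\ref{ass:continuous}), $\sup_x P(T_i \in B_\varepsilon(x)) = O(\varepsilon)$, and a VC-type uniform law over such rectangular shells applied to the i.i.d.\ $T_i$'s makes the first term $O_P(\varepsilon)$ uniformly in $x$.

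The second term is the core of the argument. I would split $\|\bar{Z}_i - T_i\|_\infty \le |\bar{Z}_{i1}-T_{i1}| \vee |\bar{Z}_{i2}-T_{i2}|$. The first coordinate is already controlled inside Lemma~\ref{lem:eCDF-zbar-1d}: on an event of probability $\to 1$, only an $O(\varepsilon)$ fraction of indices satisfy $|\bar{Z}_{i1}-T_{i1}| \ge \varepsilon$. For the second coordinate, I would write $\bar{Z}_{i2}-T_{i2}$ as a telescoping sum that swaps, in turn, (a) the inner conditional CDF $\hat{F}_{2|1,m}(\cdot|Z_{i1}) \to F_{2|1}(\cdot|Z_{i1})$, (b) the outer inverse conditional CDF $\hat{F}_{2|1,m}^{-1}(\cdot|\bar{Z}_{i1}) \to F_{2|1}^{-1}(\cdot|\bar{Z}_{i1})$, and (c) the outer conditioning argument $\bar{Z}_{i1} \to T_{i1}$. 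Swap (a) is controlled by Lemma~\ref{lem:conditional-cdf-convergence} and uniform continuity of $G$; swap (b) by the uniform convergence portion of Theorem~\ref{thm:inverse-conditional-convergence}; swap (c) by the uniform equicontinuity of $F_{2|1}^{-1}$ in its conditioning argument (Theorem~\ref{thm:inverse-conditional-convergence}) combined with the smallness of $|\bar{Z}_{i1}-T_{i1}|$ from step one.

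The main obstacle will be enforcing these controls \emph{simultaneously} over all $i \in [n]$ while keeping $(\bar{Z}_{i1}, T_{i1})$ inside a compact interior subset of $S_1$ on which Theorem~\ref{thm:inverse-conditional-convergence} applies. Following the device used in Lemma~\ref{lem:eCDF-zbar-1d}, I would discard indices whose first-coordinate quantile $U_i^{(1)} = G(F_1(Z_{i1})+e_i^{(1)})$ lies outside a closed subinterval $[a,b] \subset (0,1)$; by Hoeffding's inequality this fraction is $O_P(\varepsilon)$. Combining the contributions from the discarded indices, the shell bound, and the three swaps, the second term is $O_P(\varepsilon)$, and letting $\varepsilon \downarrow 0$ yields the desired uniform convergence.
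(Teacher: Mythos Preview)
Your proposal is correct and follows essentially the same route as the paper: the same oracle sequence $T_i=(T_{i1},T_{i2})$, the multivariate Glivenko--Cantelli reduction, the same shell/bad-fraction decomposition, and coordinate-wise control using Lemma~\ref{lem:eCDF-zbar-1d} for the first coordinate and the conditional-CDF tools (Lemma~\ref{lem:conditional-cdf-convergence}, Theorem~\ref{thm:inverse-conditional-convergence}) for the second. The paper packages your swaps (a)--(c) into high-probability events $\mathscr{E}_{m,2},\mathscr{E}_{m,2}'$ (uniform in the conditioning variable $x_1$) and discards in addition the indices with $U_{i2}=G(F_{2|1}(Z_{i2}|Z_{i1})+e_i^{(2)})\notin[a,b]$, which you will likewise need for swap (b); this is a second Hoeffding step analogous to the one you state for $U_i^{(1)}$.
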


\begin{proof}
Define reference variables $T_i = (T_{i1}, T_{i2})$ where
\[
T_{i1} = F_1^{-1}(G(F_1(Z_{i1}) + e_{i1}))
\]
and
\[
T_{i2} = F_{2\mid 1}^{-1}(G(F_{2\mid 1}(Z_{i2}\mid Z_{i1}) + e_{i2})\mid T_{i1})
\]

Let $F_n$ be their empirical CDF:
\[
F_n(x_1,x_2) = \frac{1}{n}\sum_{i=1}^n \mathbf{1}_{\{T_{i1} \leq x_1, T_{i2} \leq x_2\}}.
\]

By construction, $T_i \stackrel{\mathrm{i.i.d.}}{\sim} F$ , so by the multivariate Glivenko-Cantelli theorem:
\[
\sup_{(x_1,x_2)\in\bR^2} \big|F_n(x_1,x_2) - F(x_1,x_2)\big| \xrightarrow{\P} 0.
\]

Thus we only need to show:
\[
\sup_{(x_1,x_2)\in\bR^2} \big|\bar{F}_n(x_1,x_2) - F_n(x_1,x_2)\big| \xrightarrow{\P} 0.
\]

For any $(x_1,x_2)\in\bR^2$:
\begin{align*}
\big|\bar{F}_n(x_1,x_2) - F_n(x_1,x_2)\big| &= \left|\frac{1}{n}\sum_{i=1}^n\left[\mathbf{1}_{\{\bar{Z}_{i1} \leq x_1, \bar{Z}_{i2} \leq x_2\}} - \mathbf{1}_{\{T_{i1} \leq x_1, T_{i2} \leq x_2\}}\right]\right|.
\end{align*}

For any $\varepsilon > 0$, we can bound this by:
\begin{align*}
&\frac{1}{n}\sum_{i:\|\bar{Z}_i-T_i\| < \varepsilon} \big|\mathbf{1}_{\{\bar{Z}_{i1} \leq x_1, \bar{Z}_{i2} \leq x_2\}} - \mathbf{1}_{\{T_{i1} \leq x_1, T_{i2} \leq x_2\}}\big| + \frac{1}{n}\sum_{i:\|\bar{Z}_i-T_i\| \geq \varepsilon} 1.
\end{align*}

For any $(x_1,x_2)\in\bR^2$, when $\|\bar{Z}_i-T_i\| < \varepsilon$, we can bound:
Note that $\big|\mathbf{1}_{\{\bar{Z}_{i1} \leq x_1, \bar{Z}_{i2} \leq x_2\}} - \mathbf{1}_{\{T_{i1} \leq x_1, T_{i2} \leq x_2\}}\big| \le \big|\mathbf{1}_{\{\bar{Z}_{i1} \leq x_1\}} - \mathbf{1}_{\{T_{i1} \leq x_1\}}\big| + \big|\mathbf{1}_{\{\bar{Z}_{i2} \leq x_2\}} - \mathbf{1}_{\{T_{i2} \leq x_2\}}\big|$. Moreover, by moving the sup operator from outside the sum to inside the sum, we have:
\begin{align*}
&\sup_{(x_1,x_2)\in\bR^2} \frac{1}{n}\sum_{i:\|\bar{Z}_i-T_i\| < \varepsilon} \big|\mathbf{1}_{\{\bar{Z}_{i1} \leq x_1, \bar{Z}_{i2} \leq x_2\}} - \mathbf{1}_{\{T_{i1} \leq x_1, T_{i2} \leq x_2\}}\big| \\
& \leq \frac{1}{n}\sum_{i:\|\bar{Z}_i-T_i\| < \varepsilon}\sup_{(x_1,x_2)\in\bR^2}\big|\mathbf{1}_{\{\bar{Z}_{i1} \leq x_1, \bar{Z}_{i2} \leq x_2\}} - \mathbf{1}_{\{T_{i1} \leq x_1, T_{i2} \leq x_2\}}\big| \\
&\leq  \frac{1}{n}\sum_{i:\|\bar{Z}_i-T_i\| < \varepsilon}\sup_{(x_1,x_2)\in\bR^2}\left[ \big|\mathbf{1}_{\{\bar{Z}_{i1} \leq x_1\}} - \mathbf{1}_{\{T_{i1} \leq x_1\}}\big| + \big|\mathbf{1}_{\{\bar{Z}_{i2} \leq x_2\}} - \mathbf{1}_{\{T_{i2} \leq x_2\}}\big|\right]\\
&\leq  \frac{1}{n}\sum_{i:\|\bar{Z}_i-T_i\| < \varepsilon}\left[ \sup_{(x_1,x_2)\in\bR^2}\big|\mathbf{1}_{\{\bar{Z}_{i1} \leq x_1\}} - \mathbf{1}_{\{T_{i1} \leq x_1\}}\big| + \sup_{(x_1,x_2)\in\bR^2}\big|\mathbf{1}_{\{\bar{Z}_{i2} \leq x_2\}} - \mathbf{1}_{\{T_{i2} \leq x_2\}}\big|\right]\\
& =  \frac{1}{n}\sum_{i:\|\bar{Z}_i-T_i\| < \varepsilon}\left[ \sup_{x_1\in\bR}\big|\mathbf{1}_{\{\bar{Z}_{i1} \leq x_1\}} - \mathbf{1}_{\{T_{i1} \leq x_1\}}\big| + \sup_{x_2\in\bR}\big|\mathbf{1}_{\{\bar{Z}_{i2} \leq x_2\}} - \mathbf{1}_{\{T_{i2} \leq x_2\}}\big|\right]\\
&\leq \frac{1}{n}\sum_{i:\|\bar{Z}_i-T_i\| < \varepsilon} \sup_{x_1\in\bR}\big|\mathbf{1}_{\{\bar{Z}_{i1} \leq x_1\}} - \mathbf{1}_{\{T_{i1} \leq x_1\}}\big| + \frac{1}{n}\sum_{i:\|\bar{Z}_i-T_i\| < \varepsilon}\sup_{x_2\in\bR}\big|\mathbf{1}_{\{\bar{Z}_{i2} \leq x_2\}} - \mathbf{1}_{\{T_{i2} \leq x_2\}}\big|\\
&\leq \frac{1}{n}\sum_{i:|\bar{Z}_{i1}-T_{i1}| < \varepsilon} \sup_{x_1\in\bR}\big|\mathbf{1}_{\{\bar{Z}_{i1} \leq x_1\}} - \mathbf{1}_{\{T_{i1} \leq x_1\}}\big| + \frac{1}{n}\sum_{i:|\bar{Z}_{i2}-T_{i2}| < \varepsilon}\sup_{x_2\in\bR}\big|\mathbf{1}_{\{\bar{Z}_{i2} \leq x_2\}} - \mathbf{1}_{\{T_{i2} \leq x_2\}}\big|.
\end{align*}

For each coordinate $j=1,2$, the term $\sup_{x_j\in\bR}\big|\mathbf{1}_{\{\bar{Z}_{ij} \leq x_j\}} - \mathbf{1}_{\{T_{ij} \leq x_j\}}\big|$ equals 1 only if there exists some $x_j$ between $\bar{Z}_{ij}$ and $T_{ij}$. When $\|\bar{Z}_i-T_i\| < \varepsilon$, we have $|\bar{Z}_{ij} - T_{ij}| < \varepsilon$ for both $j=1,2$.

Using our assumptions about the true $F$, with the same type of probability concentration we have used in previous lemmas, we can see the above bound leads to
\begin{align*}
&\sup_{(x_1,x_2)\in\bR^2} \frac{1}{n}\sum_{i:\|\bar{Z}_i-T_i\| < \varepsilon} \big|\mathbf{1}_{\{\bar{Z}_{i1} \leq x_1, \bar{Z}_{i2} \leq x_2\}} - \mathbf{1}_{\{T_{i1} \leq x_1, T_{i2} \leq x_2\}}\big| =\ocal_{\P,n}(\varepsilon).
\end{align*}

For the second term, we need to control $\|\bar{Z}_i-T_i\|$. Note that
\[
\|\bar{Z}_i-T_i\| \geq \varepsilon \implies |\bar{Z}_{i1}-T_{i1}| \geq \varepsilon/\sqrt{2} \text{ or } |\bar{Z}_{i2}-T_{i2}| \geq \varepsilon/\sqrt{2}.
\]

For the first coordinate, we can directly use the events in Lemma~\ref{lem:eCDF-zbar-1d}. Recall
\[
\mathscr{E}_m = \left\{ \sup_{1\le i\le n}\big| \hat{F}_m(Z_{i1}) - F_1(Z_{i1}) \big| < \delta \right\} \cap \left\{ \sup_{\substack{u, v \in [a, b] \\ | u - v | < \delta'}} \big| \hat{F}_m^{-1}(u) - \hat{F}_m^{-1}(v) \big| < \frac{\varepsilon}{2\sqrt{2}} \right\}
\]
and
\[
\mathscr{E}_{m}' = \left\{ \sup_{u \in [a, b]} \big| \hat{F}_m^{-1}(u) - F_1^{-1}(u) \big| < \frac{\varepsilon}{2\sqrt{2}} \right\}.
\]
where $\delta, \delta'$ are chosen as in Lemma~\ref{lem:eCDF-zbar-1d}, and $[a,b]$ contains the $\varepsilon/16$ and $1-\varepsilon/16$ quantiles of $\text{Uniform}(0,1)$.

For the second coordinate, conditioning on $\bar{Z}_{i1}$ and $T_{i1}$, define analogous events. Specifically, let $a,b$ be the $\varepsilon/16$ and $1-\varepsilon/16$ quantiles of $\text{Uniform}(0,1)$. Define
\begin{align*}
\mathscr{E}_{m,2} &= \left\{ \sup_{1\le i\le n}\big| \hat{F}_{2|1,m}(Z_{i2}|Z_{i1}) - F_{2|1}(Z_{i2}|Z_{i1}) \big| < \delta \right\} \\
&\quad\cap \left\{ \sup_{x_1\in\bR} \sup_{\substack{u, v \in [a, b] \\ | u - v | < \delta'}} \big| \hat{F}_{2|1,m}^{-1}(u|x_1) - \hat{F}_{2|1,m}^{-1}(v|x_1) \big| < \frac{\varepsilon}{2\sqrt{2}} \right\}.
\end{align*}
From Lemma~\ref{lem:conditional-cdf-convergence} we know that $\P(\mathscr{E}_{m,2}) \xrightarrow{m\to \infty} 1$. Similarly, define
\[
\mathscr{E}_{m,2}' = \left\{ \sup_{x_1\in\bR} \sup_{u \in [a, b]} \big| \hat{F}_{2\mid 1,m}^{-1}(u\mid x_1) - F_{2\mid 1}^{-1}(u\mid x_1) \big| < \frac{\varepsilon}{2\sqrt{2}} \right\}.
\]

On the intersection of events $\mathscr{E}_m \cap \mathscr{E}_m' \cap \mathscr{E}_{m,2} \cap \mathscr{E}_{m,2}'$, we have for all $i$:
\begin{align*}
\left\{\|\bar{Z}_i-T_i\| \geq \varepsilon\right\} 
&\subseteq \left\{|\bar{Z}_{i1}-T_{i1}| \geq \varepsilon/\sqrt{2} \text{ or } |\bar{Z}_{i2}-T_{i2}| \geq \varepsilon/\sqrt{2}\right\} \\
&\subseteq \left\{|\bar{Z}_{i1}-T_{i1}| \geq \varepsilon/\sqrt{2}\right\} \cup \left\{|\bar{Z}_{i2}-T_{i2}| \geq \varepsilon/\sqrt{2}\right\}.
\end{align*}

For the first coordinate, on event $\mathscr{E}_m \cap \mathscr{E}_m'$, we have shown in Lemma~\ref{lem:eCDF-zbar-1d} that $|\bar{Z}_{i1}-T_{i1}| \geq \varepsilon/\sqrt{2}$ can happen only if $U_{i1} \notin [a,b]$, where $U_{i1} = G(F_1(Z_{i1}) + e_{i1})$.

For the second coordinate, on event $\mathscr{E}_{m,2} \cap \mathscr{E}_{m,2}'$, similarly $|\bar{Z}_{i2}-T_{i2}| \geq \varepsilon/\sqrt{2}$ can happen only if $U_{i2} \notin [a,b]$, where $U_{i2} = G(F_{2\mid 1}(Z_{i2}\mid Z_{i1}) + e_{i2})$.
By Hoeffding's inequality:
\[
\P\left(\frac{1}{n}\sum_{i=1}^n\mathbf{1}_{\{U_{i1} \notin [a,b]\}} > \frac{\varepsilon}{4}\right) \xrightarrow{n\to \infty} 0
\]
\[
\P\left(\frac{1}{n}\sum_{i=1}^n\mathbf{1}_{\{U_{i2} \notin [a,b]\}} > \frac{\varepsilon}{4}\right) \xrightarrow{n\to \infty} 0
\]

Therefore, with additional intersection of the two events from the Hoeffding's inequality,
\begin{align*}
\frac{1}{n}\sum_{i:\|\bar{Z}_i-T_i\| \geq \varepsilon} 1 
&\leq \frac{1}{n}\sum_{i=1}^n \left[\mathbf{1}_{\{U_{i1} \notin [a,b]\}} + \mathbf{1}_{\{U_{i2} \notin [a,b]\}}\right] \\
&\leq \frac{\varepsilon}{4} + \frac{\varepsilon}{4} = \frac{\varepsilon}{2}.
\end{align*}

By union bound, the intersection of all these events has probability approaching 1 as $m,n \to \infty$.
Since $\varepsilon$ is arbitrary, we conclude:
\[
\sup_{(x_1,x_2)\in\bR^2} \big|\bar{F}_n(x_1,x_2) - F_n(x_1,x_2)\big| \xrightarrow{\P} 0.
\]
\end{proof}

\begin{lemma}[Two-dimensional Privatized Convergence]\label{lem:eCDF-ztilde-2d}
Under the conditions of Lemma~\ref{lem:eCDF-zbar-2d}, we have
\[
\sup_{(x_1,x_2)\in\bR^2} \big|\tilde{F}_n(x_1,x_2) - F(x_1,x_2)\big| \xrightarrow{\P} 0\quad\text{as }m, n \to \infty,
\]
where 
\[
\tilde{F}_n(x_1,x_2) = \frac{1}{n}\sum_{i=1}^n \mathbf{1}_{\{\tilde{Z}_{i1} \leq x_1, \tilde{Z}_{i2} \leq x_2\}}.
\]
\end{lemma}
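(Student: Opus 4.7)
The plan is to mirror the univariate argument of Lemma~\ref{lem:eCDF-ztilde-1d} and use Lemma~\ref{lem:eCDF-zbar-2d} as a bridge. By the triangle inequality,
\[
\sup_{x\in\bR^2}|\tilde{F}_n(x)-F(x)|\le \sup_x|\tilde{F}_n(x)-\bar{F}_n(x)| + \sup_x|\bar{F}_n(x)-F(x)|,
\]
and the second term vanishes in probability by Lemma~\ref{lem:eCDF-zbar-2d}. So the task reduces to controlling $\sup_x|\tilde{F}_n(x)-\bar{F}_n(x)|$.

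Fix $\varepsilon>0$. I would split the indicator differences according to whether $\|\tilde{Z}_i-\bar{Z}_i\|<\varepsilon$ or not. On the small-gap indices, the bivariate analogue of the bound used in Lemma~\ref{lem:eCDF-zbar-2d} gives
\[
|\mathbf{1}_{\{\tilde{Z}_i\le x\}}-\mathbf{1}_{\{\bar{Z}_i\le x\}}|\le \sum_{k=1}^{2}\mathbf{1}_{\{\bar{Z}_{ik}\in[x_k-\varepsilon,x_k+\varepsilon]\}},
\]
which lets me push the supremum inside the average and invoke the uniform convergence of the marginals of $\bar{F}_n$ (from Lemma~\ref{lem:eCDF-zbar-2d}) together with the bounded density in Assumption~\ref{ass:continuous} to conclude that the small-gap contribution is $O_{P,n}(\varepsilon)$ uniformly in $x$.

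The main obstacle is controlling the large-gap proportion $\frac{1}{n}\#\{i:\|\tilde{Z}_i-\bar{Z}_i\|\ge\varepsilon\}$, which demands uniform-in-$i$ control rather than the pointwise convergence $\tilde{Z}_i-\bar{Z}_i\xrightarrow{P}0$ established in Theorem~\ref{thm:main-multid}. The way out is that conditional on the hold-out data, the pairs $(\tilde{Z}_i,\bar{Z}_i)$ are independent across $i\in[n]$, because their only shared randomness passes through $\hat{F}_m$ and $\hat{F}_{2|1,m}$. Revisiting the proof of Theorem~\ref{thm:main-multid}, I would express the event $\{|\tilde{Z}_{ik}-\bar{Z}_{ik}|<\varepsilon/\sqrt{2}\}$ via purely global quantities -- the Lipschitz continuity of $\hat{F}_m$ and $\hat{F}_{2|1,m}$ in their first argument (controlled by the perturbation size $\delta_m$ from Assumption~\ref{ass:embedding-concentration}) together with the uniform equicontinuity of $\hat{F}_m^{-1}$ and $\hat{F}_{2|1,m}^{-1}(\cdot|x_1)$ furnished by Theorems~\ref{thm:inverse-convergence} and~\ref{thm:inverse-conditional-convergence} -- except on the event that the quantile arguments $u_{ik},v_{ik}$ escape a compact interval $[a,b]\subset(0,1)$. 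Choosing $[a,b]$ so that $U(0,1)$ assigns mass at least $1-\varepsilon/8$ to it, Hoeffding's inequality applied to the conditionally i.i.d.\ indicators $\mathbf{1}_{\{u_{ik}\notin[a,b]\}}$ and $\mathbf{1}_{\{v_{ik}\notin[a,b]\}}$ for $k=1,2$ then bounds the large-gap proportion by $O(\varepsilon)$ with probability tending to one.

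Combining the small-gap $O_{P,n}(\varepsilon)$ bound with the large-gap $O(\varepsilon)$ bound and letting $\varepsilon\to 0$ delivers $\sup_x|\tilde{F}_n(x)-\bar{F}_n(x)|\xrightarrow{P}0$, which together with Lemma~\ref{lem:eCDF-zbar-2d} yields the claimed uniform convergence of $\tilde{F}_n$ to $F$.
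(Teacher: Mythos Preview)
Your overall strategy matches the paper's: reduce to $\sup_x|\tilde{F}_n-\bar{F}_n|$ via Lemma~\ref{lem:eCDF-zbar-2d}, split on $\|\tilde{Z}_i-\bar{Z}_i\|\lessgtr\varepsilon$, handle the small-gap term coordinate-wise using the uniform convergence of $\bar{F}_n$, and bound the large-gap proportion by isolating global bad events and applying Hoeffding.

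There is, however, a genuine step you have not accounted for in the second coordinate. Recall
\[
\tilde{Z}_{i2}=\hat{F}_{2|1,m}^{-1}\bigl(G(\hat{F}_{2|1,m}(\hat{Z}_{i2}\mid\hat{Z}_{i1})+e_{i2})\,\big|\,\tilde{Z}_{i1}\bigr),\qquad
\bar{Z}_{i2}=\hat{F}_{2|1,m}^{-1}\bigl(G(\hat{F}_{2|1,m}(Z_{i2}\mid Z_{i1})+e_{i2})\,\big|\,\bar{Z}_{i1}\bigr).
\]
The \emph{outer} conditioning arguments are $\tilde{Z}_{i1}$ and $\bar{Z}_{i1}$, not $\hat{Z}_{i1}$ and $Z_{i1}$. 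Their difference is not bounded by $\delta_m$; it is a random quantity whose smallness you can only assert in proportion, by feeding back the first-coordinate large-gap control from Lemma~\ref{lem:eCDF-ztilde-1d}. Your description ``except on the event that the quantile arguments $u_{ik},v_{ik}$ escape $[a,b]$'' is therefore too optimistic for $k=2$. The paper's proof first restricts to $\tilde{Z}_{i1},\bar{Z}_{i1}\in K$ for a compact $K$ (so that the equicontinuity in the conditioning variable from Theorem~\ref{thm:inverse-conditional-convergence} applies), and then decomposes
\[
\{|\tilde{Z}_{i2}-\bar{Z}_{i2}|\ge\varepsilon/\sqrt{2}\}\subset\{\tilde{Z}_{i1}\notin K\}\cup\{\bar{Z}_{i1}\notin K\}\cup\{|\tilde{Z}_{i1}-\bar{Z}_{i1}|\ge\delta\}\cup\{u_i\notin[a,b]\}\cup\{v_i\notin[a,b]\}\cup\{|u_i-v_i|\ge\delta_1\}.
\]
The first three events are bounded in proportion by recursion to the first-coordinate analysis; only the last three fit the pattern you described. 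Revisiting Theorem~\ref{thm:main-multid} does not close this gap directly, since there the outer conditioning was held at a fixed $u_1$, whereas here it varies with $i$ through $\tilde{Z}_{i1}$ and $\bar{Z}_{i1}$.
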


\begin{proof}
With Lemma~\ref{lem:eCDF-zbar-2d}, we only need to show that
\[
\sup_{(x_1,x_2)\in\bR^2} \big|\tilde{F}_n(x_1,x_2) - \bar{F}_n(x_1,x_2)\big| \xrightarrow{\P} 0.
\]

For any $(x_1,x_2)\in\bR^2$:
\begin{align*}
\big|\tilde{F}_n(x_1,x_2) - \bar{F}_n(x_1,x_2)\big| &= \left|\frac{1}{n}\sum_{i=1}^n\left[\mathbf{1}_{\{\tilde{Z}_{i1} \leq x_1, \tilde{Z}_{i2} \leq x_2\}} - \mathbf{1}_{\{\bar{Z}_{i1} \leq x_1, \bar{Z}_{i2} \leq x_2\}}\right]\right|.
\end{align*}

For any $\varepsilon > 0$, we can bound the difference by:
\[
\frac{1}{n}\sum_{i:\|\tilde{Z}_i-\bar{Z}_i\| < \varepsilon} \big|\mathbf{1}_{\{\tilde{Z}_{i1} \leq x_1, \tilde{Z}_{i2} \leq x_2\}} - \mathbf{1}_{\{\bar{Z}_{i1} \leq x_1, \bar{Z}_{i2} \leq x_2\}}\big| + \frac{1}{n}\sum_{i:\|\tilde{Z}_i-\bar{Z}_i\| \geq \varepsilon} 1.
\]

For the first term, when $\|\tilde{Z}_i-\bar{Z}_i\| < \varepsilon$, similar to Lemma~\ref{lem:eCDF-zbar-2d}, we use the following decomposition
\begin{align*}
&\sup_{(x_1,x_2)\in\bR^2} \frac{1}{n}\sum_{i:\|\tilde{Z}_i-\bar{Z}_i\| < \varepsilon} \big|\mathbf{1}_{\{\tilde{Z}_{i1} \leq x_1, \tilde{Z}_{i2} \leq x_2\}} - \mathbf{1}_{\{\bar{Z}_{i1} \leq x_1, \bar{Z}_{i2} \leq x_2\}}\big| \\
%&\leq \frac{1}{n}\sum_{i:\|\tilde{Z}_i-\bar{Z}_i\| < \varepsilon} [\sup_{x_1}|\mathbf{1}_{\{\tilde{Z}_{i1} \leq x_1\}} - \mathbf{1}_{\{\bar{Z}_{i1} \leq x_1\}}| + \sup_{x_2}|\mathbf{1}_{\{\tilde{Z}_{i2} \leq x_2\}} - \mathbf{1}_{\{\bar{Z}_{i2} \leq x_2\}}|]
&\leq \frac{1}{n}\sum_{i:|\tilde{Z}_{i1}-\bar{Z}_{i1}| < \varepsilon} \sup_{x_1\in\bR}\big|\mathbf{1}_{\{\tilde{Z}_{i1} \leq x_1\}} - \mathbf{1}_{\{\bar{Z}_{i1} \leq x_1\}}\big| + \frac{1}{n}\sum_{i:|\tilde{Z}_{i2}-\bar{Z}_{i2}| < \varepsilon}\sup_{x_2\in\bR}\big|\mathbf{1}_{\{\tilde{Z}_{i2} \leq x_2\}} - \mathbf{1}_{\{\bar{Z}_{i2} \leq x_2\}}\big|.
\end{align*}
Using our assumptions about the true $F$ and because of the convergence indicated by Lemma~\ref{lem:eCDF-zbar-2d}, this term is $\ocal_{\P,n}(\varepsilon)$.

For the second term, note that
\[
\|\tilde{Z}_i-\bar{Z}_i\| \geq \varepsilon \implies |\tilde{Z}_{i1}-\bar{Z}_{i1}| \geq \varepsilon/\sqrt{2} \text{ or } |\tilde{Z}_{i2}-\bar{Z}_{i2}| \geq \varepsilon/\sqrt{2}.
\]

For the first coordinate, from Lemma~\ref{lem:eCDF-ztilde-1d}, we know that for any $\gamma > 0$, there exist events with probability approaching 1 such that
\[
\frac{1}{n}\sum_{i=1}^n \mathbf{1}_{\{|\tilde{Z}_{i1}-\bar{Z}_{i1}| \geq \varepsilon/\sqrt{2}\}} \leq \gamma.
\]
For the second coordinate, recall the construction:
\begin{align*}
\tilde{Z}_{i2} &= \hat{F}_{2\mid 1,m}^{-1}(G(\hat{F}_{2\mid 1,m}(\hat{Z}_{i2}\mid \hat{Z}_{i1}) + e_{i2})\mid \tilde{Z}_{i1}), \\
\bar{Z}_{i2} &= \hat{F}_{2\mid 1,m}^{-1}(G(\hat{F}_{2\mid 1,m}(Z_{i2}\mid Z_{i1}) + e_{i2})\mid \bar{Z}_{i1}).
\end{align*}

Define
\[
u_i = G(\hat{F}_{2\mid 1,m}(\hat{Z}_{i2}\mid \hat{Z}_{i1}) + e_{i2}), \quad v_i = G(\hat{F}_{2\mid 1,m}(Z_{i2}\mid Z_{i1}) + e_{i2}).
\]

For controlling $|\tilde{Z}_{i2}-\bar{Z}_{i2}|$, by triangle inequality,
\begin{align*}
|\tilde{Z}_{i2}-\bar{Z}_{i2}| &= \big|\hat{F}_{2\mid 1,m}^{-1}(u_i\mid \tilde{Z}_{i1}) - \hat{F}_{2\mid 1,m}^{-1}(v_i\mid \bar{Z}_{i1})\big| \\
&\leq \big|\hat{F}_{2\mid 1,m}^{-1}(u_i\mid \tilde{Z}_{i1}) - \hat{F}_{2\mid 1,m}^{-1}(u_i\mid \bar{Z}_{i1})\big| + \big|\hat{F}_{2\mid 1,m}^{-1}(u_i\mid \bar{Z}_{i1}) - \hat{F}_{2\mid 1,m}^{-1}(v_i\mid \bar{Z}_{i1})\big|.
\end{align*}

For any $\gamma$, since we have already seen the convergence in the first dimension as well as Lemma~\ref{theorem:Si_convergence}) and Lemma~\ref{theorem:main-1d}, we know there exists a compact set $K$ such that:
\[
\P\left(\frac{1}{n}\sum_{i=1}^n \mathbf{1}_{\{\tilde{Z}_{i1} \notin K\}} > \frac{\gamma}{4}\right) \xrightarrow{n\to \infty} 0, \quad \P\left(\frac{1}{n}\sum_{i=1}^n \mathbf{1}_{\{\bar{Z}_{i1} \notin K\}} > \frac{\gamma}{4}\right) \xrightarrow{n\to \infty} 0.
\]

For $\tilde{Z}_{i1}, \bar{Z}_{i1} \in K$, by Theorem~\ref{thm:inverse-conditional-convergence}, for our fixed $\varepsilon$, there exists $\delta > 0$ such that with probability approaching 1:
\[
\sup_{\substack{x_1,y_1 \in K \\ |x_1-y_1| < \delta}} \sup_{u \in [a,b]} \big|\hat{F}_{2\mid 1,m}^{-1}(u\mid x_1) - \hat{F}_{2\mid 1,m}^{-1}(u\mid y_1)\big| < \frac{\varepsilon}{2\sqrt{2}}.
\]

For the difference in probability levels $(u_i,v_i)$, by the same theorem on compact set $K$:
\[
\sup_{x_1 \in K} \sup_{\substack{u,v \in [a,b]\\ |u-v| < \delta_1}} \big|\hat{F}_{2\mid 1,m}^{-1}(u\mid x_1) - \hat{F}_{2\mid 1,m}^{-1}(v\mid x_1)\big| < \frac{\varepsilon}{2\sqrt{2}}.
\]

Therefore, when $\tilde{Z}_{i1}, \bar{Z}_{i1} \in K$:
\[
\left\{|\tilde{Z}_{i2}-\bar{Z}_{i2}| \geq \varepsilon/\sqrt{2}\right\} \subseteq \left\{|\tilde{Z}_{i1} - \bar{Z}_{i1}| \geq \delta\right\} \cup \left\{u_i \notin [a,b]\right\} \cup \left\{v_i \notin [a,b]\right\} \cup \left\{|u_i - v_i| \geq \delta_1\right\}.
\]

Combining all terms and using Hoeffding's inequality, we have
\begin{align*}
\frac{1}{n}\sum_{i=1}^n \mathbf{1}_{\{|\tilde{Z}_{i2}-\bar{Z}_{i2}| \geq \varepsilon/\sqrt{2}\}} &\leq \frac{1}{n}\sum_{i=1}^n \mathbf{1}_{\{\tilde{Z}_{i1} \notin K\}} + \frac{1}{n}\sum_{i=1}^n \mathbf{1}_{\{\bar{Z}_{i1} \notin K\}} + \frac{1}{n}\sum_{i=1}^n \mathbf{1}_{\{|\tilde{Z}_{i1} - \bar{Z}_{i1}| \geq \delta\}} \\
&\quad + \frac{1}{n}\sum_{i=1}^n \mathbf{1}_{\{u_i \notin [a,b]\}} + \frac{1}{n}\sum_{i=1}^n \mathbf{1}_{\{v_i \notin [a,b]\}} + \frac{1}{n}\sum_{i=1}^n \mathbf{1}_{\{|u_i - v_i| \geq \delta_1\}} \\
&\leq \gamma.
\end{align*}
with probability approaching 1 as $m,n \to \infty$.

Combining the above results and picking $\gamma = \ocal(\varepsilon)$, we have
\begin{align*}
\frac{1}{n}\sum_{i:\|\tilde{Z}_i-\bar{Z}_i\| \geq \varepsilon} 1 
&\leq \frac{1}{n}\sum_{i=1}^n \left[\mathbf{1}_{\{|\tilde{Z}_{i1}-\bar{Z}_{i1}| \geq \varepsilon/\sqrt{2}\}} + \mathbf{1}_{\{|\tilde{Z}_{i2}-\bar{Z}_{i2}| \geq \varepsilon/\sqrt{2}\}}\right] = \ocal_{\P,m,n}(\varepsilon).
\end{align*}

Because $\varepsilon$ is arbitrary, we complete the proof.
\end{proof}

Combining Lemmas~\ref{lem:eCDF-zbar-2d} and \ref{lem:eCDF-ztilde-2d} gives the claimed result of Theorem~\ref{thm:main-CDF} in the two dimensional case.  Note that the arguments of Lemmas~\ref{lem:eCDF-zbar-2d} and \ref{lem:eCDF-ztilde-2d} can carried over to any fixed number of dimension $d$, which gives the final theorem.

\newpage
\section{Proof of Theorem~\ref{thm:model-consistency}}

\subsection{Consistency under inner product latent space models}

In this section, we give the proof of the needed concentration bound of $\hat{Z}_i$'s. We will prove a high probability error bound for each individual $\hat{Z}_i$, $i = 1, \ldots, n$. Recall that $\hat{Z}_i$'s are identifiable only up to an orthogonal transformation. This indeterminacy does not affect the results of our study, so we ignore it for notational simplicity.

The estimation of $\hat{Z}_i$, $i = 1, \ldots, n$ is done by solving the problem
$$\mathop{\arg\min}_{(X_i, \alpha_i)\in\bR^d}\sum_{j=n+1}^{n+m} -\{A_{ij}\log(\sigma(X_i^\top \hat{X}_j+\alpha_i+\hat{\alpha}_j)) + (1-A_{ij})\log(1-\sigma(X_i^\top \hat{X}_j+\alpha_i+\hat{\alpha}_j))\}.$$

As discussed, this problem is indeed a logistic regression estimation problem with measurement errors and one offset variable. Therefore, the proof is based on the theory we can get from studying logistic regression.

We first introduce the result of logistic regression as the tool. For clarity, below we first set up a logistic regression problem for our discussion.
We consider i.i.d.\ data $\{(x_i,y_i)\}_{i=1}^n$ where $x_i \in \mathbb{R}^d$ and $y_i \in \{0,1\}$, generated according to the logistic model:
\[
  \P(y_i = 1 \mid x_i) = \sigma(\beta^{*\top} x_i), \quad \text{where } \sigma(z) = \frac{1}{1+e^{-z}}.
\]
Here, $\beta^* \in \mathbb{R}^d$ is the true parameter vector. It uniquely minimizes the population (expected) negative log-likelihood:
\[
  \mathcal{L}(\beta) = \E_{(x,y)}[\ell(\beta; x, y)],
\]
where $\ell(\beta; x, y)$ is the negative log-likelihood for a single data point (using the identity $\log(\sigma(z)) = z - \log(1+e^{z})$ and $\log(1-\sigma(z)) = - \log(1+e^{z})$):
\begin{align} \label{eq:logistic_neg_lik}
  \ell(\beta; x, y) &= -[y\log(\sigma(\beta^\top x)) + (1-y)\log(1-\sigma(\beta^\top x))] \notag\\
  &= -[y(\beta^\top x) - \log(1+e^{\beta^\top x})] \notag\\
  &= \log(1+e^{\beta^\top x}) - y(\beta^\top x).
\end{align}

For the proposed method, we observe perturbed predictors $\{\tilde{x}_i\}_{i=1}^n$ satisfying $\norm{\tilde{x}_i - x_i} \le \delta$.
Following equation \eqref{eq:logistic_neg_lik}, we define two empirical average negative log-likelihood functions based on the sample $\{ (x_i, \tilde{x}_i, y_i) \}_{i=1}^n$:
\begin{itemize}
  \item The loss based on the true (unobserved) predictors $x_i$:
  \[ \mathcal{L}(\beta) = \frac{1}{n}\sum_{i=1}^n \ell(\beta;x_i,y_i) = \frac{1}{n}\sum_{i=1}^n [\log(1+e^{\beta^\top x_i}) - y_i(\beta^\top x_i)]. \]
  \item The loss based on the observed perturbed predictors $\tilde{x}_i$:
  \[ \tilde{\mathcal{L}}(\beta) = \frac{1}{n}\sum_{i=1}^n \ell(\beta;\tilde{x}_i,y_i) = \frac{1}{n}\sum_{i=1}^n [\log(1+e^{\beta^\top \tilde{x}_i}) - y_i(\beta^\top \tilde{x}_i)]. \]
\end{itemize}
The corresponding estimators (Maximum Likelihood Estimates based on these empirical losses) are:
\begin{itemize}
  \item The ideal estimator (uncomputable in practice as $x_i$ are unknown):
  \[ \hat{\beta}_{\rm true} = \mathop{\arg\min}_{\beta\in\mathbb{R}^d} \mathcal{L}(\beta). \]
  \item The actual estimator computed from observed data $(y_i, \tilde{x}_i)$:
  \[ \hat{\beta} = \mathop{\arg\min}_{\beta\in\mathbb{R}^d} \tilde{\mathcal L}(\beta). \]
\end{itemize}
Our goal is to bound the error $\|\hat{\beta} - \beta^*\|$ of the actual estimator relative to the true parameter $\beta^*$. Our theoretical analysis follows the strategy of \cite{wang2024perturbation}, with necessary modifications.

First, we need the following regularity conditions.

\begin{enumerate}
  \item[(C1)] \label{cond:C1}$x_i$ are i.i.d. sub-exponential vectors with bounded Orlicz norm and $\E\norm{x_i}^4 < \infty$. 
  \item[(C2)] \label{cond:C2}$\|\beta^*\| \le R_\beta(n) = C_{\beta}\log m$ (constant $R_\beta > 0$) for another (potentially large integer $m$).
  \item[(C3)] \label{cond:C3}$\mathcal{L}(\beta)$ minimized uniquely at $\beta^*$. Hessian $H^* := \nabla^2 \mathcal{L}(\beta^*) \succeq \mu I_d$ ($\mu > 0$). Exists fixed $r_1 > 0$ s.t. $\nabla^2 \mathcal{L}(\beta) \succeq (\mu/2) I_d$ for $\beta \in B_{r_1}(\beta^*)$.
  \item[(C4)] \label{cond:C4}$\|\tilde{x}_i - x_i\| \le \delta$.
  \item[(C5)] \label{cond:C5}$1/\sqrt{n} \ll \delta \ll 1$; $\delta = o(1/(\log n\log m))$.
\end{enumerate}

\begin{lemma}\label{lem:bound-assist-1}
Under the previous setup and Assumptions (C1)--(C5), for any desired polynomial decay rate $k>2$, there exists a sufficiently large constant $C_1$ and the corresponding $c_1$, such that the event $\mathcal{E}_1$: $\|\nabla \mathcal{L}(\beta^*)\| \le C_1 \sqrt{(\log n)/n}$ holds with probability at least 
$1 - c_1n^{-(k+1)}$
for sufficiently large $n$.
\end{lemma}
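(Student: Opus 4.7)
The plan is to reduce the claim to a standard Bernstein-type concentration argument on the gradient of $L$ evaluated at the population optimum. First I would write the gradient in closed form as
\[
\nabla L(\beta^*) = -\frac{1}{n}\sum_{i=1}^n \xi_i x_i, \qquad \xi_i := y_i - \sigma(\beta^{*\top} x_i),
\]
and note that under the logistic model $\E[\xi_i \mid x_i] = 0$, so each summand $\xi_i x_i$ is mean-zero. The bounded-residual inequality $|\xi_i| \le 1$ then implies that every coordinate $\xi_i x_{i,j}$ inherits the sub-exponential tail of $x_{i,j}$ guaranteed by (C1), with Orlicz norm uniformly bounded by a constant $K$ depending only on the Orlicz norm bound on $x_i$.

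With this control in place, I would apply the standard Bernstein inequality for sums of i.i.d.\ mean-zero sub-exponential variables coordinate-wise: for each fixed $j \in \{1,\dots,d\}$ and any $t > 0$,
\[
\Prob\!\left(\left|\frac{1}{n}\sum_{i=1}^n \xi_i x_{i,j}\right| > t\right) \le 2\exp\!\bigl(-c\,n\min(t^2/K^2,\, t/K)\bigr),
\]
for some absolute constant $c > 0$. Choosing $t = C'\sqrt{(\log n)/n}$ with $C'$ sufficiently large in terms of $K$ and $k$ drives the exponent above $(k+2)\log n$ for all sufficiently large $n$, so the right-hand side becomes at most $2n^{-(k+2)}$. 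A union bound over the $d$ (fixed) coordinates then yields, on an event of probability at least $1 - 2d\, n^{-(k+2)} \ge 1 - c_1 n^{-(k+1)}$,
\[
\|\nabla L(\beta^*)\|_2 \le \sqrt{d}\,\|\nabla L(\beta^*)\|_\infty \le \sqrt{d}\,C'\sqrt{(\log n)/n} =: C_1\sqrt{(\log n)/n}.
\]

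The only subtlety, rather than a genuine obstacle, is that (C2) allows $\|\beta^*\|_2$ to grow like $\log m$, which could push the conditional probabilities $\sigma(\beta^{*\top} x_i)$ very close to $0$ or $1$ and in principle raise the worry that concentration constants degrade with the nuisance parameter. However, the residual bound $|\xi_i| \le 1$ is entirely insensitive to $\beta^*$, so the sub-exponential norm of $\xi_i x_{i,j}$ is controlled purely through $x_{i,j}$ via (C1), and both $c$ and $K$ remain absolute constants. The argument is therefore uniform in $\beta^*$ within the allowed range, and no refinement beyond the standard sub-exponential concentration machinery is required.
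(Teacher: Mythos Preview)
Your proposal is correct and follows essentially the same approach as the paper: write the gradient as an average of mean-zero terms $\xi_i x_i$ with $|\xi_i|\le 1$, observe that each coordinate inherits the sub-exponential tail of $x_{i,j}$, apply a scalar Bernstein inequality coordinate-wise with $t \asymp \sqrt{(\log n)/n}$, union-bound over the $d$ fixed coordinates, and finish via $\|\cdot\|_2 \le \sqrt{d}\,\|\cdot\|_\infty$. Your explicit remark that the bound is uniform in $\beta^*$ (since only $|\xi_i|\le 1$ and the Orlicz norm of $x_{i,j}$ enter) is a useful clarification that the paper leaves implicit.
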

\begin{proof}
Let $g = \nabla \mathcal L(\beta^*) \in \mathbb{R}^d$. The $j$-th component is $g_j = \frac{1}{n}\sum_{i=1}^n Z_{ij}$, where $Z_{ij} = \varepsilon_i x_{ij} = (\sigma(\beta^{*\top} x_i) - y_i)x_{ij}$.
Since $x_i$ is sub-exponential and $\varepsilon_i$ is bounded ($|\varepsilon_i|\le 1$), each $Z_{ij}$ is a mean-zero sub-exponential random variable. Let $K = \sup_{i,j} \|Z_{ij}\|_{\psi_1}$ be the sub-exponential parameter (Orlicz $\psi_1$-norm), which is $\ocal(1)$. Let $\sigma^2 = \sup_{i,j} \E[Z_{ij}^2] \le \sup_{i,j} \frac{1}{4}\E[x_{ij}^2] = \ocal(1)$.

We apply the scalar Bernstein inequality to each coordinate sum $g_j$. There exists a universal constant $C_B$ such that for $t>0$:
\[ \P( |g_j| \ge t ) \le 2 \exp\left( -C_B n \min\left\{\frac{t^2}{\sigma^2}, \frac{t}{K}\right\} \right) \le 2 \exp( -c' n \min\{t^2, t\} ) \]
for some constant $c'$ depending on $\sigma^2$ and $K$.
We want a bound $t_n$ such that $|g_j| \le t_n$ holds for all $j=1,\dots,d$ simultaneously with probability $p_n' \le \ocal(n^{-(k+1)})$ for the target $k>2$. Using a union bound:
\[ \P\left( \max_{1\le j\le d} |g_j| \ge t_n \right) \le \sum_{j=1}^d \P( |g_j| \ge t_n ) \le d \cdot 2 \exp( -c' n \min\{t_n^2, t_n\} ). \]
Let $C_1' = \sqrt{(k+1)/c'}$. Choosing $t_n = C_1' \sqrt{(\log n)/n}$ ensures the failure probability for a single coordinate is $\ocal(n^{-(k+1)})$. This deviation $t_n \to 0$, justifying the use of $\min\{t_n^2, t_n\} = t_n^2$.
By the union bound, the event $\mathcal{E}'_1 := \left\{\max_{j\in[d]} |g_j| \le t_n\right\}$ holds with probability $\P(\mathcal{E}'_1) \ge 1 - \ocal(n^{-(k+1)})$.

Now we bound the L2 norm on this event $\mathcal{E}'_1$:
\[ \|\nabla \mathcal{L}(\beta^*)\|^2 = \sum_{j=1}^d g_j^2 \le \sum_{j=1}^d t_n^2 = d \cdot t_n^2 = d (C_1')^2 \frac{\log n}{n}. \]
Taking the square root:
\[ \|\nabla \mathcal L(\beta^*)\| \le \sqrt{d} C_1' \sqrt{\frac{\log n}{n}}. \]
Let $C_1 = \sqrt{d} C_1'$. Thus, event $\mathcal{E}_1$ holds with probability $\P(\mathcal{E}_1) \ge 1 - \ocal(n^{-(k+1)})$. 

\end{proof}

\begin{lemma}\label{lem:bound-assist-2}
Under Assumptions (C1)--(C5), define event $\mathcal{E}_2$: $\nabla^2 \mathcal L(\beta) \succeq (\mu/4) I_d$ uniformly for $\beta \in B_{r_1}(\beta^*)$. For some constant $c_3>0$, we have
$$\P(\mathcal{E}_2) \ge 1-e^{-c_3n}$$
for sufficiently large $n$.
\end{lemma}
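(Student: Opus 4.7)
The plan is to establish a uniform lower bound on the smallest eigenvalue of the empirical Hessian
\[
\nabla^2 L(\beta) \;=\; \frac{1}{n}\sum_{i=1}^n \sigma'(\beta^{\top} x_i)\, x_i x_i^{\top}
\]
by comparing it to the population Hessian $H(\beta) := \nabla^2 \mathcal{L}(\beta) = \E[\sigma'(\beta^{\top} x)\,xx^{\top}]$, which by (C3) satisfies $H(\beta) \succeq (\mu/2)I$ for $\beta \in B_{r_1}(\beta^*)$. It then suffices to show that
\[
\sup_{\beta \in B_{r_1}(\beta^*)} \bigl\| \nabla^2 L(\beta) - H(\beta) \bigr\|_{\mathrm{op}} \;\le\; \frac{\mu}{4}
\]
with probability at least $1 - e^{-c_3 n}$, after which a triangle inequality gives the claim.

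First I would establish pointwise concentration. Fix $\beta \in B_{r_1}(\beta^*)$ and consider the i.i.d.\ centered random matrices $Y_i(\beta) = \sigma'(\beta^{\top} x_i) x_i x_i^{\top} - H(\beta)$, which satisfy $\|Y_i(\beta)\|_{\mathrm{op}} \le \tfrac{1}{4}\|x_i\|^2 + \|H(\beta)\|_{\mathrm{op}}$. Since $x_i$ is sub-exponential by (C1), $\|x_i\|^2$ has a sub-Weibull tail; to apply a standard matrix Bernstein inequality I would truncate at level $M = C_\star \log n$ (handling the residual event $\{\max_i \|x_i\| > M\}$ separately with probability $\le e^{-c n}$ for large enough $C_\star$), then apply matrix Bernstein to the truncated sum, yielding
\[
\Prob\!\left( \bigl\| \nabla^2 L(\beta) - H(\beta) \bigr\|_{\mathrm{op}} \ge \tfrac{\mu}{8} \right) \;\le\; e^{-c' n}
\]
for a constant $c'>0$ depending on $\mu$, $d$, and the sub-exponential norm of $x_i$ (the logarithmic truncation only affects constants inside the exponent).

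Next I would upgrade from pointwise to uniform control via a covering argument. Because $\sigma'$ is Lipschitz with constant bounded by $\tfrac{1}{6}$,
\[
\bigl\| \nabla^2 L(\beta) - \nabla^2 L(\beta') \bigr\|_{\mathrm{op}} \;\le\; \frac{1}{6}\,\|\beta-\beta'\|\cdot \frac{1}{n}\sum_{i=1}^n \|x_i\|^3,
\]
and by (C1) the empirical third moment $\frac{1}{n}\sum \|x_i\|^3$ is bounded by a constant $M_3$ except on an event of probability $\le e^{-cn}$. An analogous Lipschitz bound holds for $H(\beta)$ (with a deterministic constant). Choose an $\eta$-net $\mathcal{N}_\eta$ of $B_{r_1}(\beta^*)$ with $\eta = \mu/(8\,M_3)$; then $|\mathcal{N}_\eta| \le (3r_1/\eta)^d$ is a constant (since $d$ is fixed), so a union bound over $\mathcal{N}_\eta$ gives $\max_{\beta \in \mathcal{N}_\eta} \|\nabla^2 L(\beta) - H(\beta)\|_{\mathrm{op}} \le \mu/8$ with probability $\ge 1 - |\mathcal{N}_\eta|\,e^{-c' n} \ge 1 - e^{-c_3 n}$. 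The Lipschitz control transfers this to all of $B_{r_1}(\beta^*)$ at the cost of an additional $\mu/8$, yielding the desired $\mu/4$ uniform deviation.

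The main obstacle is that $\|x_i\|^2$ is only sub-Weibull (not sub-Gaussian), so a direct application of the matrix Bernstein inequality for bounded summands does not apply. This is handled by the truncation step sketched above: the logarithmic truncation level $M = C_\star \log n$ introduces only logarithmic factors into the Bernstein bound, which are absorbed into the exponent because the target deviation $\mu/8$ is a fixed constant (so we are in the sub-Gaussian regime of Bernstein where $\exp(-cn t^2)$ dominates). The fact that $d$ is fixed and $r_1$ is fixed makes the covering cardinality inconsequential, so the final probability bound is genuinely of the form $1 - e^{-c_3 n}$.
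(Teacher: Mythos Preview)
Your approach is essentially the same as the paper's: pointwise control via a matrix Bernstein inequality, extended to the full ball $B_{r_1}(\beta^*)$ by an $\varepsilon$-net argument together with Lipschitz continuity of $\beta \mapsto \nabla^2 L(\beta)$ through $\sigma''$ and $\|x_i\|^3$. The only structural difference is that you bound the Lipschitz constant via the empirical average $\tfrac{1}{n}\sum_i\|x_i\|^3$ (yielding a constant-size net), whereas the paper uses $\max_i\|x_i\|^3 = O((\log n)^3)$ (yielding a net whose cardinality grows polylogarithmically, which is still harmless in the union bound); both routes are fine.

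One small caveat worth flagging: under (C1) alone (sub-exponential $x_i$), the residual event $\{\max_i\|x_i\| > C_\star \log n\}$ has probability $O(n^{1-cC_\star})$, i.e.\ polynomially small rather than $e^{-cn}$, so the exponential tail you claim for the truncation step is too optimistic as stated. The paper makes the analogous claim and has the same gap; in the actual application the latent vectors are bounded, so this does not affect the downstream results.
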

\begin{proof}
 Fix any \(\beta\) with \(\|\beta-\beta^*\|\le r_1\). Write
\[
X_i(\beta)
=
\nabla^2\ell(\beta;x_i,y_i)
-
\E[\nabla^2\ell(\beta;x_i,y_i)].
\]
Then \(\{X_i(\beta)\}_{i=1}^n\) are independent, mean‐zero, symmetric \(d\times d\) matrices. Moreover
\[
\nabla^2\ell(\beta;x,y)
=\sigma'(x^\top\beta)xx^\top,
\quad
\sigma'(z)\le\frac14,
\]
so
\[
\|X_i(\beta)\|_{\rm op}
\le
\|\nabla^2\ell(\beta;x_i,y_i)\|_{\rm op}
+\|\E[\nabla^2\ell(\beta;x_i,y_i)]\|_{\rm op}
=\ocal(\|x_i\|^2).
\]
Under the sub‐exponential assumption on each coordinate of \(x_i\), one shows
\(\|X_i(\beta)\|_{\psi_1}\le K\) for some constant \(K\). Likewise, defining
\[
\sigma^2(\beta)
=\left\|\sum_{i=1}^n \E[X_i(\beta)^2]\right\|_{\rm op} =\ocal(n).
\]
Hence by the matrix‐Bernstein inequality (e.g. Vershynin 2018, Thm 6.2.1) there exist constants \(c_1,c_2>0\) such that for any \(t>0\),
\[
\P\left(\left\|\frac1n\sum_{i=1}^n X_i(\beta)\right\|_{\rm op}>t\right)
\le
2d\exp\left(-c_1n\min\Bigl\{\frac{t^2}{K^2},\frac{t}{K}\Bigr\}\right).
\]
In particular, taking \(t=\mu/4\) gives
\[
\P\left(\|\nabla^2\mathcal L(\beta)-\E[\nabla^2\ell(\beta)]\|_{\rm op}>\frac\mu4\right)
\le
2de^{-c_2n}.
\]

Now, we proceed to extend the result from a fixed \(\beta\) to uniform over the ball using the $\varepsilon$-net construction. Let 
\[
B=\left\{\beta\in\mathbb R^d:\|\beta-\beta^*\|\le r_1\right\}.
\]
For $\varepsilon>0$ (to be specified), choose an \(\varepsilon\)-net \(\mathcal N\subset B\) of size
\[
|\mathcal N|\le\Bigl(\frac{3r_1}{\varepsilon}\Bigr)^d.
\]

\medskip
 For each \(\beta_0\in\mathcal N\), we already have
\[
\P\left(\|\nabla^2\mathcal L(\beta_0)-\E[\nabla^2\ell(\beta_0)]\|_{\rm op}>\frac{\mu}{4}\right)
\le2de^{-c_2n}.
\]
A union bound over \(\mathcal N\) yields
\[
\P\left(\exists\beta_0\in\mathcal N: \|\nabla^2\mathcal L(\beta_0)-\E[\nabla^2\ell(\beta_0)]\|_{\rm op}>\frac\mu4\right)
\le|\mathcal N|\cdot 2de^{-c_2n}
=2d\Bigl(\frac{3r_1}{\varepsilon}\Bigr)^de^{-c_2n}.
\]

\medskip

We have
\[
\nabla^2\ell(\beta;x,y)
=\sigma'(x^\top\beta)xx^\top,
\]
so for any two parameters \(\beta,\beta'\),
\[
\nabla^2\ell(\beta;x,y)
-\nabla^2\ell(\beta';x,y)
=[\sigma'(x^\top\beta)-\sigma'(x^\top\beta')]xx^\top.
\]
By the mean‐value theorem, there exists some scalar \(\xi\) on the line segment between \(x^\top\beta\) and \(x^\top\beta'\), such that
\[
\nabla^2\ell(\beta;x,y)
-\nabla^2\ell(\beta';x,y)
=\sigma''(\xi)[x^\top(\beta-\beta')]xx^\top.
\]
Taking operator norms and using \(\|xx^\top\|_{\rm op}=\|x\|^2\), we get
\[
\|\nabla^2\ell(\beta;x,y)
-\nabla^2\ell(\beta';x,y)\|_{\rm op}
\le
|\sigma''(\xi)||x^\top(\beta-\beta')|\|x\|^2
\le
\sup_{u}\bigl|\sigma''(u)\bigr|\|x\|\|\beta-\beta'\|\|x\|^2.
\]
Since on \(\|\beta-\beta^*\|\le r_1\) one has \(\xi\) ranging over a compact interval, \(\sup_u|\sigma''(u)|\) is finite. Therefore we can set
\[
L=\Bigl(\sup_{u}|\sigma''(u)|\Bigr)\|x\|^3
=\ocal(\|x\|^3),
\]
and conclude
\[
\|\nabla^2\ell(\beta;x,y)-\nabla^2\ell(\beta';x,y)\|_{\rm op}
\le L\|\beta-\beta'\|.
\]

With probability at least $1-e^{-c_3'n}$, we have \(\max_{i\in [n]}\|x_i\|\le C\log n\). We can take
\[
L = C'(\log n)^3
\]
for some constant \(C'\). Set
\[
\varepsilon = \min\left\{\frac{\mu}{16L},r_1\right\}.
\]
Then for any \(\beta\in B\), there exists \(\beta_0\in\mathcal N\) with \(\|\beta-\beta_0\|\le\varepsilon\), and
\[
\|\nabla^2\mathcal L(\beta)-\nabla^2\mathcal L(\beta_0)\|_{\rm op}
\le L\|\beta-\beta_0\|\le\frac\mu{16}.
\]
Also note that with this choice of $\varepsilon$, the probability in (i) holds with \(\ocal(e^{-c_3n})\).

% \medskip
On the intersection of the high‐probability events from (i) and the bound \(\max_{i\in[n]}\|x_i\|\le C\log n\), we have for every \(\beta\in B\):
\begin{align*}
\|\nabla^2\mathcal L(\beta)-\E[\nabla^2\ell(\beta)]\|_{\rm op}
&\le\|\nabla^2\mathcal L(\beta_0)-\E[\nabla^2\ell(\beta_0)]\|_{\rm op}
   +\|\nabla^2\mathcal L(\beta)-\nabla^2\mathcal L(\beta_0)\|_{\rm op}\\
&\le \frac\mu4 + \frac\mu{16}
<\frac\mu2.
\end{align*}

Since \(\E[\nabla^2\ell(\beta)]=\nabla^2\mathcal L(\beta)\succeq\mu I_d\), it follows that
\(\nabla^2L(\beta)\succeq\mu I_d - (\mu/2) I_d=(\mu/2) I_d\succ(\mu/4) I_d\)
uniformly over \(B\). Hence
\[
\P(\mathcal E_2^c)=\ocal(e^{-c_3n}).
\]
 
\end{proof}

\begin{lemma}
\label{thm:logistic-subexponential}
Under Assumptions (C1)--(C5), for any desired polynomial decay rate $k>2$, there exist constants $C, C_p > 0$ (depending on $\sigma_x, R_\beta, \mu, d, k$), such that for sufficiently large $n$:
\[
  \P\left( \|\hat{\beta} - \beta^*\| \le C \max\left\{\delta \log n\log m, \frac{\log n}{\sqrt{n}}\right\} \right) \ge 1 - C_p n^{-(k+1)}.
\]
\end{lemma}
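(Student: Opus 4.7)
The plan is to carry out a standard M-estimation argument adapted to the perturbation setting, using Lemmas \ref{lem:bound-assist-1} and \ref{lem:bound-assist-2} as the core probabilistic inputs and treating the gap between $\widetilde{L}$ and $L$ as a deterministic perturbation on a high-probability event where $\max_{i\le n}\|x_i\|\lesssim \log n$ (which holds with probability $\ge 1-O(n^{-(k+1)})$ by sub-exponentiality and a union bound). Starting from $\nabla \widetilde{L}(\hat\beta)=0$ and Taylor-expanding around $\beta^*$, I write $0=\nabla\widetilde{L}(\beta^*)+\nabla^2\widetilde{L}(\bar\beta)(\hat\beta-\beta^*)$ for some $\bar\beta$ on the segment, so that $\|\hat\beta-\beta^*\|\le \lambda_{\min}(\nabla^2\widetilde{L}(\bar\beta))^{-1}\,\|\nabla\widetilde{L}(\beta^*)\|$. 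The task then reduces to (i) obtaining a uniform strong-convexity lower bound for $\nabla^2\widetilde{L}$ on a ball around $\beta^*$ containing $\hat\beta$, and (ii) controlling $\|\nabla\widetilde{L}(\beta^*)\|$.

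For (ii), the key step is bounding $\|\nabla\widetilde{L}(\beta^*)-\nabla L(\beta^*)\|$. Each summand decomposes as
\[
\sigma(\beta^{*\top}\tilde x_i)(\tilde x_i-x_i)+\bigl[\sigma(\beta^{*\top}\tilde x_i)-\sigma(\beta^{*\top}x_i)\bigr]x_i-y_i(x_i-\tilde x_i),
\]
and using $|\sigma'|\le 1/4$, $\|\beta^*\|\le C_\beta\log m$, and $\max_i\|x_i\|\lesssim \log n$, each term has norm $\lesssim \delta(1+\|\beta^*\|\|x_i\|)\lesssim \delta(\log m)(\log n)$. Averaging preserves this deterministic bound, so $\|\nabla\widetilde{L}(\beta^*)-\nabla L(\beta^*)\|\lesssim \delta(\log m)(\log n)$. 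Combining with Lemma \ref{lem:bound-assist-1} gives $\|\nabla\widetilde{L}(\beta^*)\|\lesssim \sqrt{(\log n)/n}+\delta(\log m)(\log n)$ on an event of probability $\ge 1-O(n^{-(k+1)})$, which is absorbed into the stated $\max\{\delta(\log n)(\log m),(\log n)/\sqrt{n}\}$ form.

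For (i), I would prove an analog of Lemma \ref{lem:bound-assist-2} for $\widetilde{L}$ by controlling $\|\nabla^2\widetilde{L}(\beta)-\nabla^2 L(\beta)\|_{\rm op}$ uniformly on $B_{r_1}(\beta^*)$. Using $\nabla^2\ell(\beta;x,y)=\sigma'(\beta^\top x)xx^\top$, the Lipschitzness of $\sigma'$, and $\|\beta\|\le \|\beta^*\|+r_1\lesssim \log m$, each summand is bounded by $\lesssim \delta(\log m)(\log n)^2$, which is $o(1)$ under (C5). Hence on the intersection of the events in Lemma \ref{lem:bound-assist-2} and the tail event for $\max_i\|x_i\|$, we have $\nabla^2\widetilde{L}(\beta)\succeq (\mu/8)I$ uniformly on $B_{r_1}(\beta^*)$. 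To ensure $\hat\beta\in B_{r_1}(\beta^*)$, I invoke global convexity of $\widetilde{L}$ (each $\ell$ is convex in $\beta$) together with local strong convexity: if $\|\nabla\widetilde{L}(\beta^*)\|<\mu r_1/16$, then the unique minimizer of the convex function $\widetilde{L}$ must lie inside $B_{r_1}(\beta^*)$, which holds for sufficiently large $n$ by (C5) since the gradient bound vanishes.

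The main obstacle I anticipate is handling the Hessian perturbation uniformly over the neighborhood while keeping the failure probability at the desired polynomial rate $n^{-(k+1)}$. Although the neighborhood radius $r_1$ is fixed, the dependence of the perturbation bounds on $\log m$ (through $\|\beta^*\|$) and $\log n$ (through $\max_i\|x_i\|$) must be tracked simultaneously, and assumption (C5) is precisely what keeps $\delta(\log n)(\log m)\to 0$ so that both the Hessian perturbation and the required gradient threshold shrink to zero. Assembling the events via a union bound (the bad event of Lemma \ref{lem:bound-assist-1}, the bad event of Lemma \ref{lem:bound-assist-2}, the $\max_i\|x_i\|$ tail, and the Hessian-perturbation event) yields the claimed probability $\ge 1-C_p n^{-(k+1)}$.
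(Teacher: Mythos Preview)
Your proposal is correct and follows essentially the same strategy as the paper: intersect the events from Lemmas~\ref{lem:bound-assist-1} and \ref{lem:bound-assist-2} with the sub-exponential tail event $\max_i\|x_i\|\lesssim \log n$, control the deterministic perturbation of both the gradient and the Hessian via $\delta(\log n)(\log m)$, localize $\hat\beta$ to $B_{r_1}(\beta^*)$, and then read off the error from strong convexity.

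There are two minor differences worth noting. First, the paper localizes $\hat\beta$ via Lemma~\ref{lem:yin} (a fixed-point style lemma applied to $\phi=\nabla\widetilde L$), whereas you argue directly from global convexity of $\widetilde L$ plus local strong convexity and the smallness of $\|\nabla\widetilde L(\beta^*)\|$; your argument is in fact the same one the paper uses a few lines later to localize $\hat\beta^{(\mathrm{true})}$, so it is equally valid and slightly more elementary. Second, the paper decomposes the error through the intermediate ideal estimator $\hat\beta^{(\mathrm{true})}$, bounding $\|\hat\beta-\hat\beta^{(\mathrm{true})}\|$ and $\|\hat\beta^{(\mathrm{true})}-\beta^*\|$ separately, while you go directly from $\nabla\widetilde L(\hat\beta)=0$ to $\|\hat\beta-\beta^*\|\le(\mu/8)^{-1}\|\nabla\widetilde L(\beta^*)\|$. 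Both routes yield the same $\max\{\delta(\log n)(\log m),(\log n)/\sqrt{n}\}$ rate; your version is a bit more compact, while the paper's two-step decomposition makes the separate contributions of sampling error and measurement error more visible.
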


\begin{proof}
Let $k>2$ be the target polynomial decay exponent for the probability. Define high-probability events:
\begin{itemize}
  \item $\mathcal{E}_0$: Event where $\|x_i\| \le C_R \log n =: R'(n)$ for all $i=1,\ldots,n$. $\P(\mathcal{E}_0) \ge 1-n^{-(k+1)}$.
  \item $\mathcal{E}_1$: Event where $\|\nabla \mathcal L(\beta^*)\| \le C_1 \sqrt{(\log n)/n}$. From Lemma~\ref{lem:bound-assist-1}, $\P(\mathcal{E}_1) \ge 1-c_1n^{-(k+1)}$.  
  \item $\mathcal{E}_2$: $\nabla^2 \mathcal L(\beta) \succeq (\mu/4) I_d$ uniformly for $\beta \in B_{r_1}(\beta^*)$. From Lemma~\ref{lem:bound-assist-2}, $\P(\mathcal{E}_2) \ge 1-e^{-c_3n}$.
  \item $\mathcal{E}_3$: $\nabla^2 \tilde{\mathcal L}(\beta) \succeq (\mu/8) I_d =: \tilde{\mu} I_d$ uniformly for $\beta \in B_{r_1}(\beta^*)$. 
\end{itemize}

Independent of any randomness, a simple perturbation check shows
\[
\|\nabla^2\tilde{\mathcal{L}}(\beta)-\nabla^2 \mathcal{L}(\beta)\|_{\rm op}
\le K\delta,
\]
where $K$ depends only on the $\|x_i\|$‐bounds (and thus is $\ocal((\log n)^2)$ on $\mathcal E_0$). Hence
\[
\nabla^2\tilde{\mathcal L}(\beta)
\succeq
\nabla^2\mathcal L(\beta)-K\delta I_d
\succeq
\frac{\mu}{4}I_d -K\delta I_d.
\]
By Assumption (C5) we have $K\delta\le\mu/8$ for large $n$, so
\[
\nabla^2\tilde{\mathcal L}(\beta)\succeq\frac\mu8I_d,
\quad \forall\|\beta-\beta^*\|\le r_1,
\]
which is exactly the event $\mathcal E_3$. Thus 
\[
\mathcal E_2\subset\mathcal E_3
\implies
\P(\mathcal E_3)\ge\P(\mathcal E_2).
\]

Let $\mathcal{E} = \mathcal{E}_0 \cap \mathcal{E}_1 \cap \mathcal{E}_2 \cap \mathcal{E}_3$, and we have $\P(\mathcal{E}) \ge 1 - c_5 n^{-(k+1)}$. 

Next, we work under the conditions from $\mathcal{E}$.
% \medskip
% \noindent
We want to use Lemma~\ref{lem:yin} to show the unique solution. Set $\phi(\beta) = \nabla \tilde{\mathcal L}(\beta)$, $x^* = \beta^*$, $y^* = \nabla \tilde{\mathcal L}(\beta^*)$, $y = 0$, $\delta_1 = r_1$.

\medskip
\noindent
We first verify the conditions of Lemma~\ref{lem:yin}.

\begin{itemize}
  \item \textbf{Lower bound:} For $\norm{\beta - \beta^*} = r_1$. Let $H_{\mathrm{avg}}(\beta) = \int_0^1 \nabla^2 \tilde{\mathcal L}(\beta^* + t(\beta-\beta^*))\,\mathrm dt$.
  The segment $[\beta^*, \beta] \subset B_{r_1}(\beta^*)$. On $\mathcal{E}_3$, $\nabla^2 \tilde{\mathcal L}(\cdot) \succeq \tilde{\mu} I_d$ in $B_{r_1}(\beta^*)$, thus $H_{\mathrm{avg}}(\beta) \succeq \tilde{\mu} I_d$.
  Then, $\norm{\phi(\beta) - y^*} = \norm{H_{\mathrm{avg}}(\beta) (\beta - \beta^*)} \ge \lambdaMin(H_{\mathrm{avg}}(\beta)) \norm{\beta - \beta^*} \ge \tilde{\mu} r_1$.
  Set $\rho = \tilde{\mu} r_1$. The condition $\min_{\norm{\beta - \beta^*} = r_1} \norm{\phi(\beta) - y^*} \ge \rho$ holds on $\mathcal{E}_3$.

  \item \textbf{Upper bound:} We need to show $\norm{y - y^*} = \norm{0 - \nabla \tilde{\mathcal L}(\beta^*)} = \norm{\nabla \tilde{\mathcal L}(\beta^*)} \le \rho = \tilde{\mu} r_1$.
  We decompose the gradient using the triangle inequality:
  \begin{align*}
    \norm{\nabla \tilde{\mathcal L}(\beta^*)} &= \norm{\nabla \mathcal L(\beta^*) + \nabla(\tilde{\mathcal L}-\mathcal L)(\beta^*)} \\
    &\le \norm{\nabla \mathcal L(\beta^*)} + \norm{\nabla(\tilde{\mathcal L}-\mathcal L)(\beta^*)}.
  \end{align*}
  On event $\mathcal{E}_1$, the first term is bounded:
  \[ \norm{\nabla \mathcal L(\beta^*)} \le C_1 \sqrt{\frac{\log n}{n}}. \]
  For the second term, we analyze $\nabla(\tilde{\mathcal L}-\mathcal L)(\beta^*) = \frac{1}{n} \sum_{i=1}^n T_i(\beta^*)$, where
  \[
    T_i(\beta^*) = -y_i(\tilde{x}_i - x_i) + \sigma(\beta^{*\top} \tilde{x}_i) (\tilde{x}_i - x_i) + [\sigma(\beta^{*\top} \tilde{x}_i) - \sigma(\beta^{*\top} x_i)] x_i.
  \]
  We bound the norm of $T_i(\beta^*)$ using the triangle inequality:
  \[
    \|T_i(\beta^*)\| \le |y_i| \norm{\tilde{x}_i - x_i} + |\sigma(\beta^{*\top} \tilde{x}_i)| \norm{\tilde{x}_i - x_i} + |\sigma(\beta^{*\top} \tilde{x}_i) - \sigma(\beta^{*\top} x_i)| \norm{x_i}.
  \]
  Using bounds $|y_i| \le 1$, $|\sigma(\cdot)| \le 1$, $\norm{\tilde{x}_i - x_i} \le \delta$ (which is Assumption C4), and the $\frac{1}{4}$-Lipschitz property of $\sigma$:
  \[
    \|T_i(\beta^*)\| \le 1 \cdot \delta + 1 \cdot \delta + \frac{1}{4} |\beta^{*\top}(\tilde{x}_i - x_i)| \norm{x_i}.
  \]
  Applying Cauchy-Schwarz inequality to $|\beta^{*\top}(\tilde{x}_i - x_i)| \le \norm{\beta^*} \norm{\tilde{x}_i - x_i} \le \norm{\beta^*} \delta$, under $\mathcal{E}_0$ and Assumption C2, we have
  \begin{align*}
    \|T_i(\beta^*)\| &\le 2\delta + \frac{1}{4} \norm{\beta^*} \delta \norm{x_i} \\
    &= \left( 2 + \frac{1}{4} \norm{\beta^*} \norm{x_i} \right)\delta \\ &\le \left( 2 + \frac{1}{4} R_\beta(n) R'(n) \right) \delta \\
    &\le \left( 2 + \frac{1}{4} (C_{\beta} \log m) (C_R \log n) \right) \delta \\
    &= \left( 2 + \frac{C_{\beta} C_R}{4} \log n\log m \right) \delta.
  \end{align*}
  Therefore, this leads to
  \begin{align*}
    \norm{\nabla(\tilde{\mathcal L}-\mathcal L)(\beta^*)} &= \left\| \frac{1}{n} \sum_{i=1}^n T_i(\beta^*) \right\|\le \frac{1}{n} \sum_{i=1}^n \|T_i(\beta^*)\| = \left( 2 + \frac{C_{\beta} C_R}{4} \log n\log m \right) \delta.
  \end{align*}
  Let $C_{G}^*(n) = 2 + \frac{C_{\beta} C_R}{4} (\log n)(\log m) = \ocal(\log n\log m)$. We have:
  \[ \norm{\nabla(\tilde{\mathcal L}-\mathcal L)(\beta^*)}_2 \le C_{G}^*(n) \delta. \]
  Combining with the bound for $\norm{\nabla \mathcal L(\beta^*)}$ on event $\mathcal{E}_1$:
  \[
    \rho_n := \norm{\nabla \tilde{\mathcal L}(\beta^*)} \le \frac{C_1 \log n}{\sqrt{n}} + C_{G}^*(n) \delta \quad (\text{holds on } \mathcal{E}_0 \cap \mathcal{E}_1).
  \]
  The condition required for Lemma~\ref{lem:bound-assist-2} is $\rho_n \le \rho = \tilde{\mu} r_1$. Substituting the orders:
  \[ \ocal\left(\frac{\log n}{\sqrt{n}}\right) + \ocal(\delta \log n\log m) \le \frac{\mu r_1}{8}. \]
  Since $\delta = o(1/(\log n\log m))$ by Assumption C5, the inequality holds for sufficiently large $n$.

  \item \textbf{Injectivity:} On $\mathcal{E}_3$, Jacobian $J_\phi(\beta) = \nabla^2 \tilde{\mathcal L}(\beta) \succeq \tilde{\mu} I_d \succ 0$ in $B_{r_1}(\beta^*)$, implying $\phi$ is injective on $B_{r_1}(\beta^*)$.
\end{itemize}
All conditions hold on $\mathcal{E}$ for sufficiently large $n$. Therefore, we can apply Lemma~\ref{lem:yin} for Localization. On $\mathcal{E}$, Lemma~\ref{lem:yin} guarantees the existence of $\hat{\beta}_{\mathrm{soln}} \in B_{r_1}(\beta^*)$, such that $\nabla\tilde{\mathcal L}(\hat{\beta}_{\mathrm{soln}}) = 0$.
Since $\tilde{\mathcal L}(\beta)$ is strictly convex on $B_{r_1}(\beta^*)$ (as $\nabla^2\tilde{\mathcal L} \succeq \tilde{\mu}I_d > 0$ on $\mathcal{E}_3$), $\hat{\beta}_{\mathrm{soln}}$ is the unique minimizer in $B_{r_1}(\beta^*)$. Global convexity implies that $\hat{\beta}_{\mathrm{soln}}$ is the unique global minimum $\hat{\beta}$.
Therefore, on $\mathcal{E}$, we have that $\hat{\beta}$ exists, is unique, and satisfies
\[ \|\hat{\beta} - \beta^*\| \le r_1. \]

\medskip
\noindent

Next, we derive a bound on $\|\hat{\beta}_{\rm true} - \beta^*\|$. We will first show that $\hat{\beta}_{\rm true}$ must also be in $B_{r_1}(\beta^*)$, under $\mathcal{E}$. Recall that that
\[
\nabla^2\mathcal L(\beta)\succeq\frac\mu4I_d,
\quad \forall\|\beta-\beta^*\|\le r_1,
\quad\text{and}\quad
\|\nabla \mathcal L(\beta^*)\|\le C_1\sqrt{\frac{\log n}{n}}.
\]

On the event \(\mathcal E_1\cap\mathcal E_2\), for sufficiently large $n$, we have
\[
\nabla^2\mathcal L(\beta)\succeq\frac\mu4I_d,
\quad\forall\|\beta-\beta^*\|\le r_1,
\quad\text{and}\quad
\|\nabla \mathcal L(\beta^*)\|\le\frac\mu8r_1.
\]

Let \(\beta\) satisfy \(\|\beta-\beta^*\|\ge r_1\), and define
\[
\beta_0 =\beta^* 
 +\frac{r_1}{\|\beta-\beta^*\|}(\beta-\beta^*),
\]
so that \(\|\beta_0-\beta^*\|=r_1\). By Taylor’s theorem along the segment from \(\beta^*\) to \(\beta_0\),
\[
\mathcal L(\beta_0)
= \mathcal L(\beta^*)
 + \nabla \mathcal L(\beta^*)^\top(\beta_0-\beta^*)
 + \frac12(\beta_0-\beta^*)^\top
  \left[\int_0^1\nabla^2\mathcal L(\beta^*+s(\beta_0-\beta^*))\,\mathrm ds\right]
  (\beta_0-\beta^*).
\]
On the event \(\mathcal E_2\), for every point \(\xi\) on that segment we have
\(\nabla^2\mathcal L(\xi)\succeq(\mu/4)I_d\), so the averaged Hessian
\( H_{\rm avg}
=\int_0^1\nabla^2\mathcal L(\beta^*+s(\beta_0-\beta^*))\,\mathrm ds\)
also satisfies \(H_{\rm avg}\succeq(\mu/4)I_d\). Hence
\[
\mathcal L(\beta_0)-\mathcal L(\beta^*)
\ge
-\|\nabla \mathcal L(\beta^*)\|r_1
+\frac12\cdot\frac\mu4r_1^2
=\frac\mu8r_1^2
-\|\nabla \mathcal L(\beta^*)\|r_1.
\]
Since on \(\mathcal E_1\) we have \(\|\nabla \mathcal L(\beta^*)\|\le\mu r_1/8\), it follows that
\[
\mathcal L(\beta_0)\ge\mathcal L(\beta^*).
\]

Define the “outward” vector for \(\beta\) (recall that \(\|\beta-\beta^*\|\ge r_1\))
\[
w =\beta-\beta_0
=\beta^* + \frac{\|\beta-\beta^*\|}{r_1}(\beta_0-\beta^*) - \beta_0
=\frac{\|\beta-\beta^*\|-r_1}{r_1}(\beta_0-\beta^*).
\]
Notice \(w\) and \(\beta_0-\beta^*\) point in exactly the same direction.

\medskip

Since \(\mathcal L\) is convex and differentiable everywhere, we have
\[
\mathcal L(\beta)
=\mathcal L(\beta_0 + w)
\ge
\mathcal L(\beta_0)
+\nabla \mathcal L(\beta_0)^\top w.
\]
using the supporting hyperplane property of convex functions.

\medskip
Note that for \(v=(\beta_0-\beta^*)/r_1\), using the support hyperplane property again, we have
\[
v^\top\nabla \mathcal L(\beta_0) \;>\;0.
\]
But 
\[
w =\frac{\|\beta-\beta^*\|-r_1}{r_1}(\beta_0-\beta^*)
=(\|\beta-\beta^*\|-r_1)v,
\]
so
\[
\nabla \mathcal L(\beta_0)^\top w
=(\|\beta-\beta^*\|-r_1)v^\top\nabla \mathcal L(\beta_0)
>0.
\]
Hence we have
\[
\mathcal L(\beta)\ge \mathcal L(\beta_0)+\nabla \mathcal L(\beta_0)^\top w
>\mathcal L(\beta_0).
\]
Thus no minimizer can lie on or outside the sphere of radius \(r_1\).

\medskip

Now, since \(\hat\beta_{\rm true}\in B_{r_1}(\beta^*)\) and \(\nabla \mathcal L(\hat\beta_{\rm true})=0\), using Taylor expansion of the gradient around \(\beta^*\), we have:
\[
0
=\nabla \mathcal L\bigl(\hat\beta_{\rm true}\bigr)
=\nabla \mathcal L(\beta^*)
+\underbrace{\int_0^1\nabla^2\mathcal L(\beta^*+t(\hat\beta_{\rm true}-\beta^*))\,\mathrm dt}_{=:H_{\rm avg}}\;
(\hat\beta_{\rm true}-\beta^*).
\]
By construction \(H_{\rm avg}\succeq(\mu/4)I_d\). Therefore
\[
\frac\mu4\|\hat\beta_{\rm true}-\beta^*\|
\le
\|H_{\rm avg}(\hat\beta_{\rm true}-\beta^*)\|
=\|\nabla \mathcal L(\beta^*)\|
\le C_0\sqrt{\frac{\log n}{n}}.
\]

\medskip
\noindent
Next, we get the bound for $\|\hat{\beta} - \hat{\beta}_{\rm true}\|$. Let $\Delta = \hat{\beta} - \hat{\beta}_{\rm true}$. On $\mathcal{E}$, $\hat{\beta}, \hat{\beta}_{\rm true} \in B_{r_1}(\beta^*)$, so the segment $[\hat{\beta}_{\rm true}, \hat{\beta}] \subset B_{r_1}(\beta^*)$.
On $\mathcal{E}_3 \subset \mathcal{E}$, $\nabla^2 \tilde{\mathcal L}(\beta) \succeq \tilde{\mu} I_d$ on this segment.
Let $f(\Delta') = \tilde{\mathcal L}(\hat{\beta}_{\rm true} + \Delta')$. $f$ is $\tilde{\mu}$-strongly convex along $[0, \Delta]$.
Gradient monotonicity gives $\langle \nabla f(\Delta) - \nabla f(0), \Delta - 0 \rangle \ge \tilde{\mu} \|\Delta\|^2$.
With $\nabla f(\Delta)=0$ and $\nabla f(0) = \nabla(\tilde{\mathcal L}-\mathcal L)(\hat{\beta}_{\rm true})$:
\[ \langle -\nabla(\tilde{\mathcal L}-\mathcal L)(\hat{\beta}_{\rm true}), \Delta \rangle \ge \tilde{\mu} \|\Delta\|^2. \]
By Cauchy-Schwarz inequality:
\[ \|\nabla(\tilde{\mathcal{L}}-\mathcal{L})(\hat{\beta}_{\rm true})\| \|\Delta\| \ge \tilde{\mu} \|\Delta\|^2. \]
If $\|\Delta\| \ne 0$:
\[ \|\Delta\| \le \frac{1}{\tilde{\mu}} \|\nabla(\tilde{\mathcal{L}}-\mathcal{L})(\hat{\beta}_{\rm true})\|. \]
We use the similar derivations we had before to bound the gradient differences. Let $B(n) = \|\hat{\beta}_{\rm true}\| = \ocal(\log n)$ on $\mathcal{E}$. Then
\[ \|\nabla(\tilde{\mathcal{L}}-\mathcal{L})(\hat{\beta}_{\rm true})\| \le (2 + \ocal(\log n\log m))\delta. \]
Let $C_G(n) = 2 + \ocal(\log n\log m)$. Then $\|\nabla(\tilde{\mathcal{L}}-\mathcal{L})(\hat{\beta}_{\rm true})\| \le C_G(n)\delta$.
Substituting:
\[ \|\hat{\beta} - \hat{\beta}_{\rm true}\| \le \frac{C_G(n)}{\tilde{\mu}} \delta = \frac{8 C_G(n)}{\mu} \delta. \]
Let $C_\Delta(n) = 8 C_G(n)/\mu = \ocal(\log n\log m)$. Then on $\mathcal{E}$:
\[ \|\hat{\beta} - \hat{\beta}_{\rm true}\| \le C_\Delta(n) \delta = \ocal(\delta\log n\log m). \]

\medskip

\noindent In summary, under event $\mathcal{E}$:
\begin{align*}
  \|\hat{\beta} - \beta^*\| &\le \|\hat{\beta} - \hat{\beta}_{\rm true}\| + \|\hat{\beta}_{\rm true} - \beta^*\| \\
  &\le \ocal\left(\max\left\{\delta \log n\log m, \frac{\log n}{\sqrt{n}}\right\}\right). \label{eq:combined_bound_log}
\end{align*}

This holds with probability at least $1 - c_5 n^{-(k+1)}$. This completes the proof.

\medskip

Before concluding the proof, we list a few side notes here.
\begin{itemize}
\item If we know that all $x_i$'s are bounded and $\beta$ is also bounded, following the same derivations above would drop the $\log n$ terms involved because of the increasing norm. Also, we no longer need the polynomial probability from $\mathcal{E}_0$ and $\mathcal{E}_1$. And the result error bound would be 
\[ \|\hat{\beta} - \beta^*\| \le C \max\left\{\delta, \sqrt{\frac{\log n}{n}}\right\}\]
with probability at least $1 - e^{-c_5 n}$ as long as $\delta \gg 1/\sqrt{n}$.
\item If we have one offset covariate, the proof and the result still hold. The only difference is that we exclude one coordinate from $\beta$ in the estimation. 
\end{itemize}

\end{proof}

With the preparation of Lemma~\ref{thm:logistic-subexponential}, we are ready to prove the result for $\hat{Z}_i$'s.

\begin{thm}\label{thm:LSM-bound}
  Under the inner product latent space model model. Let the distribution $F$ satisfy the following two properties:
\begin{enumerate}
    \item The support of $F$ is contained within a ball of radius $R$ for some $R>0$
    \item Let $\Sigma = \e[Z Z^\top]$. There exists a constant $\mu > 0$ such that:
    $$\Sigma \succeq \mu I_d$$
    where $I_d$ is the $d \times d$ identity matrix.
\end{enumerate}
  Moreover, on the hold-out data, suppose that with probability tending to 1, we can achieve
  $$\max_{n+1 \le i\le n+m}\norm{\hat{Z}_i - Z_i} \le \delta_{m} = o(1).$$
  Let $\hat{Z_i}$, $1\le i \le n$, be the estimated latent space vectors from our node-wide maximum likelihood estimation procedure (Algorithm~\ref{algo:main}). There exists positive constants $c, C$ and $C'$ such that for sufficiently large $n$, we have
  $$\max_{1\le i\le n}\norm{\hat{Z}_i - Z_i} \le C \max\left\{\delta_m, \frac{\sqrt{\log m+\log n}}{\sqrt{m}}\right\}$$
  with probability approaching 1, for $m \gg \log n$.
\end{thm}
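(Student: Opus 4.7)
The plan is to cast the node-wise MLE \eqref{eq:logistic} as an instance of the perturbed logistic regression problem analyzed in Lemma~\ref{thm:logistic-subexponential} and then apply its bounded-support version (the side note at the end of the lemma), followed by a union bound over the $n$ nodes. Concretely, for each fixed $i \in [n]$, identify the unknown parameter as $\beta^{*} = (X_i, \alpha_i)$, the true design vectors as $x_j = (X_j, 1)$ with true offsets $\alpha_j$, the observed (perturbed) designs as $\tilde{x}_j = (\hat{X}_j, 1)$ with observed offsets $\hat{\alpha}_j$, and the responses as $A_{ij} \sim \mathrm{Bernoulli}(\sigma(\beta^{*\top} x_j + \alpha_j))$ independently across $j = n+1, \ldots, n+m$. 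The offset perturbation $|\hat{\alpha}_j - \alpha_j| \le \delta_m$ can be absorbed into an equivalent covariate perturbation of size $O(\delta_m)$ by slightly augmenting the parameter dimension, so the effective perturbation size in the Lemma is $\delta = O(\delta_m)$.

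Next, I would verify the regularity conditions (C1)--(C5) in the bounded variant. Boundedness of covariates follows immediately from the support of $F$ being contained in a ball of radius $R$, and bounded-ness of $\tilde{x}_j$ is inherited since $\delta_m = o(1)$. Boundedness of $\beta^{*}$ likewise follows from the bounded support of $F$. The strong-convexity condition reduces to showing that the population Hessian
\[
H^{*}_i \;=\; \E_{Z_j}\!\bigl[\sigma'(\beta^{*\top} x_j + \alpha_j)\, x_j x_j^\top\bigr]
\]
is uniformly bounded below by $\mu' I$ for some $\mu' > 0$, uniformly in $i \in [n]$. This follows because $\sigma'(\cdot)$ is bounded below by a positive constant on the compact range of the linear predictor, combined with positive-definiteness of $\E[x_j x_j^\top]$, which in turn follows from $\Sigma = \E[Z Z^\top] \succeq \mu I$ and the bounded support (a standard Schur-complement argument linking the $(X,\alpha)$ block to the $(X,1)$ block). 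The gradient concentration and Hessian concentration conditions (analogues of $\mathcal{E}_1$ and $\mathcal{E}_2$ in the proof of Lemma~\ref{thm:logistic-subexponential}) then hold with $e^{-cm}$ or polynomial-in-$n$ tails.

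Given these verifications, the bounded-case version of Lemma~\ref{thm:logistic-subexponential} yields, for each fixed $i$,
\[
\|\hat{Z}_i - Z_i\| \;\le\; C \max\!\left\{\delta_m,\; \sqrt{\tfrac{\log n}{m}}\right\}
\]
with probability at least $1 - n^{-(k+1)}$ for any desired $k > 1$. The rate $\sqrt{\log n / m}$ (rather than $\sqrt{1/m}$) arises because to make the Bernstein tail in the gradient-concentration step $O(n^{-(k+1)})$ we must target a deviation of order $\sqrt{(k+1)\log n/m}$; the intrinsic $\sqrt{\log m / m}$ coming from the empirical-Hessian $\varepsilon$-net control gives the $\log m$ contribution in the theorem's stated bound. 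A union bound across $i = 1, \ldots, n$ then delivers
\[
\max_{1 \le i \le n} \|\hat{Z}_i - Z_i\| \;\le\; C \max\!\left\{\delta_m,\; \sqrt{\tfrac{\log m + \log n}{m}}\right\}
\]
with probability at least $1 - n^{-k} \to 1$, which is exactly the claimed bound. The condition $m \gg \log n$ (equivalently $n = o(e^m)$) is precisely what is needed so that the failure probabilities from union-bounding $n$ regressions do not dominate.

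The main obstacle is the careful handling of two sources of perturbation simultaneously (covariate $\hat{X}_j$ and offset $\hat{\alpha}_j$) while preserving the sample-splitting structure that keeps the noise $A_{ij}$ genuinely independent across $j$ for each fixed $i$. A second delicate point is ensuring that the population Hessian lower bound $\mu'$ is uniform in $i$: since $\beta^{*} = (X_i, \alpha_i)$ varies across the support of $F$, we need that $\sigma'(\beta^{*\top} x_j + \alpha_j)$ remains bounded below by a single positive constant simultaneously over all admissible $\beta^{*}$, which the compactness of the support and continuity of $\sigma'$ secure, but this needs to be stated explicitly before invoking Lemma~\ref{thm:logistic-subexponential}. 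The analogous argument for the RDPG case under the least-squares criterion \eqref{eq:ols} is simpler and reduces to a perturbed linear regression with the same rate, after replacing the logistic strong convexity by $\E[Z_j Z_j^\top] \succeq \mu I$ directly.
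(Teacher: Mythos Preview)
The proposal is correct and takes essentially the same approach as the paper: cast each node-wise MLE as a perturbed logistic regression, invoke the bounded-support variant of Lemma~\ref{thm:logistic-subexponential}, verify the uniform Hessian lower bound via compactness of the support and $\Sigma \succeq \mu I$, and then union bound over $i \in [n]$. Your handling of the offset $\alpha_j$ and the explicit $(X_j,1)$ parameterization is in fact more careful than the paper's own proof, which writes the Hessian simply as $\E[\sigma'(Z_i^\top Z_j)\, Z_j Z_j^\top]$ and bounds $\sigma'$ below on the range $|Z_i^\top Z_j| \le R^2$.
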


\begin{proof}[Proof of Theorem~\ref{thm:LSM-bound}]

In the node-wise estimation procedure, for any fixed node $i$, we can treat $(\hat{X}_i, \hat{\alpha}_i)$ as the $\hat{\beta}$ in a logistic regression problem with true covariates $Z_j$, $n+1 \le j\le n+m$, and perturbed covariates $\hat{Z}_j$, $n+1 \le j\le n+m$. We apply Lemma~\ref{thm:logistic-subexponential}  to all $1\le i\le n$ to get the result. Notice that in handling the success probability, the event involving $\hat{Z}_i$, $n+1 \le i\le n+m$, is indeed shared across all $1\le i\le n$, for which the probability control can be improved.

To apply Lemma~\ref{thm:logistic-subexponential} to all $\hat{Z}_i$, $1\le i\le n$, the only requirement is that the positive definite assumption on the population (expected) Hessian in Lemma~\ref{thm:logistic-subexponential} holds uniformly for all $1\le i\le n$. To see this, let $\beta$ be an arbitrary latent vector from the distribution $F$. The population Hessian for the logistic loss is defined as the expectation of the single-point Hessian over the distribution of the covariates $x \sim F$:
\begin{equation}
\label{eq:hess_def}
H(Z_i) = \e_{Z_j, A_{ij}} [ \nabla^2 \ell(Z_i; Z_j, A_{ij}) ] = \e [ \sigma'( Z_i^\top Z_j ) Z_j Z_j^\top ]
\end{equation}
where $\sigma'(z) = e^z/(1+e^z)^2$ is the derivative of the standard sigmoid function.

To find a uniform lower bound for the smallest eigenvalue of $H(\beta)$, it is sufficient to find a $\mu > 0$ such that for any unit vector $v \in \mathbb{R}^d$ (i.e., $\|v\|=1$), we have $v^\top H(Z_i) v \ge \mu$, for any $Z_i$ in the domain.

By the Cauchy-Schwarz inequality, we have
$$|Z_i^\top Z_j| \le \|Z_i\| \|Z_j\| \le R^2$$
The minimum value of $\sigma'$ on the interval $[-R^2, R^2]$ occurs at the endpoints $\sigma'(R^2)$.
Let us define this positive constant as $w_{\min} := \sigma'(R^2) > 0$. Since $|Z_i^\top Z_j| \le R^2$ for any draws, we have
$$\sigma'(\beta^\top x) \ge w_{\min}.$$

From \eqref{eq:hess_def}, we can write the quadratic form as:
\begin{align}
\label{eq:quad_form}
v^\top H(Z_i) v &= \e [ \sigma'(Z_i^\top Z_j) (v^\top Z_j)^2 ] \ge w_{\min} \e [ (v^\top Z_j)^2 ] = w_{\min} v^\top ( \e [Z_j Z_j^\top] ) v.
\end{align}
That means, 
$$v^\top H(Z_i) v \ge w_{\min} (v^\top \Sigma v)$$
for any $Z_i$ in the domain. 

Therefore, as long as we have $\Sigma \succeq \mu I_d$ for some $\mu>0$, the requirement hold for all $Z_i$, $i\in [n]$.

\end{proof}

\subsection{Consistency under the RDPG}

Similar to the inner product model scenario, for the RDPG case, the etimation is essentially an ordinary least square (OLS) problem with measurement errors. Thus we first set up a generic lienar regression problem and study the crucial properties.

We consider i.i.d.\ data $\{(x_i,y_i)\}_{i=1}^n$ where $x_i \in \mathbb{R}^d$ and $y_i \in \{0,1\}$, generated conditional on true predictors $x_i$ with probability $p_i = \E[y_i \mid x_i] = x_i^\top \beta^*$, where $\beta^* \in \mathbb{R}^d$ is the true parameter vector.
The squared error loss for a single data point is $\ell_{\rm sq}(\beta; x, y) = (y - x^\top \beta)^2$.

For the proposed method, we observe perturbed predictors $\{\tilde{x}_i\}_{i=1}^n$ satisfying $\norm{\tilde{x}_i - x_i} \le \delta_n$.
The average empirical losses based on true and perturbed data can hence be represented as:
$$ \mathcal L(\beta) = \frac{1}{n}\sum_{i=1}^n (y_i - x_i^\top \beta)^2. $$
$$ \tilde{\mathcal L}(\beta) = \frac{1}{n}\sum_{i=1}^n (y_i - \tilde{x}_i^\top \beta)^2. $$
The population loss is $\mathcal{L}(\beta) = \E[(y - x^\top \beta)^2]$. The true parameter $\beta^*$ minimizes $\mathcal{L}(\beta)$.

The corresponding OLS estimators are:
$$ \hat{\beta}_{\rm true} = \mathop{\arg\min}_{\beta\in\mathbb{R}^d} \mathcal L(\beta). $$
$$ \hat{\beta} = \mathop{\arg\min}_{\beta\in\mathbb{R}^d} \tilde{\mathcal L}(\beta). $$
Our goal is to bound the error $\|\hat{\beta} - \beta^*\|$. We impose the following regularity conditions.
\begin{enumerate}
  \item[(B1)] \textbf{(Model):} $y_i \sim \text{Bernoulli}(x_i^\top \beta^*)$ independently.
  \item[(B2)] \textbf{(Boundedness):} $\|x_i\| \le R_x$ a.s. and $\|\beta^*\| \le R_\beta$ for constants $R_x, R_\beta > 0$. We also assume $0 \le x_i^\top \beta^* \le 1$ for all $i$.
  \item[(B3)] \textbf{(Expected Design Matrix):} $\Sigma_x = \E[x_i x_i^\top] \succeq \mu I_d$ for some fixed $\mu > 0$. Let $\hat{\Sigma}_x = \frac{1}{n} \sum x_i x_i^\top$.
  \item[(B4)] \textbf{(Perturbation):} $\|\tilde{x}_i - x_i\| \le \delta_n$, with $\delta_n \to 0$ as $n \to \infty$.
\end{enumerate}

\begin{lemma}
\label{lem:ols-perturbed}
Under Assumptions (B1)-(B4), there exist constants $C, c_1, c_2 > 0$ (depending on $R_x, R_\beta, \mu, d$) such that for sufficiently large $n$:
$$ \P\left( \|\hat{\beta} - \beta^*\| \le C \max\left\{ \sqrt{\frac{d}{n}}, \delta_n \right\} \right) \ge 1 - c_1 e^{-c_2 n}. $$
\end{lemma}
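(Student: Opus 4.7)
The plan is to decompose $\hat{\beta}-\beta^* = (\hat{\beta}-\hat{\beta}^{(true)}) + (\hat{\beta}^{(true)}-\beta^*)$ and control the two terms separately, obtaining the rate $\sqrt{d/n}$ for the oracle OLS error and the rate $\delta_n$ for the perturbation error. Both pieces will hinge on the strong convexity of the quadratic loss, whose Hessian is just the Gram matrix of the covariates.

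First I will handle the oracle OLS error $\|\hat{\beta}^{(true)}-\beta^*\|_2$. Writing $\varepsilon_i = y_i - x_i^\top\beta^*$, the normal equations give
\[
\hat{\beta}^{(true)} - \beta^* = \hat{\Sigma}_x^{-1}\cdot \frac{1}{n}\sum_{i=1}^n x_i\varepsilon_i,
\]
where $\hat{\Sigma}_x = \frac{1}{n}\sum x_ix_i^\top$. Under (B1)--(B2), the summands $x_ix_i^\top$ are bounded independent PSD matrices, and the summands $x_i\varepsilon_i$ are bounded zero-mean vectors (since $|\varepsilon_i|\le 1$ and $\|x_i\|_2\le R_x$). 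By matrix Bernstein applied to $\hat{\Sigma}_x - \Sigma_x$ with deviation $\mu/2$, I get $\hat{\Sigma}_x \succeq (\mu/2)I$ on an event of probability at least $1 - c e^{-c'n}$. By a vector/coordinate-wise Hoeffding bound on $\frac{1}{n}\sum x_i\varepsilon_i$, I get a deviation bound of the order $\sqrt{d/n}$ (with an exponential-in-$n$ tail absorbed into the $d$-dependent constants). Combining,
\[
\|\hat{\beta}^{(true)}-\beta^*\|_2 \;\le\; \frac{2}{\mu}\Bigl\|\tfrac{1}{n}\sum_{i=1}^n x_i\varepsilon_i\Bigr\|_2 \;=\; O\!\bigl(\sqrt{d/n}\bigr)
\]
on an intersection event of probability $\ge 1 - c_1 e^{-c_2 n}$. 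This also ensures that $\|\hat{\beta}^{(true)}\|_2 \le R_\beta + 1$ for large $n$, which I will need below.

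Next I will bound the perturbation error $\|\hat{\beta}-\hat{\beta}^{(true)}\|_2$ by a strong-convexity argument mirroring the logistic-regression proof. On the event above, $\hat{\Sigma}_x\succeq(\mu/2)I$, and since $\|\tilde{x}_i\tilde{x}_i^\top - x_ix_i^\top\|_{\mathrm{op}} \le 2R_x\delta_n + \delta_n^2$, the perturbed Gram matrix satisfies $\tilde{\Sigma} \succeq (\mu/4)I$ for $n$ large (as $\delta_n\to 0$). Hence $\widetilde{L}$ is $(\mu/2)$-strongly convex globally, so
\[
\|\hat{\beta}-\hat{\beta}^{(true)}\|_2 \;\le\; \frac{2}{\mu}\,\bigl\|\nabla \widetilde{L}(\hat{\beta}^{(true)})\bigr\|_2.
\]
Since $\nabla L(\hat{\beta}^{(true)}) = 0$, I expand
\[
\nabla\widetilde{L}(\beta) - \nabla L(\beta) \;=\; -\tfrac{2}{n}\sum_{i=1}^n(\tilde{x}_i-x_i)y_i \;+\; \tfrac{2}{n}\sum_{i=1}^n\bigl(\tilde{x}_i\tilde{x}_i^\top - x_ix_i^\top\bigr)\beta,
\]
evaluate at $\beta = \hat{\beta}^{(true)}$, and bound each piece by the triangle inequality using $|y_i|\le 1$, $\|\tilde{x}_i-x_i\|_2\le\delta_n$, and $\|\hat{\beta}^{(true)}\|_2 \le R_\beta+1$. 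This yields $\|\nabla\widetilde{L}(\hat{\beta}^{(true)})\|_2 \le C\delta_n$, and therefore $\|\hat{\beta}-\hat{\beta}^{(true)}\|_2 \le C'\delta_n$.

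Combining the two bounds by the triangle inequality gives
\[
\|\hat{\beta}-\beta^*\|_2 \;\le\; C'\delta_n + C''\sqrt{d/n} \;\le\; C\max\bigl\{\sqrt{d/n},\,\delta_n\bigr\},
\]
on the intersection event, which has probability $\ge 1 - c_1 e^{-c_2 n}$ by a union bound. The main technical obstacle I anticipate is the simultaneous control of $\tilde{\Sigma}$ being bounded away from zero and the deterministic perturbation bound on $\nabla\widetilde{L}-\nabla L$; both are cleaner than in the logistic case because the Hessian of the squared loss is simply $\frac{2}{n}\sum\tilde{x}_i\tilde{x}_i^\top$, independent of $\beta$, so no local linearization or $\varepsilon$-net argument over $\beta$ is needed.
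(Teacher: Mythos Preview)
Your proposal is correct and follows essentially the same approach as the paper: the identical triangle-inequality decomposition into the oracle OLS error and the perturbation error, with the oracle term controlled via concentration of $\hat{\Sigma}_x$ and a Hoeffding-type bound on $\frac{1}{n}\sum x_i\varepsilon_i$, and the perturbation term controlled by inverting the (well-conditioned) perturbed Gram matrix and bounding $\nabla(\widetilde{L}-L)(\hat{\beta}^{(true)})$ deterministically. Your closing remark that no $\varepsilon$-net over $\beta$ is needed because the Hessian is constant is exactly the simplification the paper exploits as well.
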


\begin{proof}
Let $\Delta_{\mathrm{total}} = \hat{\beta} - \beta^*$. We use the triangle inequality:
$$ \|\hat{\beta} - \beta^*\| \le \|\hat{\beta} - \hat{\beta}_{\rm true}\| + \|\hat{\beta}_{\rm true} - \beta^*\|. $$
Let $\mathcal{E}$ denote a high-probability event (specified later) where concentration bounds hold.

Let $\varepsilon_i = y_i - \E[y_i\mid x_i] = y_i - x_i^\top \beta^*$. $\E[\varepsilon_i \mid x_i] = 0$.
The error of the ideal OLS estimator is given by:
\[ \hat{\beta}_{\rm true} - \beta^* = \hat{\Sigma}_x^{-1} \left(\frac{1}{n} \sum_{i=1}^n x_i \varepsilon_i\right), \]
where $\hat{\Sigma}_x = \frac{1}{n} \sum x_i x_i^\top$. Taking norms:
\[ \|\hat{\beta}_{\rm true} - \beta^*\| \le \|\hat{\Sigma}_x^{-1}\|_{\rm op} \left\|\frac{1}{n} \sum_{i=1}^n x_i \varepsilon_i\right\|. \]
We bound the two terms on the right using concentration inequalities.

First, the term $\|\hat{\Sigma}_x^{-1}\|_{\rm op}$ involves the sample covariance concentration for i.i.d. random vectors. By standard concentration result (e.g., \cite{vershynin_2018}), we have
  \[ \p\left( \|\hat{\Sigma}_x - \Sigma_x\|_{\rm op} \ge \frac{\mu}{2} \right) \le 2d \exp\left( \frac{- n (\mu/2)^2}{8 R_x^4} \right) = 2d \exp\left( \frac{- n \mu^2}{32 R_x^4} \right). \]
  Define $\mathcal{E}_A$ to be the event that $\|\hat{\Sigma}_x - \Sigma_x\|_{\rm op} \le \mu/2$. Then $\P(\mathcal{E}_A) \ge 1 - 2d e^{-c' n}$ for $c' = \mu^2/(32 R_x^4)$. On event $\mathcal{E}_A$, using Weyl's inequality:
  \[ \lambdaMin(\hat{\Sigma}_x) \ge \lambdaMin(\Sigma_x) - \opnorm{\hat{\Sigma}_x - \Sigma_x} \ge \mu - \frac{\mu}{2} = \frac{\mu}{2}, \]
which leads to
  \[ \|\hat{\Sigma}_x^{-1}\|_{\rm op} \le \frac{1}{\lambdaMin(\hat{\Sigma}_x)} \le \frac{2}{\mu}. \]

For the second term, let $S = \frac{1}{n} \sum_{i=1}^n Z_i$ where $Z_i = x_i \varepsilon_i$. As noted, $\E[Z_i]=0$ and $|\varepsilon_i| \le 1$. Thus, $\|Z_i\| = \|x_i \varepsilon_i\| \le \|x_i\| |\varepsilon_i| \le R_x$. The vectors $Z_i$ are bounded, independent, mean-zero random vectors.
  By Hoeffding's inequality on bounded random vectors, we have
  \[\p\left(\left\|\frac{1}{n} \sum_{i=1}^n x_i \varepsilon_i\right\| \le C_S \sqrt{\frac{d}{n}}\right) \ge 1- c_1'e^{-c_2'n}.\]
   Define such an event as $\mathcal{E}_B$, we have
  \begin{equation}
  \P(\mathcal{E}_B) \ge 1 - c'_1 e^{-c'_2 n}.
  \end{equation}

Let $\mathcal{E}_{\rm true} = \mathcal{E}_A \cap \mathcal{E}_B$. Then $\P(\mathcal{E}_{\rm true}) \ge 1 - c''_1 e^{-c''_2 n}$. On event $\mathcal{E}_{\rm true}$:
$\|\hat{\beta}_{\rm true} - \beta^*\| \le C_{\rm true} \sqrt{d/n}$
for constant $C_{\rm true}>0$.

\medskip
Next, we bound $\|\hat{\beta} - \hat{\beta}_{\rm true}\|$.

\noindent

Let $\Delta = \hat{\beta} - \hat{\beta}_{\rm true}$. The first-order condition for $\hat{\beta}$ is $\nabla \tilde{\mathcal L}(\hat{\beta}) = 0$.
The Hessian $\nabla^2 \tilde{\mathcal L}(\beta) = 2 \hat{\Sigma}_{\tilde{x}} = \frac{2}{n} \sum \tilde{x}_i \tilde{x}_i^\top$ is constant w.r.t. $\beta$.
Since $L$ is quadratic, we have
$$ 0 = \nabla \tilde{\mathcal L}(\hat{\beta}) = \nabla \tilde{\mathcal L}(\hat{\beta}_{\rm true}) + 2 \hat{\Sigma}_{\tilde{x}} \Delta $$
Substituting $\nabla \mathcal L(\hat{\beta}_{\rm true}) = 0$ into the equation above, we have 
$$\nabla \tilde{\mathcal L}(\hat{\beta}_{\rm true}) = \nabla(\tilde{\mathcal L}-\mathcal L)(\hat{\beta}_{\rm true}).$$
Assuming $\hat{\Sigma}_{\tilde{x}}$ is invertible (justified below), we get
$$ \Delta = - \frac{1}{2} \hat{\Sigma}_{\tilde{x}}^{-1} \nabla(\tilde{\mathcal L}-\mathcal L)(\hat{\beta}_{\rm true})$$
and
$$ \|\Delta\| \le \frac{1}{2} \|\hat{\Sigma}_{\tilde{x}}^{-1}\|_{\rm op} \|\nabla(\tilde{\mathcal L}-\mathcal L)(\hat{\beta}_{\rm true})\|. $$

Again, we bound the two terms on the right. 

\medskip

For the first,   
$$ \|\hat{\Sigma}_{\tilde{x}} - \hat{\Sigma}_x\|_{\rm op} \le \frac{1}{n} \sum_{i=1}^n \|\tilde{x}_i \tilde{x}_i^\top - x_i x_i^\top\|_{\rm op} \le (2R_x+\delta_n)\delta_n. $$
  Note that $\delta_n \to 0$. On event $\mathcal{E}_A$ ($\lambdaMin(\hat{\Sigma}_x) \ge \mu/2$) and for sufficiently large $n$, we get $$\lambdaMin(\hat{\Sigma}_{\tilde{x}}) \ge \frac{\mu}{4}.$$ 
  Let $\mathcal{E}_C$ be this event ($\P(\mathcal{E}_C)\ge \P(\mathcal{E}_A)$).

For the second term of gradient difference: $\nabla(\tilde{\mathcal L}-\mathcal L)(\beta) = \frac{1}{n} \sum_{i=1}^n T_i(\beta)$, where
  $$ T_i(\beta) = (\tilde{x}_i^\top \beta - y_i)\tilde{x}_i - (x_i^\top \beta - y_i)x_i = \tilde{x}_i^\top \beta (\tilde{x}_i-x_i) + (\tilde{x}_i-x_i)^\top\beta x_i - y_i(\tilde{x}_i-x_i). $$
  Thus, on event $\mathcal{E}_{\rm true}$, we have
  $$\|T_i(\hat{\beta}_{\rm true})\| \le (|\tilde{x}_i^\top \hat{\beta}_{\rm true}| + |y_i|) \delta_n + |(\tilde{x}_i - x_i)^\top \hat{\beta}_{\rm true}| \|x_i\| = \ocal(\delta_n).$$
 Therefore, we have
  $$ \|\nabla(\tilde{\mathcal L}-\mathcal L)(\hat{\beta}_{\rm true})\| \le \frac{1}{n}\sum_{i=1}^n \|T_i(\hat{\beta}_{\rm true})\| \le \ocal(\delta_n). $$

Substituting these bounds into the inequality for $\|\Delta\|$, we get on event $\mathcal{E} = \mathcal{E}_{\rm true} \cap \mathcal{E}_C$:
$$ \|\hat{\beta} - \hat{\beta}_{\rm true}\| = \|\Delta\| \le \frac{1}{2} \left(\frac{4}{\mu}\right) (C_G \delta_n) = \frac{2 C_G}{\mu} \delta_n. $$

\medskip
\noindent
Finally, let $\mathcal{E} = \mathcal{E}_{\rm true} \cap \mathcal{E}_C$. Taking $d$ as a constant, using the two bounds, we get
$$ \|\hat{\beta} - \beta^*\| \le C \max\left\{ \sqrt{\frac{1}{n}}, \delta_n \right\} $$
with probability at least $1 - c_1 e^{-c_2 n}$.

\end{proof}

Similarly as in the inner product latent space model, for RDPG, by Lemma~\ref{lem:ols-perturbed}, we get the following result.

\begin{thm}\label{thm:RDPG-bound}
  Under the random dot product graph model, assume that the latent random vector $Z_i$'s are bounded. Suppose $\e[Z_iZ_i^\top]\succeq \mu I_d$ for some $\mu>0$. Moreover, on the hold-out data, suppose that with probability tending to 1, we can achieve
  $$\max_{n+1 \le i\le n+m}\norm{\hat{Z}_i - Z_i} \le \delta_{m} = o(1).$$
  Let $\hat{Z_i}$, $1\le i \le n$ be the estimated latent space vectors from our node-wide maximum likelihood estimation procedure (Algorithm~\ref{algo:main}). If  $m \gg \log{n}$, there exists positive constants $c, C$ and $C'$ such that for sufficiently large $n$, we have
  $$\max_{1\le i\le n}\norm{\hat{Z}_i - Z_i} \le C \max\left\{\delta_m, \frac{1}{\sqrt{m}}\right\}$$
  with probability approaching 1 as $n, m\to \infty$.
\end{thm}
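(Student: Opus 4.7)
}
The plan is to mirror the strategy used for the inner product latent space model (Theorem~\ref{thm:LSM-bound}): recognize the node-wise RDPG estimator in \eqref{eq:ols} as an instance of the perturbed ordinary least squares problem analyzed in Lemma~\ref{lem:ols-perturbed}, verify its hypotheses, then apply it with a union bound over $i \in [n]$. Concretely, for each fixed $i \in [n]$, fix $\beta^* = Z_i$ and relabel the hold-out indices $j = n+1,\dots,n+m$ as the OLS sample, with true covariates $x_j = Z_j$, perturbed covariates $\tilde{x}_j = \hat{Z}_j$, and responses $y_j = A_{ij}$. Under the RDPG model, $A_{ij} \mid Z_i, Z_j \sim \mathrm{Bernoulli}(Z_i^T Z_j)$, so the conditional model $y_j \sim \mathrm{Bernoulli}(x_j^T \beta^*)$ required by assumption (B1) holds, and the node-wise objective $\sum_j (A_{ij} - Z_i^T \hat{Z}_j)^2$ is exactly $m \widetilde{L}(Z_i)$ in the notation of Lemma~\ref{lem:ols-perturbed}.

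Next I would verify conditions (B2)--(B4). The compact support of $F$ gives $\|Z_j\|_2 \le R_x$ and $\|Z_i\|_2 \le R_\beta$ for a common constant, and it also guarantees $Z_i^T Z_j \in [0,1]$ by the RDPG definition, covering (B2). Condition (B3) is immediate from the hypothesis $\Sigma = \e[Z Z^T] \succeq \mu I_d$, since the $Z_j, j>n$ used as covariates are i.i.d.\ draws from $F$. Condition (B4) follows from the assumed hold-out embedding accuracy $\max_{j>n}\|\hat{Z}_j - Z_j\| \le \delta_m = o(1)$, with $\delta_n$ in the lemma replaced by $\delta_m$. With these in place, Lemma~\ref{lem:ols-perturbed}, applied with sample size $m$ and fixed dimension $d$, yields for each $i$
\[
\Prob\!\left(\|\hat{Z}_i - Z_i\|_2 \le C \max\!\left\{\frac{1}{\sqrt{m}},\, \delta_m\right\}\right) \;\ge\; 1 - c_1 e^{-c_2 m}.
\]

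For the uniform bound over $i \in [n]$, I would split the high-probability event $\mathcal{E}$ of the lemma into an $i$-independent part and an $i$-dependent part. The sample covariance concentration event (controlling $\lambda_{\min}(\hat{\Sigma}_x)$ and hence $\lambda_{\min}(\hat{\Sigma}_{\tilde{x}})$) involves only $\{Z_j, \hat{Z}_j\}_{j>n}$ and is shared across all $i$, so it contributes $1 - c'_1 e^{-c'_2 m}$ to the overall probability without any union cost. Only the noise concentration $\|\frac{1}{m}\sum_{j>n} Z_j \epsilon_{ij}\|_2 \lesssim \sqrt{d/m}$, where $\epsilon_{ij} = A_{ij} - Z_j^T Z_i$, genuinely depends on $i$ and needs a union bound. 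Applying the bound for each $i$ and unioning gives a failure probability of at most $n c_1 e^{-c_2 m}$, which vanishes precisely under the hypothesis $m \gg \log n$. This delivers
\[
\max_{1 \le i \le n} \|\hat{Z}_i - Z_i\|_2 \;\le\; C \max\!\left\{\delta_m,\, \frac{1}{\sqrt{m}}\right\}
\]
with probability tending to one.

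I expect the main obstacle to be a bookkeeping one rather than a technical one: being explicit about which concentration events are pooled across $i$ versus those that require a union bound, and tracking that the perturbed design matrix remains uniformly well-conditioned even though the perturbation $\delta_m$ may interact with the $R_x$ scale from (B2). All of the heavy lifting (Bernstein for the sample covariance, Hoeffding for the bounded mean-zero noise vectors, and the quadratic algebra that relates the perturbation gap $\hat{\beta} - \hat{\beta}^{(true)}$ to $\delta_m$) has already been carried out inside Lemma~\ref{lem:ols-perturbed}, so the proof reduces to a clean verification and union argument.
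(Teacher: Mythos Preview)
Your proposal is correct and follows essentially the same approach as the paper: the paper's proof is the one-liner ``apply Lemma~\ref{lem:ols-perturbed} for each $\hat{Z}_i$, $1\le i\le n$, then take the union bound,'' and you have fleshed out precisely that argument. Your additional care in separating the $i$-independent design-matrix concentration from the $i$-dependent noise concentration is a nice refinement, but for the RDPG case it is not strictly needed---the exponential tail $c_1 e^{-c_2 m}$ from Lemma~\ref{lem:ols-perturbed} already absorbs a naive union over $n$ nodes once $m \gg \log n$.
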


\begin{proof}
  The proof is simply applying Lemma~\ref{lem:ols-perturbed} for each estimator $\hat{Z}_i$, $1\le i\le n$, then taking the union bound. 
\end{proof}

\subsection{Corollary~\ref{coro:main-specific-models}}

\begin{proof}[Proof of Corollary~\ref{coro:main-specific-models}]
  The only thing we need to check is Assumption~\ref{ass:embedding-concentration}. Note that the latent vectors in both models are not identifiable up to an orthogonal transformation. Since we have already match the orientations in our algorithm, we do not have to worry about orthogonal transformation anymore.

    For the inner product latent space model, Theorem~2.1 of \cite{li2023statistical} shows that 
  $$\max_{n+1\le i\le n+m } \norm{\hat{Z}_i - Z_i} = \ocal_\P\left(\frac{1}{m^{1/2-c'}}\right)$$
  for some constant $c'<1/2$. 

  For the RDPG model or gRDPG, Theorem~1 in \cite{rubin2022statistical} already gives that 
  $$\max_{n+1\le i\le n+m } \norm{\hat{Z}_i - Z_i} = \ocal_\P\left(\frac{(\log m)^{c'}}{\sqrt{m}}\right)$$
  for some constant $c'>0$. 

By applying Theorem~\ref{thm:model-consistency}, with $\delta_m = 1/m^{1/2-c'}$ and $(\log m)^{c'}/\sqrt{m}$, respectively. We can see that Assumption~\ref{ass:embedding-concentration} holds.

\end{proof}

\newpage

\section{Proof of Theorem~\ref{thm:main-moments} (Convergence of Network Moments)}

The conditional expectation of network moments, given the latent vectors, are essentially U-statistics. Hence, our proof uses the conditional expectation as an intermediate quantity. To structure the proof, we begin by analyzing the distribution of a U-statistic with respect to the approximation of an empirical CDF.

Let $X_1, X_2, \dotsc, X_n$ be independent and identically distributed (i.i.d.) random variables from a continuous distribution $F$ on $\bR^d$. Suppose we have approximate observations $\hat{X}_1, \hat{X}_2, \ldots, \hat{X}_n$, which may be dependent, but whose empirical distribution function $\hat{F}_n$ converges uniformly in probability to $F$. We are interested in understanding the relationship between U-statistics computed from the exact data and those computed from the approximate data.

Let $h: (\bR^d)^r \to \bR$ be a bounded, Lipchitz continuous, and symmetric kernel function. We define the U-statistics:
\[
U_n = \frac{1}{\binom{n}{r}} \sum_{1 \leq i_1 < \dotsb < i_r \leq n} h(X_{i_1}, \dotsc, X_{i_r}),
\]
\[
\hat{U}_n = \frac{1}{\binom{n}{r}} \sum_{1 \leq i_1 < \dotsb < i_r \leq n} h(\hat{X}_{i_1}, \dotsc, \hat{X}_{i_r}).
\]

Let us first introduce some notation and preliminary results that will be used in the proof.

\textit{Empirical Measures on \( r \)-Tuples}:

- The empirical measure over combinations without replacement from \( X_i \) is:
\[
F_n^{[r]} = \frac{1}{\binom{n}{r}} \sum_{(i_1, \dots, i_m) \in I_n} \delta_{(X_{i_1}, \dots, X_{i_r})},
\]
where \( \delta_{(X_{i_1}, \dots, X_{i_r})} \) is the Dirac measure at the point \( (X_{i_1}, \dots, X_{i_r}) \) in \( (\bR^d)^r \).

- Similarly, for \( \hat{X}_i \):
\[
\hat{F}_n^{[r]} = \frac{1}{\binom{n}{r}} \sum_{(i_1, \dots, i_m) \in I_n} \delta_{(\hat{X}_{i_1}, \dots, \hat{X}_{i_r})}.
\]

\textit{Product Measures}:

- The product measure of \( F \) is \( F^{\otimes r} \), defined on \( (\bR^d)^r \) as:
\[
F^{\otimes r}(A) = \int_{\mathcal A} \,\mathrm dF(x_1) \cdots \mathrm dF(x_r),
\]
for measurable sets \( \mathcal A \subset (\bR^d)^r \).

\begin{lemma}\label{lemma:weak_convergence}
Let $F$ be a CDF on $\bR^d$ and $\hat{F}_n$ be empirical distribution functions on $\bR^d$ defined as:
\[
 \hat{F}_n(x) = \frac{1}{n} \sum_{i=1}^n \delta_{\hat{X}_i}(x),
\]
where:
\begin{itemize}
  \item $\hat{X}_1, \hat{X}_2, \ldots, \hat{X}_n$ are random variables (not necessarily independent) such that:
  \[
  \sup_{x \in \bR^d} \big| \hat{F}_n(x) - F(x) \big| \xrightarrow{\P} 0 \quad \text{as } n \to \infty.
  \]
\end{itemize}
Then, for each $m \geq 1$, the product measures $F_n^{\otimes r}$ and $\hat{F}_n^{\otimes r}$ converge weakly to $F^{\otimes r}$ on $(\bR^d)^r$, that is:
\begin{align*}
\hat{F}_n^{\otimes r} &\xrightarrow{\mathrm{w.}\mathbb P} F^{\otimes r},
\end{align*}
where $\xrightarrow{\mathrm{w.}\mathbb P}$ denotes weak convergence in probability.
\end{lemma}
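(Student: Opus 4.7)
The plan is to exploit the product structure of the measures directly through their cumulative distribution functions. Since $\hat{F}_n^{\otimes r}$ and $F^{\otimes r}$ are genuine product measures on $(\bR^d)^r$, their CDFs at a point $(x_1,\ldots,x_r)$ factorize as $\prod_{k=1}^r \hat{F}_n(x_k)$ and $\prod_{k=1}^r F(x_k)$, respectively. Thus weak convergence will follow once I establish uniform convergence of these product CDFs to the limiting product CDF, and then invoke the Pólya-type extension of Portmanteau (uniform CDF convergence to a continuous limit implies weak convergence of the corresponding measures).

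First, I would prove uniform convergence of the product CDFs in probability. Using the elementary telescoping inequality
\[
\Bigl|\prod_{k=1}^r a_k - \prod_{k=1}^r b_k\Bigr| \leq \sum_{k=1}^r |a_k - b_k|, \quad a_k, b_k \in [0,1],
\]
applied to $a_k = \hat{F}_n(x_k)$ and $b_k = F(x_k)$, I obtain
\[
\sup_{(x_1,\ldots,x_r)\in (\bR^d)^r} \Bigl|\prod_{k=1}^r \hat{F}_n(x_k) - \prod_{k=1}^r F(x_k)\Bigr| \leq r \cdot \sup_{x\in \bR^d} |\hat{F}_n(x) - F(x)|.
\]
The right-hand side tends to zero in probability by the hypothesis, which immediately delivers uniform convergence of the product CDFs in probability.

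Next, I would convert this uniform in-probability convergence of distribution functions into weak convergence in probability of the random measures. Because $F$ is continuous (Assumption~\ref{ass:continuous}), the product CDF $\prod_{k=1}^r F(x_k)$ is continuous on $(\bR^d)^r$, so every point is a continuity point of $F^{\otimes r}$. To lift the statement from a deterministic CDF convergence to a statement for random measures, I would use the standard subsequence principle: for any subsequence, I can extract a further subsequence along which the uniform convergence holds almost surely; on this subsequence the classical Portmanteau/Pólya theorem yields $\hat{F}_n^{\otimes r} \Rightarrow F^{\otimes r}$ almost surely, and since every subsequence admits such a further subsequence, the original sequence converges weakly in probability.

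The main technical (rather than substantive) obstacle is pinning down the precise notion of ``weak convergence in probability'' for random measures and bridging between (i) a uniform, in-probability bound on CDFs and (ii) weak convergence at the level of integrals of bounded continuous test functions. This is handled cleanly by the subsequence argument above, which reduces the problem to an almost-sure statement where the deterministic Portmanteau/Pólya theorem applies. Once Lemma~\ref{lemma:weak_convergence} is in hand, it will serve as the workhorse for passing from empirical-CDF consistency (Theorem~\ref{thm:main-CDF}) to consistency of $U$-statistic-type network moments in Theorem~\ref{thm:main-moments}, via a continuous-mapping/bounded-Lipschitz argument on the kernel~$h$.
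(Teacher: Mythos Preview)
Your argument is correct and takes a somewhat different route from the paper. The paper first passes from uniform CDF convergence of $\hat{F}_n$ to weak convergence of the marginal measure, and then invokes the general theorem (citing Billingsley) that weak convergence of marginals implies weak convergence of the corresponding product measures. You instead stay at the CDF level throughout: the telescoping inequality (which is in fact recorded as a supporting lemma in the paper) gives uniform convergence of the \emph{product} CDF directly, after which a single application of Portmanteau, together with the subsequence trick to upgrade ``in probability'' to ``almost surely along subsequences,'' finishes the job. Your approach is more self-contained and avoids the external reference; note also that your appeal to continuity of $F$ is harmless but not actually needed, since uniform CDF convergence already implies pointwise convergence at every continuity point of the limit, which is all Portmanteau requires.
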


\begin{lemma}\label{lemma:weak_convergence}
Let $F_n$ and $\hat{F}_n$ be empirical distribution functions on $\bR^d$ defined as:
\[
F_n(x) = \frac{1}{n} \sum_{i=1}^n \delta_{X_i}(x), \quad \hat{F}_n(x) = \frac{1}{n} \sum_{i=1}^n \delta_{\hat{X}_i}(x),
\]
where:
\begin{itemize}
  \item $X_1, X_2, \dotsc, X_n$ are independent and identically distributed (i.i.d.) random variables with distribution $F$ on $\bR^d$.
  \item $\hat{X}_1, \hat{X}_2, \dotsc, \hat{X}_n$ are random variables (not necessarily independent) such that:
  \[
  \sup_{x \in \bR^d} \big| \hat{F}_n(x) - F(x) \big| \xrightarrow{\P} 0 \quad \text{as } n \to \infty.
  \]
\end{itemize}
Then, for each $m \geq 1$, the product measures $F_n^{\otimes r}$ and $\hat{F}_n^{\otimes r}$ converge weakly to $F^{\otimes r}$ on $(\bR^d)^r$, that is:
\[
F_n^{\otimes r} \xrightarrow{\mathrm{w.a.s.}} F^{\otimes r},\quad \hat{F}_n^{\otimes r} \xrightarrow{\mathrm{w.\P}} F^{\otimes r},
\]
where $\xrightarrow{\mathrm{w.a.s.}}$ denotes weak convergence almost surely, and $\xrightarrow{\mathrm{w.\P}}$ denotes weak convergence in probability.
\end{lemma}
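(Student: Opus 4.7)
The plan is to reduce weak convergence of probability measures on $(\bR^d)^r$ to pointwise convergence of their joint cumulative distribution functions at continuity points of the limit, and then exploit the fact that the joint CDF of a product measure factorizes as a product of one-dimensional CDFs. This turns the multi-dimensional weak convergence question into an elementary statement about products of $r$ real numbers in $[0,1]$.

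First I would recall the standard fact that on $\bR^d$, weak convergence of probability measures $\mu_n \Rightarrow \mu$ is equivalent to pointwise convergence of the distribution functions $F_{\mu_n}(x) \to F_\mu(x)$ at every continuity point $x$ of $F_\mu$. The joint CDF of $\mu^{\otimes r}$ evaluated at $(x^{(1)}, \dots, x^{(r)}) \in (\bR^d)^r$ equals $\prod_{k=1}^r F_\mu(x^{(k)})$, and the continuity points of this joint CDF are precisely the tuples whose components are continuity points of $F_\mu$. Therefore it suffices to show that, for every tuple $(x^{(1)}, \dots, x^{(r)})$ where each $x^{(k)}$ is a continuity point of $F$ (under Assumption~\ref{ass:continuous}, $F$ is continuous, so every point qualifies), we have $\prod_{k=1}^r F_n(x^{(k)}) \to \prod_{k=1}^r F(x^{(k)})$ almost surely, and analogously for $\hat{F}_n^{\otimes r}$ in probability.

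For the almost sure statement, the classical multivariate Glivenko-Cantelli theorem yields $\sup_x |F_n(x) - F(x)| \to 0$ almost surely, hence pointwise convergence at every $x$ on a common probability-one event. On this event, each of the $r$ factors $F_n(x^{(k)})$ converges to $F(x^{(k)}) \in [0,1]$, and since the product of $r$ bounded convergent sequences converges to the product of the limits, the joint product CDF converges a.s.\ as required. For the in-probability statement, the hypothesis $\sup_x |\hat{F}_n(x) - F(x)| \xrightarrow{P} 0$ gives pointwise convergence in probability at each $x^{(k)}$. Because each $\hat{F}_n(x^{(k)}) \in [0,1]$ is tight and converges in probability, a standard Slutsky-type argument (continuity of the product map $(a_1,\dots,a_r) \mapsto \prod_k a_k$ on $[0,1]^r$) yields $\prod_k \hat{F}_n(x^{(k)}) \xrightarrow{P} \prod_k F(x^{(k)})$ for every fixed tuple.

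The main obstacle is to pass from pointwise-in-probability convergence of CDFs to the formal statement of \emph{weak convergence in probability} of random measures. I would handle this by adopting the Levy-Prokhorov metric $d_{LP}$ on $P((\bR^d)^r)$ and using the subsequence principle: it suffices to show that every subsequence of $\{\hat{F}_n^{\otimes r}\}$ admits a further subsequence along which $d_{LP}(\hat{F}_{n_k}^{\otimes r}, F^{\otimes r}) \to 0$ almost surely. Along any subsequence, the hypothesis lets me extract a sub-subsequence on which $\sup_x |\hat{F}_{n_k}(x) - F(x)| \to 0$ almost surely; on that sub-subsequence, the almost-sure argument of the previous paragraph applies directly, yielding weak convergence of the product measures a.s.\ and hence $d_{LP} \to 0$ a.s. This completes the reduction and establishes both conclusions.
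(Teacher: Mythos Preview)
Your argument is correct and arrives at the same conclusion, but the route differs from the paper's in two respects worth noting. The paper establishes weak convergence of $F_n$ (respectively $\hat F_n$) to $F$ from the uniform CDF convergence and then invokes, as a black box, the standard fact that weak convergence of marginals implies weak convergence of the product measure (citing Billingsley). You instead exploit the CDF characterization of weak convergence on Euclidean space together with the factorization $F_{\mu^{\otimes r}}(x^{(1)},\dots,x^{(r)})=\prod_k F_\mu(x^{(k)})$, which makes the argument self-contained and elementary. Second, where the paper simply says ``the same reasoning'' applies to $\hat F_n^{\otimes r}$, you make the passage from pointwise-in-probability CDF convergence to weak-convergence-in-probability of the random measures explicit via the subsequence principle in the L\'evy--Prokhorov metric; this is the right way to handle that step, and it is more careful than the paper's treatment. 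What your approach buys is transparency and no external citation; what the paper's approach buys is brevity.
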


\begin{proof}

Since $X_1, X_2, \dotsc, X_n$ are i.i.d. with distribution $F$, by the Glivenko--Cantelli theorem, we have:
\[
\sup_{x \in \bR^d} \big| F_n(x) - F(x) \big| \xrightarrow{\mathrm{a.s.}} 0 \quad \text{as } n \to \infty.
\]

This implies that for any bounded, continuous function $g: \bR^d \to \bR$,
\[
\int g \, \mathrm dF_n \xrightarrow{\mathrm{a.s.}} \int g \, \mathrm dF.
\]

Now, consider any bounded, continuous function $h: (\bR^d)^r \to \bR$. We need to show that:
\[
\int h \, \mathrm dF_n^{\otimes r} \xrightarrow{\mathrm{a.s.}} \int h \, \mathrm dF^{\otimes r}.
\]

Since $F_n^{\otimes r} = F_n \otimes F_n \otimes \cdots \otimes F_n$ ($r$ times), we have:
\[
\int h(x_1, x_2, \dotsc, x_r) \, \mathrm dF_n^{\otimes r}(x_1, x_2, \dotsc, x_r) = \int \cdots \int h(x_1, x_2, \dotsc, x_r) \, \mathrm dF_n(x_1) \cdots \mathrm dF_n(x_r).
\]

Similarly, for $F^{\otimes r}$:
\[
\int h(x_1, x_2, \dotsc, x_r) \, \mathrm dF^{\otimes r}(x_1, x_2, \dotsc, x_r) = \int \cdots \int h(x_1, x_2, \dotsc, x_r) \, \mathrm dF(x_1) \cdots \mathrm dF(x_r).
\]

Since $F_n \xrightarrow{\mathrm{a.s.}} F$ uniformly, for each $j = 1, 2, \dotsc, m$, we have:
\[
\int g_j \, \mathrm dF_n \xrightarrow{\mathrm{a.s.}} \int g_j \, \mathrm dF
\]
for any bounded, continuous function $g_j: \bR^d \to \bR$.

By the weak convergence of product measures \citep{billingsley2013convergence}, it follows that:
\[
\int h \, \mathrm dF_n^{\otimes r} \xrightarrow{\mathrm{a.s.}} \int h \, \mathrm dF^{\otimes r}.
\]

The convergence of $\hat{F}_n^{\otimes r}$ to $F^{\otimes r}$ can be proved with the same reasoning.

\end{proof}

\begin{lemma}\label{thm:u-stat-convergence}
  Under the above assumptions of Lemma~\ref{lemma:weak_convergence}, we have
\[
\hat{U}_n - U_n \xrightarrow{\P} 0 \quad \text{as } n \to \infty.
\]
\end{lemma}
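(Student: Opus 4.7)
\medskip

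\noindent\textbf{Proof proposal.} The plan is to reduce the difference $\hat{U}_n - U_n$ to the difference of integrals of $h$ against product empirical measures, and then invoke the weak convergence result of the preceding lemma. The key point is that neither $\hat{U}_n$ nor $\hat{X}_i$ admits a pointwise coupling to $X_i$, so a direct kernel-difference bound of the form $|h(X_{i_1},\dots,X_{i_r})-h(\hat{X}_{i_1},\dots,\hat{X}_{i_r})|$ is unavailable; instead all control must pass through the empirical measures.

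\medskip

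\noindent First, I would rewrite the U-statistic as an integral against the (without-replacement) empirical product measure $F_n^{[r]}$, and introduce the auxiliary ``with-replacement'' U-statistic
\[
V_n \;=\; \int h\, dF_n^{\otimes r} \;=\; \frac{1}{n^r}\sum_{i_1,\dots,i_r} h(X_{i_1},\dots,X_{i_r}),
\]
and analogously $\hat{V}_n = \int h\, d\hat{F}_n^{\otimes r}$. Since $h$ is bounded (say by $\|h\|_\infty$), the discrepancy between $U_n$ and $V_n$ comes from the diagonal tuples (those with some $i_j = i_k$), whose count is $O(n^{r-1})$ out of $n^r$ total. A direct combinatorial bound then gives $|U_n - V_n| \le C\|h\|_\infty/n$ and likewise $|\hat{U}_n - \hat{V}_n| \le C\|h\|_\infty/n$, both deterministic in $n$. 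Hence it suffices to prove $\hat{V}_n - V_n \xrightarrow{P} 0$.

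\medskip

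\noindent Next, I would apply the previous Lemma~\ref{lemma:weak_convergence} (weak convergence of product empirical measures). Since $h$ is bounded and continuous, the Portmanteau theorem applied with $F_n^{\otimes r} \xrightarrow{\text{w.a.s.}} F^{\otimes r}$ yields
\[
V_n \;=\; \int h\, dF_n^{\otimes r} \;\xrightarrow{\text{a.s.}}\; \int h\, dF^{\otimes r} \;=\; \mathbb{E}\bigl[h(X_1,\dots,X_r)\bigr].
\]
Similarly, weak convergence in probability of $\hat{F}_n^{\otimes r}$ to $F^{\otimes r}$ together with boundedness and continuity of $h$ implies, via a subsequence argument or the standard ``convergence in probability to a constant'' characterization using bounded continuous test functions,
\[
\hat{V}_n \;=\; \int h\, d\hat{F}_n^{\otimes r} \;\xrightarrow{P}\; \mathbb{E}\bigl[h(X_1,\dots,X_r)\bigr].
\]
Combining these with the deterministic $O(1/n)$ bounds above and applying the triangle inequality gives $\hat{U}_n - U_n \xrightarrow{P} 0$.

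\medskip

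\noindent The main obstacle, in my view, is the passage from weak convergence of $\hat{F}_n^{\otimes r}$ (which is the statement of Lemma~\ref{lemma:weak_convergence}) to convergence in probability of $\int h\, d\hat{F}_n^{\otimes r}$. Because $\{\hat{X}_i\}$ is possibly dependent, one cannot invoke any SLLN for U-statistics. The cleanest route is to recall that convergence in probability of random probability measures to a \emph{deterministic} limit is equivalent to the convergence of all bounded-continuous test integrals in probability; since $h$ itself is bounded and continuous, this is exactly what we need. A minor technical point is that the definition of weak convergence in probability must be articulated carefully (e.g., via every subsequence having a further a.s.-convergent subsequence, or via the metric for weak convergence on Polish spaces), but given the boundedness of $h$ the conclusion follows routinely.
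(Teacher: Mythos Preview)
Your proposal is correct and follows essentially the same route as the paper: both reduce the U-statistic to the with-replacement V-statistic via an $O(1/n)$ diagonal-tuple count, then use the preceding weak-convergence lemma to show both $\int h\,dF_n^{\otimes r}$ and $\int h\,d\hat F_n^{\otimes r}$ converge to the common limit $\int h\,dF^{\otimes r}$, and conclude by the triangle inequality. Your discussion of the passage from weak convergence in probability of $\hat F_n^{\otimes r}$ to convergence in probability of $\int h\,d\hat F_n^{\otimes r}$ is in fact more careful than the paper, which simply invokes boundedness and Lipschitz continuity of $h$ at that step.
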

\begin{proof}

We aim to show that \( \hat{U}_n - U_n \xrightarrow{\P} 0 \) as \( n \to \infty \). We will use
\[
\theta = \int h \, \mathrm dF^{\otimes r} = \mathbb{E}[ h(X_1, \dotsc, x_r) ]
\]
as the intermediate quantity for the convergence. To directly characterize the approximation error between the U-statistic and the integral over the product measure, we consider the difference between sampling without replacement (as in \( U_n, \hat{U}_n \)) and sampling with replacement (as in \( \int h \, \mathrm d{F}_n^{\otimes r}, \int h \, \mathrm d\hat{F}_n^{\otimes r} \)).

It is well-known that the U-statistic \( U_n \) can be related to an integral over the product measure \( F_n^{\otimes r} \), but with a subtle difference in normalization. Let
\[
S_{\mathrm{distinct}} = \sum_{1 \leq i_1 < \cdots < i_r \le n} h(\hat{X}_{i_1}, \dotsc, \hat{X}_{i_r})
\]
be the sum over all distinct \( r \)-tuples from \( \hat{X}_i \)'s. Also define
\[
S_{\mathrm{all}} = \sum_{i_1=1}^n \cdots \sum_{i_r=1}^n h(\hat{X}_{i_1}, \dotsc, \hat{X}_{i_r}),
\]
the sum over all \( r \)-tuples. With these definitions, we have
\[
\hat{U}_n = \frac{S_{\mathrm{distinct}}}{\binom{n}{r}}, \quad \int h \, \mathrm d\hat{F}_n^{\otimes r} = \frac{S_{\mathrm{all}}}{n^r}.
\]
Now, taking the difference of the two terms as
\[
\varepsilon_n = \hat{U}_n - \int h \, \mathrm d\hat{F}_n^{\otimes r} = \frac{S_{\mathrm{distinct}}}{\binom{n}{r}} - \frac{S_{\mathrm{all}}}{n^r}.
\]

\noindent If we consider only tuples with distinct indices, there are \(\binom{n}{r} r!\) such tuples. Therefore, we have
\[
S_{\mathrm{all}} = r! S_{\mathrm{distinct}} + R_n,
\]
where \( R_n \) accounts for the contributions from tuples that include repeated indices. The number of tuples with repeated indices is at most:
\(
n^r - \binom{n}{r} r!.
\)
Note that from the definitions of combinatorial terms, we have
\[
\binom{n}{r} r! = n^r \left(1 - \ocal\left(\frac{1}{n}\right)\right), \quad n^r - \binom{n}{r} r! = n^r \ocal\left(\frac{1}{n}\right) = \ocal(n^{r-1}).
\]
\noindent From this, we get
\(
\hat{U}_n = S_{\mathrm{distinct}}/\binom{n}{r} = (S_{\mathrm{all}} - R_n)/(r! \binom{n}{r}).
\)
Compare this with \(\int h \, \mathrm d\hat{F}_n^{\otimes r}\), we have
\[
\hat{U}_n - \int h \, \mathrm d\hat{F}_n^{\otimes r} = \frac{S_{\mathrm{all}} - R_n}{r! \binom{n}{r}} - \frac{S_{\mathrm{all}}}{n^r} = S_{\mathrm{all}}\left(\frac{1}{r!\binom{n}{r}} - \frac{1}{n^r}\right) - \frac{R_n}{r! \binom{n}{r}}.
\]
With the boundedness assumption of $h$, we have \( S_{\mathrm{all}} \leq M n^r \), thus:
\[
S_{\mathrm{all}}\left(\frac{1}{r! \binom{n}{r}} - \frac{1}{n^r}\right) = M n^r \ocal\left(\frac{1}{n^{r+1}}\right) = \ocal\left(\frac{1}{n}\right).
\]
Similarly, we have \( |R_n| = M \ocal(n^{r-1}) \) and 
\(
1/(r! \binom{n}{r}) = \ocal(1/n).
\)
Combining these leads to
\[
\left|\hat{U}_n - \int h \, \mathrm d\hat{F}_n^{\otimes r}\right| \leq \ocal\left(\frac{1}{n}\right) + \ocal\left(\frac{1}{n}\right) = \ocal\left(\frac{1}{n}\right).
\]

\noindent Thus \( \hat{U}_n - \int h \, \mathrm d\hat{F}_n^{\otimes r} = \ocal(1/n) \to 0 \) as \( n \to \infty \).

\medskip

\noindent Using the same reasoning, define
\(
\varepsilon_n = {U}_n - \int h \, \mathrm d{F}_n^{\otimes r}
\)
and we also have 
\(
| {\varepsilon}_n | = \ocal(1/n).
\)

Now we have the decomposition
\begin{align*}
\hat{U}_n - U_n &= \left( \hat{U}_n - \int h \, \mathrm d\hat{F}_n^{\otimes r} \right) + \left( \int h \, \mathrm d\hat{F}_n^{\otimes r} -\theta\right) + \left(\theta- \int h \, \mathrm dF_n^{\otimes r} \right) + \left( \int h \, \mathrm dF_n^{\otimes r} - U_n \right) \\
&= \hat{\varepsilon}_n + \left( \int h \, \mathrm d\hat{F}_n^{\otimes r} -\theta\right) + \left(\theta- \int h \, \mathrm dF_n^{\otimes r} \right) - \varepsilon_n.
\end{align*}
\noindent Therefore,
\[
| \hat{U}_n - U_n | \leq | \hat{\varepsilon}_n | + \left| \int h \, \mathrm d\hat{F}_n^{\otimes r} - \int h \, \mathrm dF^{\otimes r} \right|+ \left| \int h \, \mathrm dF^{\otimes r} - \int h \, \mathrm dF_n^{\otimes r} \right| + | \varepsilon_n |.
\]

Because of the boundedness and Lipschitz continuity of $h$, by the previous results of $\varepsilon$ and $\hat{\varepsilon}$, as well as Lemma~\ref{lemma:weak_convergence}, we have
\[
\hat{U}_n - U_n \xrightarrow{\P} 0 \quad \text{as } n \to \infty.
\]

\end{proof}

Next, we introduce the property that, conditioning on the latent spaces, the network moments concentrate around its conditional expectation.

\begin{lemma}[Conditional Concentration of Network Moments]\label{lemma:beronulli-concentration}
\label{lem:concentration-subgraph}
Consider a random (simple, undirected) graph on $n$ vertices with adjacency matrix $A = (A_{ij})$, where $1 \leq i < j \leq n$, and $A_{ij} \sim \text{Bernoulli}(P_{ij})$ are independent random variables. Let $m \geq 2$ be fixed and consider a bounded function $h$ on the induced subgraph of $m$ distinct vertices with $|h| \leq 1$. Define
\[
X(A) = \frac{1}{\binom{n}{r}} \sum_{1 \leq i_1 < i_2 < \cdots < i_r \leq n} h(A_{[i_1, i_2, \ldots, i_r]}),
\]
where $A_{[i_1, i_2, \ldots, i_r]}$ denotes the induced subgraph on vertices $\{i_1, i_2, \ldots, i_r\}$. For every $\varepsilon > 0$, there exist positive constants $c(r)$ (not depending on $n$) such that
\[
\mathbb{P}(|X - \mathbb{E}[X]| \geq \varepsilon ) \leq 2 \exp(-c(r)\varepsilon^2 n^2),
\]
and in particular,
\[
X \xrightarrow{\p} \mathbb{E}[X] \quad \text{as } n \to \infty.
\]
\end{lemma}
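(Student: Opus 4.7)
}
The plan is to apply McDiarmid's bounded differences inequality, treating $X$ as a function of the $\binom{n}{2}$ independent Bernoulli random variables $\{A_{ij}\}_{1 \le i < j \le n}$. The single summand $h(A_{[i_1,\ldots,i_r]})$ depends on the edge $A_{ij}$ only if both $i$ and $j$ lie in $\{i_1,\ldots,i_r\}$, and the number of $r$-subsets of $[n]$ containing a fixed pair of vertices is $\binom{n-2}{r-2}$. Since $|h| \le 1$, flipping any single $A_{ij}$ changes at most $\binom{n-2}{r-2}$ terms of the sum, each by at most $2$. Consequently the bounded-difference coefficient for edge $\{i,j\}$ is
\[
 c_{ij} \;\le\; \frac{2\binom{n-2}{r-2}}{\binom{n}{r}} \;=\; \frac{2\,r(r-1)}{n(n-1)} \;=\; O(n^{-2}).
\]

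With this bound in hand, McDiarmid's inequality yields
\[
 \mathbb{P}\bigl(|X - \mathbb{E}[X]| \ge \varepsilon\bigr)
 \;\le\; 2\exp\!\Bigl(-\frac{2\varepsilon^2}{\sum_{i<j} c_{ij}^2}\Bigr).
\]
Since $\sum_{i<j} c_{ij}^2 \le \binom{n}{2}\bigl(2r(r-1)/n(n-1)\bigr)^2 = O(n^{-2})$, the denominator scales like $n^{-2}$, so the exponent is of order $-\varepsilon^2 n^2$, giving exactly the claimed bound with some $c(r)>0$ depending only on $r$. Convergence in probability then follows by taking $\varepsilon$ fixed and letting $n\to\infty$, since $\exp(-c(r)\varepsilon^2 n^2)\to 0$.

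I do not anticipate a genuine obstacle here: the only subtle point is being careful that the dependence set of each summand on the underlying independent edge variables has the right combinatorial size, which is what controls the $c_{ij}$'s. Once $c_{ij}=O(1/n^2)$ is established uniformly, McDiarmid delivers the $\exp(-c(r)\varepsilon^2 n^2)$ rate automatically. The independence of $\{A_{ij}\}_{i<j}$ conditional on the latent vectors (which is the regime in which this lemma will later be applied) is essential; otherwise McDiarmid would not apply directly and one would need a martingale argument instead.
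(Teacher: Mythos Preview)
Your proposal is correct and follows essentially the same approach as the paper: apply McDiarmid's inequality to $X$ viewed as a function of the $\binom{n}{2}$ independent edge variables, bound the effect of flipping one edge by counting the $\binom{n-2}{r-2}$ affected $r$-subsets to get $c_{ij}=2r(r-1)/n(n-1)$, and then sum the squared differences to obtain the $\exp(-c(r)\varepsilon^2 n^2)$ rate.
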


\begin{proof} Consider the set of edges $\{A_{ij}: 1 \leq i < j \leq n\}$. There are $\binom{n}{2}$ such edges, each an independent Bernoulli random variable. The random variable $X$ is a function of these $\binom{n}{2}$ independent variables. We will apply McDiarmid's inequality (Lemma~\ref{lem:mcdiarmid}), a bounded-differences inequality, to show that $X$ is sharply concentrated around its expectation.

First, we need to bound the effect of changing one edge on the value of $X$. Fix an edge $(u,v)$ with $1 \leq u < v \leq n$. Consider two graphs $A$ and $A'$ that differ only on the edge $(u,v)$: $A'=A$ on all entries, except for $(u,v)$, where $A'_{uv} = 1 - A_{uv}$. We have
\[
X(A) = \frac{1}{\binom{n}{r}} \sum_{1 \leq i_1 < \cdots < i_r \leq n} h(A[i_1,\ldots,i_r]), \quad X(A') = \frac{1}{\binom{n}{r}} \sum_{1 \leq i_1 < \cdots < i_r \leq n} h(A'[i_1,\ldots,i_r]).
\]
The only $r$-tuples of vertices that can be affected by the change in the edge $(u,v)$ are those that contain both $u$ and $v$. The number of such subsets is $\binom{n-2}{r-2}$, since we choose the remaining $m-2$ vertices from the $n-2$ other vertices.

For each affected $r$-tuple, the value of $h$ can change by at most $2$ in absolute value (since $|h| \leq 1$). Therefore, the change in the numerator of $X$ is at most $2 \binom{n-2}{r-2}.$
The change in $X$ when flipping a single edge $(u,v)$ is bounded by
\[
\frac{2 \binom{n-2}{r-2}}{\binom{n}{r}}.
\]

We use the combinatorial identity:
\[
\frac{\binom{n-2}{r-2}}{\binom{n}{r}} = \frac{r(r-1)}{n(n-1)}.
\]
Hence, the maximum change in $X$ due to flipping one edge is
\[
\Delta := \frac{2 r(r-1)}{n(n-1)} .
\]
As $n \to \infty$, $\Delta \approx \frac{2r(r-1)}{n^2}$, which vanishes.

Now we apply McDiarmid's inequality (Lemma~\ref{lem:mcdiarmid}). There are $M = \binom{n}{2}$ edges. Each edge affects $X$ by at most $\Delta$. Hence,
\[
\sum_{j=1}^r \Delta_j^2 \leq M \Delta^2 = \binom{n}{2} \left(\frac{2r(r-1)}{n(n-1)}\right)^2 \leq \frac{C'(r)}{n^2}
\]
for some constant $C'(r)$ that depends only on $r$.

By McDiarmid's inequality, for any $\varepsilon > 0$,
\[
\mathbb{P}(|X - \mathbb{E}[X]| \geq \varepsilon) \leq 2 \exp\left(- \frac{2\varepsilon^2}{\sum_{j=1}^r \Delta_j^2}\right).
\]
Since $\sum_{j=1}^r \Delta_j^2 = \ocal(1/n^2)$, we have
\[
\mathbb{P}(|X - \mathbb{E}[X]| \geq \varepsilon) \leq 2 \exp(-c(r)\varepsilon^2 n^2)
\]
for some constant $c(r) > 0$. This shows exponential concentration of $X$ around its expectation as $n$ grows. In particular, this indicates that
\[
X \xrightarrow{\p} \mathbb{E}[X] \quad \text{as } n \to \infty.
\]
\end{proof}

Now combining Lemma~\ref{thm:u-stat-convergence} and Lemma~\ref{lemma:beronulli-concentration}, we can get the major claim about the network moments.

\begin{proof}[Proof of Theorem~\ref{thm:main-moments}]

First, based on \cite{qi2024multivariate}, we know that the conditional expectations of $X_R(A)$ and $X_R(\tilde{A})$, given $\{Z_i\}$ and $\{\tilde{Z}_i\}$, respective, can be written as U-statistics with a bounded continuous function of $h$. That is,
\[
U_n = \mathbb{E}[X_R(A)|\{Z_i\}] = \frac{1}{\binom{n}{r}} \sum_{1\le i_1 < \cdots < i_r \le n}\mathbb{E}[\mbone(A_{[i_1, \ldots, i_r]} \cong R)|\{Z_i\}] = \frac{1}{\binom{n}{r}} \sum_{1\le i_1 < \cdots < i_r \le n}h(Z_{i_1}, \ldots, Z_{i_r})
\]
and

\[
\tilde{U}_n = \mathbb{E}[X_R(A)|\{\tilde{Z}_i\}] = \frac{1}{\binom{n}{r}} \sum_{1\le i_1 < \cdots < i_r \le n}h(\tilde{Z}_{i_1}, \ldots, \tilde{Z}_{i_r}).
\]

We have the following decomposition
\begin{align*}
  X_R(A)-X_R(\tilde{A}) = (X_R(A)- U_n) + (U_n-\tilde{U}_n) + (\tilde{U}_n - X_R(\tilde{A}) ).
\end{align*}

Conditioning on \(\{Z_i\}\) and \(\{\tilde{Z}_i\}\), respectively, the first and third term vanish in probability from Lemma~\ref{lemma:beronulli-concentration}. By Theorem~\ref{thm:main-CDF}, we know that the empirical CDF of $\tilde{Z}_i$'s satisfies the requirement of Lemma~\ref{lemma:weak_convergence}.
And then we call Lemma~\ref{thm:u-stat-convergence}, and the second term also vanishes in probability.
\end{proof}

\newpage

\section{Supporting lemmas}

\begin{lemma}[Telescoping Inequality]
\label{lem:product_difference_sum_bound}
Let \( m \geq 1 \) be an integer, and let \( \{a_i\}_{i=1}^r \) and \( \{b_i\}_{i=1}^r \) be two sequences of real numbers satisfying \( 0 \leq a_i, b_i \leq 1 \) for all \( i = 1, 2, \dots, m \). Then, the following inequality holds:
\[
\left| \prod_{i=1}^r a_i - \prod_{i=1}^r b_i \right| \leq \sum_{k=1}^r \left| a_k - b_k \right|.
\]
\end{lemma}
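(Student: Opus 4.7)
The plan is to prove the inequality by a standard telescoping decomposition of the difference of products, followed by a coordinate-wise bound that exploits $a_i,b_i\in[0,1]$.

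First, I would introduce the hybrid products
\[
P_k \;=\; \Bigl(\prod_{i=1}^{k-1} b_i\Bigr)\Bigl(\prod_{i=k}^{r} a_i\Bigr), \qquad k=1,\dots,r+1,
\]
with the conventions that empty products equal $1$. Then $P_1 = \prod_{i=1}^{r} a_i$ and $P_{r+1}=\prod_{i=1}^{r} b_i$, so a telescoping sum gives
\[
\prod_{i=1}^{r} a_i \;-\; \prod_{i=1}^{r} b_i \;=\; \sum_{k=1}^{r}\bigl(P_k - P_{k+1}\bigr).
\]
Consecutive hybrids differ only in the $k$-th factor, so
\[
P_k - P_{k+1} \;=\; \Bigl(\prod_{i<k} b_i\Bigr)\,(a_k-b_k)\,\Bigl(\prod_{i>k} a_i\Bigr).
\]

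Next I would apply the triangle inequality and the bounds $0\le a_i,b_i\le 1$, which imply $\prod_{i<k} b_i \le 1$ and $\prod_{i>k} a_i \le 1$. This yields
\[
\Bigl|\prod_{i=1}^{r} a_i - \prod_{i=1}^{r} b_i\Bigr|
\;\le\; \sum_{k=1}^{r} \Bigl(\prod_{i<k} b_i\Bigr)\,|a_k-b_k|\,\Bigl(\prod_{i>k} a_i\Bigr)
\;\le\; \sum_{k=1}^{r} |a_k-b_k|,
\]
which is the desired inequality. There is no real obstacle here; the only step that requires attention is making sure the telescoping is set up correctly and that the unit-interval hypothesis is used to discard the surrounding product factors, which is precisely where the assumption $a_i,b_i\in[0,1]$ enters.
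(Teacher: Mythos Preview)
Your proof is correct and is exactly the standard telescoping argument that the lemma's name suggests; the paper states this lemma without proof as a supporting fact, so there is nothing to compare against beyond the intended method you have supplied.
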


\begin{lemma}[McDiarmid's Inequality]
\label{lem:mcdiarmid}
Let $Y_1,\ldots,Y_M$ be independent random variables taking values in arbitrary sets, and let $f(y_1,\ldots,y_M)$ be a function such that for all $j$ and for all $y_1,\ldots,y_M, y_j'$, we have
\[
|f(y_1,\ldots,y_M) - f(y_1,\ldots,y_{j-1}, y_j', y_{j+1},\ldots,y_M)| \leq c_j,
\]
for some constants $c_j$. Define $X = f(Y_1,\ldots,Y_M)$ and $\mu = \mathbb{E}[X]$. Then for any $\varepsilon > 0$,
\[
\mathbb{P}(|X - \mu| \geq \varepsilon) \leq 2\exp\left(-\frac{2\varepsilon^2}{\sum_{j=1}^r c_j^2}\right).
\]
\end{lemma}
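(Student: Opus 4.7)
The plan is to prove this by the standard Doob martingale approach combined with a conditional version of Hoeffding's lemma. First I would construct the Doob martingale $X_k = \mathbb{E}[X \mid Y_1,\ldots,Y_k]$ for $k=0,1,\ldots,M$, with filtration $\mathcal{F}_k = \sigma(Y_1,\ldots,Y_k)$, so that $X_0 = \mu$ and $X_M = X$. This gives the telescoping decomposition $X - \mu = \sum_{k=1}^M D_k$ where $D_k = X_k - X_{k-1}$ is a martingale difference with $\mathbb{E}[D_k\mid \mathcal{F}_{k-1}] = 0$.

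Next I would show that the conditional range of $D_k$ given $\mathcal{F}_{k-1}$ is bounded by $c_k$. Using the independence of the $Y_j$'s, one may write $X_k = g_k(Y_1,\ldots,Y_k)$ for some function $g_k$ obtained by integrating $f$ against the law of $(Y_{k+1},\ldots,Y_M)$. Defining the $\mathcal{F}_{k-1}$-measurable random variables
\[
U_k = \sup_{y} \bigl(g_k(Y_1,\ldots,Y_{k-1},y) - X_{k-1}\bigr), \qquad L_k = \inf_{y}\bigl(g_k(Y_1,\ldots,Y_{k-1},y) - X_{k-1}\bigr),
\]
the bounded differences hypothesis on $f$, together with Fubini / integrating against the distribution of the remaining variables, yields $U_k - L_k \le c_k$ almost surely, and $L_k \le D_k \le U_k$.

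Then I would apply Hoeffding's lemma conditionally: if $Z$ is $\mathcal{F}$-conditionally mean-zero with $\mathcal{F}$-conditional range in an interval of length $\ell$, then $\mathbb{E}[e^{\lambda Z}\mid \mathcal{F}] \le \exp(\lambda^2 \ell^2/8)$. Applying this to $D_k$ gives $\mathbb{E}[e^{\lambda D_k}\mid \mathcal{F}_{k-1}] \le \exp(\lambda^2 c_k^2/8)$. Iterating the tower property from $k=M$ down to $k=1$ yields $\mathbb{E}[e^{\lambda(X-\mu)}] \le \exp\bigl(\lambda^2 \sum_{k=1}^M c_k^2/8\bigr)$. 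Chernoff's bound then gives $\mathbb{P}(X-\mu \ge \varepsilon) \le \exp(-\lambda\varepsilon + \lambda^2 \sum c_k^2/8)$, and optimizing at $\lambda = 4\varepsilon/\sum c_k^2$ produces the one-sided bound $\exp(-2\varepsilon^2/\sum c_k^2)$. Repeating the argument with $-X$ in place of $X$ (which has the same bounded differences constants) gives the symmetric lower tail, and a union bound over the two tails yields the factor of $2$ in the stated inequality.

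The main obstacle is the conditional bounded-range step: one has to argue carefully that the worst-case conditional deviation of the Doob increment $D_k$ is controlled by the \emph{marginal} bounded-differences constant $c_k$ of $f$, even though $D_k$ involves integration over the future variables $Y_{k+1},\ldots,Y_M$. This is handled by observing that the supremum and infimum defining $U_k,L_k$ can be taken inside the integral, so that $U_k - L_k \le \sup_{y,y'}|g_k(Y_1,\ldots,Y_{k-1},y) - g_k(Y_1,\ldots,Y_{k-1},y')| \le c_k$ by the hypothesis on $f$. Note that the statement as written uses $\sum_{j=1}^r c_j^2$ in the denominator, which appears to be a typo for $\sum_{j=1}^M c_j^2$; the proof yields the latter, and in the paper's application to Lemma~\ref{lemma:beronulli-concentration} the sum runs over all $M = \binom{n}{2}$ edges.
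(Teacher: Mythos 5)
Your proof is correct; it is the standard Doob-martingale argument for McDiarmid's inequality (bounded differences inequality), and the paper itself states this lemma as a known supporting result without providing any proof, so there is nothing in the paper to diverge from. All the key steps are handled properly: the conditional-range bound $U_k - L_k \le c_k$ does require the independence of the $Y_j$'s exactly as you use it (to express $X_k$ as $g_k(Y_1,\ldots,Y_k)$ with $g_k$ an integral of $f$ over the law of the future coordinates, and $X_{k-1}$ as the further integral over $Y_k$), the conditional Hoeffding lemma and tower-property iteration are applied correctly, and the optimization $\lambda = 4\varepsilon/\sum_k c_k^2$ yields the stated constant. Your observation that the denominator in the statement should read $\sum_{j=1}^{M} c_j^2$ rather than $\sum_{j=1}^{r} c_j^2$ is also correct --- this is a typo in the paper (the same index slip appears in the application within Lemma~\ref{lemma:beronulli-concentration}, where the sum is over all $\binom{n}{2}$ edges).
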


\begin{lemma}[Lemma 2 of \cite{yin2006asymptotic}]
\label{lem:yin}
Let $\phi: \mathbb{R}^d \to \mathbb{R}^d$ smooth, injective on $B_{\delta_1}(x^*)$. Let $\phi(x^*) = y^*$. If for $\rho, \delta_1 > 0$, $\min_{\norm{x - x^*} = \delta_1} \norm{\phi(x) - y^*} \ge \rho$. Then $\forall y$ with $\norm{y - y^*} \le \rho$, $\exists x \in B_{\delta_1}(x^*)$ s.t. $\phi(x) = y$.
\end{lemma}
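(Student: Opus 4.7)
The plan is to prove Lemma~\ref{lem:yin} via Brouwer's topological degree combined with a straight-line homotopy. After translating coordinates so that $x^* = 0$ and $y^* = 0$, the hypothesis reads $\min_{\norm{x} = \delta_1} \norm{\phi(x)} \ge \rho$, and the target is to show that every $y$ with $\norm{y} \le \rho$ lies in $\phi(B_{\delta_1}(0))$. The key ingredient I would establish first is that $\deg(\phi, B, 0) = \pm 1$, where $B := B_{\delta_1}(0)$: since $\phi$ is continuous and injective on the closed ball $\overline{B}$ with $\phi(0) = 0$, invariance of domain implies $\phi$ restricts to a homeomorphism from $\overline{B}$ onto $\phi(\overline{B})$, and $0$ is an isolated (indeed unique) preimage in $B$, which forces the local index at $0$ to have absolute value one.

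Next I would extend solvability to any $y$ with $\norm{y} < \rho$ via the straight-line homotopy $H_t(x) := \phi(x) - t y$ for $t \in [0, 1]$. On the boundary $\partial B$,
\[
\norm{H_t(x)} \;\ge\; \norm{\phi(x)} - t \norm{y} \;\ge\; \rho - \norm{y} \;>\; 0,
\]
so $0 \notin H_t(\partial B)$ for every $t \in [0,1]$. Homotopy invariance of the degree then yields $\deg(\phi - y, B, 0) = \deg(\phi, B, 0) = \pm 1 \ne 0$, and the nonvanishing degree produces an $x \in B$ with $\phi(x) = y$. The boundary case $\norm{y} = \rho$ is handled by a routine limit argument: take $y_n \to y$ with $\norm{y_n} < \rho$, use the previous step to obtain $x_n \in B$ with $\phi(x_n) = y_n$, extract a convergent subsequence $x_n \to x \in \overline{B}$, and conclude $\phi(x) = y$ by continuity.

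The main obstacle I expect is the rigorous justification of $\deg(\phi, B, 0) = \pm 1$ without further regularity on $\phi$: smoothness plus injectivity does not imply an invertible Jacobian at $0$ (e.g.\ $x \mapsto x^3$ on $\mathbb{R}$). The cleanest fix is to invoke invariance of domain as above, so that $\phi(B)$ is open and contains a neighborhood of $0$, and then use the excision and normalization axioms of Brouwer degree to pin down the local index at the unique preimage. In the smooth setting one can alternatively apply Sard's theorem to select a regular value arbitrarily close to $0$ and count preimages, where injectivity forces exactly one critical-free preimage and hence degree $\pm 1$. A variational alternative, minimizing $f(x) = \norm{\phi(x) - y}^2$ on $\overline{B}$, would avoid degree theory but requires showing the minimum equals zero, which is not automatic when the Jacobian can vanish, so I view the topological route as both more transparent and more robust.
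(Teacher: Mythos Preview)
The paper does not supply its own proof of this lemma: it is quoted verbatim from \cite{yin2006asymptotic} in the ``Supporting lemmas'' appendix and used as a black box in the analysis of the node-wise logistic estimator. So there is no in-paper argument to compare against.

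Your proposed route via Brouwer degree and a straight-line homotopy is correct and is in fact the standard way such inverse-function-type lemmas are established. The only place to be slightly careful is the boundary case $\norm{y - y^*} = \rho$: your limit argument delivers a solution $x \in \overline{B}_{\delta_1}(x^*)$, not necessarily the open ball. This is consistent with how the lemma is actually applied in the paper (where $B_{r_1}(\beta^*)$ is treated as closed when checking that segments $[\beta^*,\beta]$ lie inside it), so it is not a defect, but you may want to state explicitly that $B_{\delta_1}(x^*)$ is the closed ball here. Your discussion of the possible Jacobian degeneracy at $x^*$ and the fix via invariance of domain is apt; the Sard-theorem alternative you mention also works and is closer in spirit to how such results are often presented in asymptotic-statistics references.
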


\newpage
\section{Handling node attributed networks}\label{sec:attributes}
The proposed GRAND method and its theory can be readily extended to handle network data with node attributes. Here we outline this extension. Suppose in addition to the network $A$, we also observe node attribute vectors $U_i$ for each node $i$. Let $U \in \mathbb{R}^{(n+m)\times p}$ be the matrix stacking all node attributes. We use the same indices to denote $U^{1}$ as the first $n$ rows associated with the to-be-released nodes and $U^{2}$ as the last $m$ rows associated with the holdout nodes. Our task is to release a network with node attributes for individuals $1, \ldots, n$ with differential privacy at the node level.

To extend GRAND to this situation, we can define a generalization of the latent space model with node attributes.
\begin{defi}[General Attributed Latent Space Model]\label{defi:generic-attribute}
We say that $A$ is a network generated from the general latent space model with node attributes if there exists a distribution $F$ (unknown) on $\mathbb{R}^{d+p}$ and a known symmetric generative function $W: \mathbb{R}^d\times \mathbb{R}^d\to[0,1]$ such that $A$ can be generated through the procedure below:
$$
(Z_1, U_1),\ldots,(Z_N, U_N) \stackrel{\mathrm{i.i.d.}}{\sim} F, \quad\quad A_{ij}\stackrel{\mathrm{ind.}}{\sim}\text{Bernoulli}(W(Z_i,Z_j)), \quad i>j.\footnote{We can also include $U_i$ in the probabilities, if preferred.}
$$
\end{defi}
Note that here, the node attributes and the latent vectors for the network can be arbitrarily correlated or even overlapping.

Algorithm~\ref{alg:TNR} can be easily extended to handle this situation, with only two minor modifications:
\begin{enumerate}
    \item In Step 4, the CDF will be estimated using $\{(\hat{Z}_i, U_i)\}_{i=n+1}^{n+m}$.
    \item In Step 6, the privatization procedure \eqref{eq:TNR} will be applied to $(\hat{Z}_i, U_i)$ jointly.
\end{enumerate}

The theoretical results of GRAND still hold. The only change we need is to extend the regularity conditions on the CDF for the latent vectors to the joint CDF of the latent vectors and the node attributes (still denoted by $F$ under Definition~\ref{defi:generic-attribute}). These are outlined below.
\begin{coro}
\begin{enumerate}
    \item The released attributed network satisfies the node-DP requirement.
    \item Under the same conditions, the results of Theorems~\ref{thm:main-marginal}, \ref{thm:main-CDF}, and \ref{thm:main-moments} hold for the released attributed network.
\end{enumerate}
\end{coro}

\begin{proof}
    The result is true by noticing that for $\tilde{Z}_i$ properties, $U_i$'s can be simply treated as an additional $p$ dimensions of the latent vectors without zero estimation errors.
\end{proof}

\begin{figure}[H]
\centering
\subfigure{\includegraphics[width=0.3\textwidth]{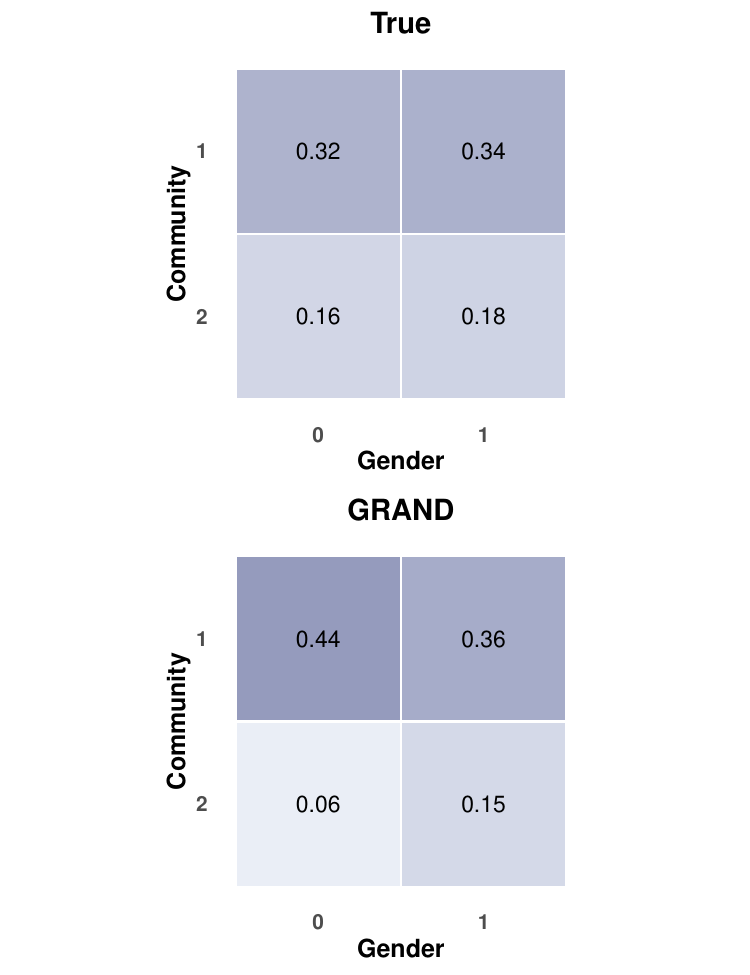}}
\hspace{-1.5cm}
\subfigure{\includegraphics[width=0.3\textwidth]{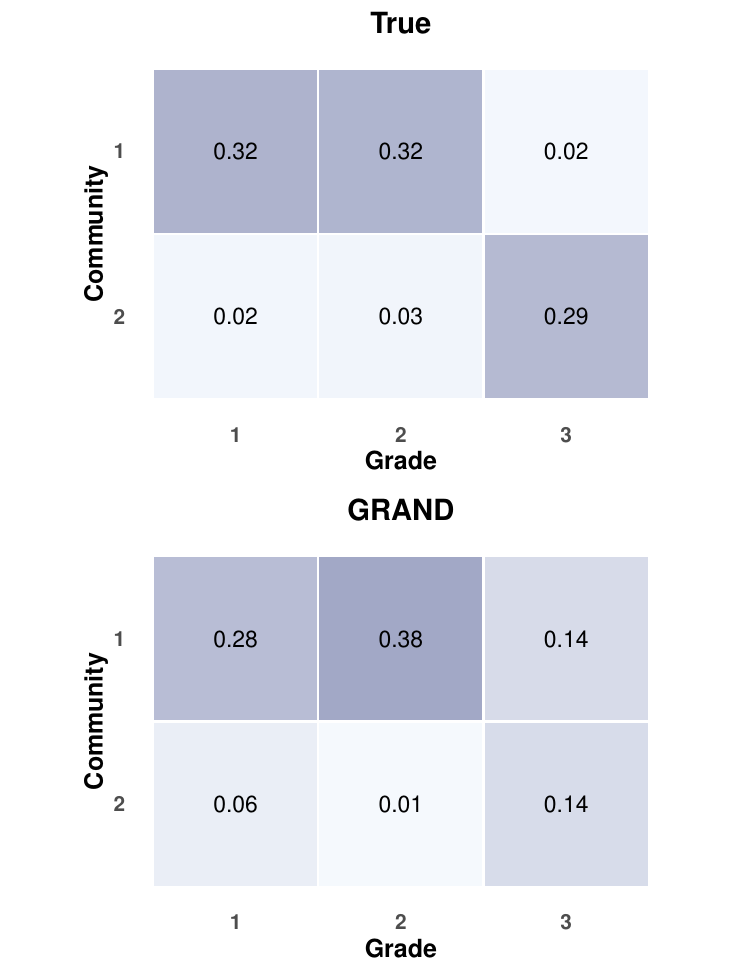}}
\hspace{-1.5cm}
\subfigure{\includegraphics[width=0.3\textwidth]{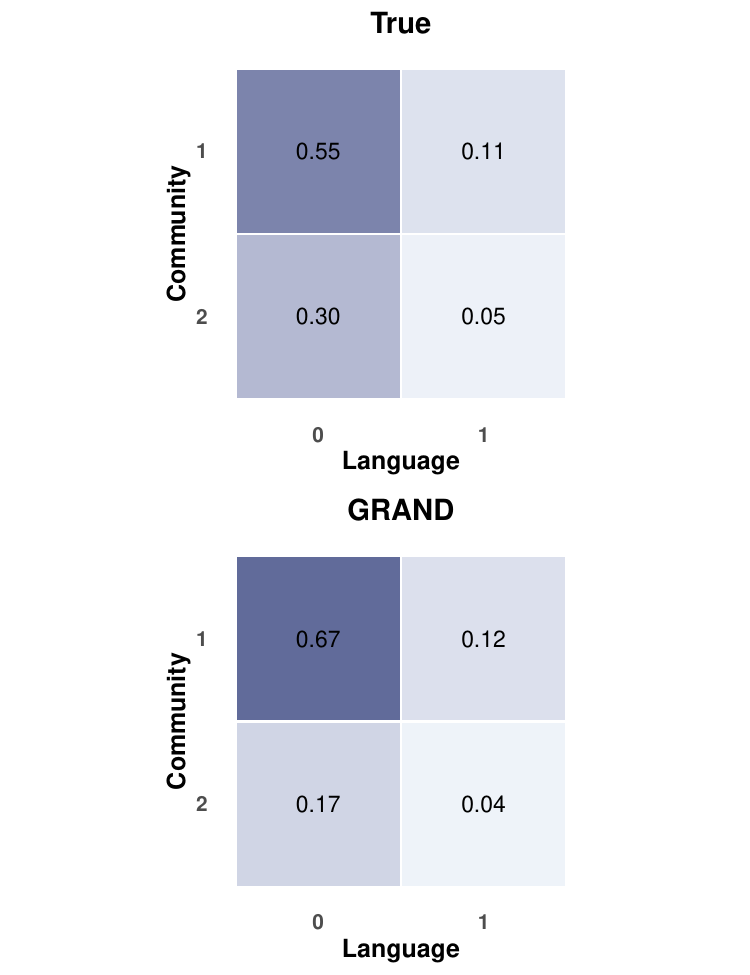}}
\hspace{-2cm}
\subfigure{\includegraphics[width=0.3\textwidth]{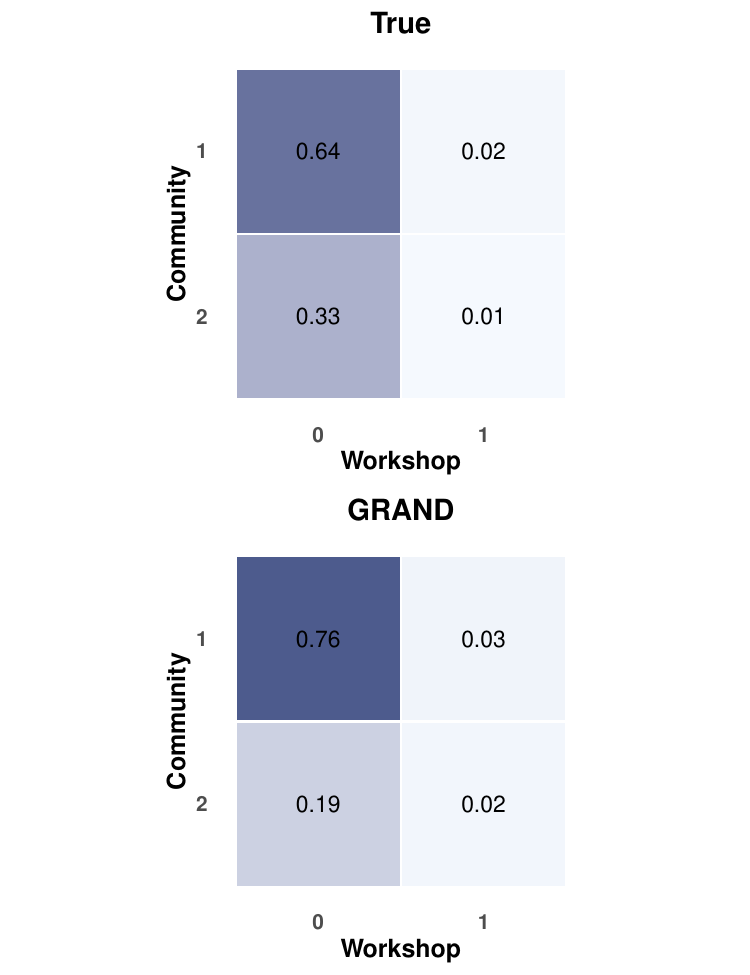}}
\caption{The distributions of four node (true/privatized) attributes across two communities in the (true/privatized) friendship network. The color of each cell indicates the proportion of the combination of values.}
\label{fig:attributes}
\end{figure}

Next, we demonstrate GRAND's performance on preserving the privacy of node-level attributes in addition to network data.
The data for this study is a friendship network from a middle school \cite{paluck2016changing,le2022linear,wang2024perturbation}. This is a social network of 653 students. In addition, the data set contains four node level attributes: Gender (Male: 1, Female: 0), Grade (1, 2, 3), (Home) Language (English: 1, Other: 0), and (If participated in educational) Workshop (Yes: 1, No: 0). We will use GRAND to privatize the network and the attributes at the same time. As before, we randomly choose half of the students to release and use $\varepsilon=1$ as the privacy budget. To demonstrate the preservation of attributes and their relations with the network, we apply spectral clustering \citep{lei2014consistency} to identify two communities from the true and privatized networks, respectively. We then look at the proportions of all the (true/privatized) attribute values across the (true/privatized) communities. The results are shown in Figure~\ref{fig:attributes}. It can be seen that GRAND reasonably preserves the overall distributions of attributes with respect to the community structure of the network.

\newpage

\section{Additional evaluation on CommunityFitNet}\label{appendix:500Network}

The CommunitFitNet is a collection of network data sets introduced in \cite{ghasemian2019evaluating}, which is also used in \cite{ghasemian2020stacking} and \cite{li2023network}. In addition to the real data analyses in the main text, we also conducted a comprehensive evaluation of our method on this data set. In particular, since the privacy problem is more meaningful in the context of social networks, we focus on the 107 social networks from this collection with size larger than 200. We follow the same procedure of releasing the network of only half of nodes while using the other half as the hold-out data set for model selection and model fitting. The results are evaluated using the same five metrics as we used in the main text.

\begin{figure}[H]
\centering
\subfigure{\includegraphics[width=0.32\textwidth]{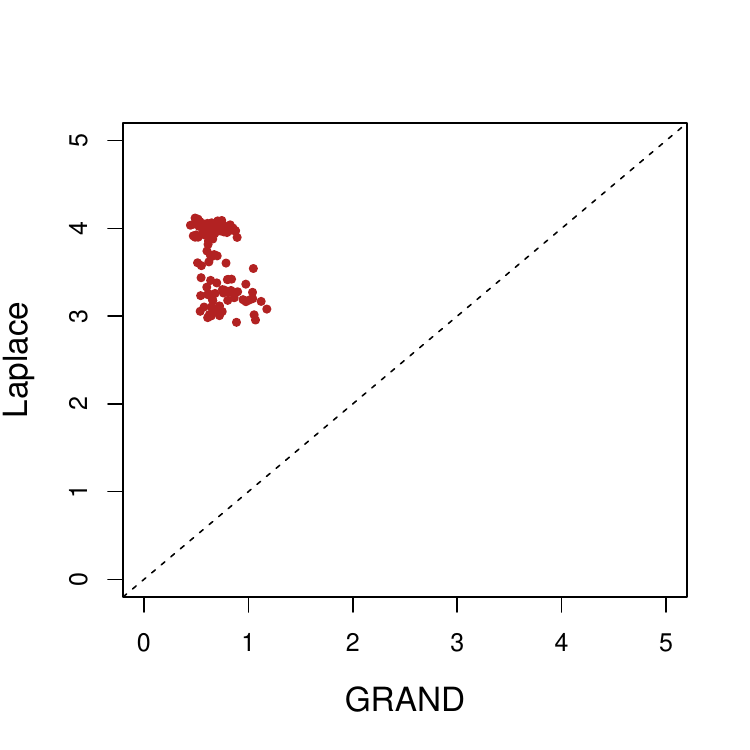}}
\subfigure{\includegraphics[width=0.32\textwidth]{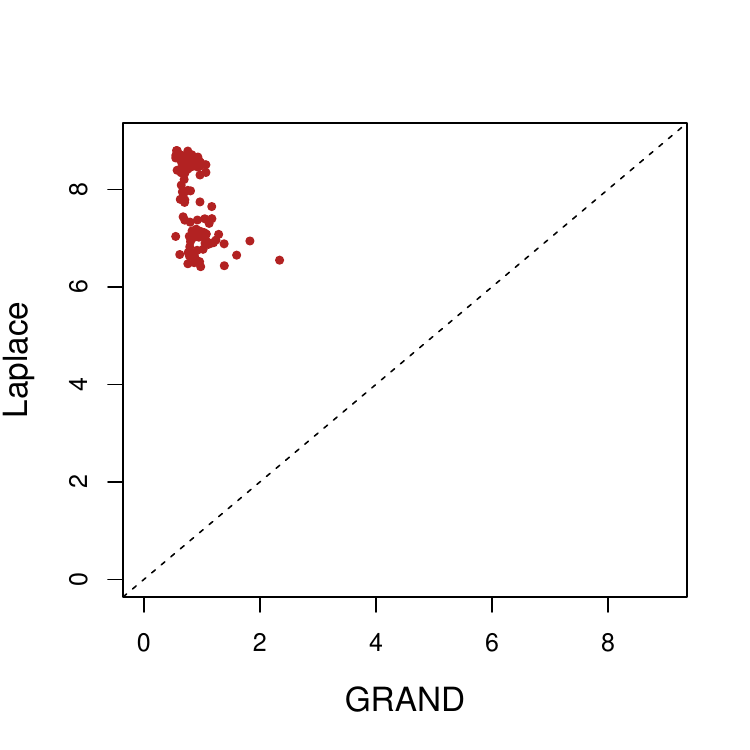}}
\subfigure{\includegraphics[width=0.32\textwidth]{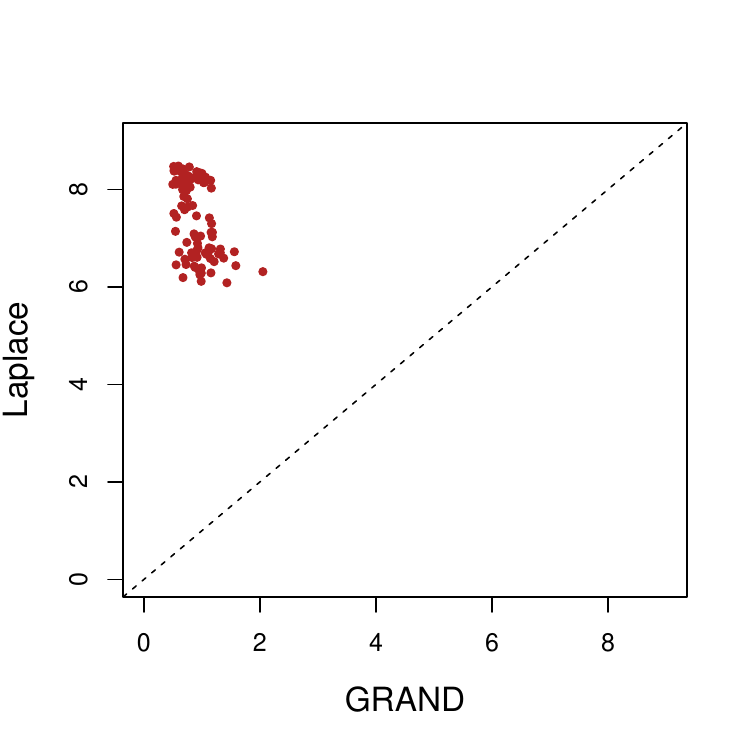}}
\\

\medskip

\subfigure{\includegraphics[width=0.32\textwidth]{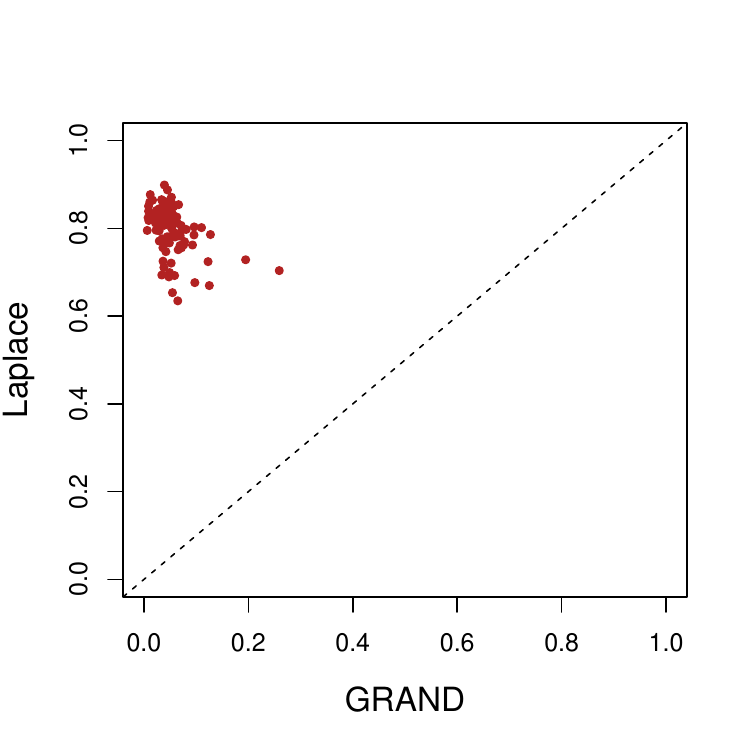}}
\subfigure{\includegraphics[width=0.32\textwidth]{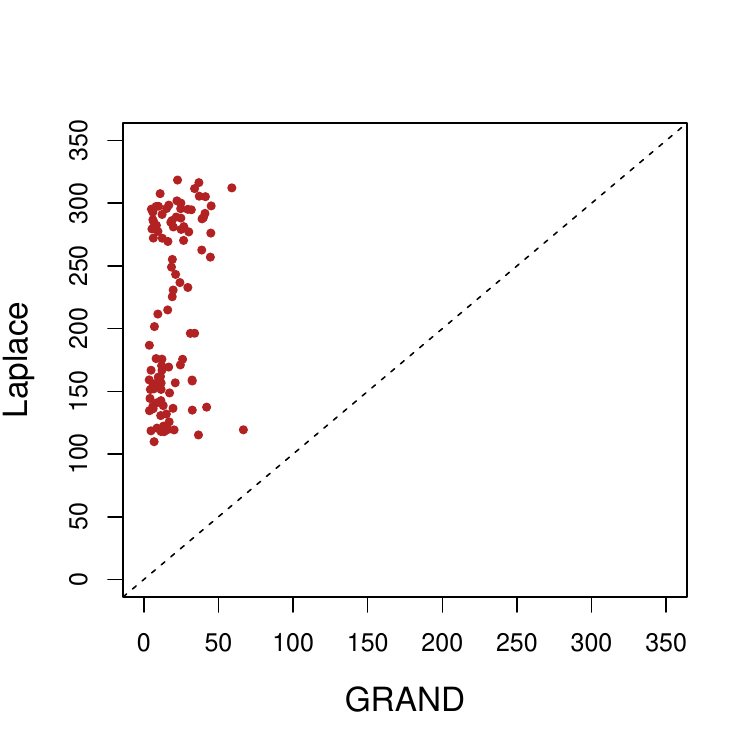}}
\caption{The Wasserstein distances between the privatized distributions of given local statistics and those in the true network on 107 social networks from CommunityFitNet.}
\label{fig:500Net}
\end{figure}
In Figure~\ref{fig:500Net}, we show the scatterplot of the Wasserstein distances between the privatized network and the true network for the distribution of local statistics. Both our method and the Laplace mechanism are evaluated for pairwise comparison. In particular, it can be seen that in all these networks, our method gives a better perservation of network properties (i.e., a much smaller Wasserstein distances across all datasets) than the naive Laplace method.

\newpage

\section{Additional Simulation Results}\label{app:additional-sim}

Table~\ref{tab:LSM-RDPG-2000-Wasserstein-rho05-noSE} presents the same set of results as in Tables~\ref{tab:LSM-RDPG-4000-d3-rho05-noSE} and \ref{tab:LSM-RDPG-4000-d6-rho05-noSE} but for $n=2000$.

Next, we include additional simulation results for $(\rho, n) =(0.025, 2000)$, $(0.1, 2000)$, $(0.025, 4000)$ and $(0.1, 4000)$, corresponding to Tables \ref{tab:LSM-RDPG-2000-Wasserstein-rho025-noSE}, \ref{tab:LSM-RDPG-2000-Wasserstein-rho1-noSE}, \ref{tab:LSM-RDPG-4000-Wasserstein-rho025-noSE}, and \ref{tab:LSM-RDPG-4000-Wasserstein-rho1-noSE}, respectively. The patterns are consistent with those in the main paper. It is worth noting that, overall, model estimation under the RDPG framework becomes noticeably noisier (than the inner product model) in very sparse networks, resulting in less discernible trends with respect to $\varepsilon$ in many cases. This is likely due to the fact that the RDPG estimation is based on first-order information rather than the likelihood \citep{athreya2017statistical}. However, in denser scenarios, we still observe the expected improvement in property preservation as the privacy budget increases.

Finally, we include simulation results (in Figure \ref{fig:SBM}) to demonstrate the advantage of GRAND, as a data-perturbation paradigm, on preserving the structural over-protection.

\begin{figure}[H]
\centering
\captionsetup{type=table}%
\caption{%
  Comparison of node-level statistic distribution preservation for $\rho=0.05$ with $n=2000$ under the Latent Space Model (LSM) and Random Dot Product Graph (RDPG) model. Values are the average Wasserstein distance over 100 replications. Standard errors are omitted for brevity.
}
\label{tab:LSM-RDPG-2000-Wasserstein-rho05-noSE}
\begin{singlespace}{%
\footnotesize
  \begin{tabular}{ccc|ccc|ccc}
    \toprule
    \multirow{2}{*}{\raisebox{-0.5ex}{Metric}} & \multirow{2}{*}{\raisebox{-0.5ex}{$d$}} & \multirow{2}{*}{\raisebox{-0.5ex}{$\varepsilon$}} & \multicolumn{3}{c|}{LSM} & \multicolumn{3}{c}{RDPG} \\
    \cmidrule(lr){4-6} \cmidrule(lr){7-9}
    & & & Hat & GRAND & Laplace & Hat & GRAND & Laplace \\
    \midrule
    %% Node Degree
    \multirow{8}{*}{\makecell{Node\\Degree}} & \multirow{4}{*}{3} & 1 & 0.017 & 0.050 & 2.334 & 0.017 & 0.031 & 2.037 \\
    & & 2 & 0.017 & 0.047 & 2.264 & 0.017 & 0.030 & 1.408 \\
    & & 5 & 0.017 & 0.036 & 1.924 & 0.017 & 0.028 & 0.554 \\
    & & 10 & 0.017 & 0.030 & 1.300 & 0.017 & 0.028 & 0.508 \\
    \cmidrule(lr){2-9}
    & \multirow{4}{*}{6} & 1 & 0.019 & 0.051 & 2.347 & 0.018 & 0.025 & 2.269 \\
    & & 2 & 0.019 & 0.046 & 2.332 & 0.018 & 0.024 & 2.082 \\
    & & 5 & 0.019 & 0.038 & 2.227 & 0.018 & 0.023 & 1.179 \\
    & & 10 & 0.019 & 0.033 & 1.928 & 0.018 & 0.023 & 0.524 \\
    \midrule
    %% V-shape
    \multirow{8}{*}{\makecell{V-shape\\Count}} & \multirow{4}{*}{3} & 1 & 0.034 & 0.102 & 4.700 & 0.034 & 0.065 & 4.106 \\
    & & 2 & 0.034 & 0.097 & 4.559 & 0.034 & 0.062 & 2.841 \\
    & & 5 & 0.034 & 0.074 & 3.877 & 0.034 & 0.058 & 1.129 \\
    & & 10 & 0.034 & 0.061 & 2.624 & 0.034 & 0.057 & 1.069 \\
    \cmidrule(lr){2-9}
    & \multirow{4}{*}{6} & 1 & 0.040 & 0.103 & 4.724 & 0.037 & 0.051 & 4.567 \\
    & & 2 & 0.040 & 0.094 & 4.693 & 0.037 & 0.048 & 4.191 \\
    & & 5 & 0.040 & 0.078 & 4.484 & 0.037 & 0.047 & 2.378 \\
    & & 10 & 0.040 & 0.066 & 3.885 & 0.037 & 0.048 & 1.063 \\
    \midrule
    %% Triangle
    \multirow{8}{*}{\makecell{Triangle\\Count}} & \multirow{4}{*}{3} & 1 & 0.029 & 0.132 & 6.768 & 0.037 & 0.078 & 6.530 \\
    & & 2 & 0.029 & 0.126 & 6.622 & 0.037 & 0.076 & 5.105 \\
    & & 5 & 0.029 & 0.100 & 5.899 & 0.037 & 0.066 & 2.135 \\
    & & 10 & 0.029 & 0.090 & 4.449 & 0.037 & 0.058 & 1.373 \\
    \cmidrule(lr){2-9}
    & \multirow{4}{*}{6} & 1 & 0.034 & 0.102 & 6.781 & 0.052 & 0.067 & 7.005 \\
    & & 2 & 0.034 & 0.092 & 6.750 & 0.052 & 0.062 & 6.583 \\
    & & 5 & 0.034 & 0.085 & 6.531 & 0.052 & 0.058 & 4.442 \\
    & & 10 & 0.034 & 0.084 & 5.890 & 0.052 & 0.056 & 2.049 \\
    \midrule
    %% Eigen Centrality
    \multirow{8}{*}{\makecell{Eigen\\Centrality}} & \multirow{4}{*}{3} & 1 & 0.010 & 0.041 & 0.250 & 0.031 & 0.033 & 0.140 \\
    & & 2 & 0.010 & 0.040 & 0.199 & 0.031 & 0.033 & 0.180 \\
    & & 5 & 0.010 & 0.041 & 0.108 & 0.031 & 0.034 & 0.284 \\
    & & 10 & 0.010 & 0.039 & 0.048 & 0.031 & 0.032 & 0.286 \\
    \cmidrule(lr){2-9}
    & \multirow{4}{*}{6} & 1 & 0.012 & 0.040 & 0.408 & 0.042 & 0.036 & 0.272 \\
    & & 2 & 0.012 & 0.040 & 0.293 & 0.042 & 0.036 & 0.092 \\
    & & 5 & 0.012 & 0.042 & 0.168 & 0.042 & 0.036 & 0.233 \\
    & & 10 & 0.012 & 0.036 & 0.066 & 0.042 & 0.037 & 0.339 \\
    \midrule
    %% Harmonic Centrality
    \multirow{8}{*}{\makecell{Harmonic\\Centrality}} & \multirow{4}{*}{3} & 1 & 3.616 & 6.631 & 453.333 & 1.628 & 3.133 & 332.865 \\
    & & 2 & 3.616 & 6.335 & 427.719 & 1.628 & 2.990 & 169.035 \\
    & & 5 & 3.616 & 4.681 & 318.755 & 1.628 & 2.866 & \phantom{0}76.293 \\
    & & 10 & 3.616 & 3.376 & 171.327 & 1.628 & 2.891 & \phantom{0}81.309 \\
    \cmidrule(lr){2-9}
    & \multirow{4}{*}{6} & 1 & 3.736 & 6.732 & 458.477 & 1.563 & 2.144 & 427.069 \\
    & & 2 & 3.736 & 6.134 & 452.140 & 1.563 & 2.027 & 352.837 \\
    & & 5 & 3.736 & 4.812 & 411.398 & 1.563 & 2.018 & 136.861 \\
    & & 10 & 3.736 & 3.472 & 313.355 & 1.563 & 2.025 & \phantom{0}58.349 \\
    \bottomrule
  \end{tabular}%
}\end{singlespace}
\captionsetup{type=figure}%
\end{figure}

\begin{figure}[H]
\centering
\captionsetup{type=table}%
\caption{%
  Comparison of node-level statistic distribution preservation for $\rho=0.025$ with $n=2000$ under the Latent Space Model (LSM) and Random Dot Product Graph (RDPG) model. Values are the average Wasserstein distance over 100 replications. Standard errors are omitted for brevity.
}
\label{tab:LSM-RDPG-2000-Wasserstein-rho025-noSE}
\begin{singlespace}{%
\footnotesize
  \begin{tabular}{ccc|ccc|ccc}
    \toprule
    \multirow{2}{*}{\raisebox{-0.5ex}{Metric}} & \multirow{2}{*}{\raisebox{-0.5ex}{$d$}} & \multirow{2}{*}{\raisebox{-0.5ex}{$\varepsilon$}} & \multicolumn{3}{c|}{LSM} & \multicolumn{3}{c}{RDPG} \\
    \cmidrule(lr){4-6} \cmidrule(lr){7-9}
    & & & Hat & GRAND & Laplace & Hat & GRAND & Laplace \\
    \midrule
    %% Node Degree
    \multirow{8}{*}{\makecell{Node\\Degree}} & \multirow{4}{*}{3} & 1 & 0.047 & 0.068 & 3.033 & 0.032 & 0.042 & 2.559 \\
    & & 2 & 0.047 & 0.063 & 2.950 & 0.032 & 0.043 & 1.715 \\
    & & 5 & 0.047 & 0.046 & 2.554 & 0.032 & 0.043 & 0.521 \\
    & & 10 & 0.047 & 0.047 & 1.804 & 0.032 & 0.047 & 0.470 \\
    \cmidrule(lr){2-9}
    & \multirow{4}{*}{6} & 1 & 0.051 & 0.074 & 3.043 & 0.036 & 0.038 & 2.921 \\
    & & 2 & 0.051 & 0.068 & 3.022 & 0.036 & 0.039 & 2.631 \\
    & & 5 & 0.051 & 0.053 & 2.895 & 0.036 & 0.039 & 1.444 \\
    & & 10 & 0.051 & 0.055 & 2.534 & 0.036 & 0.040 & 0.516 \\
    \midrule
    %% V-shape
    \multirow{8}{*}{\makecell{V-shape\\Count}} & \multirow{4}{*}{3} & 1 & 0.099 & 0.142 & 6.134 & 0.068 & 0.088 & 5.183 \\
    & & 2 & 0.099 & 0.132 & 5.968 & 0.068 & 0.091 & 3.486 \\
    & & 5 & 0.099 & 0.097 & 5.171 & 0.068 & 0.091 & 1.069 \\
    & & 10 & 0.099 & 0.097 & 3.662 & 0.068 & 0.098 & 1.005 \\
    \cmidrule(lr){2-9}
    & \multirow{4}{*}{6} & 1 & 0.108 & 0.155 & 6.151 & 0.074 & 0.078 & 5.902 \\
    & & 2 & 0.108 & 0.142 & 6.110 & 0.074 & 0.080 & 5.321 \\
    & & 5 & 0.108 & 0.112 & 5.855 & 0.074 & 0.081 & 2.935 \\
    & & 10 & 0.108 & 0.115 & 5.131 & 0.074 & 0.084 & 1.055 \\
    \midrule
    %% Triangle
    \multirow{8}{*}{\makecell{Triangle\\Count}} & \multirow{4}{*}{3} & 1 & 0.055 & 0.127 & 8.774 & 0.075 & 0.095 & 8.209 \\
    & & 2 & 0.055 & 0.120 & 8.608 & 0.075 & 0.091 & 6.208 \\
    & & 5 & 0.055 & 0.122 & 7.789 & 0.075 & 0.085 & 2.366 \\
    & & 10 & 0.055 & 0.142 & 6.137 & 0.075 & 0.080 & 1.199 \\
    \cmidrule(lr){2-9}
    & \multirow{4}{*}{6} & 1 & 0.066 & 0.129 & 8.794 & 0.127 & 0.100 & 8.981 \\
    & & 2 & 0.066 & 0.139 & 8.754 & 0.127 & 0.102 & 8.312 \\
    & & 5 & 0.066 & 0.163 & 8.501 & 0.127 & 0.092 & 5.332 \\
    & & 10 & 0.066 & 0.224 & 7.754 & 0.127 & 0.085 & 2.314 \\
    \midrule
    %% Eigen Centrality
    \multirow{8}{*}{\makecell{Eigen\\Centrality}} & \multirow{4}{*}{3} & 1 & 0.012 & 0.042 & 0.333 & 0.051 & 0.049 & 0.115 \\
    & & 2 & 0.012 & 0.039 & 0.288 & 0.051 & 0.050 & 0.187 \\
    & & 5 & 0.012 & 0.042 & 0.198 & 0.051 & 0.047 & 0.299 \\
    & & 10 & 0.012 & 0.039 & 0.069 & 0.051 & 0.052 & 0.263 \\
    \cmidrule(lr){2-9}
    & \multirow{4}{*}{6} & 1 & 0.016 & 0.045 & 0.460 & 0.075 & 0.054 & 0.303 \\
    & & 2 & 0.016 & 0.048 & 0.361 & 0.075 & 0.053 & 0.080 \\
    & & 5 & 0.016 & 0.048 & 0.242 & 0.075 & 0.054 & 0.264 \\
    & & 10 & 0.016 & 0.053 & 0.145 & 0.075 & 0.050 & 0.337 \\
    \midrule
    %% Harmonic Centrality
    \multirow{8}{*}{\makecell{Harmonic\\Centrality}} & \multirow{4}{*}{3} & 1 & 13.597 & 14.552 & 579.684 & 3.548 & 6.465 & 394.221 \\
    & & 2 & 13.597 & 13.014 & 551.887 & 3.548 & 6.846 & 207.534 \\
    & & 5 & 13.597 & \phantom{0}7.125 & 433.659 & 3.548 & 6.853 & \phantom{0}44.747 \\
    & & 10 & 13.597 & \phantom{0}6.481 & 276.721 & 3.548 & 7.361 & \phantom{0}52.710 \\
    \cmidrule(lr){2-9}
    & \multirow{4}{*}{6} & 1 & 15.681 & 15.535 & 581.975 & 4.030 & 5.486 & 526.587 \\
    & & 2 & 15.681 & 13.531 & 574.048 & 4.030 & 5.613 & 419.844 \\
    & & 5 & 15.681 & \phantom{0}8.566 & 528.079 & 4.030 & 5.680 & 182.552 \\
    & & 10 & 15.681 & \phantom{0}7.184 & 419.364 & 4.030 & 6.044 & \phantom{0}56.276 \\
    \bottomrule
  \end{tabular}%
}\end{singlespace}
\captionsetup{type=figure}%
\end{figure}

\begin{figure}[H]
\centering
\captionsetup{type=table}%
\caption{%
  Comparison of node-level statistic distribution preservation for $\rho=0.1$ with $n=2000$ under the Latent Space Model (LSM) and Random Dot Product Graph (RDPG) model. Values are the average Wasserstein distance over 100 replications. Standard errors are omitted for brevity.
}
\label{tab:LSM-RDPG-2000-Wasserstein-rho1-noSE}
\begin{singlespace}{%
\footnotesize
  \begin{tabular}{ccc|ccc|ccc}
    \toprule
    \multirow{2}{*}{\raisebox{-0.5ex}{Metric}} & \multirow{2}{*}{\raisebox{-0.5ex}{$d$}} & \multirow{2}{*}{\raisebox{-0.5ex}{$\varepsilon$}} & \multicolumn{3}{c|}{LSM} & \multicolumn{3}{c}{RDPG} \\
    \cmidrule(lr){4-6} \cmidrule(lr){7-9}
    & & & Hat & GRAND & Laplace & Hat & GRAND & Laplace \\
    \midrule
    %% Node Degree
    \multirow{8}{*}{\makecell{Node\\Degree}} & \multirow{4}{*}{3} & 1 & 0.008 & 0.031 & 1.632 & 0.009 & 0.026 & 1.511 \\
    & & 2 & 0.008 & 0.031 & 1.582 & 0.009 & 0.027 & 1.126 \\
    & & 5 & 0.008 & 0.026 & 1.333 & 0.009 & 0.023 & 0.518 \\
    & & 10 & 0.008 & 0.024 & 0.881 & 0.009 & 0.021 & 0.484 \\
    \cmidrule(lr){2-9}
    & \multirow{4}{*}{6} & 1 & 0.008 & 0.029 & 1.642 & 0.010 & 0.021 & 1.601 \\
    & & 2 & 0.008 & 0.030 & 1.630 & 0.010 & 0.019 & 1.486 \\
    & & 5 & 0.008 & 0.029 & 1.555 & 0.010 & 0.018 & 0.869 \\
    & & 10 & 0.008 & 0.021 & 1.336 & 0.010 & 0.017 & 0.567 \\
    \midrule
    %% V-shape
    \multirow{8}{*}{\makecell{V-shape\\Count}} & \multirow{4}{*}{3} & 1 & 0.016 & 0.063 & 3.277 & 0.019 & 0.053 & 3.036 \\
    & & 2 & 0.016 & 0.063 & 3.178 & 0.019 & 0.055 & 2.263 \\
    & & 5 & 0.016 & 0.053 & 2.677 & 0.019 & 0.047 & 1.047 \\
    & & 10 & 0.016 & 0.049 & 1.772 & 0.019 & 0.043 & 1.000 \\
    \cmidrule(lr){2-9}
    & \multirow{4}{*}{6} & 1 & 0.017 & 0.058 & 3.298 & 0.019 & 0.042 & 3.215 \\
    & & 2 & 0.017 & 0.060 & 3.274 & 0.019 & 0.039 & 2.984 \\
    & & 5 & 0.017 & 0.058 & 3.123 & 0.019 & 0.037 & 1.747 \\
    & & 10 & 0.017 & 0.043 & 2.683 & 0.019 & 0.035 & 1.145 \\
    \midrule
    %% Triangle
    \multirow{8}{*}{\makecell{Triangle\\Count}} & \multirow{4}{*}{3} & 1 & 0.015 & 0.094 & 4.799 & 0.019 & 0.066 & 4.788 \\
    & & 2 & 0.015 & 0.092 & 4.694 & 0.019 & 0.067 & 3.944 \\
    & & 5 & 0.015 & 0.080 & 4.142 & 0.019 & 0.057 & 1.853 \\
    & & 10 & 0.015 & 0.077 & 3.025 & 0.019 & 0.051 & 1.371 \\
    \cmidrule(lr){2-9}
    & \multirow{4}{*}{6} & 1 & 0.017 & 0.073 & 4.780 & 0.023 & 0.055 & 5.005 \\
    & & 2 & 0.017 & 0.077 & 4.754 & 0.023 & 0.050 & 4.753 \\
    & & 5 & 0.017 & 0.075 & 4.592 & 0.023 & 0.046 & 3.370 \\
    & & 10 & 0.017 & 0.057 & 4.101 & 0.023 & 0.042 & 1.908 \\
    \midrule
    %% Eigen Centrality
    \multirow{8}{*}{\makecell{Eigen\\Centrality}} & \multirow{4}{*}{3} & 1 & 0.011 & 0.031 & 0.164 & 0.019 & 0.023 & 0.137 \\
    & & 2 & 0.011 & 0.028 & 0.106 & 0.019 & 0.021 & 0.164 \\
    & & 5 & 0.011 & 0.031 & 0.043 & 0.019 & 0.020 & 0.209 \\
    & & 10 & 0.011 & 0.028 & 0.097 & 0.019 & 0.022 & 0.268 \\
    \cmidrule(lr){2-9}
    & \multirow{4}{*}{6} & 1 & 0.012 & 0.032 & 0.332 & 0.021 & 0.029 & 0.237 \\
    & & 2 & 0.012 & 0.033 & 0.226 & 0.021 & 0.027 & 0.120 \\
    & & 5 & 0.012 & 0.031 & 0.101 & 0.021 & 0.028 & 0.202 \\
    & & 10 & 0.012 & 0.032 & 0.045 & 0.021 & 0.026 & 0.278 \\
    \midrule
    %% Harmonic Centrality
    \multirow{8}{*}{\makecell{Harmonic\\Centrality}} & \multirow{4}{*}{3} & 1 & 0.691 & 2.933 & 392.163 & 0.830 & 2.388 & 331.579 \\
    & & 2 & 0.691 & 2.927 & 372.992 & 0.830 & 2.484 & 213.128 \\
    & & 5 & 0.691 & 2.466 & 284.812 & 0.830 & 2.103 & \phantom{0}92.078 \\
    & & 10 & 0.691 & 2.299 & 157.138 & 0.830 & 1.887 & \phantom{0}67.140 \\
    \cmidrule(lr){2-9}
    & \multirow{4}{*}{6} & 1 & 0.716 & 2.575 & 398.107 & 0.877 & 1.978 & 382.348 \\
    & & 2 & 0.716 & 2.683 & 392.887 & 0.877 & 1.805 & 335.148 \\
    & & 5 & 0.716 & 2.589 & 361.959 & 0.877 & 1.687 & 165.261 \\
    & & 10 & 0.716 & 1.940 & 282.385 & 0.877 & 1.603 & \phantom{0}88.231 \\
    \bottomrule
  \end{tabular}%
}\end{singlespace}
\captionsetup{type=figure}%
\end{figure}

\begin{figure}[H]
\centering
\captionsetup{type=table}%
\caption{%
  Comparison of node-level statistic distribution preservation for $\rho=0.025$ with $n=4000$ under the Latent Space Model (LSM) and Random Dot Product Graph (RDPG) model. Values are the average Wasserstein distance over 100 replications. Standard errors are omitted for brevity.
}
\label{tab:LSM-RDPG-4000-Wasserstein-rho025-noSE}
\begin{singlespace}{%
\footnotesize
  \begin{tabular}{ccc|ccc|ccc}
    \toprule
    \multirow{2}{*}{\raisebox{-0.5ex}{Metric}} & \multirow{2}{*}{\raisebox{-0.5ex}{$d$}} & \multirow{2}{*}{\raisebox{-0.5ex}{$\varepsilon$}} & \multicolumn{3}{c|}{LSM} & \multicolumn{3}{c}{RDPG} \\
    \cmidrule(lr){4-6} \cmidrule(lr){7-9}
    & & & Hat & GRAND & Laplace & Hat & GRAND & Laplace \\
    \midrule
    %% Node Degree
    \multirow{8}{*}{\makecell{Node\\Degree}} & \multirow{4}{*}{3} & 1 & 0.033 & 0.049 & 3.040 & 0.017 & 0.026 & 2.496 \\
    & & 2 & 0.033 & 0.042 & 2.956 & 0.017 & 0.025 & 1.613 \\
    & & 5 & 0.033 & 0.031 & 2.552 & 0.017 & 0.023 & 0.544 \\
    & & 10 & 0.033 & 0.035 & 1.787 & 0.017 & 0.025 & 0.504 \\
    \cmidrule(lr){2-9}
    & \multirow{4}{*}{6} & 1 & 0.040 & 0.053 & 3.050 & 0.019 & 0.021 & 2.908 \\
    & & 2 & 0.040 & 0.048 & 3.031 & 0.019 & 0.021 & 2.562 \\
    & & 5 & 0.040 & 0.036 & 2.910 & 0.019 & 0.022 & 1.308 \\
    & & 10 & 0.040 & 0.041 & 2.555 & 0.019 & 0.021 & 0.523 \\
    \midrule
    %% V-shape
    \multirow{8}{*}{\makecell{V-shape\\Count}} & \multirow{4}{*}{3} & 1 & 0.068 & 0.100 & 6.113 & 0.034 & 0.054 & 5.025 \\
    & & 2 & 0.068 & 0.085 & 5.945 & 0.034 & 0.050 & 3.253 \\
    & & 5 & 0.068 & 0.063 & 5.137 & 0.034 & 0.048 & 1.105 \\
    & & 10 & 0.068 & 0.071 & 3.603 & 0.034 & 0.051 & 1.060 \\
    \cmidrule(lr){2-9}
    & \multirow{4}{*}{6} & 1 & 0.082 & 0.108 & 6.132 & 0.039 & 0.043 & 5.846 \\
    & & 2 & 0.082 & 0.098 & 6.095 & 0.039 & 0.042 & 5.153 \\
    & & 5 & 0.082 & 0.073 & 5.852 & 0.039 & 0.044 & 2.638 \\
    & & 10 & 0.082 & 0.084 & 5.142 & 0.039 & 0.043 & 1.061 \\
    \midrule
    %% Triangle
    \multirow{8}{*}{\makecell{Triangle\\Count}} & \multirow{4}{*}{3} & 1 & 0.024 & 0.097 & 8.759 & 0.037 & 0.063 & 8.075 \\
    & & 2 & 0.024 & 0.088 & 8.587 & 0.037 & 0.058 & 5.977 \\
    & & 5 & 0.024 & 0.094 & 7.747 & 0.037 & 0.051 & 2.254 \\
    & & 10 & 0.024 & 0.116 & 6.052 & 0.037 & 0.051 & 1.340 \\
    \cmidrule(lr){2-9}
    & \multirow{4}{*}{6} & 1 & 0.024 & 0.091 & 8.783 & 0.057 & 0.054 & 8.954 \\
    & & 2 & 0.024 & 0.100 & 8.746 & 0.057 & 0.056 & 8.162 \\
    & & 5 & 0.024 & 0.120 & 8.496 & 0.057 & 0.054 & 5.015 \\
    & & 10 & 0.024 & 0.169 & 7.749 & 0.057 & 0.046 & 2.152 \\
    \midrule
    %% Eigen Centrality
    \multirow{8}{*}{\makecell{Eigen\\Centrality}} & \multirow{4}{*}{3} & 1 & 0.009 & 0.031 & 0.331 & 0.035 & 0.035 & 0.142 \\
    & & 2 & 0.009 & 0.031 & 0.292 & 0.035 & 0.034 & 0.217 \\
    & & 5 & 0.009 & 0.036 & 0.193 & 0.035 & 0.035 & 0.329 \\
    & & 10 & 0.009 & 0.037 & 0.063 & 0.035 & 0.034 & 0.297 \\
    \cmidrule(lr){2-9}
    & \multirow{4}{*}{6} & 1 & 0.012 & 0.037 & 0.466 & 0.049 & 0.038 & 0.247 \\
    & & 2 & 0.012 & 0.037 & 0.366 & 0.049 & 0.036 & 0.093 \\
    & & 5 & 0.012 & 0.040 & 0.247 & 0.049 & 0.040 & 0.309 \\
    & & 10 & 0.012 & 0.041 & 0.144 & 0.049 & 0.039 & 0.379 \\
    \midrule
    %% Harmonic Centrality
    \multirow{8}{*}{\makecell{Harmonic\\Centrality}} & \multirow{4}{*}{3} & 1 & 21.910 & 23.201 & 1036.187 & 3.557 & 6.924 & 638.380 \\
    & & 2 & 21.910 & 20.024 & \phantom{0}976.434 & 3.557 & 6.257 & 278.025 \\
    & & 5 & 21.910 & 10.857 & \phantom{0}732.359 & 3.557 & 5.938 & 100.041 \\
    & & 10 & 21.910 & \phantom{0}6.869 & \phantom{0}417.292 & 3.557 & 6.585 & 138.617 \\
    \cmidrule(lr){2-9}
    & \multirow{4}{*}{6} & 1 & 23.680 & 24.162 & 1041.142 & 3.716 & 4.905 & 911.974 \\
    & & 2 & 23.680 & 21.104 & 1026.115 & 3.716 & 4.798 & 671.450 \\
    & & 5 & 23.680 & 12.597 & \phantom{0}933.849 & 3.716 & 5.245 & 218.088 \\
    & & 10 & 23.680 & \phantom{0}7.174 & \phantom{0}713.174 & 3.716 & 5.271 & \phantom{0}74.745 \\
    \bottomrule
  \end{tabular}%
}\end{singlespace}
\captionsetup{type=figure}%
\end{figure}

\begin{figure}[H]
\centering
\captionsetup{type=table}%
\caption{%
  Comparison of node-level statistic distribution preservation for $\rho=0.1$ with $n=4000$ under the Latent Space Model (LSM) and Random Dot Product Graph (RDPG) model. Values are the average Wasserstein distance over 100 replications. Standard errors are omitted for brevity.
}
\label{tab:LSM-RDPG-4000-Wasserstein-rho1-noSE}
\begin{singlespace}{%
  \footnotesize
  \begin{tabular}{ccc|ccc|ccc}
    \toprule
    \multirow{2}{*}{\raisebox{-0.5ex}{Metric}} & \multirow{2}{*}{\raisebox{-0.5ex}{$d$}} & \multirow{2}{*}{\raisebox{-0.5ex}{$\varepsilon$}} & \multicolumn{3}{c|}{LSM} & \multicolumn{3}{c}{RDPG} \\
    \cmidrule(lr){4-6} \cmidrule(lr){7-9}
    & & & Hat & GRAND & Laplace & Hat & GRAND & Laplace \\
    \midrule
    %% Node Degree
    \multirow{8}{*}{\makecell{Node\\Degree}} & \multirow{4}{*}{3} & 1 & 0.004 & 0.020 & 1.635 & 0.005 & 0.020 & 1.516 \\
    & & 2 & 0.004 & 0.019 & 1.588 & 0.005 & 0.019 & 1.138 \\
    & & 5 & 0.004 & 0.017 & 1.341 & 0.005 & 0.016 & 0.511 \\
    & & 10 & 0.004 & 0.016 & 0.895 & 0.005 & 0.015 & 0.473 \\
    \cmidrule(lr){2-9}
    & \multirow{4}{*}{6} & 1 & 0.004 & 0.018 & 1.644 & 0.005 & 0.013 & 1.598 \\
    & & 2 & 0.004 & 0.017 & 1.634 & 0.005 & 0.013 & 1.471 \\
    & & 5 & 0.004 & 0.015 & 1.565 & 0.005 & 0.013 & 0.819 \\
    & & 10 & 0.004 & 0.014 & 1.358 & 0.005 & 0.011 & 0.593 \\
    \midrule
    %% V-shape
    \multirow{8}{*}{\makecell{V-shape\\Count}} & \multirow{4}{*}{3} & 1 & 0.008 & 0.041 & 3.276 & 0.010 & 0.040 & 3.039 \\
    & & 2 & 0.008 & 0.039 & 3.182 & 0.010 & 0.038 & 2.283 \\
    & & 5 & 0.008 & 0.035 & 2.689 & 0.010 & 0.032 & 1.027 \\
    & & 10 & 0.008 & 0.032 & 1.796 & 0.010 & 0.031 & 0.962 \\
    \cmidrule(lr){2-9}
    & \multirow{4}{*}{6} & 1 & 0.009 & 0.037 & 3.295 & 0.010 & 0.027 & 3.202 \\
    & & 2 & 0.009 & 0.035 & 3.274 & 0.010 & 0.026 & 2.948 \\
    & & 5 & 0.009 & 0.030 & 3.136 & 0.010 & 0.026 & 1.643 \\
    & & 10 & 0.009 & 0.028 & 2.722 & 0.010 & 0.023 & 1.192 \\
    \midrule
    %% Triangle
    \multirow{8}{*}{\makecell{Triangle\\Count}} & \multirow{4}{*}{3} & 1 & 0.008 & 0.062 & 4.796 & 0.010 & 0.050 & 4.788 \\
    & & 2 & 0.008 & 0.060 & 4.695 & 0.010 & 0.048 & 3.961 \\
    & & 5 & 0.008 & 0.055 & 4.148 & 0.010 & 0.040 & 1.845 \\
    & & 10 & 0.008 & 0.052 & 3.050 & 0.010 & 0.038 & 1.358 \\
    \cmidrule(lr){2-9}
    & \multirow{4}{*}{6} & 1 & 0.010 & 0.049 & 4.777 & 0.011 & 0.034 & 5.009 \\
    & & 2 & 0.010 & 0.043 & 4.754 & 0.011 & 0.034 & 4.735 \\
    & & 5 & 0.010 & 0.037 & 4.604 & 0.011 & 0.033 & 3.274 \\
    & & 10 & 0.010 & 0.037 & 4.139 & 0.011 & 0.028 & 1.920 \\
    \midrule
    %% Eigen Centrality
    \multirow{8}{*}{\makecell{Eigen\\Centrality}} & \multirow{4}{*}{3} & 1 & 0.007 & 0.028 & 0.164 & 0.013 & 0.016 & 0.133 \\
    & & 2 & 0.007 & 0.025 & 0.103 & 0.013 & 0.015 & 0.161 \\
    & & 5 & 0.007 & 0.023 & 0.044 & 0.013 & 0.017 & 0.214 \\
    & & 10 & 0.007 & 0.025 & 0.100 & 0.013 & 0.016 & 0.277 \\
    \cmidrule(lr){2-9}
    & \multirow{4}{*}{6} & 1 & 0.008 & 0.022 & 0.328 & 0.014 & 0.023 & 0.213 \\
    & & 2 & 0.008 & 0.024 & 0.228 & 0.014 & 0.021 & 0.137 \\
    & & 5 & 0.008 & 0.025 & 0.099 & 0.014 & 0.022 & 0.213 \\
    & & 10 & 0.008 & 0.022 & 0.048 & 0.014 & 0.021 & 0.287 \\
    \midrule
    %% Harmonic Centrality
    \multirow{8}{*}{\makecell{Harmonic\\Centrality}} & \multirow{4}{*}{3} & 1 & 0.738 & 3.886 & 784.981 & 0.844 & 3.485 & 666.958 \\
    & & 2 & 0.738 & 3.722 & 747.901 & 0.844 & 3.310 & 438.446 \\
    & & 5 & 0.738 & 3.381 & 571.464 & 0.844 & 2.769 & 172.796 \\
    & & 10 & 0.738 & 3.109 & 317.947 & 0.844 & 2.600 & 114.992 \\
    \cmidrule(lr){2-9}
    & \multirow{4}{*}{6} & 1 & 0.765 & 3.337 & 796.937 & 0.909 & 2.489 & 761.823 \\
    & & 2 & 0.765 & 3.116 & 787.725 & 0.909 & 2.435 & 659.242 \\
    & & 5 & 0.765 & 2.683 & 729.535 & 0.909 & 2.404 & 316.235 \\
    & & 10 & 0.765 & 2.524 & 575.932 & 0.909 & 2.084 & 170.016 \\
    \bottomrule
  \end{tabular}%
}\end{singlespace}
\captionsetup{type=figure}%
\end{figure}

\begin{figure}[H]
  \centering
\includegraphics[width=0.7\textwidth]{Figures/SBM-n4000lambda200.pdf}
\caption{\label{fig:SBM}The recovery accuracy of the true community labels from the GRAND-privatized networks. The network is generated from SBM with 3 communities, with size 4000 and average degree 200. }
\end{figure}

\end{appendix}

\end{document}